\title{Statistical inference for Linear Stochastic Approximation with Markovian Noise}
\author{%
  Sergey Samsonov\\
  HSE University \\
  svsamsonov@hse.ru \\
  \And
  Marina Sheshukova\\
  HSE University\\
  msheshukova@hse.ru
  \AND
  Eric Moulines\\
  Ecole Polytechnique,\\
  MBUZAI \\
  eric.moulines@polytechnique.edu
  \And
  Alexey Naumov\\
  HSE University,\\
  Steklov Mathematical Institute \\
  of Russian Academy of Sciences \\
  anaumov@hse.ru
}
\newtheorem{assum}{A\hspace{-2pt}}
\newtheorem{theorem}{Theorem}
\crefname{theorem}{theorem}{Theorems}
\Crefname{theorem}{Theorem}{Theorems}
\newtheorem{lemma}{Lemma}
\crefname{lemma}{lemma}{lemmas}
\Crefname{lemma}{Lemma}{Lemmas}
\newtheorem{remark}{Remark}
\crefname{remark}{remark}{remarks}
\Crefname{remark}{Remark}{Remarks}
\newtheorem{corollary}{Corollary}
\crefname{corollary}{corollary}{corollaries}
\Crefname{corollary}{Corollary}{Corollaries}
\newtheorem{proposition}{Proposition}
\crefname{proposition}{proposition}{propositions}
\Crefname{proposition}{Proposition}{Propositions}
\newtheorem{definition}{Definition}
\crefname{definition}{definition}{definitions}
\Crefname{Definition}{Definition}{Definitions}
\crefname{example}{example}{examples}
\Crefname{Example}{Example}{Examples}
\crefname{figure}{figure}{figures}
\Crefname{Figure}{Figure}{Figures}
\crefname{table}{table}{tables}
\Crefname{Table}{Table}{Tables}
\crefname{assum}{A\hspace{-2pt}}{A\hspace{-2pt}}
\crefname{assumb}{B\hspace{-2pt}}{B\hspace{-2pt}}
\crefname{assumUGE}{UGE\hspace{-1pt}}{UGE\hspace{-1pt}}
\crefname{assumID}{IND\hspace{-1pt}}{IND\hspace{-1pt}}
\crefname{assumUE}{UE\hspace{-1pt}}{UE\hspace{-1pt}}
\crefname{assumSUP}{M\hspace{-1pt}}{M\hspace{-1pt}}
\newlist{renumerate}{enumerate}{3}
\setlist[renumerate]{wide, labelwidth=!, labelindent=0pt,label=(\roman*)}
\newlist{aenumerate}{enumerate}{3}
\setlist[aenumerate]{wide, labelwidth=!, labelindent=0pt,label=(\arabic*)}
\newlist{aaenumerate}{enumerate}{3}
\setlist[aaenumerate]{wide, labelwidth=!, labelindent=0pt,label=(\alph*)}
\newlist{aenumerateSpace}{enumerate}{3}
\setlist[aenumerateSpace]{wide, labelwidth=!,label=(\arabic*)}
\newlist{benumerate}{enumerate}{3}
\setlist[benumerate]{wide, labelwidth=!, labelindent=0pt,label=$\bullet$}
\newtheorem{assumprime}{\textbf{A'}\hspace{-1pt}}
\Crefname{assumprime}{\textbf{A'}\hspace{-1pt}}{\textbf{A'}\hspace{-1pt}}
\crefname{assumprime}{\textbf{A'}}{\textbf{A'}}
\def\supconsteps{\supnorm{\funnoisew}}
\def\bbeta{\boldsymbol\beta}
\newcommand{\PE}{\mathbb{E}}
\def\mart{\mathcal{M}}
\def\boundconstmart{\varkappa}
\def\Sphere{\mathcal{S}}
\newcommand{\PP}{\mathbb{P}}
\newcommandx{\genericb}[1][1=]{b_{#1}}
\newcommandx{\ConstK}[1][1=]{\operatorname{C}_{\operatorname{K},#1}}
\newcommandx{\Constros}[1][1=]{\operatorname{C}_{\operatorname{Ros},#1}}
\newcommandx{\Constburk}[1][1=]{\operatorname{C}_{\operatorname{Burk}}}
\newcommandx{\driftW}[1][1=]{W_{#1}}
\newcommandx{\metricd}[1][1=]{\mathsf{d}_{#1}}
\newcommandx\invmeasure[1][1=]{\Pi_{#1}}
\newcommandx{\PPjoint}[1][1=]{\PP^{\MKjoint[#1]}}
\newcommandx{\PEjoint}[1][1=]{\PE^{\MKjoint[#1]}}
\newcommandx{\PEMID}[1][1=\alpha]{\PE^{\MK[#1]}}
\newcommandx{\PPMID}[1][1=\alpha]{\PP^{\MK[#1]}}
\newcommand{\supnorm}[1]{\norm{ #1 }[\infty]}
\newcommandx{\MKjoint}[1][1=]{\bar{\operatorname{P}}_{#1}}
\newcommandx\costw[1][1=]{\mathsf{c}_{#1}}
\newcommandx\Intergrdist[1][1=]{\mathbb{M}_{1}(#1)}
\newcommandx{\mmarkov}[1][1=0]{m^{(\Markov)}_{#1}}
\def\Conv{\operatorname{Conv}}
\def\seta{\mathcal{A}}
\def\F{\mathcal{F}}
\def\Zset{\mathsf{Z}}
\def\Zsigma{\mathcal{Z}}
\def\rset{\mathbb{R}}
\def\nset{\ensuremath{\mathbb{N}}}
\def\nsets{\ensuremath{\mathbb{N}^*}}
\newcommand{\round}[1]{\ensuremath{\lfloor#1\rfloor}}
\newcommand{\msi}{\mathsf{I}}
\newcommand{\Mat}[1]{{\bf{#1}}}
\def\MatB{B}
\def\MatP{P}
\renewcommand{\S}{\mathcal{S}}
\newcommand{\A}{\mathcal{A}}
\def\PMDP{\MKQ}
\newcommand{\Const}[1]{\operatorname{C}_{{#1}}}
\newcommand{\Constupd}[1]{\operatorname{C}_{{#1}}^{\mathsf{D}}}
\newcommand{\Constupdboot}[1]{\operatorname{C}_{{#1}}^{\mathsf{D,b}}}
\newcommand{\bConst}[1]{\operatorname{C}_{{\bf #1}}}
\newcommandx\sequence[4][2=,3=,4=]
\newcommandx\sequenceD[2][2=]
\newcommandx\sequenceDouble[4][3=,4=]
\newcommandx{\sequencen}[2][2=n\in\nset]{\ensuremath{\{ #1, \eqsp #2 \}}}
\newcommandx\sequencens[2][2=n]
\newcommandx\sequencet[4]
\def\PE{\mathbb{E}}
\def\P{\mathbb{P}}
\def\ProdB{\Gamma}
\newcommandx{\PVar}[1][1=]{\ensuremath{\operatorname{Var}_{#1}}}
\def\noisecov{\Sigma_\varepsilon}
\newcommandx{\MK}[1][1=\alpha]{\mathrm{P}_{#1}}
\newcommandx\MKK[1][1=\alpha]{\mathrm{K}_{#1}}
\def\MKQ{\mathrm{P}}
\newcommandx{\PEtilde}[1][1=]{\PE^{\mathrm{K}_{#1}}}
\newcommandx{\PPtilde}[1][1=]{\PP^{\mathrm{K}_{#1}}}
\newcommandx{\norm}[2][2=]{\Vert#1 \Vert_{{#2}}}
\newcommandx{\normLigne}[2][2=]{\Vert#1 \Vert_{{#2}}}
\newcommandx{\normLine}[2][2=]{\Vert#1 \Vert_{{#2}}}
\newcommandx{\normop}[2][2=]{\Vert{#1}\Vert_{{#2}}}
\newcommandx{\normopLigne}[2][2=]{\Vert{#1}\Vert_{{#2}}}
\newcommandx{\normopLine}[2][2=]{\Vert{#1}\Vert_{{#2}}}
\newcommandx{\osc}[2][1=]{\mathrm{osc}_{#1}(#2)}
\newcommandx{\normlip}[2][2=\operatorname{Lip}]{\Vert#1 \Vert_{{#2}}}
\newcommand{\lip}{\operatorname{L}}
\newcommandx{\lipspace}[1]{\lip_{#1}}
\newcommandx{\CPP}[3][1=]
{\ifthenelse{\equal{#1}{}}{{\mathbb P}\left(\left. #2 \, \right| #3 \right)}{{\mathbb P}_{#1}\left(\left. #2 \, \right | #3 \right)}}
\newcommandx{\CPPtilde}[3][1=]
{\ifthenelse{\equal{#1}{}}{{\tilde{\mathbb P}}\left(\left. #2 \, \right| #3 \right)}{{\tilde{\mathbb P}}_{#1}\left(\left. #2 \, \right | #3 \right)}}
\def\iid{i.i.d.}
\newcommandx{\as}[1][1=\PP]{\ensuremath{#1\, -\mathrm{a.s.}}}
\newcommand{\eqsp}{\;}
\newcommand{\Id}{\mathrm{I}}
\def\prtheta{\bar{\theta}}
\def\utheta{\tilde{\theta}^{\sf (tr)}}
\def\vtheta{\tilde{\theta}^{\sf (fl)}}
\newcommand{\ConstD}{\mathsf{D}}
\newcommand{\ConstDM}{\ConstD^{(\operatorname{M})}}
\newcommandx{\boundmetric}[1][1=]{\kappa_{\MKK[#1]}}
\newcommand{\Jnalpha}[2]{J_{#1}^{(#2)}}
\newcommand{\Hnalpha}[2]{H_{#1}^{(#2)}}
\newcommand{\ocintLine}[1]{(#1]}
\newcommandx{\Nnorm}[2][1=V]{[ #2]_{#1}}
\newcommandx{\lipnorm}[2][1=g]{[ #1]_{#2}}
\newcommandx{\CPE}[3][1=]{{\mathbb E}^{#3}_{#1}\left[#2\right]}
\newcommandx{\CPEext}[3][1=]{\tilde{\mathbb E}^{#3}_{#1}\left[#2\right]}
\newcommandx{\CPEtilde}[3][1=]{{\tilde{\mathbb E}}^{#3}_{#1}\left[#2\right]}
\newcommandx{\CPEs}[3][1=]{{\mathbb E}^{#3}_{#1}[#2]}
\def\ProdBa{\ProdB^{(\alpha)}}
\def\thetalim{\theta^\star}
\newcommand{\rme}{\mathrm{e}}
\newcommand{\rmd}{\mathrm{d}}
\def\funcAw{\mathbf{A}}
\def\funcBw{\mathbf{B}}
\newcommand{\funcA}[1]{\funcAw(#1)}
\def\funcbw{\mathbf{b}}
\newcommand{\funcb}[1]{\funcbw(#1)}
\newcommandx{\zmfuncA}[2][1=]{\tilde{\funcAw}^{#1}(#2)}
\newcommandx{\zmfuncAw}[1][1=]{\tilde{\funcAw}_{#1}}
\newcommandx{\zmfuncb}[2][1=]{\tilde{\funcbw}^{#1}(#2)}
\def\funnoisew{\varepsilon}
\newcommand{\funcnoise}[1]{\funnoisew(#1)}
\newcommandx{\funcct}[2][1=]{\funcctilde^{#1}(#2)}
\def\qcond{\kappa_{Q}}
\def\State{Z}
\def\taumix{t_{\operatorname{mix}}}
\newcommand{\1}{\boldsymbol{1}}
\newcommandx{\CovC}[1][1=u]{\operatorname{C}_{#1}}
\def\msz{\mathsf{Z}}
\def\mcz{\mathcal{Z}}
\def\plusinfty{+\infty}
\DeclareMathAlphabet{\mathpzc}{OT1}{pzc}{m}{it}
\def\lyapW{\mathpzc{W}}
\newcommandx{\bias}[1][1=\alpha]{\operatorname{B}_{#1}}
\newcommandx\probaMarkovTilde[2][2=]
\def\mcf{\mathcal{F}}
\newcommand{\indi}[1]{\1_{#1}}
\def\bA{\bar{\mathbf{A}}}
\def\X{{\bf X}}
\def\Y{{\bf Y}}
\def\thetas{\thetalim}
\def\sphere{\mathbb{S}}
\def\Am{{\bf A}}
\def\bm{{\bf b}}
\def\funcctilde{\tilde{c}_u}
\def\RemCov{\mathcal{R}}
\def\barb{\bar{\mathbf{b}}}
\newcommandx{\driftb}[1][1=p]{\bar{b}_{#1}}
\def\Zbf{\mathbf{Z}}
\def\eps{\varepsilon}
\def\tvdist{\mathsf{d}_{\operatorname{tv}}}
\newcommandx{\boldb}[1][1={q}]{\mathsf{b}_{#1}}
\newcommandx{\ConstGW}[1][1={n,\lyapW}]{\operatorname{G}_{#1}}
\newcommandx{\ConstMW}[1][1={n,\lyapW}]{\operatorname{M}_{#1}}
\newtheorem{assumTD}{\textbf{TD}\hspace{-1pt}}
\Crefname{assumTD}{\textbf{TD}\hspace{-1pt}}{\textbf{TD}\hspace{-1pt}}
\crefname{assumTD}{\textbf{TD}}{\textbf{TD}}
\Crefname{assumptionC}{\textbf{C}\hspace{-1pt}}{\textbf{C}\hspace{-1pt}}
\crefname{assumptionC}{\textbf{C}}{\textbf{C}}
\Crefname{assumptionM}{\textbf{UGE}\hspace{-1pt}}{\textbf{UGE}\hspace{-1pt}}
\crefname{assumptionM}{\textbf{UGE}}{\textbf{UGE}}
\def\distance{\mathsf{d}}
\newcommandx{\vartconstwas}[1][1=V]{c_{#1}}
\newcommandx{\deltawas}[1][1=*]{\delta_{#1}}
\newcommandx{\wasser}[4][1=\distance,4=]{\mathbf{W}_{#1}^{#4}\left(#2,#3\right)}
\newcommandx{\covcoeff}[2]{\rho_{#1}^{(#2)}}
\newcommand{\dobrush}{\mathsf{\Delta}}
\newcommandx{\dobru}[3][1=,3=]{\dobrush_{#1}^{#3}( #2)}  %%% dobrushin coefficient
\def\mtt{\mathtt{m}}
\def\qexponent{q}
\def\ppexponent{p}
\def\Markov{\mathrm{M}}
\newcommandx{\dlim}[1]{\ensuremath{\stackrel{#1}{\Longrightarrow}}}
\def\boot{\mathsf{b}}
\newcommand{\PPb}{\mathbb{P}^\boot}
\newcommand{\PEb}{\mathbb{E}^\boot}
\newcommand{\kolmogorov}[1]{\mathsf{d}_{K}\bigl(#1\bigr)}
\def\tmix{{\boldsymbol{\tau}}}
\begin{document}

\maketitle

\begin{abstract}
In this paper we derive non-asymptotic Berry–Esseen bounds for Polyak–Ruppert averaged iterates of the Linear Stochastic Approximation (LSA) algorithm driven by the Markovian noise. Our analysis yields $O(n^{-1/4})$ convergence rates to the Gaussian limit in the Kolmogorov distance. We further establish the non-asymptotic validity of a multiplier block bootstrap procedure for constructing the confidence intervals, guaranteeing consistent inference under Markovian sampling. Our work provides the first non-asymptotic guarantees on the rate of convergence of bootstrap-based confidence intervals for stochastic approximation with Markov noise. Moreover, we recover the classical rate of order $\mathcal{O}(n^{-1/8})$ up to logarithmic factors for estimating the asymptotic variance of the iterates of the LSA algorithm.  
\end{abstract}

\section{Introduction}
\label{sec:intro}
 Stochastic approximation (SA) %, first introduced in the seminal paper \cite{robbins1951stochastic}, 
 has become foundational to modern machine learning, especially its reinforcement learning (RL) domain. Many classical RL algorithms, including $Q$-learning \cite{watkins:qlearn:1992,sutton:book:2018}, the actor–critic algorithm \cite{konda1999actor}, and policy evaluation algorithms, such as temporal difference (TD) learning  \cite{sutton:book:2018} are special instances of SA. Recent research has extensively studied both the asymptotic \cite{borkar:sa:2008} and non-asymptotic behavior \cite{moulines2011non} of these algorithms. It is important not only to establish the convergence of SA estimators, but also to quantify their uncertainty, which is typically done through the asymptotic normality of corresponding estimates, see \cite{polyak1992acceleration,mokkadem2006convergence}. Recent works have focused on deriving non-asymptotic convergence rates for SA methods \cite{shao2022berry,samsonov2024gaussian,wu2024statistical}. Notably, most existing results consider settings with independent and identically distributed (\iid) noise. In contrast, many practical SA applications involve dependent noise, often forming a Markov chain. This additionally complicates the problem. Indeed, even the problem of deriving precise Berry–Esseen type convergence rates for additive functionals of Markov chains is  challenging compared to \iid\ setting, where quantitative results are well-established, starting from Bentkus' influential work \cite{bentkus2004}.
\par 
Asymptotic normality of SA estimates is particularly important in practice, as it allows one to construct approximate confidence intervals for the parameters of interest. Different approaches either directly utilize the asymptotic normality and aim to estimate the asymptotic covariance matrix directly (the plug-in or batch-mean methods, see \cite{chen2020aos,chen2021statistical,roy2023online}), or rely on the non-parametric methods, based on the bootstrap \cite{efron1992bootstrap,rubin1981bayesian}. When the latter approach is applied to the dependent observations, standard bootstrap methods need to be carefully adjusted to account for the dependence structure, see \cite{Fang2018,liu2023statistical}.
\par 
In this paper, we study a linear stochastic approximation (LSA) procedure. This setting covers several important scenarios, such as the classical TD learning algorithm \cite{sutton:book:2018} with linear function approximation, while allowing for sharper theoretical analysis. The LSA procedure aims to find an approximate solution to the linear system $\bA \thetalim = \barb$ with a unique solution $\thetalim$, based on observations $\{( \funcA{Z_k}, \funcb{Z_k})\}_{k \in \nset}$. Here $\Am: \msz \to \rset^{d \times d}$ and $\bm: \msz \to \rset^d$ are measurable functions, and $(Z_k)_{k \in \nset}$ is a sequence of noise variables taking values in a measurable space $(\msz,\mcz)$ with a distribution $\pi$ satisfying $\PE_{\pi}[ \funcA{Z_k} ] = \bA$ and $\PE_{\pi}[ \funcb{Z_k} ] = \barb$. We focus on the setting where $\{\State_k\}_{k \in \nset}$ is an ergodic Markov chain. Given a sequence of decreasing step sizes $(\alpha_k)_{k \in \nset}$ and an initial point $\theta_0 \in \rset^{d}$, we consider the estimates $\{ \prtheta_{n} \}_{n \in \nset}$ given by
\begin{equation}
\label{eq:lsa}
\textstyle 
\theta_{k} = \theta_{k-1} - \alpha_{k} \{ \funcA{Z_k} \theta_{k-1} - \funcb{Z_k} \} \eqsp,~~ k \geq 1, \quad \prtheta_{n} = n^{-1} \sum_{k=0}^{n-1} \theta_k \eqsp, ~~n \geq 1 \eqsp.
\end{equation}
The sequence $\{ \theta_k \}_{k \in \nset}$ corresponds to the standard LSA iterates, while $\{ \prtheta_{n} \}_{n \in \nset}$ corresponds to the Polyak-Ruppert (PR) averaged iterates \cite{ruppert1988efficient,polyak1992acceleration}. It is known that, under appropriate technical conditions on the step sizes $\{\alpha_k\}$ and the noisy observations $\{Z_k\}$ (see \cite{polyak1992acceleration} and \cite{fort:clt:markov:2015} for a discussion),
\begin{equation}
\label{eq:CLT_fort_prelim}
\textstyle 
\sqrt{n}(\bar{\theta}_{n} - \thetas) \overset{d}{\rightarrow} \mathcal{N}(0,\Sigma_{\infty})\eqsp.
\end{equation}
Here the asymptotic covariance matrix $\Sigma_{\infty}$ is defined in \Cref{sec:normal_approximation_main}; see \eqref{eq:sigma_infty_def}. Recent works have provided a number of non‐asymptotic guarantees for the averaged LSA iterates of $\prtheta_{n}$, in particular, \cite{lakshminarayanan:2018a, srikant:1tsbounds:2019,mou2020linear,durmus2021tight,mou2021optimal,durmus2022finite}, which study the mean-squared error, high-order moment bounds, and concentration bounds for $\sqrt{n}(\bar{\theta}_{n} - \thetas)$. However, these results are not enough to establish explicit convergence rates in \eqref{eq:CLT_fort_prelim} in some appropriate probability metric $\mathsf{d}$:
\begin{equation}
\label{eq:berry-esseen}
\mathsf{d}(\sqrt{n} (\bar{\theta}_{n} - \thetas),  \Sigma_{\infty}^{1/2} Y)\eqsp,
\end{equation}
where $Y \sim \mathcal N(0, \Id)$. Notable exceptions are recent papers \cite{samsonov2024gaussian,wu2024statistical,sheshukova2025gaussian,wu2025uncertainty}, which study non-asymptotic convergence rates in the LSA and stochastic gradient descent (SGD) algorithms, as well as in the specific setting of the TD learning algorithm. Our paper aims to complement these findings by providing both a non-asymptotic analysis of the convergence rate in \eqref{eq:CLT_fort_prelim} and an analysis of an appropriate procedure for constructing confidence sets for $\thetas$. Our primary contributions are as follows:
\begin{itemize}[noitemsep,topsep=0pt,leftmargin=3em]
\item We derive a novel non-asymptotic bound for normal approximation for projected Polyak–Ruppert averaged LSA iterates $\sqrt{n}u^{\top}(\bar{\theta}_{n} - \thetas)$ under Markovian noise in \eqref{eq:CLT_fort_prelim}. Here $u$ is a vector on the unit sphere $\Sphere_{d-1}$. Precisely, we establish a convergence rate of order $\mathcal{O}(n^{-1/4})$ in Kolmogorov distance. The rate $\mathcal{O}(n^{-1/4})$ matches the recent result of \cite{wu2025uncertainty}, which considered the particular setting of the TD learning algorithm, and improves over the previous results from \cite{srikant2024rates}. Our proof strategy differs from that of \cite{wu2025uncertainty} and is based on a Poisson decomposition for Markov chains, combined with an appropriate version of the Berry–Esseen bound for martingales, building on the results of Fan \cite{fan2019exact} and Bolthausen \cite{bolthausen1982martingale}.
\item We provide a non-asymptotic analysis of the multiplier subsample bootstrap approach \cite{MR4733879} for the LSA algorithm under Markovian noise. Our bounds imply that the coverage probabilities of the true parameter $\thetas$ can be approximated at a rate of $\mathcal{O}(n^{-1/10})$, where $n$ is the number of samples used in the procedure. To the best of our knowledge, this is the first non-asymptotic bound on the accuracy of bootstrap approximation for SA algorithms with Markov noise. As a byproduct of our analysis, we also recover the rate of $\mathcal{O}(n^{-1/8})$ (up to logarithmic factors) for estimating the asymptotic variance of projected LSA iterates using the overlapping batch means (OBM) estimator, previously obtained in \cite{chen2020aos,zhu2023online_cov_matr}. 
\item We apply the proposed methodology to the temporal difference learning (TD) algorithm for
policy evaluation in reinforcement learning. 
\end{itemize} 
The rest of the paper is organized as follows. In \Cref{sec:related-work}, we review the literature on Berry–Esseen type results for stochastic approximation (SA) algorithms and methods for constructing confidence intervals for the parameter $\thetas$ in these settings, with particular emphasis on the LSA. In \Cref{sec:normal_approximation_main}, we obtain a quantitative version of the Berry–Esseen theorem for the projected error of the Polyak–Ruppert averaged LSA under the Kolmogorov distance. In \Cref{sec:bootstrap}, we discuss the multiplier subsample bootstrap approach (as proposed in \cite{MR4733879}) for LSA and provide corresponding non-asymptotic bounds on the accuracy of approximating the exact distribution of $\sqrt{n}u^{\top}(\bar{\theta}_{n} - \thetas)$ with its bootstrap counterpart. Then in \Cref{sec:applications_td} we apply the proposed methodology to the temporal difference learning (TD) algorithm, based on the sequence of states, which form a geometrically ergodic Markov chain. Proofs are postponed to appendix.
\par 
\textbf{Notations.} We denote by $\mathcal{P}(\msz)$ the set of probability measures on a measurable space $(\msz,\mcz)$. For probability measures $\mu$ and $\nu$ on $(\msz,\mcz)$ we denote by $\tvdist(\mu,\nu)$ the total variation distance between them $\mu$ and $\nu$, that is, $\tvdist(\mu,\nu) = \sup_{C \in \mcz}|\mu(C) - \nu(C)|$. For matrix $A \in \rset^{d \times d}$ we denote by $\norm{A}$ its operator norm. Given a sequence of matrices $\{A_{\ell}\}_{\ell \in \nset}$, $A_{\ell} \in \rset^{d \times d}$, we use the following convention for matrix products: $\prod_{\ell=m}^{k}A_{\ell} = A_{k} A_{k-1} \ldots A_{m}$, where $m \leq k$. For the matrix $Q \in \rset^{d \times d}$, which is symmetric and positive-definite, and $x \in \rset^{d}$, we define the corresponding norm $\|x\|_Q = \sqrt{x^\top Q x}$, and define the respective matrix $Q$-norm of the matrix $B \in \rset^{d \times d}$ by $\normop{B}[Q] = \sup_{x \neq 0} \norm{Bx}[Q]/\norm{x}[Q]$. For sequences $a_n$ and $b_n$, we write $a_n \lesssim b_n$ if there exist a (numeric) constant $c > 0$ such that $a_n \leq c b_n$. For simplicity of presentation, we state the main results of the paper up to absolute constants. We also use a notation $a_n \lesssim_{\log{n}} b_n$, if there exists $\beta>0$, such that $a_n \leq c (1+\log{n})^{\beta} b_n$ fo any $n$. Additionally, we use the standard abbreviations "\iid\," for "independent and identically distributed" and "w.r.t." for "with respect to".

%related works
\vspace{-10pt}
\section{Related works}
\label{sec:related-work}
The analysis of linear stochastic approximation (LSA) algorithms under Markovian noise has a long history, with classical works establishing almost sure convergence and asymptotic normality under broad conditions \cite{polyak1992acceleration,borkar:sa:2008,kushner2003stochastic,benveniste2012adaptive}. These asymptotic results, however, lack explicit finite-sample error bounds. Recently, non-asymptotic performance analysis of stochastic approximation and gradient methods has gained attention. In the i.i.d.~setting of stochastic gradient descent (SGD), finite-time error bounds were provided by \cite{nemirovski2009robust,rakhlin2012making}. For Markovian LSA contexts, convergence rates were analyzed by \cite{bhandari2018finite,lakshminarayanan:2018a}, focusing on temporal-difference learning, and instance-dependent bounds were derived by \cite{mou2021optimal} and \cite{durmus2022finite}. These studies establish guarantees on mean-square error or high-probability deviations but do not address the distributional approximation of the estimator.
\par 
Rates of convergence in CLT are widely studied in probability theory \cite{bentkus2004}, primarily for sums of random variables or univariate martingale difference sequences \cite{petrov1975sums, bolthausen1982martingale}. Berry-Esseen bounds and Edgeworth expansions for Markov chains under various ergodicity assumptions are also available \cite{bolthausen1980berry,bolthausen1982markov,bertail2004edgeworth,kloeckner2019}, though typically restricted to the $1$-dimensional case. Our analysis leverages recent concentration results on quadratic forms of Markov chains \cite{moulines2025note}, building upon martingale decomposition techniques for $U$-statistics developed in \cite{atchade2014martingale}.
\par 
Recent studies have analyzed convergence rates of SA iterates to their limiting normal distributions. \cite{samsonov2024gaussian} derived an $\mathcal{O}(n^{-1/4})$ convergence rate in \eqref{eq:CLT_fort_prelim} in convex distance for TD learning under \iid\ noise, improved to $\mathcal{O}(n^{-1/3})$ by \cite{wu2024statistical} specifically for TD learning. \cite{pmlr-v99-anastasiou19a} applied Stein’s method to averaged SGD and LSA iterates in smooth Wasserstein distance, again under \iid\ conditions. Recently, \cite{srikant2024rates} analyzed rates for martingale CLT in $1$-st order Wasserstein distance with Markovian samples but achieved slower convergence than our result (see \Cref{th:normal_approximation_markov_with_sigma_n} and discussion after it). Lastly, \cite{agrawalla2023high} provided last-iterate bounds for SGD for high-dimensional linear regression, also with \iid\ noise.
\par 
Originally introduced for \iid\ data \cite{efron1992bootstrap}, the bootstrap method has been adapted to complex settings, including high-dimensional tests \cite{Chernozhukov2013,chernozhukov2017central}, linear regression \cite{spokoiny2015}, and covariance matrix spectral projectors \cite{Naumov2019,Jirak2022}. For SA methods, an online bootstrap for SGD was proposed in \cite{Fang2018}, with recent non-asymptotic analysis showing approximation rates up to $\mathcal{O}(n^{-1/2})$ for coverage probabilities in case of strongly convex objectives with independent noise \cite{sheshukova2025gaussian}. A similar bootstrap analysis for linear stochastic approximation (LSA) yielded rates of order $\mathcal{O}(n^{-1/4})$ \cite{samsonov2024gaussian}. Extensions to Markovian settings by \cite{ramprasad2023online} proved inconsistent, as demonstrated by \cite[Proposition~1]{liu2023statistical}. Meanwhile, \cite{liu2023statistical} proposed consistent mini-batch SGD estimators for $\varphi$-mixing noise, focusing solely on asymptotic consistency. They also suggested a consistent procedure for constructing the confidence intervals, but studied only its asymptotic properties using the independent block trick \cite{yu1994rates}. Lastly, multiplier bootstrap techniques for online non-convex SGD were considered by \cite{zhong2023online}. 
\par 
Among other approaches for constructing confidence intervals for dependent data, we mention methods based on (asymptotically) pivotal statistics \cite{li2023online,li2023statistical}. The authors of \cite{li2023statistical} considered the Polyak–Ruppert averaged $Q$-learning algorithm under the \iid\ noise assumption (generative model), while \cite{li2023online} generalized this approach to nonlinear stochastic approximation under Markov noise. Another group of methods are based on estimating the asymptotic covariance matrix $\Sigma_{\infty}$ appearing in the central limit theorem \eqref{eq:CLT_fort_prelim}. Among these approaches we mention the plug-in estimator of \cite{chen2020aos} and batch-mean estimators, see e.g. \cite{chen2021statistical,zhu2023online_cov_matr,roy2023online}. Specifically, \cite{roy2023online} treats SGD with Markov noise, contrasted with the independent-noise SGD analyses in \cite{chen2020aos,chen2021statistical,zhu2023online_cov_matr}, and \cite{zhong2023online} investigates both multiplier bootstrap and batch-mean estimators for nonconvex objectives. These methods yield non-asymptotic error bounds in expectation, $\PE[\norm{\hat{\Sigma}_n - \Sigma_{\infty}}]$, but do not study convergence rates in \eqref{eq:CLT_fort_prelim}. A notable exception is \cite{wu2024statistical}, which delivers a non-asymptotic analysis of TD-learning under \iid\ noise, using plug-in covariance estimates to achieve an $\mathcal{O}(n^{-1/3})$ approximation rate for coverage probabilities of $\thetas$.
\section{Accuracy of Gaussian approximation for LSA with Markov noise}
\label{sec:normal_approximation_main}
We first study the rate of normal approximation for the Polyak-Ruppert averaged LSA procedure. When there is no risk of ambiguity, we use the simplified notations $\funcAw_n = \funcA{\State_n}$ and $\funcbw_n = \funcb{\State_n}$. Starting from the definition \eqref{eq:lsa}, we get with elementary transformations that
\begin{equation}
\label{eq:main_recurrence_1_step}
\theta_{n} - \thetas = (\Id - \alpha_{n} \funcAw_n)(\theta_{n-1} - \thetas) - \alpha_{n} \funnoisew_{n}\eqsp,
\end{equation}
where we have set $\funnoisew_n= \funcnoise{\State_n}$ with $\funcnoise{z} =  \zmfuncA{z} \thetas - \zmfuncb{z}$, $\zmfuncA{z}  = \funcA{z} - \bA$, $\zmfuncb{z} = \funcb{z} - \barb$.
Here the random variable $\funcnoise{\State_n}$ can be viewed as a noise, measured at the optimal point $\thetas$. We also define the noise covariance matrix under the stationary distribution $\pi$:
\begin{equation}
\label{eq:def_noise_cov}
\textstyle \noisecov = \PE_\pi[\funcnoise{Z_0}\{ \funcnoise{Z_0}\}^\top] + 2 \sum_{\ell=1}^\infty \PE_\pi[\funcnoise{Z_0} \{\funcnoise{Z_\ell}\}^\top] \eqsp. 
\end{equation} 
We now impose the following conditions:
\begin{assum}
\label{assum:UGE}
The sequence $(Z_k)_{k \in \nset}$ is a Markov chain taking values in a Polish space $(\Zset,\Zsigma)$ with  Markov kernel $\MKQ$. Moreover, $\MKQ$ admits $\pi$ as a unique invariant distribution and is uniformly geometrically ergodic, that is, there exists $\taumix \in \nset$, such that for any $k \in \nset$, it holds that 
\begin{equation}
\label{eq:tau_mix_contraction}
\textstyle 
\dobrush(\MKQ^{k}) := \sup_{z,z' \in \Zset} \tvdist(\MKQ^{k}(z,\cdot),\MKQ^{k}(z',\cdot)) \leq (1/4)^{\lceil k / \taumix \rceil}\eqsp.
\end{equation}
\end{assum}
Parameter $\taumix$ in \eqref{eq:tau_mix_contraction} is referred to as \emph{mixing time}, see \cite{paulin_concentration_spectral}. We also impose the following assumptions on the noise variables $\funcnoise{z}$:
\begin{assum}
\label{assum:noise-level}
$\int_{\Zset}\funcA{z}\rmd \pi(z) = \bA$ and $\int_{\Zset}\funcb{z}\rmd \pi(z) = \barb$, with the matrix $-\bA$ being Hurwitz. Moreover, $\supconsteps = \sup_{z \in \msz}\normop{\funcnoise{z}} < \plusinfty$, and the mapping $z \to \funcA{z}$ is bounded, that is, 
\begin{equation}
\label{eq:a_matr_bounded}
\bConst{A} = \sup_{z \in \msz} \normop{\funcA{z}} \vee \sup_{z \in \msz} \normop{\zmfuncA{z}} < \infty\eqsp.
\end{equation}
Moreover, we assume that $\lambda_{\min}(\noisecov) > 0$.
\end{assum}
To motivate the introduction of $\noisecov$ in this particular form, note that under assumption \Cref{assum:noise-level}, $\PE_\pi[\funcnoise{Z_0}] = 0$, the following central limit theorem holds (see e.g. \cite[Chapter~21]{douc:moulines:priouret:soulier:2018}):
\begin{equation}
\label{eq:clt_eps_motivation}
\frac{1}{\sqrt{n}}\sum_{\ell=0}^{n-1}\funcnoise{Z_{\ell}} \to \mathcal{N}(0,\noisecov)\eqsp.
\end{equation}
The fact that the matrix $-\bA$ is Hurwitz implies that the linear system $\bA \theta = \barb$ has a unique solution $\thetalim$. Moreover, this fact is sufficient to show that the matrix $\Id - \alpha \bA$ is a contraction in an appropriate matrix $Q$-norm for small enough $\alpha > 0$. Precisely, the following result holds:
\begin{proposition}[Proposition 1 in \cite{samsonov2024gaussian}]
\label{prop:hurwitz_stability}
Let $-\bA$ be a Hurwitz matrix. Then for any $P = P^{\top} \succ 0$, there exists a unique matrix $Q = Q^{\top} \succ 0$, satisfying the Lyapunov equation $\bA^\top Q + Q \bA = P$. Moreover, setting
\begin{equation}
\label{eq:alpha_infty_def}
\textstyle 
a = \frac{\lambda_{\min}(P)}{2\normop{Q}}\eqsp, \quad
\text{and} \quad \alpha_\infty = \frac{\lambda_{\min}(P)}{2\qcond \normop{\bA}[Q]^{2}}\wedge \frac{\normop{Q}}{\lambda_{\min}(P)} \eqsp,
\end{equation}
where $\qcond = \lambda_{\max}(Q)/\lambda_{\min}(Q)$, it holds for any $\alpha \in [0, \alpha_{\infty}]$ that $\alpha a \leq 1/2$, and
\begin{equation}
\label{eq:contractin_q_norm}
\normop{\Id - \alpha \bA}[Q]^2 \leq 1 - \alpha a\eqsp.    
\end{equation}
\end{proposition}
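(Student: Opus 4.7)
The statement has three pieces: existence/uniqueness of $Q$, the contraction inequality \eqref{eq:contractin_q_norm}, and the step-size admissibility $\alpha a \le 1/2$. I would treat them in this order, since each uses the previous one.

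For existence/uniqueness of $Q$, the plan is to invoke the classical Lyapunov construction: since $-\bA$ is Hurwitz, the semigroup $t \mapsto \rme^{-\bA t}$ decays exponentially, so the Bochner integral
\begin{equation*}
Q \;=\; \int_{0}^{\infty} \rme^{-\bA^{\top} t}\, P\, \rme^{-\bA t}\, \rmd t
\end{equation*}
converges absolutely, is symmetric, and is positive definite because $P \succ 0$. Differentiating $t \mapsto \rme^{-\bA^{\top} t} P \rme^{-\bA t}$ and using the fundamental theorem of calculus yields $\bA^{\top} Q + Q \bA = P$. Uniqueness then follows from injectivity of the Lyapunov operator $X \mapsto \bA^{\top} X + X \bA$, which holds precisely because $\bA$ and $-\bA$ share no eigenvalue (guaranteed by the Hurwitz property of $-\bA$).

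For the contraction bound, I would simply expand the $Q$-norm squared:
\begin{equation*}
\normop{(\Id - \alpha \bA) x}[Q]^2 \;=\; x^{\top} Q x \;-\; \alpha\, x^{\top} (\bA^{\top} Q + Q \bA) x \;+\; \alpha^{2}\, x^{\top} \bA^{\top} Q \bA x,
\end{equation*}
and substitute $\bA^{\top} Q + Q \bA = P$ from the Lyapunov identity. The cross term becomes $-\alpha x^{\top} P x$, and the lower bound $x^{\top} P x \ge \lambda_{\min}(P) \normLine{x}^{2} \ge (\lambda_{\min}(P)/\lambda_{\max}(Q)) \normLine{x}_{Q}^{2} = 2a \normLine{x}_{Q}^{2}$ controls it from below in the $Q$-norm. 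For the quadratic-in-$\alpha$ term, the operator $Q$-norm definition gives $x^{\top} \bA^{\top} Q \bA x = \normLine{\bA x}_{Q}^{2} \le \normop{\bA}[Q]^{2} \normLine{x}_{Q}^{2}$. Collecting terms yields
\begin{equation*}
\normop{(\Id - \alpha \bA) x}[Q]^2 \;\le\; \bigl(1 - 2 \alpha a + \alpha^{2}\, \normop{\bA}[Q]^{2}\bigr)\, \normLine{x}_{Q}^{2}.
\end{equation*}
For the right-hand side to be at most $(1 - \alpha a) \normLine{x}_{Q}^{2}$, it suffices that $\alpha\, \normop{\bA}[Q]^{2} \le a$. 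Using $a = \lambda_{\min}(P)/(2\normop{Q})$ and $\normop{Q} = \lambda_{\max}(Q) = \qcond \lambda_{\min}(Q)$, one checks that the first term in the definition of $\alpha_{\infty}$ indeed enforces $\alpha \,\normop{\bA}[Q]^{2} \le a$ (with room to spare, since the paper's $\alpha_\infty$ divides by $\qcond$ rather than just $1$). The condition $\alpha a \le 1/2$ is then exactly the second term in the $\wedge$ defining $\alpha_{\infty}$, namely $\alpha \le \normop{Q}/\lambda_{\min}(P) = 1/(2a)$.

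The potential pitfall I would watch for is a mismatch in the definition of the matrix $Q$-norm: one must consistently use $\normop{B}[Q] = \sup_{x \neq 0} \normLine{Bx}_{Q}/\normLine{x}_{Q}$, which is precisely the norm needed so that $x^\top \bA^\top Q \bA x \le \normop{\bA}[Q]^2 \normLine{x}_{Q}^2$; any alternative reading would alter the algebra in the quadratic term. All remaining steps are direct manipulations, so the main intellectual content is the Lyapunov construction together with the completion-of-squares-style expansion above.
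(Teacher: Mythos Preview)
Your approach is essentially identical to the paper's: both expand $x^{\top}(\Id-\alpha\bA)^{\top}Q(\Id-\alpha\bA)x$, substitute the Lyapunov identity $\bA^{\top}Q+Q\bA=P$, bound the linear term via $x^{\top}Px \ge (\lambda_{\min}(P)/\normop{Q})\,x^{\top}Qx$ and the quadratic term via $\normop{\bA}[Q]^2$, and then invoke $\alpha\le\alpha_\infty$. The only difference is cosmetic: you spell out the integral construction $Q=\int_0^\infty \rme^{-\bA^\top t}P\rme^{-\bA t}\,\rmd t$ for existence, whereas the paper simply cites \cite{samsonov2024gaussian}.
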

For completeness we provide the proof of this result in \Cref{appendix:tehnical}. Now, we make an assumption about the form of step sizes $\alpha_{k}$ in \eqref{eq:main_recurrence_1_step} and the total number of observations $n$:
\begin{assum}
\label{assum:step-size}
Step sizes $\{\alpha_{k}\}_{k \in \nset}$ have a form $\alpha_{k} = c_{0} / (k+k_0)^{\gamma}$, where $\gamma \in [1/2;1)$, and the constant $c_{0} \leq 1/(2a)$. Moreover, we assume that $n$ is large enough and 
\[k_0 > g(a, \taumix, c_0, \bConst{A} \qcond, \alpha_{\infty})(\log n)^{1/\gamma}.\] 
Precise expressions for $g(a, \taumix, c_0, \bConst{A} \qcond, \alpha_{\infty})$ and for $n$ are given in \Cref{sec:Extended_version_of_A3} (see \Cref{assum:step-size_extended}).
\end{assum}
The main aim of lower bounding $n$ is to ensure that the error related to the choice of initial condition $\theta_0$ becomes small enough. Note also that our bound of \Cref{assum:step-size} requires to known the number of observations $n$ in advance and adjust $k_0$ accordingly. The same problem can be traced in the existing high-probability results for LSA \cite{mou2020linear,durmus2022finite,wu2024statistical}, in the regime when the confidence parameter $\delta$ in high-probability bounds scales with the number of iterations $n$.
\par 
It is known (see e.g. \cite{fort:clt:markov:2015}), that the assumptions \Cref{assum:UGE} - \Cref{assum:step-size} guarantee that the sequence $\prtheta_{n}$ is asymptotically normal, that is, $\sqrt{n}(\bar{\theta}_{n} - \thetas) \overset{d}{\rightarrow} \mathcal{N}(0,\Sigma_{\infty})$, 
where the covariance matrix $\Sigma_{\infty}$ has a form
\begin{equation}
\label{eq:sigma_infty_def}
\Sigma_{\infty} = \bA^{-1} \noisecov \bA^{-\top}\eqsp.
\end{equation}
For a fixed $u \in \sphere^{d-1}$ we consider projection of $\sqrt{n}(\bar{\theta}_{n} - \thetas)$
on $u$ and quantify the rate of convergence in the Kolmogorov distance, i.e., we consider the quantity
\begin{equation}
\label{eq:kolmogorov_dist_main}
\kolmogorov{\sqrt{n} u^\top (\bar{\theta}_{n} - \thetas) / \sigma(u),\mathcal{N}(0,1)} = \sup_{x \in \rset}|\PP(\sqrt{n} u^\top (\bar{\theta}_{n} - \thetas)/\sigma(u) \leq x) - \Phi(x)| \eqsp,
\end{equation}
where $\Phi(x)$ is the c.d.f. of the standard normal law $\mathcal{N}(0,1)$, and 
$\sigma^2(u) = u^\top \bA^{-1} \noisecov \bA^{-\top} u$.
To control the quantity \eqref{eq:kolmogorov_dist_main}, we will first present an auxiliary result. Define
\begin{equation}
\label{eq:Q_ell_definition_main}
G_{m:k} = \prod_{\ell=m}^k (\Id - \alpha_\ell \bA), \,
Q_\ell = \alpha_\ell \sum_{k=\ell}^{n-1} G_{\ell+1:k}\eqsp, \, \Sigma_n = \frac{1}{n}\sum_{\ell=2}^{n-1} Q_{\ell} \noisecov Q_{\ell}^{\top}\eqsp, \, \sigma^2_n(u) = u^\top \Sigma_n u\eqsp.
\end{equation}

\begin{theorem}
\label{th:normal_approximation_markov_with_sigma_n}
Assume \Cref{assum:UGE}, \Cref{assum:noise-level}, and \Cref{assum:step-size}. Then for any $u \in \sphere_{d-1}$, $\theta_0 \in \rset^d$, and initial distribution $\xi$ on $(\Zset,\Zsigma)$, it holds that
\begin{equation}
\label{eq:kolmogorov_distance_sigma_n}
\kolmogorov{\sqrt{n}  u^\top (\bar{\theta}_{n} - \thetas)/\sigma_n(u),\mathcal{N}(0,1)} \leq \mathrm{B}_n\eqsp,
\end{equation}
where we set
\begin{equation}
\label{eq:normal_approximation_markov}
\mathrm{B}_n = \frac{\ConstK[1] \log^{3/4}{n}}{n^{1/4}} + \frac{\ConstK[2] \log{n}}{n^{1/2}} + \frac{\Constupd{1}\|\theta_0 - \thetas \| + \Constupd{2}}{\sqrt{n}} + \Constupd{3}\frac{(\log n)^2}{n^{\gamma-1/2}} + \Constupd{4}\frac{(\log n)^{5/2}}{n^{\gamma-1/2}}\eqsp,
\end{equation}    
where $\ConstK[1]$ and $\ConstK[2]$ are defined in \Cref{appendix:Markov_technical}, and $\{\Constupd{i}\}_{i=1}^4$ are defined in \Cref{appendix:proofs}. 
\end{theorem}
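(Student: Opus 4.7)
The plan is to decompose $\bar\theta_n - \thetas$ into a leading \emph{linear} term driven by $\{\funnoisew_\ell\}$ plus a lower-order remainder, convert the linear part into a martingale via a Poisson decomposition of the Markov chain, and apply a quantitative martingale central limit theorem. Starting from \eqref{eq:main_recurrence_1_step} and splitting $\funcAw_\ell = \bA + (\funcAw_\ell - \bA)$ inside the unrolled recursion, Polyak--Ruppert averaging produces
\begin{equation*}
\bar\theta_n - \thetas \;=\; -\frac{1}{n}\sum_{\ell=2}^{n-1} Q_\ell \funnoisew_\ell \;+\; R_n^{(\mathrm{init})}(\theta_0) \;+\; R_n^{(\mathrm{nl})} \;+\; R_n^{(\mathrm{bd})},
\end{equation*}
with $Q_\ell$ as in \eqref{eq:Q_ell_definition_main}. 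Here $R_n^{(\mathrm{init})}$ carries the geometrically decaying initial-condition transient governed by \Cref{prop:hurwitz_stability}; $R_n^{(\mathrm{nl})}$ collects the nonlinear cross-term $(\funcAw_\ell-\bA)(\theta_{\ell-1}-\thetas)$, whose $L^2$ size scales like $\alpha_\ell^{1/2}$; and $R_n^{(\mathrm{bd})}$ collects summation-by-parts boundary terms. Combining existing $L^p$ moment bounds for LSA under Markov noise with the Kolmogorov smoothing inequality (converting $L^2$ smallness of a perturbation into an additive contribution to $\kolmogorov{\cdot,\cdot}$ via boundedness of the Gaussian density) produces the four $\Constupd{i}$-terms in \eqref{eq:normal_approximation_markov}, with $\alpha_n^{1/2} \sim n^{-\gamma/2}$ driving the $n^{\gamma - 1/2}$ scaling.

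For the leading linear term I construct the Poisson solution $\hfunnoisew$ to $\hfunnoisew - \MKQ \hfunnoisew = \funcnoise{\cdot}$, which exists with $\supnorm{\hfunnoisew} \lesssim \taumix \supconsteps$ by \Cref{assum:UGE}. Writing
\begin{equation*}
\funnoisew_\ell = D_\ell + \bigl(\MKQ\hfunnoisew(\State_{\ell-1}) - \MKQ\hfunnoisew(\State_\ell)\bigr), \qquad D_\ell := \hfunnoisew(\State_\ell) - \MKQ\hfunnoisew(\State_{\ell-1}),
\end{equation*}
identifies $D_\ell$ as a martingale difference with respect to $\filtr_\ell = \sigma(\State_0,\dots,\State_\ell)$. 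Abel summation turns the telescoping piece into $(1/n)\sum_\ell (Q_\ell - Q_{\ell-1}) \MKQ\hfunnoisew(\State_{\ell-1})$ plus two endpoint terms; since $\normop{Q_\ell - Q_{\ell-1}} \lesssim \alpha_\ell$ this contribution is $O(n^{-1/2}\log n)$ and feeds into $\ConstK[2]$. What remains is the projected martingale $M_n = (\sqrt{n}\sigma_n(u))^{-1}\sum_\ell u^\top Q_\ell D_\ell$.

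To $M_n$ I apply the quantitative martingale CLT of Bolthausen--Fan type, bounding $\kolmogorov{M_n, \mathcal{N}(0,1)}$ by $\PE^{1/2}\bigl[|V_n - 1|\bigr] + \sum_\ell \PE\bigl|u^\top Q_\ell D_\ell / (\sqrt{n}\,\sigma_n(u))\bigr|^3$, where $V_n = (n\sigma_n^2(u))^{-1}\sum_\ell \PE[(u^\top Q_\ell D_\ell)^2 \mid \filtr_{\ell-1}]$ is the normalized conditional quadratic variation. The third-moment sum is of order $n^{-1/2}\log n$, using $\supnorm{D_\ell} \lesssim \taumix \supconsteps$, the bound $\sum_\ell \normop{Q_\ell}^3 \lesssim n$ (consequence of \Cref{prop:hurwitz_stability} applied geometrically), and the lower bound on $\sigma_n(u)$ inherited from $\lambda_{\min}(\noisecov) > 0$.

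The main obstacle is controlling $\PE|V_n - 1|$. Each conditional variance depends on $\MKQ(\hfunnoisew\hfunnoisew^\top)(\State_{\ell-1})$, so $V_n$ is a quadratic additive functional of the Markov chain with matrix weights $Q_\ell^\top u u^\top Q_\ell$ and total magnitude $\Theta(n \sigma_n^2(u))$. Achieving $\PE|V_n - 1| \lesssim n^{-1/2}\log^{3/2}n$ requires the sharp concentration of quadratic additive functionals under uniform geometric ergodicity in the spirit of \cite{moulines2025note}; the square root in the martingale CLT then produces the leading $n^{-1/4}\log^{3/4}n$ rate and fixes $\ConstK[1]$. Coupling the initial distribution $\xi$ to the invariant $\pi$ incurs a cost geometrically small in $\taumix$ via \eqref{eq:tau_mix_contraction}, absorbed into the remainder terms.
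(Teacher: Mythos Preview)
Your overall strategy---linear/nonlinear decomposition, Poisson equation to extract a martingale, then a Bolthausen--Fan martingale CLT---matches the paper, but your treatment of the nonlinear remainder has a genuine gap. You say the cross-term $(\funcAw_\ell - \bA)(\theta_{\ell-1}-\thetas)$ has $L^2$ size $\alpha_\ell^{1/2}$ and that ``$\alpha_n^{1/2}\sim n^{-\gamma/2}$ drives the $n^{\gamma-1/2}$ scaling''. The first claim is true, but it cannot yield the second: propagating $\alpha_\ell^{1/2}$ through Minkowski gives $\sqrt{n}\,|R_n^{(\mathrm{nl})}| \lesssim n^{-1/2}\sum_k \alpha_k^{1/2} \sim n^{1/2-\gamma/2}$, which \emph{grows} for every $\gamma \in [1/2,1)$. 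To reach $n^{1/2-\gamma}$ the paper needs the sharper bound $\PE^{1/p}[|u^\top H_k^{(0)}|^p] \lesssim \alpha_k\sqrt{\log(1/\alpha_k)}$ (\Cref{prop:H_n_0_bound_Markov}), and moving from $\alpha_k^{1/2}$ down to $\alpha_k$ is the nontrivial heart of the argument: it requires a second layer of the perturbation expansion $H_k^{(0)} = J_k^{(1)} + H_k^{(1)}$ together with a blocking/coupling construction that decorrelates $\zmfuncA{\State_\ell}$ from the earlier noise (\Cref{prop:J_n_1_bound_Markov}). Without this step the bound does not close.

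Two smaller corrections. The Abel-summation remainder $n^{-1/2}\sum_\ell (Q_\ell - Q_{\ell+1})\MKQ\hfunnoisew(\State_\ell)$ is of order $n^{1/2-\gamma}$, not $n^{-1/2}\log n$, and it feeds $\Constupd{3}$ rather than $\ConstK[2]$. And the quadratic characteristic $u^\top\langle M\rangle_n u = \sum_\ell u^\top Q_{\ell+1}\tilde{\funnoisew}(\State_\ell)Q_{\ell+1}^\top u$ is a weighted \emph{linear} additive functional of the chain (each summand is a bounded function of a single $\State_\ell$), so McDiarmid-type bounded differences suffice (\Cref{lem:weighted_quadratic_characteristic}); there is no need for the quadratic-form concentration of \cite{moulines2025note}. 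The martingale CLT used is also not quite the form you quote: the paper applies the reduction of \Cref{lem:martingale_berry_esseen}, which involves $\PE^{1/(2p+1)}\bigl[\bigl|\sum_\ell \sigma_\ell^2 - s_n^2\bigr|^p\bigr]$ and is optimized at $p=\log n$ to produce the $n^{-1/4}\log^{3/4}n$ leading term.
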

\begin{proof}
We provide below the main elements of the proof and refer the reader to \Cref{appendix:proofs} for a complete derivations. We use the expansion \eqref{eq:decomp_fluctuation} and \eqref{eq:error_decomposition_LSA}, detailed in the appendix:
$\theta_k - \thetas = \ProdB_{1:k}(\theta_0 - \thetas) + J_k^{(0)} + H_k^{(0)}$,
where $\ProdB_{m:k} = \prod_{\ell = m}^k (\Id - \alpha_\ell \funcA{\State_{\ell}})$ for $m \le k$, $J_0^{(0)} = H_0^{(0)} = 0$, 
$J_k^{(0)} = -\sum_{\ell = 1}^{k} \alpha_
\ell G_{\ell+1:{k}} \funnoisew(Z_\ell)$, and $H_k^{(0)} = -\sum_{\ell = 1}^k \alpha_
\ell \ProdB_{\ell+1:k} (\funcA{\State_{\ell}} - \bA ) J_{\ell-1}^{(0)}$.
Taking average and changing the order of summation we get that $\sqrt{n} (\bar{\theta}_{n} - \thetas) = W + D_1$, with  
\begin{equation}
\label{eq: linear part and non linear}
W =- \frac{1}{\sqrt{n}} 
\sum_{\ell=1}^{n-1} Q_{\ell} \funnoisew(\State_\ell) , \quad D_1 =  \frac{1}{\sqrt n} \sum_{k = 0}^{n-1} \ProdB_{1:k} (\theta_0 - \thetas) +\frac{1}{\sqrt{n}} \sum_{k=1}^{n-1} H_k^{(0)}\eqsp. 
\end{equation}
Note that $W$ is a linear statistic of the Markov chain $\{Z_\ell\}_{\ell \in \nset}$. We further expand it into a sum of martingale-differences and remainder terms.  Note that under \Cref{assum:UGE} the function 
$\hat \funnoisew(z) = \sum_{k=0}^\infty \MKQ^k \funnoisew(z)$ is a solution to \emph{Poisson equation}
$\hat \funnoisew(z) - \MKQ \hat \funnoisew(z) = \funnoisew(z)$. We rewrite the term $W$ in the following way
$W  = n^{-1/2} M + D_2$, with $M = - \sum_{\ell=2}^{n-1} \Delta M_\ell$, $ = Q_{\ell}\bigl(\hat \funnoisew(\State_\ell) - \MKQ \hat \funnoisew(Z_{\ell-1})\bigr)$ and
\begin{equation}
\textstyle
D_2 = -\frac{1}{\sqrt n} Q_1 \hat \funnoisew (\State_1) +\frac{1}{\sqrt n}  Q_{n-1} \MKQ \hat \funnoisew (\State_{n-1}) + \sum_{\ell=1}^{n-2}(Q_{\ell}- Q_{\ell+1})\MKQ\hat \funnoisew(\State_{\ell})\eqsp.
\end{equation}
Note that $\{\Delta M_\ell\}_{\ell = 2}^{n-1}$ is a martingale-difference sequence w.r.t. $\mathcal F_\ell= \sigma(\State_k, k \leq \ell)$. Note
\begin{equation}
\label{eq:M_quadratic_characteristic}
\textstyle
\PE_{\pi}^{\mathcal F_{\ell-1}}[\Delta M_\ell \Delta M_\ell^\top] = Q_{\ell} \tilde \funnoisew(\State_{\ell-1}) Q_{\ell}^{\top}, \qquad \tilde \funnoisew(z) = \MKQ \hat \funnoisew \hat \funnoisew^\top(z) - \MKQ \hat \funnoisew(z) \MKQ \hat \funnoisew^\top(z)  \eqsp. 
\end{equation}
Furthermore, we have
$\pi\bigl(\tilde \funnoisew\bigr) = \noisecov \eqsp.$
The term $M$ is a martingale, whose quadratic characteristic is given  by
$\langle M \rangle_n = \sum_{\ell=1}^{n-2} Q_{\ell+1} \tilde \funnoisew(\State_\ell) Q_{\ell+1}^{\top}$. Denote $D = D_1 + D_2$. With these notations, we get
$\sqrt{n}(\bar{\theta}_{n} - \thetas) = n^{-1/2} M + D$.
Applying \Cref{prop:nonlinearapprox} with $X = n^{-1/2} u^\top M/\sigma_n(u)$ and $Y = u^\top D/\sigma_n(u)$, we obtain for any $p \geq 2$,
\begin{equation}
\label{eq:D_remainder_term_main}
\textstyle 
\kolmogorov{\frac{\sqrt{n}  u^\top (\bar{\theta}_{n} - \thetas)}{\sigma_n(u)},\mathcal{N}(0,1)} \leq \kolmogorov{\frac{u^\top M}{\sqrt{n}\sigma_n(u)},\mathcal{N}(0,1)} + 2 \bigl\{\PE[\bigl|\frac{u^\top D}{\sigma_n(u)}\bigr|^{p}]\bigr\}^{1/(p+1)} \eqsp. 
\end{equation}
To obtain the rate of Gaussian approximation for $\sqrt{n}  u^\top (\bar{\theta}_{n} - \thetas)/\sigma_n(u)$ it remains to control the moments of the term $|u^\top D/\sigma_n(u)|$, which is done in \Cref{prop:D_term_bound}, and to control $\kolmogorov{u^\top M/(\sqrt{n}\sigma_n(u))}$. To bound the latter term we use a normal approximation result for sums of martingale-difference sequences, which builds upon the arguments of \cite{bolthausen1982martingale} and \cite{fan2019exact} - see  \Cref{th:normal_approximation_martingales}, applied with $p = \log{n}$. 
\end{proof}

Note that the result of \Cref{th:normal_approximation_markov_with_sigma_n} yields an approximation of $\sqrt{n}  u^\top (\bar{\theta}_{n} - \thetas)$ with $\mathcal{N}(0,\sigma^2_n(u))$, and not the limiting quantity $\sigma^2(u)$ from the CLT \eqref{eq:CLT_fort_prelim}. In order to complete the result, we need an additional result on the Gaussian comparison between $\mathcal{N}(0,\sigma^2_n(u))$ and $\mathcal{N}(0,\sigma^2(u))$. This result is based is based on the quantitative estimates provided first in \cite{BarUly86}, and then revised in \cite{devroye2018total}.

\begin{corollary}
\label{cor:normal_approximation_markov}
Under assumptions of \Cref{th:normal_approximation_markov_with_sigma_n} it holds, with $\mathrm{B}_n$ given in \eqref{eq:normal_approximation_markov}, that 
\begin{align}
\kolmogorov{\sqrt{n}  u^\top (\bar{\theta}_{n} - \thetas)/\sigma(u),\mathcal{N}(0,1)} 
\leq \mathrm{B}_n + C_{\infty}n^{\gamma-1}\eqsp,
\end{align}
where  $C_{\infty}$ is defined in  \eqref{def:C_infty_const}.
\end{corollary}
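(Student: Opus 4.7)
The plan is to reduce the corollary to \Cref{th:normal_approximation_markov_with_sigma_n} via a quantitative comparison between the two centered Gaussians $\gauss(0,\sigma_n^2(u))$ and $\gauss(0,\sigma^2(u))$. Writing $X_n = \sqrt{n}\,u^\top(\bar{\theta}_{n} - \thetas)$ and $Y \sim \gauss(0,1)$, and using that the Kolmogorov distance is invariant under strictly positive rescaling of both arguments (so that, with $a = \sigma_n(u)/\sigma(u)$, $\kolmogorov{a Z, a Y'} = \kolmogorov{Z, Y'}$), the triangle inequality yields
\begin{equation*}
\kolmogorov{X_n/\sigma(u),\, Y} \leq \kolmogorov{X_n/\sigma_n(u),\, Y} + \kolmogorov{(\sigma_n(u)/\sigma(u))\,Y,\, Y}\eqsp.
\end{equation*}
By \Cref{th:normal_approximation_markov_with_sigma_n} the first summand is at most $\mathrm{B}_n$, so it remains to bound the second term, which is the Kolmogorov distance between two centered Gaussians.

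For this step I would invoke the quantitative Gaussian comparison of \cite{devroye2018total} (building on \cite{BarUly86}),
\begin{equation*}
\kolmogorov{\gauss(0,\sigma_n^2(u)),\gauss(0,\sigma^2(u))} \leq \tvdist\bigl(\gauss(0,\sigma_n^2(u)),\gauss(0,\sigma^2(u))\bigr) \leq \tfrac{3}{2}\,\frac{|\sigma_n^2(u) - \sigma^2(u)|}{\max(\sigma_n^2(u),\sigma^2(u))}\eqsp.
\end{equation*}
Since \Cref{assum:noise-level} guarantees $\lambda_{\min}(\noisecov)>0$ and invertibility of $\bA$, one has $\inf_{u\in\sphere_{d-1}}\sigma^2(u) \geq \lambda_{\min}(\bA^{-1}\noisecov\bA^{-\top}) > 0$. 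Hence the whole statement reduces to the deterministic estimate $\sup_{u\in\sphere_{d-1}}|\sigma_n^2(u) - \sigma^2(u)| \lesssim n^{\gamma-1}$, from which the constant $C_\infty$ of \eqref{def:C_infty_const} is read off.

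The main obstacle is precisely this last deterministic bound on the step-size side. Expanding
\begin{equation*}
u^\top Q_\ell \noisecov Q_\ell^\top u - u^\top \bA^{-1}\noisecov\bA^{-\top}u = u^\top(Q_\ell - \bA^{-1})\noisecov Q_\ell^\top u + u^\top\bA^{-1}\noisecov(Q_\ell - \bA^{-1})^\top u\eqsp,
\end{equation*}
and absorbing the missing indices $\ell\in\{0,1,n-1\}$ into an extra $O(1/n)$ error, it suffices to control $n^{-1}\sum_{\ell=2}^{n-1}\normop{Q_\ell - \bA^{-1}}$. The key identity is the telescoping $G_{\ell+1:k-1} - G_{\ell+1:k} = \alpha_k \bA\, G_{\ell+1:k-1}$, which gives $\bA \sum_{k=\ell}^{n-1}\alpha_{k+1}G_{\ell+1:k} = \Id - G_{\ell+1:n}$ and therefore
\begin{equation*}
Q_\ell - \bA^{-1} = \sum_{k=\ell}^{n-1}(\alpha_\ell - \alpha_{k+1})\, G_{\ell+1:k} \;-\; \bA^{-1} G_{\ell+1:n}\eqsp.
\end{equation*}
I would combine this with the $Q$-contraction $\normop{G_{\ell+1:k}}[Q]^2 \leq \prod_{j=\ell+1}^{k}(1-\alpha_j a)$ from \Cref{prop:hurwitz_stability} and the step-size regularity $|\alpha_\ell - \alpha_{k+1}| \lesssim \alpha_\ell(k-\ell+1)/\ell$ (obtained from the mean value theorem applied to $x\mapsto x^{-\gamma}$). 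The first sum is concentrated on the effective time window $k-\ell \lesssim 1/\alpha_\ell \sim \ell^\gamma$ and contributes $O(\ell^{\gamma-1})$ per index $\ell$; the boundary term $\bA^{-1} G_{\ell+1:n}$ is exponentially small unless $n-\ell \lesssim n^\gamma$, in which region it is only $O(1)$. Summing over $\ell$ and dividing by $n$ gives a bound of order $n^\gamma/n = n^{\gamma-1}$ for both contributions. The actual difficulty is therefore not probabilistic but purely computational: tracking the dependence on $\bA$, $\qcond$, $\lambda_{\min}(\noisecov)$, and $c_0$ through these elementary estimates is what produces the explicit constant $C_\infty$; no tool beyond \Cref{prop:hurwitz_stability} is needed.
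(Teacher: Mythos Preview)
Your proposal is correct and follows essentially the same route as the paper: triangle inequality on the Kolmogorov distance, the Gaussian comparison from \cite{devroye2018total} (stated in the paper as \Cref{lem:Pinsker}), and the deterministic bound $|\sigma_n^2(u)-\sigma^2(u)|\lesssim n^{\gamma-1}$ obtained from the decomposition $Q_\ell-\bA^{-1}=S_\ell-\bA^{-1}G_{\ell:n-1}$ (the paper's \Cref{repr:qtminusa}, \Cref{prop:st_bound}, and \Cref{lem:bound_norm_diff_variance}). The only organizational differences are that the paper normalizes by $\sigma_n^2(u)$ rather than $\sigma^2(u)$ in the comparison step (using \Cref{lem:bound_sigma_n}, which is where the sample-size condition in \Cref{assum:step-size} enters), and that for the linear cross term it exploits the exact summation identity $\sum_{\ell}(Q_\ell-\bA^{-1})=-\bA^{-1}\sum_j G_{1:j}$ instead of bounding each $\normop{Q_\ell-\bA^{-1}}$ individually; both routes yield the same $n^{\gamma-1}$ rate.
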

\begin{remark}
\label{rem:optimal_step_size}
The bound of \Cref{cor:normal_approximation_markov} predicts the optimal error of normal approximation for Polyak-Ruppert averaged estimates of order $n^{-1/4}$ up to a logarithmic factors in $n$, which is achieved with the step size $\alpha_{k} = c_{0}/(k+k_0)^{3/4}$, that is, when setting $\gamma = 3/4$ in \eqref{eq:normal_approximation_markov}. In this case we obtain the optimized bound:
\begin{equation}
\label{eq:kolmogorov_bound_optimized}
\textstyle 
\kolmogorov{\sqrt{n}  u^\top (\bar{\theta}_{n} - \thetas)/\sigma(u),\mathcal{N}(0,1)}  \lesssim_{pr} \frac{(\log n)^{5/2}}{n^{1/4}} \eqsp.
\end{equation}
where $\lesssim_{pr}$ stands for inequality up to absolute and problem-specific constants (such as $\bConst{A}, \qcond, a, \taumix$), but not $n$.
\end{remark}
\paragraph{Discussion.} The proof of \Cref{cor:normal_approximation_markov} is given in \Cref{sec:cor:normal_approximation_markov}. Results similar to the one of \Cref{cor:normal_approximation_markov} have been recently obtained in the literature in \cite{samsonov2024gaussian}, \cite{srikant2024rates} and \cite{pmlr-v99-anastasiou19a}. \cite{samsonov2024gaussian} considered the LSA algorithm based on \iid\ noise variables $\{Z_{k}\}_{1 \leq k \leq n}$ and a randomized concentration approach based on \cite{shao2022berry}. This result was later refined in \cite{wu2024statistical}. To the best of our knowledge, this technique does not extend to Markovian noise $\{Z_{k}\}_{1 \leq k \leq n}$. 
\cite{wu2025uncertainty} obtain for the setting of TD learning with Markov noise the bound of the form
\begin{equation}
\label{eq:wu_bound}
\textstyle 
\metricd[C](\sqrt{n} (\bar{\theta}_{n} - \thetas),\Sigma_{\infty}^{1/2}\eta) \lesssim \frac{\log{n}}{n^{1/4}}\eqsp,
\end{equation}
where $\eta \sim \mathcal{N}(0,\Id_d)$, and $\metricd[C](X, Y) = \sup_{B \in \Conv(\rset^{d})}\left|\P\bigl(X \in B\bigr) - \P(Y \in B)\right|$ is the convex distance. However, it is not clear, if this result applies to the general LSA setting. The authors in \cite{srikant2024rates} obtained the convergence rate for the general LSA procedure with Markov noise 
\begin{equation}
\textstyle
\metricd[W](\sqrt{n}(\bar{\theta}_{n} - \thetas),\Sigma_{\infty}^{1/2}\eta) \lesssim \frac{\sqrt{\log{n}}}{n^{1/6}}\eqsp,
\end{equation}
where $\metricd[W]$ is the $1$-st order Wasserstein distance. This result implies, due to the classical relations of between Kolmogorov and Wasserstein distance (see \cite{Ross}), the final rate approximation on Kolmogorov distance of order $\mathcal{O}(n^{-1/12})$, which is worse compared to the rate of \Cref{th:normal_approximation_markov_with_sigma_n}. On the other hand, the result of \cite{srikant2024rates} holds in a $d$-dimensional setting, whereas our analysis is restricted to one-dimensional projections of the estimation error. Nevertheless, the essential part of our analysis, namely controlling the remainder term $D$ in \eqref{eq:D_remainder_term_main}, can be generalized to the $d$-dimensional case through bounds on $\PE_{\xi}^{1/p}\left[ \norm{D}^{p}\right]$, following an approach similar to that presented in \Cref{appendix:proofs}.

%bootstrap validity 
\section{Multiplier subsample bootstrap for LSA}
\label{sec:bootstrap}
We will apply the multiplier subsample bootstrap (MSB) procedure,  a block-based approach that constructs the bootstrap statistic via a blockwise scheme; see \cite{kunsch1989jackknife,MR4733879}. Below we describe in details the MSB approach, closely following \cite{MR4733879}. Let $b_n$ be the length of block, and for each $t = 0,\ldots, n - b_n$, define $\bar{\theta}_{b_n,t} = (1/b_n)\sum_{\ell=t}^{t+b_n-1} \theta_\ell$, the  ''scale $b_n$'' version of $\bar{\theta}_{n}$. To imitate $\bar{\theta}_{n}$, the MSB estimator of $\bar{\theta}_{n}$ is given by
\begin{equation}
\label{eq:OBM_estimator_def}
\bar{\theta}_{n, b_n}(u) = \frac{\sqrt{b_n}}{\sqrt{n-b_n+1}} \sum_{t = 0}^{n-b_n} w_t (\bar{\theta}_{b_n,t} - \bar{\theta}_{n})^\top u\eqsp, 
\end{equation}
where $\Xi_n = \{Z_{\ell}\}_{\ell=1}^{n}$, and $\{w_{\ell}\}_{0 \leq \ell \leq n-b_n}$, the multiplier weights, are \iid\, $\mathcal N(0,1)$ random variables, which are independent of $\Xi_n$. We write, respectively, $\PPb = \PP(\cdot | \Xi_n)$ and $\PEb = \PE(\cdot | \Xi_n)$ for the corresponding conditional probability and expectation. For simplicity, we do not "subsample" the blocks: the theory extends readily, but we do not want to add another layer of notations.
The key idea of the MSB procedure \eqref{eq:OBM_estimator_def} is that the "bootstrap world" distribution $\PPb\big(\bar{\theta}_{n, b_n}(u) \leq x\big)$ should be close to their "real world" counterparts $\PP\big(\sqrt{n} (\bar{\theta}_n - \thetas) \leq x\big)$ for any $x \in \rset$. Formally, the procedure \eqref{eq:OBM_estimator_def} is said to be \emph{asymptotically valid}, if  the quantity 
\begin{equation}
\label{eq:boot_validity_supremum}
\sup_{x \in \rset} |\PP(\sqrt n (\bar{\theta}_{n} - \thetas)^\top u \le x ) - \PPb(\bar{\theta}_{n, b_n}(u) \le x)| 
\end{equation}
converges to $0$ in $\PP$-probability. Typically the authors consider the asymptotic validity of the procedures for constructing the confidence intervals (either with multiplier bootstrap \cite{Fang2018,MR4733879} or with direct estimation of the asymptotic covariance \cite{chen2020aos,zhu2023online_cov_matr}). Our aim in this section is to provide \emph{fully non-asymptotic} bounds on the rate at which the supremum in \eqref{eq:boot_validity_supremum} decays as a function of $n$.
\par 
The MSB estimator \eqref{eq:OBM_estimator_def} $\bar{\theta}_{n, b_n}(u)$ is normally distributed w.r.t. $\PPb$, that is,
\[
\textstyle 
\bar{\theta}_{n, b_n}(u) \sim \mathcal{N}(0,\hat \sigma^2_\theta(u))\eqsp,
\]
with the variance $\hat \sigma^2_\theta(u)$ given by
\begin{equation}
\label{eq:sigma_hat_variance}
\hat \sigma^2_\theta(u) = \frac{b_n}{n-b_n+1} \sum_{t = 0}^{n-b_n}  (( \bar{\theta}_{b_n,t} - \bar{\theta}_{n})^\top u )^2\eqsp.
\end{equation}
The parameter $b_n$ is commonly referred to as \emph{lag window}, or \emph{bandwidth} (see \cite{flegal:2010} and references therein). Under the bootstrap probability $\PPb$, $\bar{\theta}_{n, b_n}(u)$ (see  \eqref{eq:OBM_estimator_def}) is a Gaussian approximation of $\bar{\theta}_n$ with estimated variance $\hat \sigma^2_\theta(u)$. Up to a multiplicative factor tending to $1$ as $n$ goes to $\infty$, the variance formula in \eqref{eq:sigma_hat_variance} coincides with the \emph{overlapping batch mean} estimator (OBM), a well-known technique for estimating the asymptotic variance of the Markov chain, suggested in \cite{meketon1984overlapping}. The properties of OBM estimator are studied, see e.g. \cite{welch1987relationship,damerdji1991strong,flegal:2010}. In particular, it is known (see \cite{flegal:2010}), that the OBM is a consistent estimator of the asymptotic variance of a Markov chain under suitable ergodicity assumptions, provided that $b_n \to \infty$ as $n \to \infty$. 
\par 
Note that even if $\{Z_k\}_{k \in \nset}$ is a Markov chain, the iterates $\{\theta_k\}_{k \in \nset}$ alone do not form a Markov chain (one must rather consider the joint process $(\theta_k,Z_k)$). Consequently, the classical consistency results for overlapping‐batch‐means variance estimators do not apply directly to \eqref{eq:sigma_hat_variance}. Fortunately, we show below that applying the block bootstrap for the sequence $\{\theta_{\ell}\}$, is equivalent, up to a suitable correction, to the block bootstrap procedure applied to the non-observable random variables $\{\funcnoise{Z_{\ell}}\}$. To make this precise, we define for each block start $t \in \{0,\ldots,n-b_n\}$ the quantities:
\begin{equation}
\label{eq:W_n_b_t_def}
\bar{W}_{b_n,t} = \frac{1}{b_n}\sum_{\ell=t+1}^{t+b_n-1}\funcnoise{Z_{\ell}}\eqsp, \;\bar{W}_{n} = \frac{1}{n} \sum_{\ell=1}^{n-1}\funcnoise{Z_{\ell}}, \;
\hat \sigma_{\varepsilon}^2(u) = \frac{b_n}{n-b_n+1} \sum_{t = 0}^{n-b_n}  
((\bar{W}_{n,b_n,t} - \bar{W}_{n})^{\top} u )^2\eqsp.
\end{equation}
Then the following proposition holds:
\begin{proposition}
\label{prop:block_bootstrap_batch_mean}
Assume \Cref{assum:UGE}, \Cref{assum:noise-level}, and \Cref{assum:step-size}. Then for any $u \in \sphere_{d-1}$, it holds that 
\begin{equation}
\label{eq:block_bootstrap_batch_mean}
\hat \sigma^2_\theta(u) = \hat \sigma_{\varepsilon}^2(u) + \RemCov_{var}(u)\eqsp,
\end{equation}
where $\RemCov_{var}(u)$ is a remainder term defined in \Cref{sec:block_bootstrap_batch_mean_proof} (see \eqref{def:remcov_car}).
Moreover, for any $2 \leq p \leq \log{n}$, and any initial distribution $\xi$ on $(\Zset,\Zsigma)$, it holds that 
\begin{equation}
\label{eq:remainder_term_variance_bound}
\begin{split}
\PE_{\xi}^{1/p}\bigl[ \bigl| \RemCov_{var}(u) \bigr|^{p} \bigr] &\lesssim M_1pb_n^{1/2}n^{\gamma/2-1} + M_2p^{4}(\log n)b_n^{1/2}n^{-\gamma} \\&\qquad \qquad + M_3pb_{n}^{-1/2}n^{\gamma/2} + M_4p^4(\log n)n^{-1} + M_5pn^{2\gamma-2}
\end{split} 
\end{equation}
and the constants $M_i$ are defined in \Cref{sec:block_bootstrap_batch_mean_proof} (see \eqref{def:const_M_i}).
\end{proposition}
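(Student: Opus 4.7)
My plan is to expand $\bar\theta_{b_n,t}-\bar\theta_n$ via the decomposition already used in the proof of \Cref{th:normal_approximation_markov_with_sigma_n}, isolate a leading part that coincides with $(\bar W_{b_n,t}-\bar W_n)^\top u$ up to a fixed linear map, and collect everything else in a residual $r_t(u)$. Starting from $\theta_k-\thetas=\ProdB_{1:k}(\theta_0-\thetas)+J_k^{(0)}+H_k^{(0)}$, averaging over the block $\{t,\ldots,t+b_n-1\}$ and over the full trajectory, and swapping the order of summation inside $J_k^{(0)}=-\sum_{\ell=1}^k\alpha_\ell G_{\ell+1:k}\funnoisew_\ell$, the fluctuating part of $\bar\theta_{b_n,t}-\bar\theta_n$ turns into a weighted block sum of the noise $\funnoisew_\ell$ with matrix weights that asymptote to $-\bA^{-1}$. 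Replacing these weights by their limit (and absorbing $-\bA^{-\top}$ into the test vector on the $\hat\sigma_\varepsilon^2$ side) produces the leading term, while residuals come from (a) the initial condition via $\ProdB_{1:k}$, (b) the nonlinear $H_k^{(0)}$, and (c) the finite-sample approximation error $Q_\ell+\bA^{-1}$.

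Writing $(\bar\theta_{b_n,t}-\bar\theta_n)^\top u=(\bar W_{b_n,t}-\bar W_n)^\top u+r_t(u)$, expanding the square and summing in $t$ gives
\[
\RemCov_{var}(u)=\frac{2b_n}{n-b_n+1}\sum_{t=0}^{n-b_n}(\bar W_{b_n,t}-\bar W_n)^\top u\cdot r_t(u)+\frac{b_n}{n-b_n+1}\sum_{t=0}^{n-b_n}r_t(u)^2.
\]
By Cauchy--Schwarz in the outer sum together with Minkowski's inequality in $L^p$, the $p$-th moment of $\RemCov_{var}(u)$ reduces to $L^{2p}$ bounds on $\sum_t((\bar W_{b_n,t}-\bar W_n)^\top u)^2$ and $\sum_t r_t(u)^2$. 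The first is a quadratic functional of the chain; I would handle it via the Poisson-equation decomposition $\hat\funnoisew-\MKQ\hat\funnoisew=\funnoisew$ used in \Cref{th:normal_approximation_markov_with_sigma_n} to reduce to a martingale quadratic form, then invoke the Burkholder/Rosenthal-type inequalities together with the Markov-chain quadratic-form concentration results of \cite{moulines2025note,atchade2014martingale} already leveraged in the paper.

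The residual $r_t(u)^2$ splits into contributions matching the five summands of \eqref{eq:remainder_term_variance_bound}: (i) the geometric decay of $\ProdB_{1:k}$ in the $Q$-norm, together with \Cref{prop:hurwitz_stability}, controls the initial-condition residual and produces the $M_5 p\,n^{2\gamma-2}$ term; (ii) the nonlinear $H_k^{(0)}$ residual is handled via the existing $L^p$ bounds on LSA fluctuations and by applying Burkholder to the martingale approximation of $H_k^{(0)}$, which is the source of the $p^4$ factors and of the $b_n^{1/2}n^{\gamma/2-1}$, $b_n^{1/2}(\log n)n^{-\gamma}$, and $(\log n)n^{-1}$ scalings; (iii) a telescoping estimate of $Q_\ell+\bA^{-1}=\mathcal{O}(\alpha_\ell^{1/2})$, summed against the block and global weights, yields the $b_n^{-1/2}n^{\gamma/2}$ term.

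The principal obstacle is the strong overlap between neighbouring blocks: since consecutive averages share $b_n-1$ out of $b_n$ samples, the outer sum $\sum_t$ cannot be decoupled by any direct mixing or independence argument at the block level. My strategy is to rewrite the double sum over $t$ and over each block as a single linear (for the cross term) or quadratic (for the residual-squared term) functional of the Markov chain $\{Z_\ell\}$, and then apply the $L^p$-inequalities for such functionals from \cite{moulines2025note,atchade2014martingale} that already underpin \Cref{th:normal_approximation_markov_with_sigma_n}. The two different $p$-regimes appearing in \eqref{eq:remainder_term_variance_bound} (linear in $p$ for the approximation-of-$Q_\ell$ and initial-condition errors, versus $p^4$ for the martingale-approximated nonlinear part) arise precisely from this reduction to linear versus quadratic Markov-chain functionals.
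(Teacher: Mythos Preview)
Your decomposition $(\bar\theta_{b_n,t}-\bar\theta_n)^\top u=(\bar W_{b_n,t}-\bar W_n)^\top u+r_t(u)$ and the resulting split of $\RemCov_{var}(u)$ into a cross term and a squared-residual term match the paper exactly (there they are called $D_2^b$ and $D_1^b$). There are, however, two genuine differences between your plan and the paper's proof.

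First, the paper does \emph{not} use the global expansion $\theta_k-\thetas=\ProdB_{1:k}(\theta_0-\thetas)+J_k^{(0)}+H_k^{(0)}$ for the block averages. Instead it restarts the recursion at the start of each block, writing $\theta_{k+t}-\thetas=\ProdB_{t+1:t+k}(\theta_t-\thetas)+J_{k,t,b_n}^{(0)}+H_{k,t,b_n}^{(0)}$ with block-local $J$, $H$ and $Q$. The transient piece is then controlled via the last-iterate bound $\|\theta_t-\thetas\|=O(\sqrt{\alpha_t})$ from \Cref{prop:last_iterate_bound}, and this is the actual origin of the dominant $M_3\,p\,b_n^{-1/2}n^{\gamma/2}$ term (together with the $\bA^{-1}G_{\ell:n-1}$ piece of $Q_\ell-\bA^{-1}$), not a telescoping estimate as you suggest. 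Your global decomposition would instead produce ``leakage'' terms from indices $\ell\le t$; these can be handled, but the block-local restart is cleaner.

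Second, and more importantly, the paper does \emph{not} use Cauchy--Schwarz on the outer sum, and it does \emph{not} invoke the quadratic-form concentration of \cite{moulines2025note,atchade2014martingale} for this proposition at all. Those tools are reserved for \Cref{prop:concentration_OBM}. Here the bound on both $D_1^b$ and $D_2^b$ is obtained by the crudest possible route: Minkowski on the $t$-sum followed by H\"older termwise, e.g.\ $\PE_\xi^{1/p}[|D_2^b|^p]\le \frac{2b_n}{n-b_n+1}\sum_t \PE_\xi^{1/(2p)}[|u^\top\bar W_{b_n,t}|^{2p}]\,\PE_\xi^{1/(2p)}[|u^\top\bar D_{b_n,t}|^{2p}]+\ldots$, then summing pointwise $L^{2p}$ bounds (Lemmas~\ref{lem:bound_barD_b_t}--\ref{lem:bound_barW_and_barW_b}) in $t$. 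The ``overlap obstacle'' you identify as principal is therefore a non-issue: since only an $L^p$ bound on $\RemCov_{var}(u)$ is required, no cancellation across blocks is exploited and the dependence between neighbouring block averages is simply ignored. The $p^4$ factors arise because the $L^{2p}$ bound on $H_{k,t,b_n}^{(0)}$ already carries a $p^2$ (\Cref{prop:H_n_0_bound_Markov}) and this gets squared in $D_1^b$. Your route via quadratic-form concentration might well succeed, but it is substantially more work than the paper needs.
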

The proof of \Cref{prop:block_bootstrap_batch_mean} is given in \Cref{sec:block_bootstrap_batch_mean_proof}. The bound of \Cref{prop:block_bootstrap_batch_mean} has some noteworthy properties. First, our bound requires that the block size $b_n$ to grow at least like $n^{\gamma}$ to ensure that the residual term $\RemCov_{var}(u)$ is small. Careful inspection of the bootstrap estimator $\bar{\theta}_{n, b_n}(u)$ from \eqref{eq:OBM_estimator_def} explains this dependence. Indeed, one can expect that the decay rate of the covariance between $\theta_k$ and $\theta_{k+b_n}$ depends on the quantity $\sum_{k=t+1}^{t+b_n}\alpha_{k}$, see e.g. \cite{durmus2020biassgd}. At the same time,  for $b_n \simeq n^{\beta}, 0 < \beta < 1$, and $\alpha_k = c_0/(k_0+k)^{\gamma}$, which is set according to \Cref{assum:step-size}, we obtain that \[
\textstyle 
\sum_{k=t+1}^{t+b_n}\alpha_{k} \lesssim b_n/t^{\gamma}\eqsp. 
\] 
Hence, if $b_n \ll n^{\gamma}$, the sequence $\{\theta_k\}$ does not mix within each block $t \leq k \leq t+b_n$. 
\par 
Given that the remainder term $\RemCov_{var}(u)$ in \eqref{eq:remainder_term_variance_bound} is negligible, it remains to analyze the concentration properties of $\sigma_{\varepsilon}^2(u)$. Indeed, under \Cref{assum:UGE} and \Cref{assum:noise-level}, the (normalized) linear statistic $n^{-1/2}\sum_{\ell=0}^{n-1} u^{\top} \funcnoise{Z_{\ell}}$ is asymptotically normal with the asymptotic variance equal to $\sigma^2(u)$ defined in \eqref{eq:kolmogorov_dist_main}. Moreover, $\hat \sigma_{\varepsilon}^2(u)$ coincides with the overlapping batch mean estimator of $\sigma^2(u)$. In order to prove concentration bounds for $\hat \sigma_{\varepsilon}^2(u)$ around $\sigma^2(u)$, we apply the result of \cite[Theorem~1]{moulines2025note}.
\begin{proposition} 
\label{prop:concentration_OBM}
Assume \Cref{assum:UGE} and \Cref{assum:noise-level}. Then for any $p \geq 2$, and $n \geq 2b_n + 1$, and any initial distribution $\xi$ on $(\Zset,\Zsigma)$, it holds that
\begin{equation}
\PE_{\xi}^{1/p}[|\hat \sigma_{\varepsilon}^2(u) - \sigma^2(u)|^p] \lesssim \frac{p \taumix^3 \supconsteps^2}{\sqrt{n}} + \frac{p^2 \taumix^2 \sqrt{b_n} \supconsteps^2}{\sqrt{n}} + \frac{p \taumix^{2}\supconsteps^2}{\sqrt{b_n}}\eqsp.
\end{equation}
\end{proposition}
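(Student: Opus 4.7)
My plan is to recognize $\hat\sigma_\varepsilon^2(u)$ as an overlapping batch–means estimator applied to the scalar bounded functional $f(z) = u^\top \funcnoise{z}$, which satisfies $\|f\|_{\infty} \leq \supconsteps$, of the uniformly geometrically ergodic chain $\{Z_\ell\}$. The statement is then a direct instantiation of \cite{moulines2025note}, Theorem~1, which provides $L^p$ moment bounds for quadratic statistics of UGE Markov chains. The only substantive work is to cast $\hat\sigma_\varepsilon^2(u)$ into the canonical form covered by that theorem and to track how $\taumix$, $b_n$, and $p$ enter each of the three resulting terms.

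Concretely, I would first expand the square in \eqref{eq:W_n_b_t_def} to write
\[
\hat\sigma_\varepsilon^2(u) \;=\; \frac{b_n}{n-b_n+1} \sum_{t=0}^{n-b_n} \bigl(u^\top \bar{W}_{n,b_n,t}\bigr)^2 \;-\; \frac{b_n(n-b_n+1)}{n^2}\bigl(u^\top \bar{W}_n\bigr)^2 + R_n(u),
\]
where $R_n(u)$ collects lower-order cross terms involving $\bar{W}_n$. I would then decompose
\[
\hat\sigma_\varepsilon^2(u) - \sigma^2(u) \;=\; \bigl(\hat\sigma_\varepsilon^2(u) - \PE_\pi[\hat\sigma_\varepsilon^2(u)]\bigr) \;+\; \bigl(\PE_\pi[\hat\sigma_\varepsilon^2(u)] - \sigma^2(u)\bigr) \;+\; \bigl(\PE_\xi[\hat\sigma_\varepsilon^2(u)] - \PE_\pi[\hat\sigma_\varepsilon^2(u)]\bigr),
\]
and bound each piece separately. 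The first, stochastic fluctuation under stationarity, is controlled directly by \cite{moulines2025note}, Theorem~1, yielding the $p^{2} \taumix^{2} \sqrt{b_n}\supconsteps^{2}/\sqrt{n}$ summand. The second is a deterministic bias: writing both $\PE_\pi[\hat\sigma_\varepsilon^2(u)]$ and $\sigma^2(u) = \sum_{\ell \in \zset} \gamma_\ell$ in terms of stationary autocovariances $\gamma_\ell := \Cov_\pi(u^\top\funcnoise{Z_0}, u^\top\funcnoise{Z_\ell})$, and using the UGE bound $|\gamma_\ell| \lesssim \supconsteps^{2}(1/4)^{\lceil \ell/\taumix\rceil}$ with truncation at lag $\asymp b_n$, one recovers the $\taumix^{2}\supconsteps^{2}/\sqrt{b_n}$ term. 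The third, the initial-distribution correction, is handled by iterating the contraction $\dobrush(\MKQ^k) \leq (1/4)^{\lceil k/\taumix\rceil}$ at the start of the trajectory, producing the $p \taumix^{3} \supconsteps^{2}/\sqrt{n}$ summand.

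The main obstacle will be verifying that the recentering by $\bar{W}_n$ in \eqref{eq:W_n_b_t_def}, as opposed to the ``true'' mean $\pi(f)=0$, does not degrade the rate: the extra cross terms in $R_n(u)$ must be shown to contribute only $O(\supconsteps^{2}/\sqrt{n})$ via a direct Burkholder or Rosenthal-type bound on $u^\top \bar{W}_n$, so that they are absorbed into the first summand. A secondary difficulty is tracking the correct power of $\taumix$ in each term — in particular, obtaining $\taumix^{3}$ rather than a larger power in the initial-distribution piece requires invoking the mixing contraction only at the left endpoint of the trajectory and once per block boundary, matching the parallel bookkeeping already performed in \Cref{th:normal_approximation_markov_with_sigma_n}.
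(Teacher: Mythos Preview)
Your proposal is correct and matches the paper's approach: the paper explicitly states that the proof proceeds by applying \cite[Theorem~1]{moulines2025note} to the overlapping batch-means estimator of the bounded scalar functional $f(z)=u^\top\funcnoise{z}$, invoking the martingale decomposition for quadratic forms of Markov chains from \cite{atchade2014martingale}. Your three-way split into fluctuation, bias, and initial-distribution pieces is a reasonable elaboration of what that cited theorem delivers; the only minor discrepancy is that the $\taumix^{2}\supconsteps^{2}/\sqrt{b_n}$ term in the statement carries a factor $p$, so it is not purely the deterministic bias you describe but also absorbs a stochastic contribution.
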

The result above is based on martingale decomposition, associated with the Poisson equation for quadratic forms of Markov chains, as introduced in \cite{atchade2014martingale}. Now, combining the estimates of \Cref{prop:block_bootstrap_batch_mean,prop:concentration_OBM} and applying Markov's inequality with $p=\log n$, we obtain the following result:
\begin{corollary}
\label{coro:first-concentration_OBM}
Let $n$ be large enough. Set \(b_n = \lceil n^{3/4} \rceil\), $\varepsilon \in (0; 1/\log{n})$,  and let $\alpha_k = c_0/(k_0 + k)^{1/2 + \varepsilon}$. Then, with probability at least \(1 - n^{-1}\), it holds that \begin{equation}
\label{eq:coro-conenctration-OBM-formula}
\bigl|\hat{\sigma}^2_{\theta}(u) - \sigma^2(u)\bigr| \lesssim_{\log{n}} n^{-1/8 + \varepsilon/2} \eqsp.
\end{equation}
\end{corollary}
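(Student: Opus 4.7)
The plan is to combine \Cref{prop:block_bootstrap_batch_mean} and \Cref{prop:concentration_OBM} via the triangle inequality, specialize the resulting $L^p$ bound to the prescribed choice of step size and block length, and convert it into a high-probability statement through Markov's inequality at $p = \log n$. Specifically, using the identity $\hat \sigma_\theta^2(u) = \hat \sigma_\varepsilon^2(u) + \RemCov_{var}(u)$ from \eqref{eq:block_bootstrap_batch_mean} together with Minkowski's inequality, for every $2 \le p \le \log n$ one has
\begin{equation}
\PE_\xi^{1/p}\!\bigl[|\hat \sigma_\theta^2(u) - \sigma^2(u)|^p\bigr] \le \PE_\xi^{1/p}\!\bigl[|\RemCov_{var}(u)|^p\bigr] + \PE_\xi^{1/p}\!\bigl[|\hat \sigma_\varepsilon^2(u) - \sigma^2(u)|^p\bigr]\eqsp,
\end{equation}
to which the two preceding propositions apply directly.

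Next I would substitute $\gamma = 1/2 + \varepsilon$, $b_n = \lceil n^{3/4} \rceil$, and $p = \log n$ and identify the dominating term in each bound. In \eqref{eq:remainder_term_variance_bound}, the five summands scale, up to polylogarithmic factors, as $n^{-3/8+\varepsilon/2}$, $n^{-1/8-\varepsilon}$, $n^{-1/8+\varepsilon/2}$, $n^{-1}$, and $n^{-1+2\varepsilon}$, so the third one, $M_3 p b_n^{-1/2} n^{\gamma/2}$, dominates and contributes $(\log n)\, n^{-1/8 + \varepsilon/2}$. In the bound of \Cref{prop:concentration_OBM}, the three summands scale as $n^{-1/2}$, $n^{-1/8}$, and $n^{-3/8}$ up to polylogarithmic factors, so the middle term $p^2 \sqrt{b_n}/\sqrt{n}$ dominates with contribution $(\log n)^2\, n^{-1/8}$. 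Consequently, the $L^{\log n}$ norm on the left is $\lesssim_{\log n} n^{-1/8+\varepsilon/2}$.

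Finally, Markov's inequality yields $\PP_\xi(|X| > e A) \le A^p/(eA)^p = e^{-p}$ whenever $\PE_\xi^{1/p}[|X|^p] \le A$; taking $p = \log n$ turns this into a failure probability of at most $1/n$, which is exactly the advertised high-probability statement. Since the heavy analytical work is already absorbed into \Cref{prop:block_bootstrap_batch_mean,prop:concentration_OBM}, the only nontrivial step is the bookkeeping that identifies the dominating terms and that verifies the choice $b_n \asymp n^{3/4}$ is simultaneously compatible with: (i) the classical bias/variance trade-off for overlapping batch means, balancing a $b_n^{-1/2}$ bias term against a $\sqrt{b_n/n}$ variance term; and (ii) the implicit requirement $b_n \gtrsim n^{\gamma}$ hidden in \Cref{prop:block_bootstrap_batch_mean}, which forces $\gamma < 3/4$. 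The restriction $\varepsilon \in (0, 1/\log n)$ keeps $\gamma = 1/2 + \varepsilon$ arbitrarily close to $1/2$, so compatibility is automatic and the losses from the $n^{\varepsilon}$ factors are absorbed into the $\lesssim_{\log n}$ notation.
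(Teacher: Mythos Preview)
Your proposal is correct and follows essentially the same route as the paper: combine \Cref{prop:block_bootstrap_batch_mean} and \Cref{prop:concentration_OBM} via the triangle/Minkowski inequality, specialize to $\gamma = 1/2+\varepsilon$ and $b_n = \lceil n^{3/4}\rceil$, and convert the resulting $L^{\log n}$ bound into a probability bound by Markov's inequality at $p=\log n$. The paper groups terms slightly differently before substituting parameters, but the identification of the dominant contributions ($M_3\,p\,b_n^{-1/2}n^{\gamma/2}$ and $p^2\sqrt{b_n/n}$) and the final rate match yours exactly.
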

\paragraph{Discussion.} The version of \Cref{coro:first-concentration_OBM} with explicit constants and explicit power of $\log{n}$ is provided in \Cref{sec:proof_concentration_OBM}, together with the proof of \Cref{coro:first-concentration_OBM}. Note that the fastest decay rate in the r.h.s of \eqref{eq:coro-conenctration-OBM-formula} is achieved when we set $b_n = \mathcal{O}(n^{3/4})$ and aggressive step sizes $\alpha_k = c_0/(k_0 + k)^{1/2 + \varepsilon}$. The same choice of hyperparameters appears to be optimal in the recent work of \cite{roy2023online} for batch-mean estimators of the asymptotic variance for the SGD algorithm, even in case of controlled Markov chain. We also recover the rate $n^{-1/8}$ (up to logarithmic factors), that previously appeared in \cite{roy2023online}. The authors of \cite{chen2020aos,zhu2023online_cov_matr} also considered the batch mean estimators $\widehat{\Sigma}_n$ of the asymptotic variance $\Sigma_{\infty}$ in case of SGD algorithms with independent noise and obtained the same (up to logarithmic factors) convergence rate $\PE[\norm{\widehat{\Sigma}_n - \Sigma_{\infty}}] \lesssim n^{-1/8}$. Moreover, the optimal rate for recovering $\Sigma_{\infty}$ in \cite[Corollary~4.5]{chen2020aos} and \cite[Corollary~3.4]{zhu2023online_cov_matr} is attained for the step sizes $\alpha_k = 1/k^{\gamma}$ with $\gamma \to 1/2$. This is on par with our findings of \Cref{coro:first-concentration_OBM}. At the same time, one should note that this choice of step sizes yields extremely slow convergence rates in the CLT in \Cref{th:normal_approximation_markov_with_sigma_n}. This introduces an additional trade-off, that needs to be taken into account when considering the decay rate of \eqref{eq:boot_validity_supremum}. Namely, one needs to balance not only the right-hand sides of \Cref{coro:first-concentration_OBM} and \Cref{prop:block_bootstrap_batch_mean}, but also to take into account the convergence rate in \Cref{th:normal_approximation_markov_with_sigma_n}. The respective trade-off yields 
\begin{equation}
\label{eq:n_b_gamma_choice}
b_n = \lceil n^{4/5} \rceil\eqsp, \quad \alpha_k = c_0 / (k_0+k)^{3/5}\eqsp. 
\end{equation}
The corresponding main theorem writes as follows:
\begin{theorem}
\label{th:bootstrap_validity_main}
Assume \Cref{assum:UGE}, \Cref{assum:noise-level}, and \Cref{assum:step-size}, let $n$ be large enough, set $b_n = \lceil n^{4/5} \rceil$, $\alpha_k = c_0 / (k_0+k)^{3/5}$. Then for any $u \in \sphere_{d-1}$, and any initial distribution $\xi$ on $(\Zset,\Zsigma)$, it holds  with $\PP$ -- probability at least $1 - 1/n$  that
\begin{align}
\sup_{x \in \rset} |\PP(\sqrt n (\bar{\theta}_{n} - \thetas)^\top u \le x ) - \PPb(\bar{\theta}_{n, b_n}(u) \le x)| \lesssim_{\log{n}} n^{-1/10} \eqsp.
\end{align}
\end{theorem}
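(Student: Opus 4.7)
The plan is to decompose the target supremum via the triangle inequality. Observe first that, conditionally on $\Xi_n$, the statistic $\bar{\theta}_{n, b_n}(u)$ defined in \eqref{eq:OBM_estimator_def} is a linear combination of the i.i.d.\ $\mathcal{N}(0,1)$ multipliers $\{w_t\}$ with coefficients depending only on $\Xi_n$; it is therefore centered Gaussian under $\PPb$ with variance $\hat{\sigma}^2_\theta(u)$ given in \eqref{eq:sigma_hat_variance}. Consequently $\PPb(\bar{\theta}_{n,b_n}(u) \le x) = \Phi(x/\hat{\sigma}_\theta(u))$ for all $x \in \rset$, and the triangle inequality yields
\begin{equation*}
\sup_{x \in \rset}\bigl|\PP(\sqrt n (\bar{\theta}_{n} - \thetas)^\top u \le x ) - \PPb(\bar{\theta}_{n, b_n}(u) \le x)\bigr| \le \mathrm{I}_n + \mathrm{II}_n \eqsp,
\end{equation*}
with $\mathrm{I}_n = \sup_x |\PP(\sqrt n (\bar{\theta}_{n} - \thetas)^\top u \le x ) - \Phi(x/\sigma(u))|$ and $\mathrm{II}_n = \sup_x |\Phi(x/\sigma(u)) - \Phi(x/\hat{\sigma}_\theta(u))|$.

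For $\mathrm{I}_n$, I would invoke \Cref{cor:normal_approximation_markov} with step-size exponent $\gamma = 3/5$. The only $\gamma$-dependent terms in $\mathrm{B}_n$ of \eqref{eq:normal_approximation_markov} decay as $n^{-(\gamma - 1/2)}$ up to logarithmic factors, which for $\gamma = 3/5$ equals $n^{-1/10}$; the residual Gaussian-comparison contribution $C_\infty n^{\gamma-1} = C_\infty n^{-2/5}$ is of strictly smaller order. Hence $\mathrm{I}_n \lesssim_{\log n} n^{-1/10}$ holds deterministically.

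For $\mathrm{II}_n$, a standard Gaussian-scale comparison (in the spirit of \cite{devroye2018total}) gives, on the event $\{\hat{\sigma}_\theta^2(u) \geq \sigma^2(u)/2\}$, the bound $\mathrm{II}_n \lesssim |\hat{\sigma}_\theta^2(u) - \sigma^2(u)|/\sigma^2(u)$. Since $\lambda_{\min}(\noisecov) > 0$ and $-\bA$ is Hurwitz by \Cref{assum:noise-level}, $\sigma^2(u) = u^\top \bA^{-1} \noisecov \bA^{-\top} u$ is bounded below uniformly in $u \in \Sphere_{d-1}$ by a problem-specific constant, and the factor $1/\sigma^2(u)$ is absorbed into $\lesssim_{\log n}$. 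It therefore suffices to control $|\hat{\sigma}_\theta^2(u) - \sigma^2(u)|$. Using the decomposition $\hat{\sigma}_\theta^2(u) = \hat{\sigma}_\varepsilon^2(u) + \RemCov_{var}(u)$ of \Cref{prop:block_bootstrap_batch_mean}, the triangle inequality, the $L^p$ estimate \eqref{eq:remainder_term_variance_bound}, the OBM concentration of \Cref{prop:concentration_OBM}, and Markov's inequality applied with $p = \log n$, one would check that for $b_n = \lceil n^{4/5} \rceil$ and $\alpha_k = c_0/(k_0+k)^{3/5}$ every residual reduces to $\mathcal{O}_{\log n}(n^{-1/10})$ with probability at least $1 - 1/n$. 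The two binding contributions are the $p\, b_n^{-1/2} n^{\gamma/2}$ term in \eqref{eq:remainder_term_variance_bound} and the $p^2 \taumix^2 \sqrt{b_n}/\sqrt n$ term in \Cref{prop:concentration_OBM}, both equal to $n^{-1/10}$ (up to logs) at this choice; on the same event, $\hat{\sigma}_\theta^2(u) \geq \sigma^2(u)/2$ for $n$ sufficiently large, justifying the Gaussian-comparison step.

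The main obstacle is not a single deep estimate but rather the simultaneous balancing of five residuals in \Cref{prop:block_bootstrap_batch_mean}, three residuals in \Cref{prop:concentration_OBM}, and the CLT residual $n^{-(\gamma - 1/2)}$ of \Cref{cor:normal_approximation_markov}. These quantities depend on the pair $(b_n,\gamma)$ and push in opposite directions: increasing $\gamma$ sharpens the OBM and bootstrap concentration bounds but slows the CLT, while increasing $b_n$ reduces the block-averaging bias at the cost of inflating the batch-mean variance. The unique trade-off that equates the dominant residuals is $\gamma = 3/5$, $b_n = \lceil n^{4/5} \rceil$, producing the rate $n^{-1/10}$. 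A final triangle-inequality summation of $\mathrm{I}_n$ and $\mathrm{II}_n$ on the intersection of the two high-probability events completes the proof of \Cref{th:bootstrap_validity_main}.
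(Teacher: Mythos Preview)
Your proposal is correct and follows essentially the same route as the paper: decompose via the triangle inequality into a Berry--Esseen part handled by \Cref{cor:normal_approximation_markov} and a Gaussian-comparison part controlled by combining \Cref{prop:block_bootstrap_batch_mean}, \Cref{prop:concentration_OBM}, and Markov's inequality at $p=\log n$, then optimize over $(\gamma,b_n)$. Your identification of the two binding residuals $b_n^{-1/2}n^{\gamma/2}$ and $b_n^{1/2}n^{-1/2}$ and the resulting choice $\gamma=3/5$, $b_n=\lceil n^{4/5}\rceil$ matches the paper's derivation exactly.
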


\begin{proof}
We provide the detailed proof of \Cref{th:bootstrap_validity_main} in \Cref{sec:proof_bootstrap_validity_main} together with the explicit form of condition on $n$. The proof is based on the following scheme: 
\begin{tikzcd}[column sep = 110pt]
\text{Real world:} \sqrt n u^\top (\bar{\theta}_{n} - \thetas) \arrow[<->]{r}{\text{Gaussian approximation, Cor.}~\ref{cor:normal_approximation_markov}}  &
  \xi \sim \mathcal N(0,  \sigma^2(u) )  \arrow[<->]{d}{\text{Gaussian comparison, Lemma }\ref{lem:Pinsker}} 
\\
  \text{Bootstrap world: \quad} \bar{\theta}_{n, b}(u) \arrow[<->]{r}{\text{exactly matches the distribution}} &
\xi^\boot \sim \mathcal N(0,  \hat \sigma_{\theta}^2(u)) 
\end{tikzcd}

Due to \Cref{cor:normal_approximation_markov}, it holds that 
\begin{equation}
\textstyle
\kolmogorov{\sqrt{n} u^\top (\bar{\theta}_{n} - \thetas), \xi}
\lesssim_{\log{n}}   n^{-1/4} + n^{1/2-\gamma} + n^{\gamma-1}\eqsp,
\end{equation}
where $\xi \sim \mathcal{N}(0,1)$. This result allows for the first horizontal bar above. We now use the Gaussian comparison (see \cite{BarUly86}, \cite{devroye2018total}), which is specified in \Cref{lem:Pinsker}, which states that if $\xi_i \sim \mathcal{N}(0, \sigma_i^2)$, $i = 1,2$, are such that $|\sigma_1^2/\sigma_2^2 - 1| \le \delta$,
for some $\delta \geq 0$,
$\sup_{x \in \rset} |\PP(\xi_1 \le x) - \PP(\xi_2 \le x)| \le \frac{3}{2}\delta.$ We apply the Gaussian comparison between the limiting Gaussian $\xi \sim \mathcal{N}(0,\sigma^2(u))$ and a Gaussian random variable $\xi^b \sim \mathcal{N}(0,\hat{\sigma}^2_\theta(u))$. For this purpose, we need to obtain a high probability bound for $|\hat{\sigma}_\theta^2(u) - \sigma^2(u)|/\sigma^2(u)$. Combining \Cref{prop:block_bootstrap_batch_mean}, \Cref{prop:concentration_OBM} and Markov's inequality, we get that with probability at least $1-1/n$, 
\begin{align}
    |\hat{\sigma}_\theta^2(u) - \sigma^2(u)|/\sigma^2(u) &\lesssim_{\log{n}} b_n^{1/2}n^{\gamma/2-1} +b_n^{1/2}n^{-\gamma} + b_{n}^{-1/2}n^{\gamma/2} +n^{2\gamma-2} +  b^{1/2} n^{-1/2} \eqsp.
\end{align}
Now note that by construction $ \bar{\theta}_{n,b} \sim \mathcal{N}(0,\hat{\sigma}^2_\theta(u))$ under the bootstrap probability. Hence, with probability at least $1-1/n$, it holds that
\begin{align}
\kolmogorov{\sqrt{n}  u^\top (\bar{\theta}_{n} - \thetas),\bar{\theta}_{n,b}} 
\lesssim_{\log{n}} \frac{b_n^{1/2}}{n^{1-\gamma/2}} +\frac{b_n^{1/2}}{n^{\gamma/2}} + \frac{b_n^{1/2}}{n^{1/2}} + \frac{1}{n^{1/4}} + \frac{1}{n^{\gamma-1/2}} + \frac{1}{n^{1-\gamma}}\eqsp.
\end{align}
To complete the proof, it remains to optimize our choice of $\gamma$ and $b_n$, which yields to \eqref{eq:n_b_gamma_choice}.
\end{proof}

\paragraph{Discussion.} Non-asymptotic analysis of coverage probabilities has been carried out for the modifications of multiplier bootstrap approach of \cite{Fang2018} in recent papers \cite{samsonov2024gaussian,sheshukova2025gaussian}. These approaches showed that coverage probabilities of $\thetas$ can be approximated by their bootstrap counterparts with the order up to $\mathcal{O}(n^{-1/2})$. Yet for this bootstrap approach it is crucial to work in the independent noise setting. The attempt of \cite{ramprasad2023online} to generalize it for the case of Markovian noise yields inconsistent procedure, as shown in \cite[Proposition~1]{liu2023statistical}. That is why we prefer to start from the asymptotically consistent MSB procedure of \cite{MR4733879}. Our bootstrap validity proof relies on direct approximation of $\sqrt{n}  u^\top (\bar{\theta}_{n} - \thetas)$ by the limiting Gaussian distribution $\mathcal{N}(0,\sigma^2(u))$. At the same time, it is known (see \cite{sheshukova2025gaussian}) that in the \iid\ setting it can be more advantageous to approximate $\sqrt{n}  u^\top (\bar{\theta}_{n} - \thetas)$ by another Gaussian distribution $\mathcal{N}(0,\sigma_n^2(u))$. It remains an open question if this reasoning can be applied in case of dependent random variables. 

\vspace{-5pt}
\section{Application to the TD learning}
\label{sec:applications_td}
We illustrate our theoretical findings via the temporal‐difference (TD) learning algorithm \cite{sutton1988learning,sutton:book:2018} applied to policy evaluation in a discounted Markov decision process (MDP) $(\S,\A,\PMDP,r,\lambda)$. Here, $\S$ and $\A$ are complete metric spaces with their Borel $\sigma$-algebras; $\PMDP(\cdot\mid s,a)$ denotes the state–action transition kernel; $r:\S\times\A\to[0,1]$ is the deterministic reward; and $\lambda\in[0,1)$ is the discount factor. A policy $\pi(\cdot\mid s)$ is a Markov kernel from states to actions. Under $\pi$, the value function is given by 
$V_\pi(s)=\PE[\sum_{k=0}^{\infty}\lambda^k\,r(S_k,A_k)\,|\,S_0=s]$ with $A_k\sim\pi(\cdot\mid S_k)$, $S_{k+1}\sim\PMDP(\cdot\mid S_k,A_k)$. The induced state process $(S_k)$ is itself a Markov chain with kernel
$\PMDP_\pi(s'\mid s)=\int_\A \PMDP(s'\mid s,a)\,\pi(\rmd a\mid s)$. Equivalently
$V_\pi(s)=\sum_{k=0}^\infty\lambda^k\,\delta_s P_\pi^k\bar r$ with $\bar r(s)=\int r(s,a)\,\pi(\rmd a\mid s)$.
We approximate $V^\pi$ in a linear feature space:$V_\theta^\pi(s)=\varphi(s)^\top\theta$, $\varphi:\S\to\rset^d$,
and seek $\theta_\star\in\rset^d$. We impose two standard assumptions:
\begin{assumTD}\label{assum:markov_model}
$\PMDP_\pi$ is uniformly geometrically ergodic with unique invariant measure $\mu$ and mixing time $\tmix$.
\end{assumTD}
We also define the design matrix $\Sigma_{\varphi} = \PE_\mu[\varphi(s)\varphi(s)^\top]$ and require that it is non-degenerate:
\begin{assumTD}\label{assum:feature_design}
$\Sigma_{\varphi}$ is non-degenerate, i.e.\ $\lambda_{\min}(\Sigma_{\varphi})>0$, and $\sup_{s\in\S}\|\varphi(s)\|\le1$.
\end{assumTD}
Under these, the TD-learning is an instance of LSA with $A_k = \varphi(S_k)\bigl[\varphi(S_k)-\lambda\varphi(S_{k+1})\bigr]^\top$ and $b_k = \varphi(S_k)\,r(S_k,A_k)$ (see \Cref{appendix:td_learning}). Refining constants as in \cite{patil2023finite,samsonov2023finite}, one shows:
\begin{proposition}
\label{prop:assumption_check_TD}
Under \Cref{assum:markov_model} and \Cref{assum:feature_design}, the TD updates satisfy the noise‐level condition \ref{assum:noise-level} with
$\bConst{A}=2(1+\lambda)$ and $\supconsteps=2(1+\lambda)(\|\theta_\star\|+1)$. Moreover, for step‐size $\alpha\le\alpha_\infty=(1-\lambda)/(1+\lambda)^2$,$\|\Id-\alpha\bar A\|^2\;\le\;1-\alpha\,a$, where $a=(1-\lambda)\,\lambda_{\min}(\Sigma_{\varphi})$.
\end{proposition}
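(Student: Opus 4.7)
The plan is to verify \Cref{assum:noise-level} item by item for the TD instance of LSA, and then deduce the contraction. Throughout I would use the explicit forms $\funcA{z} = \varphi(s)[\varphi(s) - \lambda\varphi(s')]^\top$ and $\funcb{z} = \varphi(s)\, r(s,a)$ for $z = (s,a,s')$, together with the uniform bounds $\|\varphi(s)\| \le 1$ and $r \in [0,1]$ from \ref{assum:feature_design}, and the stationarity of $\mu$ under $P_\pi$ from \ref{assum:markov_model}.

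First I would verify the operator-norm bound $\bConst{A} = 2(1 + \lambda)$. By the triangle inequality and the outer-product identity $\|xy^\top\|_{\mathrm{op}} = \|x\|\,\|y\|$, one has $\|\funcA{z}\| \le \|\varphi(s)\|\bigl(\|\varphi(s)\| + \lambda\|\varphi(s')\|\bigr) \le 1 + \lambda$ uniformly in $z$; integrating against $\pi$ gives $\|\bA\| \le 1 + \lambda$ and hence $\|\zmfuncA{z}\| \le \|\funcA{z}\| + \|\bA\| \le 2(1 + \lambda)$. Similarly, $\|\funcb{z}\| \le 1$ and $\|\zmfuncb{z}\| \le 2$, so $\|\funcnoise{z}\| \le \|\zmfuncA{z}\|\,\|\thetas\| + \|\zmfuncb{z}\| \le 2(1+\lambda)\|\thetas\| + 2 \le 2(1+\lambda)(\|\thetas\| + 1)$, matching $\supconsteps$.

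The main step is the Hurwitz/contraction claim, whose heart is the classical one-step TD lemma. Under stationarity a direct computation gives $\bA = \covfeat - \lambda\,\Pi_\pi$, with $\Pi_\pi = \PE_\mu[\varphi(S_0)\varphi(S_1)^\top]$. Since $S_0$ and $S_1$ both have marginal law $\mu$, a Cauchy--Schwarz estimate yields $|\theta^\top \Pi_\pi \theta| \le \sqrt{\PE_\mu[(\theta^\top\varphi(S_0))^2]\,\PE_\mu[(\theta^\top\varphi(S_1))^2]} = \theta^\top\covfeat\,\theta$, so
\[
\theta^\top \bA\,\theta \;\ge\; (1 - \lambda)\,\theta^\top\covfeat\,\theta \;\ge\; (1 - \lambda)\lambda_{\min}(\covfeat)\,\|\theta\|^2 \;=\; a\,\|\theta\|^2.
\]
Hence every eigenvalue of $\bA$ has real part at least $a > 0$, i.e. $-\bA$ is Hurwitz; the remaining non-degeneracy $\lambda_{\min}(\noisecov) > 0$ then follows from the standard Poisson-equation representation of $\noisecov$ under \ref{assum:markov_model} together with \ref{assum:feature_design}.

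Finally, to obtain the contraction I would carry out a direct expansion rather than routing through the general formula \eqref{eq:alpha_infty_def}. Writing
\[
\|(\Id - \alpha\bA)\theta\|^2 \;=\; \|\theta\|^2 - \alpha\,\theta^\top(\bA + \bA^\top)\theta + \alpha^2\|\bA\theta\|^2,
\]
and inserting the TD lower bound $\theta^\top(\bA + \bA^\top)\theta \ge 2a\|\theta\|^2$ in the cross term together with $\|\bA\theta\|^2 \le (1+\lambda)^2\|\theta\|^2$ in the quadratic term yields $\|(\Id - \alpha\bA)\theta\|^2 \le (1 - 2\alpha a + \alpha^2(1+\lambda)^2)\|\theta\|^2$, which is at most $(1 - \alpha a)\|\theta\|^2$ as soon as $\alpha(1+\lambda)^2 \le a$. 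The only delicate point is the constant bookkeeping needed so that the advertised threshold $\alpha_\infty = (1-\lambda)/(1+\lambda)^2$ is indeed sufficient; this is the reason one chooses to expand directly in the Euclidean norm rather than instantiating \Cref{prop:hurwitz_stability} with a non-trivial $Q$, since the symmetric-part bound on $\bA$ and the operator bound on $\bA\theta$ already match the right homogeneity in $(1-\lambda)$ and $(1+\lambda)$.
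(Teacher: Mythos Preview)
Your verification of the constants $\bConst{A}$ and $\supconsteps$ is correct, and your Cauchy--Schwarz argument for the symmetric-part lower bound $\theta^\top(\bA+\bA^\top)\theta \ge 2(1-\lambda)\,\theta^\top\Sigma_\varphi\theta$ is exactly the right ingredient (and is what the paper, via \cite[Proposition~2]{samsonov2024gaussian}, uses to instantiate \Cref{prop:hurwitz_stability} with $Q=\Id$). So the overall route matches the paper's.

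However, there is a genuine gap in your contraction step. You bound the quadratic term by $\|\bA\theta\|^2 \le (1+\lambda)^2\|\theta\|^2$ and then obtain the contraction only for $\alpha(1+\lambda)^2 \le a$, i.e.\ $\alpha \le (1-\lambda)\lambda_{\min}(\Sigma_\varphi)/(1+\lambda)^2$. This threshold is strictly smaller than the stated $\alpha_\infty=(1-\lambda)/(1+\lambda)^2$ whenever $\lambda_{\min}(\Sigma_\varphi)<1$, so your ``homogeneity'' remark does not close the gap: the extra $\lambda_{\min}(\Sigma_\varphi)$ does not cancel. The fix is to sharpen the quadratic term so that it is controlled by $\theta^\top\Sigma_\varphi\theta$ rather than $\|\theta\|^2$. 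Using Jensen's inequality and the rank-one structure of $\funcA{z}=\varphi(s)(\varphi(s)-\lambda\varphi(s'))^\top$ together with $\|\varphi(s)\|\le 1$ and stationarity of $\mu$, one gets
\[
\|\bA\theta\|^2 \le \PE\bigl[\|\funcA{z}\theta\|^2\bigr] \le \PE\bigl[((\varphi(s)-\lambda\varphi(s'))^\top\theta)^2\bigr] \le (1+\lambda)^2\,\theta^\top\Sigma_\varphi\theta.
\]
Plugging this into your expansion yields $\|(\Id-\alpha\bA)\theta\|^2 \le \|\theta\|^2 - \bigl(2\alpha(1-\lambda)-\alpha^2(1+\lambda)^2\bigr)\theta^\top\Sigma_\varphi\theta$, and now $\alpha\le(1-\lambda)/(1+\lambda)^2$ makes the bracket at least $\alpha(1-\lambda)$, after which $\theta^\top\Sigma_\varphi\theta \ge \lambda_{\min}(\Sigma_\varphi)\|\theta\|^2$ gives $1-\alpha a$ as claimed. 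A secondary point: your one-line dismissal of $\lambda_{\min}(\noisecov)>0$ via ``the standard Poisson-equation representation'' is not really an argument; this non-degeneracy does not follow automatically from \Cref{assum:feature_design} and typically requires a separate verification or assumption.
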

By \cite[Proposition~~2]{samsonov2024gaussian}, this ensures \Cref{prop:hurwitz_stability} holds with $Q=\Id$, and hence \Cref{th:bootstrap_validity_main} applies directly to the TD scheme. We provide numerical simulations in \Cref{appendix:td_learning}.

%conclusion
\section{Conclusion}
\label{sec:conclusion}
We presented the non-asymptotic Berry–Esseen bounds for Polyak–Ruppert averaged iterates of linear stochastic approximation algorithm under Markovian noise, achieving convergence rates of order up to $n^{-1/4}$ in Kolmogorov distance. Additionally, we established the theoretical validity of a multiplier subsample bootstrap procedure, enabling reliable uncertainty quantification in the setting of the LSA algorithm with Markovian noise. Our paper suggest a number of further research directions. One of them is related with the generalizations of \Cref{th:normal_approximation_markov_with_sigma_n} to the setting of non-linear SA algorithms, as well as with obtaining multivariate version of it. It is also an interesting and important question if our non-asymptotic bounds on coverage probabilities provided in \Cref{th:bootstrap_validity_main} can be further improved.

\clearpage
\newpage
\bibliographystyle{plain}
\bibliography{references}

%\newpage
%\include{checklist.tex}
%\newpage 

\newpage
\appendix
\section{Constants}
\label{sec:constants}
\begin{table}[!ht]
\begin{tabular}{l l l}
\hline
\bfseries Constant name & \bfseries Description & \bfseries Reference \\
\hline
$\bConst{\sf{Rm}, 1} = 60 \rme$, $\bConst{\sf{Rm}, 2} = 60$ & Martingale Rosenthal constants & \cite[Th.~4.1]{pinelis_1994} \\
$\ConstD_{\ref{lem:auxiliary_rosenthal},1} = (16/3) \bConst{\sf{Rm}, 1}, \,\, \ConstD_{\ref{lem:auxiliary_rosenthal},2} = 8 \bConst{\sf{Rm}, 2}$ & Simple Rosenthal under \Cref{assum:UGE} & \Cref{lem:auxiliary_rosenthal} \\
\hline
\end{tabular}
\caption{Absolute constants appearing in our main results}
\label{tab:univ_constants}
\end{table}

\section{Extended version of \Cref{assum:step-size}}
\label{sec:Extended_version_of_A3}
\setcounter{assumprime}{2}
\begin{assumprime}
\label{assum:step-size_extended}
The sequence of step sizes $\{\alpha_{k}\}_{k \in \nset}$ has a form $\alpha_{k} = c_{0} / (k+k_0)^{\gamma}$, where $\gamma \in [1/2;1)$ and, with 
\begin{equation}
\label{eq:h_block_size_def}
h = \biggl\lceil\frac{16 \taumix \qcond^{1/2} \bConst{A}}{a}\biggr\rceil,
\end{equation}
it holds that 
\begin{equation}
\label{eq:c_0_bound_optimized}
c_{0} \leq \frac{1}{2a}\eqsp.
\end{equation}
Moreover, we assume that 
\begin{align}
\begin{aligned}
\label{eq:a3-sample-size}
 &n^{1-\gamma}\geq \max(\frac{2C'_{\infty}}{\lambda_{\min}(\Sigma_{\infty})}, \rme^{2(1-\gamma)})\eqsp,\\
 &k_0\geq \max\biggl\{\bigl(\frac{24}{ac_0}\bigr)^{1/(1-\gamma)}, c_0^{1/\gamma}, \biggl(c_0(h+1) \max(\alpha_{\infty}, \qcond^{1/2}\bConst{A}, 6\rme\qcond\bConst{A}^2/a)\biggr)^{1/\gamma},\\&\qquad \qquad \frac{8 \taumix \qcond^{1/2} \bConst{A}}{a}, \biggl(\frac{12(h+1)(\log n) c_0\bConst{\sigma}^2}{a}\biggr)^{1/\gamma}\biggr\}\eqsp,\\
 &k_0^{1-\gamma} \geq
    \frac{2}{c_{0}b} \biggl(\log\{\frac{c_0}{b(2\gamma-1)2^\gamma} \} + \gamma \log \{k_0 \} \biggr)\eqsp,
 \end{aligned}
\end{align}
\end{assumprime}

\section{Proof of \Cref{th:normal_approximation_markov_with_sigma_n}}
\label{appendix:proofs}
For simplicity, we denote 
\begin{equation}
    \label{def:gnm}
    g_{n:m} = \sum_{k=n}^m k^{-\gamma},~ n \leq m
\end{equation}

\begin{lemma}
\label{prop:gnm_lowerbound}
\; Suppose $n \leq m$. Therefore 
    \begin{equation}
        \frac{(m+1)^{1-\gamma} - n^{1-\gamma}}{1 - \gamma} \leq g_{n:m} \leq  \frac{m^{1-\gamma} - (n-1)^{1-\gamma}}{1-\gamma}
    \end{equation}
\end{lemma}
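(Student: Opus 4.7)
The plan is to establish this via the standard integral comparison for sums of a monotone function. Since $\gamma \in [1/2,1)$, the function $f(x) = x^{-\gamma}$ is positive, continuous, and strictly decreasing on $[1,\infty)$.

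First, I would exploit monotonicity of $f$ on each unit interval. For every integer $k \geq 1$ and $x \in [k,k+1]$ we have $f(k+1) \leq f(x) \leq f(k)$, and similarly on $[k-1,k]$ we have $f(k) \leq f(x) \leq f(k-1)$. Integrating over these intervals yields the two-sided bound
\begin{equation*}
\int_{k}^{k+1} x^{-\gamma}\,\mathrm{d}x \;\leq\; k^{-\gamma} \;\leq\; \int_{k-1}^{k} x^{-\gamma}\,\mathrm{d}x,
\end{equation*}
valid for all $k \geq n \geq 1$ (note that the lower-bound side requires no special assumption, while the upper-bound side uses $k-1 \geq n-1 \geq 0$, on which $f$ is well-defined for $n \geq 2$; the case $n=1$ is handled separately since then $n-1=0$ and the right endpoint of the stated bound becomes $0$, which is consistent).

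Next, I would sum the inequalities above from $k=n$ to $k=m$. The left-hand sides telescope into $\int_n^{m+1} x^{-\gamma}\,\mathrm{d}x$ and the right-hand sides into $\int_{n-1}^{m} x^{-\gamma}\,\mathrm{d}x$. Evaluating both integrals via the antiderivative $x^{1-\gamma}/(1-\gamma)$ (which is valid since $1-\gamma > 0$), I obtain
\begin{equation*}
\frac{(m+1)^{1-\gamma} - n^{1-\gamma}}{1-\gamma} \;\leq\; \sum_{k=n}^{m} k^{-\gamma} \;\leq\; \frac{m^{1-\gamma} - (n-1)^{1-\gamma}}{1-\gamma},
\end{equation*}
which is precisely the claim, recalling the definition $g_{n:m} = \sum_{k=n}^m k^{-\gamma}$ from \eqref{def:gnm}.

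There is no serious obstacle here; the only points requiring minor care are (i) the sign of $1-\gamma$ (positive under the standing assumption $\gamma \in [1/2,1)$ from \Cref{assum:step-size}, so the direction of the inequalities is preserved after dividing), and (ii) the boundary case $n=1$, where one should interpret $(n-1)^{1-\gamma}=0$ so that the upper bound remains meaningful. Both considerations are routine, and the argument is essentially an integral test.
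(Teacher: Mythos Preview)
Your proof is correct and is the standard integral-comparison argument for sums of a decreasing function; the paper itself states this lemma without proof, so there is nothing further to compare.
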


\begin{lemma}
\label{lem:sum_as_Qell}
Let $b, c_0 > 0$ and $\alpha_\ell = c_0 (\ell+k_0)^{-\gamma}$ for $\gamma \in (1/2, 1)$, $k_0 \geq 0$. Assume that $bc_0 < 1$ and $k_0^{1-\gamma} \geq 2/(bc_0)$. Then, it holds that
\begin{align}
    \sum_{k=\ell}^{n-1} \alpha_\ell\prod_{j=\ell+1}^{k} (1-b\alpha_j) \leq \mathcal{L}_b
\end{align}
where we set 
\begin{equation}
\label{eq:const_L_b_def}
\mathcal{L}_b = c_0 + \frac{2}{b(1-\gamma)}\eqsp.
\end{equation}
\end{lemma}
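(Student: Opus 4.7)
The plan is to exploit the natural backward recurrence satisfied by the left-hand side and argue by downward induction in $\ell$. Writing $T_\ell := \alpha_\ell \sum_{k=\ell}^{n-1}\prod_{j=\ell+1}^{k}(1-b\alpha_j)$, the $k=\ell$ summand equals $\alpha_\ell$, and factoring $(1-b\alpha_{\ell+1})$ from all remaining terms (so that the index $j$ starts at $\ell+2$) yields
\begin{equation*}
T_\ell \;=\; \alpha_\ell + r_\ell\, T_{\ell+1}, \qquad r_\ell \;:=\; \frac{\alpha_\ell}{\alpha_{\ell+1}}\,(1-b\alpha_{\ell+1}).
\end{equation*}
The base case $T_{n-1} = \alpha_{n-1}\leq c_0 \leq \mathcal{L}_b$ is immediate, and the inductive step reduces to checking $\alpha_\ell \leq (1-r_\ell)\mathcal{L}_b$ whenever $T_{\ell+1}\leq \mathcal{L}_b$.

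To bound $r_\ell$, I would apply the concavity inequality $(1+x)^\gamma\leq 1+\gamma x$ (valid for $\gamma\in(0,1)$ and $x\geq 0$) to $\alpha_\ell/\alpha_{\ell+1} = (1+1/(\ell+k_0))^\gamma$, obtaining $r_\ell\leq 1 - b\alpha_{\ell+1} + \gamma/(\ell+k_0)$ and hence $1-r_\ell \geq b\alpha_{\ell+1} - \gamma/(\ell+k_0)$. Combined with the elementary lower bound $\alpha_{\ell+1}\geq \alpha_\ell/2$ (valid for $\ell+k_0\geq 1$ since $\gamma<1$), the target inequality $\alpha_\ell\leq (1-r_\ell)\mathcal{L}_b$ reduces, after multiplying through by $(\ell+k_0)^\gamma$, to
\begin{equation*}
c_0 \;\leq\; \Bigl[\frac{b c_0}{2} - \frac{\gamma}{(\ell+k_0)^{1-\gamma}}\Bigr]\,\mathcal{L}_b.
\end{equation*}
At this point the hypothesis $k_0^{1-\gamma}\geq 2/(bc_0)$ is used decisively: it ensures $\gamma/(\ell+k_0)^{1-\gamma}\leq \gamma b c_0/2$ uniformly in $\ell\geq 0$, so the bracket is bounded below by $bc_0(1-\gamma)/2$, and the inequality becomes $\mathcal{L}_b \geq 2/[b(1-\gamma)]$. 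The prescribed $\mathcal{L}_b = c_0 + 2/[b(1-\gamma)]$ satisfies both this and $\mathcal{L}_b\geq c_0$ (needed for the base case), so the induction closes and the claimed uniform bound follows.

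The main difficulty is arithmetic rather than conceptual: the factor $(1-\gamma)$ in the denominator of $\mathcal{L}_b$ must be tracked honestly. A cruder estimate for $\alpha_\ell/\alpha_{\ell+1}$, or dropping the $-b\alpha_{\ell+1}$ correction at the wrong moment, loses the precise cancellation that produces $b(1-\gamma)$ rather than $b$, and the resulting constant diverges as $\gamma\to 1$. The mild condition $bc_0<1$ is used only to guarantee $1-b\alpha_{\ell+1}>0$, so that the recursion propagates non-negative quantities and the monotone induction argument applies.
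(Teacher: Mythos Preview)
Your proof is correct and takes a genuinely different route from the paper. The paper bounds $1-b\alpha_j \leq \exp(-b\alpha_j)$, then uses the integral lower bound $\sum_{j=\ell+1}^{k}\alpha_j \geq \frac{c_0}{2(1-\gamma)}\bigl((k+k_0)^{1-\gamma}-(\ell+k_0)^{1-\gamma}\bigr)$, and finally appeals to an auxiliary lemma (\Cref{lem:bound_sum_exponent}) that controls sums of the form $\sum_j \exp\{-A(j^{1-\gamma}-i^{1-\gamma})\}$ via the incomplete Gamma function. Multiplying the resulting bound by $\alpha_\ell$ recovers exactly $c_0 + 2/[b(1-\gamma)]$.

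Your backward-induction argument is more self-contained: it avoids the exponential relaxation and the integral/Gamma-function machinery entirely, and the constant $\mathcal{L}_b$ emerges directly as the fixed point of the recursion rather than from an integral estimate. The paper's approach, on the other hand, is an instance of a general technique that applies uniformly to many sums of this shape (and the auxiliary lemma is reused elsewhere in the paper), whereas your induction is tailored to this specific quantity. Both land on the same constant; yours is arguably the cleaner derivation for this particular statement.
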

\begin{proof}
Note that
\begin{equation}
    \sum_{k=\ell}^{n-1}\prod_{j=\ell+1}^{k} (1-b\alpha_j) \leq  \sum_{k=\ell}^{n-1}\exp\biggl\{-b\sum_{j=l+1}^{k}\alpha_j\biggr\} \leq \sum_{k=\ell+k_0}^{n+k_0-1}\exp\biggl\{-\frac{bc_0}{2(1-\gamma)}(k^{1-\gamma}-(l+k_0)^{1-\gamma})\biggr\}
\end{equation}
Applying \Cref{lem:bound_sum_exponent} with $k_0^{1-\gamma} \geq 2/(bc_0)$, we finish the proof.
\end{proof}

\begin{comment}
\begin{lemma}
\label{lem:bound_sum_exponent}
For any $A >0$, any $0 \leq i \leq n-1$  and any $\gamma\in(1/2, 1)$ it holds
\begin{equation}
\sum_{j=i}^{n-1}\exp\biggl\{-A(j^{1-\gamma} - i^{1-\gamma})\biggr\} \leq
\begin{cases}
1 + \exp\bigl\{\frac{1}{1-\gamma}\bigr\}\frac{1}{A^{1/(1-\gamma)}(1-\gamma)}\Gamma(\frac{1}{1-\gamma})\eqsp, &\text{ if } Ai^{1-\gamma} \leq \frac{1}{1-\gamma} \text{ and } i \geq 1\eqsp;\\
1 + \frac{1}{A(1-\gamma)^2}i^\gamma\eqsp,  &\text{ if } Ai^{1-\gamma} >\frac{1}{1-\gamma} \text{ and } i \geq 1\eqsp;\\
1 + \frac{1}{A^{1/(1-\gamma)}(1-\gamma)}\Gamma(\frac{1}{1-\gamma})\eqsp, &\text{ if } i=0 \eqsp. 
\end{cases}
\end{equation}
\end{lemma}
\end{comment}

\begin{lemma}
\label{prop:Qell:bound}
Assume \Cref{assum:UGE}, \Cref{assum:noise-level}, and \Cref{assum:step-size}. Then for any $\ell \in \nset$ it holds that 
\begin{equation}
\normop{Q_\ell} \leq \mathcal{L}_Q \eqsp,
\end{equation}
where
\begin{equation}
\label{def:Lq}
\mathcal{L}_Q = \qcond^{1/2} \bigl(c_0 + \frac{4}{a (1-\gamma)}\bigr)\end{equation}
and $\qcond$ is defined in \Cref{prop:hurwitz_stability}.
\end{lemma}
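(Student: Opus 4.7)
The plan is to pass to the energy norm $\normop{\cdot}_Q$ induced by the Lyapunov matrix from \Cref{prop:hurwitz_stability}, apply the geometric-sum bound of \Cref{lem:sum_as_Qell}, and finally convert back to the operator norm. Starting from $Q_\ell = \alpha_\ell\sum_{k=\ell}^{n-1}G_{\ell+1:k}$, the triangle inequality and submultiplicativity in the operator $Q$-norm give
\begin{equation*}
\normop{Q_\ell}_Q \;\le\; \alpha_\ell \sum_{k=\ell}^{n-1}\prod_{j=\ell+1}^{k}\normop{\Id-\alpha_j\bA}_Q.
\end{equation*}
Because \Cref{assum:step-size_extended} is arranged so that $\alpha_j\le\alpha_\infty$ for every $j\ge 1$, \Cref{prop:hurwitz_stability} yields $\normop{\Id-\alpha_j\bA}_Q^{2}\le 1-a\alpha_j$. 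Since $1-a\alpha_j\in[0,1]$, the elementary inequality $\sqrt{1-x}\le 1-x/2$ upgrades this to the linear contraction $\normop{\Id-\alpha_j\bA}_Q\le 1-(a/2)\alpha_j$.

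Substituting this factor-by-factor into the sum leaves precisely the quantity controlled by \Cref{lem:sum_as_Qell} with $b=a/2$, and applying that lemma gives $\normop{Q_\ell}_Q\le \mathcal{L}_{a/2}=c_0+4/(a(1-\gamma))$. The conversion $\normop{M}\le\qcond^{1/2}\normop{M}_Q$, which follows from the two-sided bound $\sqrt{\lambda_{\min}(Q)}\,\norm{x}\le\norm{x}_Q\le\sqrt{\lambda_{\max}(Q)}\,\norm{x}$ applied once on each side of $M$, then produces the announced constant $\mathcal{L}_Q=\qcond^{1/2}(c_0+4/(a(1-\gamma)))$.

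The only bookkeeping is checking the prerequisites of \Cref{lem:sum_as_Qell} with $b=a/2$: the condition $(a/2)c_0<1$ follows from $c_0\le 1/(2a)$ (which gives $(a/2)c_0\le 1/4$), and $k_0^{1-\gamma}\ge 4/(ac_0)$ is implied by the lower bound $k_0\ge(24/(ac_0))^{1/(1-\gamma)}$ in \Cref{assum:step-size_extended}. The same lower bounds on $k_0$ guarantee $\alpha_1\le\alpha_\infty$, which is needed for the Lyapunov contraction. There is no genuine obstacle here; the argument is a short chain combining \Cref{prop:hurwitz_stability} with \Cref{lem:sum_as_Qell}. The only subtlety worth flagging is that the norm-equivalence factor is $\qcond^{1/2}$ and not $\qcond$, because the comparison is applied once on the input side and once on the output side of the operator $Q_\ell$.
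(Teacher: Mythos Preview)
Your proposal is correct and follows essentially the same route as the paper: bound each factor $\normop{\Id-\alpha_j\bA}_Q$ by $\sqrt{1-a\alpha_j}\le 1-(a/2)\alpha_j$ via \Cref{prop:hurwitz_stability}, feed the resulting product into \Cref{lem:sum_as_Qell} with $b=a/2$, and convert to the Euclidean operator norm with the $\qcond^{1/2}$ factor. Your explicit verification of the hypotheses of \Cref{lem:sum_as_Qell} is a nice addition that the paper leaves implicit.
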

\begin{proof}
Using the definition of $Q_{\ell}$ from \eqref{eq:Q_ell_definition_main} and \Cref{prop:hurwitz_stability}, we get
\begin{align}
\norm{Q_\ell} 
&\leq \qcond^{1/2} \alpha_\ell \sum_{k=\ell}^{n-1} \norm{G_{\ell+1:k}}[Q] \leq \qcond^{1/2} \alpha_\ell \sum_{k=\ell}^{n-1}\prod_{j=\ell+1}^{k}\sqrt{1 - a \alpha_{j}} \\
&\leq \qcond^{1/2} \alpha_\ell \sum_{k=\ell}^{n-1}\prod_{j=\ell+1}^{k} (1 - (a/2) \alpha_{j}) \leq \mathcal{L}_Q\eqsp,
\end{align}
where in the last bound we applied \Cref{lem:sum_as_Qell} with $b = a/2$.
\end{proof}

\begin{lemma} 
 \label{repr:qtminusa}
The following identity holds true
    \begin{equation}
    Q_\ell - \bA^{-1} = S_\ell - \bA^{-1} G_{\ell:n-1},~ S_\ell = \sum_{j=\ell+1}^{n-1} (\alpha_
    \ell- \alpha_j) G_{\ell+1:j-1} \eqsp,
\end{equation}
and
\begin{equation}
    \label{repr:sumqtminusa}
    \sum_{i=1}^{n-1} (Q_i - A^{-1}) = -G^{-1} \sum_{j=1}^{n-1} G_{1:j}\eqsp.
\end{equation}
\end{lemma}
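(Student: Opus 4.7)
The lemma is a purely algebraic identity about products of the commuting matrices $\Id - \alpha_j \bA$. My central tool will be the telescoping relation
\begin{equation*}
\alpha_j\,G_{\ell+1:j-1} \;=\; \bA^{-1}\bigl(G_{\ell+1:j-1} - G_{\ell+1:j}\bigr)\eqsp,
\end{equation*}
which is immediate from $G_{\ell+1:j} = (\Id - \alpha_j \bA)\,G_{\ell+1:j-1}$ together with the fact that $\bA$ commutes with every factor. (Analogously, one has $\alpha_i\,G_{i+1:k} = \bA^{-1}(G_{i+1:k} - G_{i:k})$, obtained by factoring out the leftmost factor instead.)

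For the first identity, I would compute $Q_\ell - S_\ell$ directly. After expanding the definitions
\begin{equation*}
Q_\ell - S_\ell = \alpha_\ell \sum_{k=\ell}^{n-1} G_{\ell+1:k} - \alpha_\ell \sum_{j=\ell+1}^{n-1} G_{\ell+1:j-1} + \sum_{j=\ell+1}^{n-1} \alpha_j G_{\ell+1:j-1}\eqsp,
\end{equation*}
the two $\alpha_\ell$ sums cancel on the range $k=\ell,\ldots,n-2$ (via the shift $k = j-1$), leaving only the boundary term $\alpha_\ell G_{\ell+1:n-1}$. Applying the telescoping relation to the remaining sum collapses it to $\bA^{-1}(\Id - G_{\ell+1:n-1})$, and combining this with $\alpha_\ell G_{\ell+1:n-1} - \bA^{-1} G_{\ell+1:n-1} = -\bA^{-1}(\Id - \alpha_\ell \bA)G_{\ell+1:n-1} = -\bA^{-1}G_{\ell:n-1}$ gives $Q_\ell - S_\ell = \bA^{-1} - \bA^{-1} G_{\ell:n-1}$, which rearranges to the claim.

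For the second identity, I would start from the representation $\sum_{i=1}^{n-1} Q_i = \sum_{i=1}^{n-1}\alpha_i\sum_{k=i}^{n-1} G_{i+1:k}$, swap the order of summation to obtain $\sum_{k=1}^{n-1}\sum_{i=1}^{k}\alpha_i G_{i+1:k}$, and apply the second form of the telescoping relation. The inner sum collapses to $\bA^{-1}(\Id - G_{1:k})$, so that
\begin{equation*}
\sum_{i=1}^{n-1} Q_i \;=\; (n-1)\bA^{-1} \;-\; \bA^{-1} \sum_{k=1}^{n-1} G_{1:k}\eqsp,
\end{equation*}
and subtracting $(n-1)\bA^{-1}$ from both sides yields the result.

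The only real obstacle is careful bookkeeping of index boundaries and the empty-product convention $G_{\ell+1:\ell} = \Id$; the substantive content is the commutativity of $\bA$ with the factors $\Id - \alpha_j \bA$ and the two resulting telescoping formulas.
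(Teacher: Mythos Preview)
Your proof is correct. The paper does not give its own argument here; it simply cites \cite[pp.~26--30]{wu2024statistical}, so there is no in-paper approach to compare against. Your self-contained telescoping computation---reducing both identities to the relation $\alpha_j G_{\ell+1:j-1} = \bA^{-1}(G_{\ell+1:j-1} - G_{\ell+1:j})$---is exactly the natural proof and matches what the cited reference does.
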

\begin{proof}
    See \cite[pp. 26-30]{wu2024statistical}
\end{proof}

\begin{lemma}
    \label{prop:st_bound}
    Let $c_0 \in (0, \alpha_{\infty}]$ and $\ell \in \mathbb{N}$. Then under \Cref{assum:noise-level}, it holds
    \begin{equation}
        \label{bound:stmatrix}
        \normop{S_\ell} \leq \sqrt{\qcond} \cdot C_{\gamma, \beta}^{(S)} \cdot (\ell+k_0)^{\gamma-1} \eqsp, 
    \end{equation}
    where 
    \begin{equation}
    \label{eq:conts_S_l}
    C_{\gamma, a}^{(S)} = 2c_0\exp\biggl\{\frac{a c_0}{2k_0^{\gamma}}\biggr\}\biggl(2^{\gamma/(1-\gamma)}\frac{2}{a c_0} + (\frac{2}{a c_0})^{1/(1-\gamma)}\Gamma(\frac{1}{1-\gamma})\biggr)\eqsp.
    \end{equation}
\end{lemma}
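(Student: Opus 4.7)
The plan is to reduce the matrix bound to a scalar bound via the $Q$-norm from \Cref{prop:hurwitz_stability}, and then to estimate the resulting scalar sum by a careful integral comparison combined with a two-regime split. Using $\normop{x} \le \sqrt{\qcond}\,\normop{x}[Q]$ and $\normop{\Id-\alpha_s\bA}[Q]^2 \le 1 - a\alpha_s$, the triangle inequality yields
\begin{equation*}
\normop{S_\ell} \;\le\; \sqrt{\qcond}\,\sum_{j=\ell+1}^{n-1}(\alpha_\ell-\alpha_j)\,\exp\!\Bigl\{-\tfrac{a}{2}\sum_{s=\ell+1}^{j-1}\alpha_s\Bigr\},
\end{equation*}
so the $\sqrt{\qcond}$ prefactor in the target bound is already accounted for, and only a scalar sum remains to control.

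Next I would lower bound the exponent by an integral: since $\alpha_s = c_0(s+k_0)^{-\gamma}$ is decreasing,
\begin{equation*}
\sum_{s=\ell+1}^{j-1}\alpha_s \;\ge\; \frac{c_0}{1-\gamma}\bigl[(j+k_0)^{1-\gamma}-(\ell+1+k_0)^{1-\gamma}\bigr],
\end{equation*}
with a correction of at most $\alpha_{\ell+1}\le c_0/k_0^\gamma$ coming from the endpoint of the Riemann approximation; this is exactly the source of the factor $\exp\{ac_0/(2k_0^\gamma)\}$ in $C_{\gamma,a}^{(S)}$. The dominant scalar estimate is then obtained by comparing the remaining sum to an integral and performing the substitution $u=\frac{ac_0}{2(1-\gamma)}\bigl[(s+k_0)^{1-\gamma}-(\ell+1+k_0)^{1-\gamma}\bigr]$, which gives $ds = \beta(s+k_0)^\gamma\,du$ with $\beta=2/(ac_0)$ and $(s+k_0)^{1-\gamma}=(1-\gamma)\beta u + (\ell+1+k_0)^{1-\gamma}$.

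The crucial step is to split the $u$-integral at $u^\star = (\ell+1+k_0)^{1-\gamma}/((1-\gamma)\beta)$, corresponding to the transition between the regimes $(1-\gamma)\beta u \le (\ell+1+k_0)^{1-\gamma}$ and the opposite inequality. In the near regime one uses $(s+k_0)^{1-\gamma}\le 2(\ell+1+k_0)^{1-\gamma}$, which after raising to the power $\gamma/(1-\gamma)$ produces the factor $2^{\gamma/(1-\gamma)}$ and, after integration against $e^{-u}$, a contribution of order $2^{\gamma/(1-\gamma)}\beta(\ell+k_0)^\gamma$; combined with the factor $(\alpha_\ell-\alpha_j)\le\alpha_\ell = c_0(\ell+k_0)^{-\gamma}$ this accounts for the first summand $2^{\gamma/(1-\gamma)}(2/(ac_0))$ inside the parentheses of $C_{\gamma,a}^{(S)}$. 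In the far regime one uses $(1-\gamma)\beta u \le (s+k_0)^{1-\gamma}\le 2(1-\gamma)\beta u$, leading, after integration, to $\int_0^\infty u^{\gamma/(1-\gamma)}e^{-u}\,du = \Gamma(1/(1-\gamma))$ via the identity $\gamma/(1-\gamma)+1=1/(1-\gamma)$; this yields the second summand $(2/(ac_0))^{1/(1-\gamma)}\Gamma(1/(1-\gamma))$.

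Finally, multiplying by the $\alpha_\ell$-type factor and by the prefactor $2c_0$ produced by the sum-to-integral comparison (together with the correction $\exp\{ac_0/(2k_0^\gamma)\}$ at the first step), and factoring out $(\ell+k_0)^{\gamma-1}$, I would conclude the bound $\normop{S_\ell}\le \sqrt{\qcond}\,C_{\gamma,a}^{(S)}\,(\ell+k_0)^{\gamma-1}$ in the claimed form. I expect the main obstacle to lie in the bookkeeping of numerical constants across the two regimes so that the $2^{\gamma/(1-\gamma)}/(ac_0)$ and $(2/(ac_0))^{1/(1-\gamma)}\Gamma(1/(1-\gamma))$ coefficients appear exactly as stated; the condition $\gamma\in[1/2,1)$ from \Cref{assum:step-size} will be used to absorb lower-order terms of the form $c_0(\ell+k_0)^{-\gamma}$ into the leading $(\ell+k_0)^{\gamma-1}$ rate, since $(\ell+k_0)^{1-2\gamma}\le 1$ in this range.
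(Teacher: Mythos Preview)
There is a genuine gap: your crude bound $(\alpha_\ell-\alpha_j)\le\alpha_\ell$ in the near regime loses the whole rate. With that bound, after your substitution the near-regime integrand becomes $c_0(\ell+k_0)^{-\gamma}\cdot e^{-u}\cdot\beta(s+k_0)^\gamma\le c_0\beta\,2^{\gamma/(1-\gamma)}e^{-u}$, and integrating in $u$ gives a constant of order $2^{\gamma/(1-\gamma)}\cdot 2/a$, i.e.\ $O(1)$, not $O((\ell+k_0)^{\gamma-1})$. Your far-regime contribution is indeed $O((\ell+k_0)^{-\gamma})\le O((\ell+k_0)^{\gamma-1})$, but it cannot compensate for the near regime; and the phrase ``factoring out $(\ell+k_0)^{\gamma-1}$'' in your final step has no source in the computation as written. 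Equivalently, applying \Cref{lem:sum_as_Qell} directly to $\alpha_\ell\sum_j\prod_s(1-\beta\alpha_s)$ already shows that the crude estimate yields only the bound $\mathcal{L}_\beta=O(1)$.

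The missing idea, which the paper takes from \cite[Lemma A.7]{wu2024statistical}, is to factor the difference the other way: $\alpha_\ell-\alpha_j=\alpha_j\bigl[(\tfrac{j+k_0}{\ell+k_0})^\gamma-1\bigr]$ and then bound the bracket by $(\ell+k_0)^{\gamma-1}\bigl(1+(1-\gamma)m_\ell^j\bigr)^{\gamma/(1-\gamma)}$ with $m_\ell^j=\sum_{k=\ell}^j(k+k_0)^{-\gamma}$. This extracts the factor $(\ell+k_0)^{\gamma-1}$ \emph{before} summing, and leaves a Riemann-type sum $\sum_j (m_\ell^j-m_\ell^{j-1})\bigl(1+(1-\gamma)m_\ell^j\bigr)^{\gamma/(1-\gamma)}e^{-\beta c_0 m_\ell^j}$ that converts cleanly to $\int_0^\infty(1+(1-\gamma)m)^{\gamma/(1-\gamma)}e^{-\beta c_0 m}\,dm$; your two-regime split then applies to this integral and produces exactly the two summands in $C_{\gamma,a}^{(S)}$. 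The endpoint correction $\exp\{\beta c_0/k_0^\gamma\}$ arises because one shifts the exponent from $m_{\ell+1}^j$ to $m_\ell^j$, not from the Riemann approximation of the exponent as you suggested. In short, your overall scheme (Q-norm reduction, integral comparison, two-regime split) is the right skeleton, but the near-regime step must use the ratio decomposition of $\alpha_\ell-\alpha_j$, not the trivial upper bound.
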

\begin{proof}
    For simplicity we define $m_i^j = \sum_{k=i}^j (k+k_0)^{-\gamma}$ and $\beta = a/2$.
Note that 
\begin{equation}
    \norm{\sum_{j=i+1}^{n-1} (\alpha_i - \alpha_j) G_{i+1:j-1}^{(\alpha)}} \leq \sqrt{\qcond}\sum_{j=i}^{n-2} \frac{c_0}{(j+k_0+1)^\gamma}\biggl(\biggl(\frac{j+k_0+1}{i+k_0}\biggr)^\gamma - 1\biggr) \exp\{-\beta c_0m_{i+1}^{j}\}
\end{equation}
    Following the proof of \cite[Lemma A.7]{wu2024statistical}, we have 
    \begin{equation}
        \biggl(\frac{j+k_0+1}{i+k_0}\biggr)^\gamma - 1 \leq (i+k_0)^{\gamma-1}\biggl(1+ (1-\gamma)m_{i}^{j}\biggr)^{\gamma/(1-\gamma)}
    \end{equation}
    Hence, we obtain 
    \begin{align}
        \frac{\norm{S_i}}{\sqrt{\qcond}} &\leq c_0(i+k_0)^{\gamma-1}\sum_{j=i}^{n-2} \frac{1}{(j+k_0+1)^\gamma}\biggl(1+ (1-\gamma)m_{i}^{j}\biggr)^{\gamma/(1-\gamma)} \exp\{-\beta c_0m_{i+1}^{j}\}\\ &\leq
        c_0(i+k_0)^{\gamma-1}\sum_{j=i}^{n-2} \frac{1}{(j+k_0)^\gamma}\biggl(1+ (1-\gamma)m_{i}^{j}\biggr)^{\gamma/(1-\gamma)}\exp\{\beta c_0(k_0+i)^{-\gamma}\} \exp\{-\beta c_0m_{i}^{j}\}\\&\leq 
         c_0\exp\{\frac{\beta c_0}{k_0^{\gamma}}\}(i+k_0)^{\gamma-1}\sum_{j=i}^{n-2} (m_i^j-m_i^{j-1})\biggl(1+ (1-\gamma)m_{i}^{j}\biggr)^{\gamma/(1-\gamma)} \exp\{-\beta c_0m_{i}^{j}\}\\ &\leq 
         2c_0\exp\{\frac{\beta c_0}{k_0^{\gamma}}\}(i+k_0)^{\gamma-1}\int_{0}^{+\infty}\biggl(1+ (1-\gamma)m\biggr)^{\gamma/(1-\gamma)} \exp\{-\beta c_0m\}\rmd m \\ &\leq 
         2c_0\exp\{\frac{\beta c_0}{k_0^{\gamma}}\}(i+k_0)^{\gamma-1}\biggl(2^{\gamma/(1-\gamma)}\frac{1}{\beta c_0} + (\frac{1}{\beta c_0})^{1/(1-\gamma)}\Gamma(\frac{1}{1-\gamma})\biggr)\eqsp.
    \end{align}
\end{proof}

\begin{lemma}
    \label{prop:Q_ell_diff_bound}
    Let $c_0 \in (0, \alpha_{\infty}]$ and $\ell \in \mathbb{N}$. Then under \Cref{assum:noise-level}, it holds
    \begin{equation}
        \label{bound:Q_ell_diff_bound}
        \normop{Q_{\ell+1}-Q_{\ell}} \leq \mathcal{L}_{Q,2} \cdot \alpha_{\ell+1} \eqsp, 
    \end{equation}
    where 
    \begin{equation}
    \label{eq:conts_Q_ell_diff}
    \mathcal{L}_{Q,2} =\sqrt{\qcond}\biggl(2^{\gamma} + (2\bConst{A} + a/4)(c_0 + \frac{4}{a(1-\gamma)})\biggr) \eqsp.
    \end{equation}
\end{lemma}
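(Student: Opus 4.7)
The plan is to derive an exact one-step identity for $Q_{\ell+1}-Q_\ell$ that makes the $\alpha_{\ell+1}$ scaling manifest, and then bound each resulting piece using \Cref{prop:Qell:bound} together with the step-size hypothesis of \Cref{assum:step-size_extended}.

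First I would split the defining sum for $Q_\ell$ at its leading index $k=\ell$. Since $G_{\ell+1:\ell}=\Id$ is the empty product, and for $k\geq\ell+1$ one has $G_{\ell+1:k}=G_{\ell+2:k}(\Id-\alpha_{\ell+1}\bA)$, the remaining sum collapses to $Q_{\ell+1}/\alpha_{\ell+1}$, giving
\begin{equation*}
Q_\ell \;=\; \alpha_\ell\Id + \frac{\alpha_\ell}{\alpha_{\ell+1}}\,Q_{\ell+1} - \alpha_\ell\,Q_{\ell+1}\bA,
\end{equation*}
so subtracting $Q_{\ell+1}$ yields the key identity
\begin{equation*}
Q_{\ell+1}-Q_\ell \;=\; -\,\frac{\alpha_\ell-\alpha_{\ell+1}}{\alpha_{\ell+1}}\,Q_{\ell+1} \;+\; \alpha_\ell\,Q_{\ell+1}\bA \;-\; \alpha_\ell\,\Id.
\end{equation*}
The three terms on the right are then bounded individually in operator norm.

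For the third term, the profile $\alpha_k=c_0/(k+k_0)^\gamma$ gives $\alpha_\ell/\alpha_{\ell+1}=\bigl((\ell+k_0+1)/(\ell+k_0)\bigr)^\gamma\leq 2^\gamma$, so $\norm{\alpha_\ell\Id}\leq 2^\gamma\alpha_{\ell+1}$. For the middle term, submultiplicativity with $\norm{\bA}\leq\bConst{A}$ from \Cref{assum:noise-level} and $\norm{Q_{\ell+1}}\leq\mathcal{L}_Q$ from \Cref{prop:Qell:bound} gives a contribution bounded by $2^\gamma\bConst{A}\mathcal{L}_Q\alpha_{\ell+1}$. The delicate estimate is the first term: by the mean value theorem, $\alpha_\ell-\alpha_{\ell+1}\leq c_0\gamma(\ell+k_0)^{-\gamma-1}$, hence $(\alpha_\ell-\alpha_{\ell+1})/\alpha_{\ell+1}\leq \gamma\,2^\gamma/(\ell+k_0)$. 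To convert the factor $1/(\ell+k_0)$ into a multiple of $\alpha_{\ell+1}$, I would invoke the assumption $k_0\geq(24/(ac_0))^{1/(1-\gamma)}$ from \Cref{assum:step-size_extended}, which yields $(\ell+k_0)^{1-\gamma}\geq 24/(ac_0)$ and therefore $1/(\ell+k_0)\leq(ac_0/24)(\ell+k_0)^{-\gamma}\leq (a/24)\,2^\gamma\alpha_{\ell+1}$. Combining, $(\alpha_\ell-\alpha_{\ell+1})/\alpha_{\ell+1}\leq(a/4)\alpha_{\ell+1}$ (after absorbing the $\gamma\,4^\gamma/6\leq 1$ factor), contributing at most $(a/4)\mathcal{L}_Q\alpha_{\ell+1}$ to the final bound.

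Summing the three estimates, using $\mathcal{L}_Q=\sqrt{\qcond}(c_0+4/(a(1-\gamma)))$, and the trivial inequalities $2^\gamma\leq 2^\gamma\sqrt{\qcond}$ and $\bConst{A}\leq 2\bConst{A}$ to match the stated constant, one obtains
\begin{equation*}
\norm{Q_{\ell+1}-Q_\ell} \;\leq\; \sqrt{\qcond}\,\bigl[\,2^\gamma+(2\bConst{A}+a/4)\,(c_0+4/(a(1-\gamma)))\bigr]\,\alpha_{\ell+1} \;=\; \mathcal{L}_{Q,2}\,\alpha_{\ell+1},
\end{equation*}
which is the claim. The main obstacle is the constant-chasing in the step-size ratio term: the crux is to show that $(\alpha_\ell-\alpha_{\ell+1})/\alpha_{\ell+1}$ scales like $\alpha_{\ell+1}$ rather than merely like $\alpha_\ell$, and this is precisely what the lower bound on $k_0$ from \Cref{assum:step-size_extended} is tailored for. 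Everything else reduces to submultiplicativity, the already-established norm bound on $Q_{\ell+1}$, and the elementary ratio estimate on $\alpha_\ell/\alpha_{\ell+1}$.
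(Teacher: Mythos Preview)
Your proof is correct and essentially identical to the paper's: both obtain the same decomposition of $Q_{\ell+1}-Q_\ell$ (you factor it compactly through $Q_{\ell+1}$, the paper keeps the sum $\sum_k G_{\ell+2:k}$ explicit) and both rely on the step-size ratio estimate $\alpha_\ell-\alpha_{\ell+1}\leq(a/4)\alpha_{\ell+1}^2$, which the paper invokes as \Cref{lem:bound_ratio_step_size} with $r=a/4$ rather than re-deriving it from the $k_0$ bound in \Cref{assum:step-size_extended}. The constant-matching at the end is the same.
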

\begin{proof}
Using the definition of $Q_{\ell}$ we have 
\begin{align}
       Q_{\ell+1}-Q_{\ell} &= \alpha_{\ell +1}\sum_{k= l+1}^{n-1}G_{\ell+2:k} -\alpha_{\ell}\sum_{k = l}^{n-1}G_{\ell+1:k}=
       \alpha_{\ell +1}\sum_{k= l+1}^{n-1}G_{\ell+2:k+1} -\alpha_{\ell}\sum_{k = l+1}^{n-1}G_{\ell+1:k} - \alpha_{\ell} \\&=
       -\alpha_{\ell}+\sum_{k=l+1}^{n-1}G_{l+2:k}(\alpha_{\ell+1}I - \alpha_{\ell}I+\alpha_{\ell+1}\alpha_{\ell}\bA)\eqsp.
\end{align}
    Hence, using \Cref{lem:bound_ratio_step_size} with $r=a/4$ and \Cref{prop:Qell:bound} we get 
    \begin{align}
          \norm{Q_{\ell+1}-Q_{\ell}}\leq \alpha_{\ell} + \sum_{k=l+1}^{n-1}\alpha_{\ell+1}^2\norm{G_{l+2:k}}(2\bConst{A} + a/4) \leq (2^{\gamma} + \mathcal{L}_{Q}(2\bConst{A} + a/4))\alpha_{\ell+1}\eqsp.
    \end{align}   
\end{proof}

\begin{lemma}
\label{prop:st_diff_bound}
      Let $c_0 \in (0, \alpha_{\infty}]$, $k_0^{1-\gamma}>bc_0$ and $\ell \in \mathbb{N}$. Then under \Cref{assum:noise-level} it holds
    \begin{equation}
        \label{bound:st_diff_matrix}
        \normop{S_{\ell+1}-S_{\ell}} \leq \sqrt{\qcond} \cdot C_{\gamma, a}^{(S,2)} \cdot \alpha_{\ell+1} \eqsp, 
    \end{equation}
    where 
    \begin{equation}
    \label{eq:const_S_l_diff}
        C_{\gamma, a}^{(S,2)}  = (c_0+\frac{4}{a(1-\gamma)})(a/2 + 3\bConst{A})\eqsp.
    \end{equation}
\end{lemma}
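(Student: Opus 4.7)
The plan is to derive a telescoping identity that isolates the cancellation between $S_\ell$ and $S_{\ell+1}$. A naive triangle inequality combined with \Cref{prop:st_bound} gives only $\|S_\ell - S_{\ell+1}\| \lesssim \sqrt{\qcond}(\ell+k_0)^{\gamma-1}$, which for $\gamma > 1/2$ is asymptotically \emph{larger} than the target bound of order $\alpha_{\ell+1}$, so the cancellation must be made explicit.

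I would start from $S_\ell = \sum_{j=\ell+1}^{n-1}(\alpha_\ell - \alpha_j) G_{\ell+1:j-1}$ and use two observations: (a) for $j \ge \ell+2$, $G_{\ell+1:j-1} = G_{\ell+2:j-1}(\Id - \alpha_{\ell+1}\bA)$; (b) $\alpha_\ell - \alpha_j = (\alpha_\ell - \alpha_{\ell+1}) + (\alpha_{\ell+1} - \alpha_j)$. Splitting according to (b), the piece proportional to $(\alpha_\ell - \alpha_{\ell+1})$ factors out the common matrix $T_\ell := \sum_{j=\ell+1}^{n-1} G_{\ell+1:j-1}$; the piece proportional to $(\alpha_{\ell+1} - \alpha_j)$ vanishes at $j = \ell+1$, so after applying (a) and pulling $(\Id - \alpha_{\ell+1}\bA)$ out on the right, the remainder is exactly $S_{\ell+1}(\Id - \alpha_{\ell+1}\bA)$. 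Rearranging produces the identity
\begin{equation*}
S_\ell - S_{\ell+1} \;=\; (\alpha_\ell - \alpha_{\ell+1}) T_\ell \;-\; \alpha_{\ell+1}\, S_{\ell+1}\, \bA.
\end{equation*}

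Next I would bound the two pieces. For $S_{\ell+1}$, the trivial coefficient estimate $|\alpha_{\ell+1}-\alpha_j|\le\alpha_{\ell+1}$ together with \Cref{prop:hurwitz_stability} and \Cref{lem:sum_as_Qell} applied with $b=a/2$ yields $\|S_{\ell+1}\| \le \sqrt{\qcond}\,\mathcal{L}_{a/2}$, so that $\alpha_{\ell+1}\|S_{\ell+1}\bA\|\le\alpha_{\ell+1}\sqrt{\qcond}\,\mathcal{L}_{a/2}\,\bConst{A}$. The same contraction estimate applied to $\alpha_\ell T_\ell$ (which differs from $Q_\ell$ only by the missing boundary term $\alpha_\ell G_{\ell+1:n-1}$, absorbed into the constant) gives $\alpha_\ell\|T_\ell\|\le \sqrt{\qcond}\,\mathcal{L}_{a/2}$. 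To convert $(\alpha_\ell - \alpha_{\ell+1})$ into an $\alpha_{\ell+1}$-multiple, the mean value theorem gives $\alpha_\ell - \alpha_{\ell+1} \le \gamma\alpha_\ell/(\ell+k_0)$, and the condition $k_0^{1-\gamma}\gtrsim 1/(a c_0)$ built into \Cref{assum:step-size_extended} then implies $\gamma/(\ell+k_0)\le (a/2)\alpha_{\ell+1}$, whence $\alpha_\ell - \alpha_{\ell+1} \le (a/2)\alpha_\ell\alpha_{\ell+1}$. Combining:
\begin{equation*}
\|S_{\ell+1}-S_\ell\| \;\le\; (a/2)\alpha_{\ell+1}\cdot\alpha_\ell\|T_\ell\| \;+\; \alpha_{\ell+1}\,\bConst{A}\,\|S_{\ell+1}\| \;\le\; \sqrt{\qcond}\,\alpha_{\ell+1}\,\mathcal{L}_{a/2}\,\bigl(a/2 + \bConst{A}\bigr),
\end{equation*}
which matches the stated constant $\sqrt{\qcond}\,\mathcal{L}_{a/2}(a/2+3\bConst{A})\,\alpha_{\ell+1}$ up to slack in the conversion between Euclidean and $Q$-norm for $\bA$ and in the bookkeeping of the boundary term in $T_\ell$, both easily absorbed by replacing $\bConst{A}$ by $3\bConst{A}$.

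The main obstacle is the first step: arranging the split so that what remains after peeling off the $(\alpha_\ell - \alpha_{\ell+1})T_\ell$ contribution is recognizable as $S_{\ell+1}$ multiplied by the one-step contraction $(\Id - \alpha_{\ell+1}\bA)$. A tempting alternative via \Cref{repr:qtminusa} — writing $S_\ell = Q_\ell - \bA^{-1} + \bA^{-1} G_{\ell:n-1}$ and invoking \Cref{prop:Q_ell_diff_bound} — fails because it introduces $\bA^{-1}$, whose operator norm is not directly controlled under \Cref{assum:noise-level} (only $-\bA$ Hurwitz is assumed), so the direct telescoping identity is the cleaner route.
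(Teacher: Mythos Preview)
Your proof is correct and takes a genuinely different route from the paper. The paper does exactly what you dismiss in your last paragraph: it writes $S_{\ell+1}-S_\ell = (Q_{\ell+1}-Q_\ell) + \bA^{-1}(G_{\ell+1:n-1}-G_{\ell:n-1})$ via \Cref{repr:qtminusa} and then observes that $G_{\ell+1:n-1}-G_{\ell:n-1} = \alpha_\ell \bA\, G_{\ell+1:n-1}$, so the $\bA^{-1}$ factor \emph{cancels} and no control of $\normop{\bA^{-1}}$ is ever needed. After that cancellation the paper expands the remaining sums to get an expression of the form $(\alpha_{\ell+1}-\alpha_\ell)\Id + \sum_k(\cdots)G_{\ell+2:k}$ and bounds it term-by-term using \Cref{lem:bound_ratio_step_size} and \Cref{prop:Qell:bound}, arriving at $\sqrt{\qcond}\,\mathcal{L}_{a/2}\,(a/4+\bConst{A}+2^\gamma\bConst{A})\alpha_{\ell+1}$, which is then relaxed to the stated constant.

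Your direct telescoping identity $S_\ell - S_{\ell+1} = (\alpha_\ell-\alpha_{\ell+1})T_\ell - \alpha_{\ell+1}S_{\ell+1}\bA$ is valid and arguably cleaner: it avoids the detour through $Q_\ell$ and \Cref{repr:qtminusa}, requires only the crude bound $\normop{S_{\ell+1}}\le\sqrt{\qcond}\,\mathcal{L}_{a/2}$ (no appeal to \Cref{prop:st_bound}), and in fact yields the sharper constant $(a/2+\bConst{A})$ in place of $(a/2+3\bConst{A})$. The only correction is to your final remark: the $\bA^{-1}$ route does not fail, it succeeds by cancellation, so both approaches are sound.
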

\begin{proof}
    Using \Cref{repr:qtminusa} we obtain 
    \begin{align}
       S_{\ell+1}-S_{\ell} &= Q_{\ell+1}-Q_{\ell} + \bA^{-1}(G_{\ell+1:n-1} - G_{\ell:n-1}) \\&= \alpha_{\ell +1}\sum_{k= l+1}^{n-1}G_{\ell+2:k} -\alpha_{\ell}\sum_{k = l}^{n-1}G_{\ell+1:k} + \bA^{-1}(I- (I-\alpha_\ell\bA))G_{\ell+1:n-1}\\&=
       \alpha_{\ell +1}\sum_{k= l}^{n-2}G_{\ell+2:k+1} -\alpha_{\ell}\sum_{k = l}^{n-2}G_{\ell+1:k} \\&=
       (\alpha_{\ell+1}-\alpha_\ell)I + \sum_{k=l+1}^{n-2}(\alpha_{\ell+1}(I-\alpha_{k+1}\bA) - \alpha_{\ell}(I-\alpha_{\ell+1}\bA))G_{l+2:k}\eqsp.
    \end{align}
    Hence, using \Cref{lem:bound_ratio_step_size} with $r=a/4$ we get 
    \begin{align}
     \normop{S_{\ell+1}-S_{\ell}} &\leq (a/4)\alpha_{\ell+1}^2 + \sum_{k=l+1}^{n-2} ((a/4) \alpha_{\ell+1}^2 + \bConst{A} \alpha_{\ell+1}^2 + 2^\gamma\bConst{A}\alpha_{\ell+1}^2)\normop{G_{l+2:k}} \\&\leq (\mathcal{L}_{Q}((a/4) + \bConst{A} + 2^\gamma\bConst{A}) + (a/4))\alpha_{\ell+1}\eqsp,
    \end{align}
    where in the last inequality we applying \cref{prop:Qell:bound}.
    Since $\mathcal{L}_{Q} \geq 1$ and $2^\gamma<2$ we complete the proof.
\end{proof}

\begin{lemma}
    \label{prop:g_t:n_upperbound}
    Let $c_0 \in (0, \alpha_{\infty}]$. Then under \Cref{assum:noise-level} for any $m \in \mathbb{N}$ it holds
    \begin{equation}
        \sum_{t=1}^{n-1} \normop{G_{t:n-1}}^m \leq \frac{\qcond^{m/2}}{1 - (1 - c_0(a/2)(n+k_0-2)^{-\gamma})^m}
    \end{equation}
\end{lemma}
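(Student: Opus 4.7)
The plan is to run the standard Lyapunov contraction argument of \Cref{prop:hurwitz_stability} on each factor of $G_{t:n-1}$, convert from the $Q$-norm to the operator norm, bound each factor by the smallest contraction appearing in the product (namely the one at $\ell=n-1$), and then sum the resulting geometric series.

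First I would note that under \Cref{assum:noise-level} and $c_0 \in (0,\alpha_\infty]$ the step sizes satisfy $\alpha_\ell = c_0(\ell+k_0)^{-\gamma} \leq c_0 \leq \alpha_\infty$ for every $\ell \in \nset$. Hence \Cref{prop:hurwitz_stability} applies at each index, giving $\normop{\Id - \alpha_\ell \bA}[Q]^2 \leq 1 - a\alpha_\ell$, and elementary algebra ($(1-x)^{1/2} \leq 1-x/2$ for $x \in [0,1]$) yields
\begin{equation}
\normop{\Id - \alpha_\ell \bA}[Q] \leq 1 - a\alpha_\ell/2.
\end{equation}
Submultiplicativity of the $Q$-norm together with the standard comparison $\normop{B} \leq \qcond^{1/2}\normop{B}[Q]$ then gives
\begin{equation}
\normop{G_{t:n-1}} \leq \qcond^{1/2}\prod_{\ell=t}^{n-1}(1 - a\alpha_\ell/2).
\end{equation}

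Next I would raise this bound to the $m$-th power and use the monotonicity $\alpha_\ell \geq \alpha_{n-1}$ for all $\ell \leq n-1$, which implies $(1 - a\alpha_\ell/2)^m \leq (1 - a\alpha_{n-1}/2)^m$ for every factor in the product. Since $G_{t:n-1}$ involves exactly $n-t$ factors, this produces
\begin{equation}
\normop{G_{t:n-1}}^m \leq \qcond^{m/2}\bigl(1 - a\alpha_{n-1}/2\bigr)^{m(n-t)}.
\end{equation}
Summing over $t$ and setting $q = (1 - (a/2)c_0(n+k_0-1)^{-\gamma})^m \in (0,1)$,
\begin{equation}
\sum_{t=1}^{n-1}\normop{G_{t:n-1}}^m \leq \qcond^{m/2}\sum_{k=1}^{n-1}q^k \leq \frac{\qcond^{m/2}}{1 - q},
\end{equation}
which is the claimed bound (a further loosening $\alpha_{n-1} \leq c_0(n+k_0-2)^{-\gamma}$ matches the index written in the statement, modulo a minor off-by-one that does not affect the argument).

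There is no substantive obstacle here: all pieces are direct consequences of \Cref{prop:hurwitz_stability} and elementary monotonicity. The only mild subtlety is the choice of the bound $(1-a\alpha_\ell)^{1/2} \leq 1-a\alpha_\ell/2$ (rather than keeping square roots), which is what produces the $(1 - (a/2)\alpha_{\min})^m$ shape in the denominator of the final formula. One should also double-check the indexing convention for $G_{t:n-1}$ to avoid an off-by-one when identifying the smallest $\alpha$ that appears across all the products in the sum.
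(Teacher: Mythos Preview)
Your approach is correct in spirit and arguably simpler than the paper's, but the very last step contains a sign error. You obtain
\[
\sum_{t=1}^{n-1}\normop{G_{t:n-1}}^m \leq \frac{\qcond^{m/2}}{1 - (1 - (a/2)\alpha_{n-1})^m}\eqsp,
\]
and then claim that ``a further loosening $\alpha_{n-1} \leq c_0(n+k_0-2)^{-\gamma}$'' yields the stated denominator with $\alpha_{n-2}$. This goes the wrong way: since $\alpha_{n-1} < \alpha_{n-2}$, one has $(1-(a/2)\alpha_{n-1})^m > (1-(a/2)\alpha_{n-2})^m$, hence your denominator is \emph{smaller} and your bound is already \emph{larger} than the one in the statement. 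You cannot pass from your inequality to the stated one by weakening.

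The paper avoids this by a telescoping argument rather than a crude geometric bound. Writing $\beta_i = (1-(a/2)\alpha_i)^m$, it inserts the factor $(1-\beta_{t-1})/(1-\beta_{n-2}) \geq 1$ and observes that
\[
\sum_{t=1}^{n-1}(1-\beta_{t-1})\prod_{i=t}^{n-1}\beta_i = \sum_{t=1}^{n-1}\Bigl(\prod_{i=t}^{n-1}\beta_i - \prod_{i=t-1}^{n-1}\beta_i\Bigr) \leq 1\eqsp,
\]
which telescopes exactly. Because the inserted factor involves $\beta_{t-1}$ with $t-1$ ranging over $\{0,\ldots,n-2\}$, the worst index is $n-2$, not $n-1$, and this is why $\alpha_{n-2}$ appears in the denominator. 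Your geometric-series route is perfectly valid for a bound with $\alpha_{n-1}$ in place of $\alpha_{n-2}$ (and that would suffice for every downstream use in the paper), but it does not deliver the lemma exactly as stated.
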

\begin{proof}
Note that 
\begin{align}
    \sum_{t=1}^{n-1} \normop{G_{t:n-1}}^m &\leq \qcond^{m/2}\sum_{l=1}^{n-1}\prod_{i=t}^{n-1}(1-(a/2)\alpha_i)\\& = \frac{\qcond^{m/2}}{(1-(1-(a/2)\alpha_{n-2})^{m})}\sum_{l=1}^{n-1}(1-(1-(a/2)\alpha_{t-1})^{m})\prod_{i=t}^{n-1}(1-(a/2)\alpha_i)^{m} \\&\leq \frac{\qcond^{m/2}}{(1-(1-(a/2)\alpha_{n-2})^{m})}\eqsp.
\end{align}
\end{proof}

\begin{lemma}
\label{lem:bound_norm_diff_variance}
     Let $c_0 \in (0, \alpha_{\infty}]$. Then under \Cref{assum:noise-level} it holds
     \begin{equation}
         |\sigma_n^2(u)-\sigma^2(u)| \leq C'_{\infty}n^{\gamma-1}\eqsp,
     \end{equation}
     where 
     \begin{equation}
         \label{eq:def_C'_infty_const}
         C'_{\infty} = \normop{\noisecov}\frac{\qcond (C_{\gamma,a}^{(S)})^2}{2\gamma-1} +\normop{\Sigma_\infty}\biggl(\frac{2^{\gamma+1}5 + 3}{ac_0(1-\gamma)}k_0^\gamma  + \frac{4\bConst{A}\qcond C_{\gamma,a}^{(S)}}{ac_0}k_0^{2\gamma-1}\biggr)
     \end{equation}
\end{lemma}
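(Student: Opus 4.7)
The plan is to bound $|\sigma_n^2(u) - \sigma^2(u)| \leq \normop{\Sigma_n - \Sigma_\infty}$ (immediate from $\|u\|=1$) and reduce the rest to the pointwise controls on $\normop{S_\ell}$ and $\normop{G_{\ell:n-1}}$ already in hand. First I would decompose
\[
\Sigma_n - \Sigma_\infty \;=\; \frac{1}{n}\sum_{\ell=2}^{n-1}\bigl(Q_\ell \noisecov Q_\ell^\top - \bA^{-1}\noisecov\bA^{-\top}\bigr) - \frac{2}{n}\Sigma_\infty,
\]
accounting for the fact that $\Sigma_n$ has only $n-2$ summands. Using \Cref{repr:qtminusa} to write $R_\ell := Q_\ell - \bA^{-1} = S_\ell - \bA^{-1}G_{\ell:n-1}$, each summand splits as $R_\ell\noisecov\bA^{-\top} + \bA^{-1}\noisecov R_\ell^\top + R_\ell\noisecov R_\ell^\top$.

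Next I would exploit two structural identities. Since $\bA$ commutes with every $I-\alpha_k \bA$, it commutes with $G_{\ell:n-1}$, so $\bA^{-1}G_{\ell:n-1}\bA = G_{\ell:n-1}$. Combined with the factorization $\noisecov = \bA\Sigma_\infty\bA^\top$, these transform the awkward $\bA^{-1}$-factors into $\Sigma_\infty$-factors and produce
\[
R_\ell\noisecov\bA^{-\top} = (S_\ell\bA - G_{\ell:n-1})\Sigma_\infty,\qquad \bA^{-1}G_{\ell:n-1}\noisecov G_{\ell:n-1}^\top\bA^{-\top} = G_{\ell:n-1}\Sigma_\infty G_{\ell:n-1}^\top,
\]
and similarly for the mixed cross-terms in $R_\ell\noisecov R_\ell^\top$. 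Collecting these and using $\normop{S_\ell\bA}\leq \bConst{A}\normop{S_\ell}$ yields
\[
\normop{Q_\ell\noisecov Q_\ell^\top - \Sigma_\infty} \leq \normop{S_\ell}^2\normop{\noisecov} + 2\bConst{A}\normop{S_\ell}\normop{G_{\ell:n-1}}\normop{\Sigma_\infty} + \normop{G_{\ell:n-1}}^2\normop{\Sigma_\infty} + 2\bigl(\bConst{A}\normop{S_\ell}+\normop{G_{\ell:n-1}}\bigr)\normop{\Sigma_\infty}.
\]
The key point is that $\normop{\noisecov}$ appears \emph{only} with $\normop{S_\ell}^2$, matching the structure of $C'_\infty$.

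Finally I would sum and divide by $n$. By \Cref{prop:st_bound}, $\sum_{\ell=2}^{n-1}\normop{S_\ell}^2 \leq \qcond(C_{\gamma,a}^{(S)})^2 (n+k_0)^{2\gamma-1}/(2\gamma-1)$, which after the $1/n$ factor and the bound $n^{2\gamma-2}\leq n^{\gamma-1}$ (valid since $\gamma\leq 1$) produces exactly the first summand of $C'_\infty\, n^{\gamma-1}$. For $\sum\normop{G_{\ell:n-1}}^m$ with $m\in\{1,2\}$, \Cref{prop:g_t:n_upperbound} combined with $1-(1-x)^m\geq x$ on $[0,1]$ yields $\lesssim \qcond^{m/2}(n+k_0)^\gamma/(ac_0)$; subadditivity $(n+k_0)^\gamma\leq n^\gamma + k_0^\gamma$ together with $n\geq k_0$ from \Cref{assum:step-size} then converts $(n+k_0)^\gamma/n$ into the $k_0^\gamma n^{\gamma-1}/(ac_0)$ shape visible in $C'_\infty$. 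The cross sum is handled by Cauchy--Schwarz (or direct pointwise multiplication of the $S_\ell$ and $G_{\ell:n-1}$ bounds), giving the $\bConst{A}\qcond C_{\gamma,a}^{(S)} k_0^{2\gamma-1}/(ac_0)$ term, and the residual $2\Sigma_\infty/n$ is absorbed into the $\normop{\Sigma_\infty}$-coefficient. The hard part is purely bookkeeping: one must apply the commutation $\bA^{-1}G_{\ell:n-1}\bA = G_{\ell:n-1}$ consistently so that $\normop{\bA^{-1}}$—which is not controlled by the hypotheses—never appears, and one must keep $\normop{\noisecov}$ decoupled from the $G$-terms (whose cumulative norm grows like $n^\gamma$), since otherwise the target rate $n^{\gamma-1}$ would be lost.
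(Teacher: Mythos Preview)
Your approach is correct for the rate and matches the paper's treatment of the quadratic part $R_\ell\noisecov R_\ell^\top$ (the four-term expansion via $S_\ell$ and $G_{\ell:n-1}$, with $\|\noisecov\|$ appearing only against $\|S_\ell\|^2$). The one place where the paper does something different is the linear cross terms $R_\ell\noisecov\bA^{-\top}+\bA^{-1}\noisecov R_\ell^\top$. You bound each $R_\ell$ termwise via $R_\ell=S_\ell-\bA^{-1}G_{\ell:n-1}$, which after the commutation trick yields $\bConst{A}\|S_\ell\|+\|G_{\ell:n-1}\|$ per summand and hence both a $\sum_\ell\|S_\ell\|$ and a $\sum_\ell\|G_{\ell:n-1}\|$ contribution. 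The paper instead invokes the \emph{summed} identity in \Cref{repr:qtminusa}, $\sum_{\ell}(Q_\ell-\bA^{-1})=-\bA^{-1}\sum_{j}G_{1:j}$, and collapses the whole linear block in one step to $n^{-1}\|\Sigma_\infty\|\sum_j\|G_{1:j}\|$. Here the products $G_{1:j}$ start at index $1$ and decay in $j$, so this sum is $O((1+k_0)^\gamma)$ uniformly in $n$; your sums $\sum_\ell\|G_{\ell:n-1}\|$ and $\sum_\ell\|S_\ell\|$ are both $O(n^\gamma)$. After dividing by $n$ all three are $O(n^{\gamma-1})$, so your rate is fine, but your route picks up an extra $\|\Sigma_\infty\|$-term of order $\bConst{A}\sqrt{\qcond}C_{\gamma,a}^{(S)}/\gamma$ (from $\frac1n\sum_\ell\|S_\ell\|$) that does not appear in the stated $C'_\infty$. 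So you prove the lemma with a slightly larger constant; the paper's telescoping shortcut is what produces the exact $C'_\infty$ written in the statement.
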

\begin{proof}
Note that 
\begin{equation}
    |\sigma^2_n(u)-\sigma^2(u)|\leq \norm{\Sigma_n -\Sigma_{\infty}}\eqsp.
\end{equation}
Then, we express the $\Sigma_n -\Sigma_{\infty}$  in the following form:
    \begin{align}
    \Sigma_n -\Sigma_{\infty} = \underbrace{\frac{1}{n}\sum_{t=2}^{n-1} (Q_t - \bA^{-1})\noisecov \bA^{-\top} + \frac{1}{n} \sum_{t=2}^{n-1} \bA^{-1} \noisecov (Q_t - \bA^{-1})^\top}_{R_1} \\ + \underbrace{\frac{1}{n} \sum_{t=2}^{n-1} (Q_t - \bA^{-1}) \noisecov (Q_t - \bA^{-1})^{\top}}_{R_2} - \frac{2}{n} \Sigma_{\infty}\eqsp. 
    \end{align}
First, we bound $R_1$, using \cref{lem:sum_as_Qell} we obtain 
\begin{align}
    \norm{\frac{1}{n}\sum_{t=2}^{n-1} (Q_t - \bA^{-1})\noisecov \bA^{-\top}} &= \norm{-\frac{1}{n} \bA^{-1} \sum_{j=2}^{n-1} G_{1:j}\noisecov \bA^{-\top}} \\ &\leq \norm{n^{-1} \Sigma_{\infty}  \sum_{j=2}^{n-1} G_{1:j}} \leq n^{-1} \norm{\Sigma_{\infty}} \cdot \sum_{j=1}^{n-1}\norm{ G_{1:j}}\\& \leq n^{-1} \norm{\Sigma_{\infty}} (1+\frac{4}{ac_0(1-\gamma)})(1+k_0)^{\gamma}\\&\leq 2^{\gamma}n^{\gamma-1} \norm{\Sigma_{\infty}} (1+\frac{4}{ac_0(1-\gamma)})k_0^{\gamma}\eqsp.
\end{align}
Hence, we get that 
\begin{equation}
    \normop{R_1} \leq 2^{\gamma+1}n^{\gamma-1} \norm{\Sigma_{\infty}} (1+\frac{4}{ac_0(1-\gamma)})k_0^{\gamma}\eqsp.
\end{equation}
Now, we rewrite term $R_2$ as follows:
\begin{align}
    \label{lemma:R2_expanded_repr}
    &n^{-1}\sum_{t=2}^{n-1}(Q_t - \bA^{-1}) \noisecov (Q_t - \bA^{-1})^\top \\
    &\qquad = n^{-1}\sum_{t=2}^{n-1}\left(S_t - \bA^{-1} \prod_{k=t}^{n-1} (\Id - \alpha_k \bA)\right) \noisecov \left(S_t - \bA^{-1} \prod_{k=t}^{n-1} (\Id - \alpha_k \bA)\right)^\top \\
    &\qquad = \underbrace{n^{-1} \sum_{t=2}^{n-1} S_t \noisecov S_t^\top}_{R_{21}} + \underbrace{n^{-1}\sum_{t=2}^{n-1}\bA^{-1} \prod_{k=t}^{n-1} (\Id - \alpha_k \bA) \noisecov \bA^{-\top} \prod_{k=t}^{n-1} (\Id - \alpha_k \bA)^\top}_{R_{22}} \\ 
    &\qquad\qquad \qquad  -\underbrace{n^{-1} \sum_{t=2}^{n-1} \bA^{-1} \prod_{k=t}^{n-1} (\Id - \alpha_k \bA) \cdot \noisecov S_t^\top}_{R_{23}} - \underbrace{n^{-1}\sum_{t=2}^{n-1} S_t \noisecov \bA^{-\top} \prod_{k=t}^{n-1} (\Id - \alpha_k \bA)^{\top}}_{R_{24}}\eqsp.
\end{align}
To bound $\normop{R_{21}}$ we use 
\Cref{prop:st_bound} and obtain 
\begin{align}
    \label{lemma:R21_bound}
    \normop{R_{21}} &= \norm{n^{-1} \sum_{t=2}^{n-1} S_t \noisecov S_t^\top} \leq n^{-1} \sum_{t=2}^{n-1} \normop{\noisecov} \normop{S_t}^2 \\&\leq n^{-1} \normop{\noisecov} \sum_{t=2}^{n-1} \qcond \left(C_{\gamma,a}^{(S)}\right)^2 (t+k_0)^{2(\gamma-1)} \\ &\leq
    n^{-1} \normop{\noisecov} \qcond \left(C_{\gamma,a}^{(S)}\right)^2 \frac{(n+k_0-1)^{2\gamma - 1} - (k_0+1)^{2\gamma-1}}{2\gamma - 1} \\ &\leq n^{2(\gamma - 1)} \frac{\normop{\noisecov} \qcond \left(C_{\gamma,a}^{(S)}\right)^2}{2\gamma - 1} \eqsp,
\end{align}
where the last inequality holds since $(n+k_0-1)^{2\gamma-1} \leq n^{2\gamma-1}+(k_0+1)^{2\gamma-1}$ for $\gamma\in(1/2,1)$.
 The bound for $R_{22}$ follows from  \Cref{prop:g_t:n_upperbound} and simple inequality $(n+k_0-2)^{\gamma} \leq (k_0n)^{\gamma}$:
    \begin{align} 
        \label{lemma:R22_bound}
        \normop{R_{22}} &= \normop{n^{-1}\sum_{i=2}^{n-1} G_{i:n-1} \bA^{-1} \noisecov \bA^{-\top}  G_{i:n-1}^\top} \leq n^{-1} \normop{\Sigma_\infty} \sum_{i=1}^{n-1} \normop{G_{i:n-1}}^2 \\ 
        &\leq 
        n^{-1} \frac{\normop{\Sigma_\infty}}{ c_0a(n+k_0-2)^{-\gamma} - c_0^2(a^2/4)(n+k_0-2)^{-2\gamma}} \leq 
         2\normop{\Sigma_\infty} k_0^{\gamma}\frac{n^{\gamma-1}}{c_0a}\eqsp.
    \end{align}
    Since $R_{23} = R_{24}^\top$, we concentrate on $\normop{D_{24}}$. \Cref{prop:g_t:n_upperbound}  immediately imply 
    \begin{align}
        \normop{R_{24}} & \leq n^{-1} \normop{\noisecov \bA^{-\top}} \sum_{i=1}^{n-1} \normop{S_i} \normop{G_{i:n-1}} \\ &\leq n^{-1} \normop{\bA\Sigma_{\infty}}\qcond C_{\gamma,a}^{(S)}\sum_{i=1}^{n-1}(i+k_0)^{\gamma-1}\prod_{k=i}^{n-1}(1-\frac{ac_0}{2(k+k_0)^\gamma}) \\ &\leq n^{-1} \normop{\bA\Sigma_{\infty}}\qcond C_{\gamma,a}^{(S)}\sum_{i=1}^{n-1}(i+k_0)^{2\gamma-1}(i+k_0)^{-\gamma}\prod_{k=i+1}^{n-1}(1-\frac{ac_0}{2(k+k_0)^\gamma})\\ &\leq \bConst{A}\normop{\Sigma_{\infty}}\qcond C_{\gamma,a}^{(S)}k_0^{2\gamma-1}\frac{2n^{2(\gamma-1)}}{ac_0}
    \end{align}
    By combining all the inequalities, we complete the proof. 
\end{proof}

\begin{lemma}
    \label{lem:bound_sigma_n}
    Let $c_0 \in (0, \alpha_{\infty}]$ and $C'_{\infty}n^{\gamma-1} \leq \lambda_{\min}(\Sigma_{\infty})/2$. Then under \Cref{assum:noise-level} it holds
    \begin{equation}
        \sigma_n^{-1} \leq C_{\Sigma}\eqsp,
    \end{equation}
    where $C_{\Sigma} = \sqrt{\frac{2}{\lambda_{\min}(\Sigma_{\infty})}}$.
\end{lemma}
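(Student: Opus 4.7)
The statement is a direct consequence of the previous lemma, \Cref{lem:bound_norm_diff_variance}, combined with the trivial lower bound $\sigma^2(u) \geq \lambda_{\min}(\Sigma_{\infty})$ that holds for every $u \in \sphere_{d-1}$, since $\sigma^2(u) = u^\top \Sigma_{\infty} u$ and $\|u\| = 1$. My plan is therefore to write a short two-step argument: first invoke the reverse triangle inequality to get a lower bound on $\sigma_n^2(u)$, and then take the reciprocal.

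Concretely, the first step is to note that by \Cref{lem:bound_norm_diff_variance} one has
\begin{equation}
\sigma_n^2(u) \;\geq\; \sigma^2(u) - |\sigma_n^2(u) - \sigma^2(u)| \;\geq\; \lambda_{\min}(\Sigma_{\infty}) - C'_{\infty} n^{\gamma-1}.
\end{equation}
The second step is then to use the standing assumption $C'_{\infty} n^{\gamma-1} \leq \lambda_{\min}(\Sigma_{\infty})/2$ to conclude that $\sigma_n^2(u) \geq \lambda_{\min}(\Sigma_{\infty})/2 > 0$, which upon taking square roots and reciprocals yields $\sigma_n^{-1}(u) \leq \sqrt{2/\lambda_{\min}(\Sigma_{\infty})} = C_{\Sigma}$ uniformly in $u \in \sphere_{d-1}$.

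There is essentially no technical obstacle here: the lemma is a purely algebraic consequence of the already-established variance-approximation bound. The only mild subtlety is to be explicit that the conclusion holds uniformly over $u \in \sphere_{d-1}$, which is immediate because both inequalities above hold pointwise in $u$ and the bound $C'_{\infty} n^{\gamma-1}$ from \Cref{lem:bound_norm_diff_variance} is $u$-independent. This is the motivation for the role of this lemma in subsequent arguments (e.g.\ in normalizing by $\sigma_n(u)$ inside the Kolmogorov-distance analysis and in the Gaussian comparison step of \Cref{th:bootstrap_validity_main}), so no sharper constant is needed.
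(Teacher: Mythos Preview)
Your proof is correct and follows essentially the same approach as the paper: both derive $\sigma_n^2(u) \geq \lambda_{\min}(\Sigma_\infty)/2$ from the variance-approximation bound of \Cref{lem:bound_norm_diff_variance} together with the assumption $C'_\infty n^{\gamma-1} \leq \lambda_{\min}(\Sigma_\infty)/2$. The only cosmetic difference is that the paper routes through $\lambda_{\min}(\Sigma_n)$ via Lidskii's inequality on $\|\Sigma_n - \Sigma_\infty\|$, whereas you apply the scalar bound $|\sigma_n^2(u) - \sigma^2(u)| \leq C'_\infty n^{\gamma-1}$ directly; both are equivalent and equally short.
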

\begin{proof}
Note that $\sigma_n^2 >\lambda_{\min}(\Sigma_n)$. Using Lidskii’s inequality, we obtain
\begin{equation}
    \lambda_{\min}(\Sigma_n) \geq \lambda_{\min}(\Sigma_{\infty}) - \norm{\Sigma_n-\Sigma_{\infty}} \geq \lambda_{\min}(\Sigma_{\infty})/2\eqsp,
\end{equation}
where in the last inequality we use $C'_{\infty}n^{\gamma-1} \leq \lambda_{\min}(\Sigma_{\infty})/2$.
\end{proof}

Recall that $D = D_1 + D_2$, where
\begin{align}
D_1 &=  \frac{1}{\sqrt n} \sum_{k = 0}^{n-1} \ProdB_{1:k} (\theta_0 - \thetas) +\frac{1}{\sqrt{n}} \sum_{k=1}^{n-1} H_k^{(0)}\funnoisew(\State_\ell)\eqsp,  \\
D_2 &= -\frac{1}{\sqrt n} Q_1 \hat \funnoisew (\State_1) +\frac{1}{\sqrt n}  Q_{n-1} \MKQ \hat \funnoisew (\State_{n-1}) + \sum_{\ell=1}^{n-2}(Q_{\ell}- Q_{\ell+1})\MKQ\hat \funnoisew(\State_{\ell})\eqsp.
\end{align}
\begin{proposition}
\label{prop:D_term_bound}
Assume \Cref{assum:UGE}, \Cref{assum:noise-level}, and \Cref{assum:step-size} and $k_0\geq \max(\bigl(\frac{24}{ac_0}\bigr)^{1/(1-\gamma)}, c_0^{1/\gamma})$. Then for any $2 \leq p \leq \log{n}$, $u \in \sphere_{d-1}$, and any initial distribution $\xi$ on $(\Zset,\Zsigma)$, it holds that
\begin{align}
\begin{aligned}
\label{eq:D_term_bound}
\PE_{\xi}^{1/(p+1)}\left[ \bigl| (u^{\top} D)/\sigma_n \bigr|^{p}\right]& \lesssim (\Constupd{1,1}\|\theta_0 - \thetas \| + \Constupd{1,2})n^{p/(2p+2)} + \Constupd{1,3}\{p^2n^{1/2-\gamma}\}^{p/(p+1)} \\& \qquad \qquad + \Constupd{1,4}\{p^2\sqrt{\log n} n^{1/2-\gamma}\}^{p/(p+1)}\eqsp,
\end{aligned}
\end{align}
where 
\begin{equation}
\label{eq:D_term_bound_constants}
\begin{split}
\Constupd{1,1} &= C_{\Sigma}C_\Gamma d^{1/{\log{n}}}\frac{k_0^\gamma}{ac_0(1-\gamma)}\ \eqsp, \\
\Constupd{1,2} &= C_{\Sigma}\supconsteps\mathcal{L}_{Q}\taumix\eqsp, \\
\Constupd{1,3} &= 
\frac{C_{\Sigma}c_0\taumix}{1-\gamma}\biggl(\mathcal{L}_{Q,2}\supconsteps+ \sqrt{\log \frac{k_0^{\gamma}}{c_0}}\ConstDM_{3}\biggr)\eqsp,\\
\Constupd{1,4} &=\frac{C_{\Sigma}\ConstDM_{3}c_0\taumix\sqrt{\gamma}}{1-\gamma}\eqsp.
\end{split}
\end{equation}
Moreover setting $p = \log n$ we have
\begin{align}
\begin{aligned}
\label{eq:D_term_bound_p=logn}
\PE_{\xi}^{1/(p+1)}\left[ \bigl| (u^{\top} D)/\sigma_n \bigr|^{p}\right]& \lesssim (\Constupd{1}\|\theta_0 - \thetas \| + \Constupd{2})n^{-1/2} + \Constupd{3}(\log n)^2n^{1/2-\gamma} \\& \qquad \qquad + \Constupd{4}(\log n)^{5/2} n^{1/2-\gamma}\eqsp,
\end{aligned}
\end{align}
and 
\begin{equation}
\label{eq:D_term_bound_constants_p=logn}
    \Constupd{i} = \sqrt{\rme}\Constupd{1,i} \qquad \text{for } i\in \{1, 2, 3, 4\}
\end{equation}
% Here we used the relation $\frac{n^{1-\gamma}}{1+\log{n}} \geq 1$ due to \Cref{assum:step-size}.
\end{proposition}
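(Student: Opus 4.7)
The plan is to bound the two pieces $u^{\top}D_1/\sigma_n$ and $u^{\top}D_2/\sigma_n$ separately via Minkowski's inequality, absorb the factor $\sigma_n^{-1}\leq C_{\Sigma}$ from \Cref{lem:bound_sigma_n}, and then raise the resulting $L^p$ bound to the power $p/(p+1)$ as required by the $\PE^{1/(p+1)}$ normalisation coming from \eqref{eq:D_remainder_term_main}. Specialising to $p=\log n$ at the end produces \eqref{eq:D_term_bound_p=logn} with an extra factor of $\sqrt{\rme}$, which explains \eqref{eq:D_term_bound_constants_p=logn}.

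For $u^{\top}D_1$, split further into the transient term $T_0 = n^{-1/2}\sum_{k=0}^{n-1} \ProdB_{1:k}(\theta_0-\thetas)$ and the nonlinear remainder $T_1 = n^{-1/2}\sum_{k=1}^{n-1} H_k^{(0)}$. For $T_0$ I would use the $Q$-norm contraction of random products: each factor $\Id - \alpha_\ell\funcA{\State_\ell}$ contracts by $(1-a\alpha_\ell)^{1/2}$ up to a centered noise piece, so that $\PE_{\xi}^{1/p}\normop{\ProdB_{1:k}}^{p} \lesssim d^{1/p}\prod_{\ell\leq k}(1-(a/2)\alpha_\ell)^{1/2}$; summing over $k$ through \Cref{lem:sum_as_Qell} gives a $k_0^{\gamma}/(a c_0(1-\gamma))$ prefactor and yields $\Constupd{1,1}\|\theta_0-\thetas\|n^{-1/2}$ after dividing by $\sqrt n$. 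For $T_1$, swap the order of summation to write $T_1 = -n^{-1/2}\sum_{\ell=1}^{n-1}\alpha_\ell \bigl(\sum_{k\geq\ell}\ProdB_{\ell+1:k}\bigr)(\funcA{\State_\ell}-\bA) J_{\ell-1}^{(0)}$, then substitute a Poisson solution to remove the Markov bias from $(\funcA{\State_\ell}-\bA)$, and finally apply the Rosenthal-type martingale inequality of \Cref{lem:auxiliary_rosenthal} under \Cref{assum:UGE}. The tail-sum factor $\sum_{k\geq\ell}\ProdB_{\ell+1:k}$ is absorbed by the contraction into an $O(\alpha_\ell^{-1})$ bound, while $\PE^{1/p}\|J_{\ell-1}^{(0)}\|^{p}\lesssim p^{1/2}\alpha_\ell^{1/2}$ follows from a second martingale Rosenthal bound on $J_\ell^{(0)}$; the final $\ell$-sum is $\sum_{\ell}\alpha_\ell^{2}\lesssim n^{1-2\gamma}$, which after the $\sqrt{n}$ normalisation produces the $n^{1/2-\gamma}$ rate with $p^{2}$ and the $\sqrt{\log n}$, $\sqrt{\gamma}$ factors traced to the Poisson-correction moment bound $\ConstDM_{3}$.

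For $u^{\top}D_2$, the two boundary terms $n^{-1/2}Q_{1}\hat\funnoisew(\State_{1})$ and $n^{-1/2}Q_{n-1}\MKQ\hat\funnoisew(\State_{n-1})$ are controlled deterministically using $\normop{Q_\ell}\leq \mathcal{L}_Q$ from \Cref{prop:Qell:bound} together with the standard Poisson-solution bound $\sup_{z}\normop{\hat\funnoisew(z)}\lesssim \taumix\supconsteps$ valid under \Cref{assum:UGE}; this yields the $\Constupd{1,2}$ contribution. The telescoping piece $\sum_{\ell=1}^{n-2}(Q_\ell-Q_{\ell+1})\MKQ\hat\funnoisew(\State_\ell)$ is a weighted linear statistic of the Markov chain with scalar weights of size $\normop{Q_{\ell+1}-Q_\ell}\leq \mathcal{L}_{Q,2}\alpha_{\ell+1}$ from \Cref{prop:Q_ell_diff_bound}. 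Applying one more Poisson decomposition and the same martingale Rosenthal inequality gives an $L^{p}$ bound of order $p\bigl(\sum_{\ell}\alpha_{\ell}^{2}\bigr)^{1/2} \lesssim p\, n^{1/2-\gamma}$, which merges with the $T_1$ bound into $\Constupd{1,3}$ and $\Constupd{1,4}$.

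The main obstacle is the nonlinear term $T_1$: $J_{\ell-1}^{(0)}$ is itself a Markov functional and not a martingale, so the product $(\funcA{\State_\ell}-\bA)J_{\ell-1}^{(0)}$ is neither stationary nor a martingale-difference. Resolving this requires the nested Poisson decomposition together with careful summation-by-parts to transfer the weight $\alpha_\ell$ outside, so that one ultimately sums $\alpha_\ell^{2}$ rather than $\alpha_\ell$, which is what allows the rate $n^{1/2-\gamma}$ (rather than a slower $n^{1-\gamma}$) to survive. Once all four sub-bounds are assembled, raising to the $1/(p+1)$ power gives the stated inequality \eqref{eq:D_term_bound} with the constants \eqref{eq:D_term_bound_constants}, and taking $p=\log n$ and using $n^{p/(p+1)}\leq \rme\, n$ produces \eqref{eq:D_term_bound_p=logn}.
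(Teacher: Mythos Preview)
Your overall skeleton (absorb $\sigma_n^{-1}\le C_\Sigma$ via \Cref{lem:bound_sigma_n}, split $D=D_1+D_2$, bound each piece in $L^p$, raise to the power $p/(p+1)$) matches the paper, and your handling of the transient term $T_0$ and of the boundary terms of $D_2$ is correct. The genuine gap is in $T_1=n^{-1/2}\sum_k H_k^{(0)}$. After your change of order of summation the weight $R_\ell=\sum_{k\ge\ell}\ProdB_{\ell+1:k}$ in front of $(\funcA{\State_\ell}-\bA)J_{\ell-1}^{(0)}$ depends on $Z_{\ell+1},\dots,Z_{n-1}$, i.e.\ on the \emph{future}. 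Consequently, after any Poisson/martingale decomposition of $\funcA{\State_\ell}-\bA$, the $\ell$-th summand is neither $\mathcal F_\ell$-measurable nor has vanishing conditional expectation given $\mathcal F_{\ell-1}$; the Rosenthal-type device of \Cref{lem:auxiliary_rosenthal} simply does not apply to this sum. Falling back on Minkowski over $\ell$ with $\|R_\ell\|_{2p}\lesssim\alpha_\ell^{-1}$ and $\|J_{\ell-1}^{(0)}\|_{2p}\lesssim p^{1/2}\alpha_\ell^{1/2}$ gives only $n^{-1/2}\sum_\ell p^{1/2}\alpha_\ell^{1/2}\asymp p^{1/2}n^{1/2-\gamma/2}$, strictly worse than the required $p^{2}n^{1/2-\gamma}$. (Your side computation ``$\sum_\ell\alpha_\ell^{2}\lesssim n^{1-2\gamma}$'' is also off: for $\gamma\in(1/2,1)$ this sum is bounded.)

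The paper does \emph{not} reorder. It bounds each $\|u^\top H_k^{(0)}\|_p$ individually via the one-step-deeper perturbation expansion $H_k^{(0)}=J_k^{(1)}+H_k^{(1)}$ (\Cref{prop:H_n_0_bound_Markov}). The crucial term $J_k^{(1)}=\sum_{\ell}\alpha_\ell G_{\ell+1:k}\zmfuncA{\State_\ell}J_{\ell-1}^{(0)}$ now carries \emph{deterministic} weights $G_{\ell+1:k}$, and \Cref{prop:J_n_1_bound_Markov} handles it by a blocking-plus-Berbee-coupling argument rather than a Poisson decomposition: on blocks of length $m\asymp p\,\taumix\log(1/\alpha_k)$ one replaces $\funcnoise{\State_j}$ by an independent copy, creating a martingale structure at the cost of an exponentially small coupling error and the extra $\sqrt{\log(1/\alpha_k)}$ factor. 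This yields $\|u^\top H_k^{(0)}\|_p\lesssim \ConstDM_{3}\,\taumix\, p^{2}\alpha_k\sqrt{\log(1/\alpha_k)}$, after which a plain Minkowski over $k$ gives $n^{-1/2}\sum_k\alpha_k\sqrt{\log(1/\alpha_k)}\lesssim n^{1/2-\gamma}\sqrt{\log n}$ and the constants $\Constupd{1,3},\Constupd{1,4}$. For the telescoping piece $\sum_\ell(Q_\ell-Q_{\ell+1})\MKQ\hat\funnoisew(\State_\ell)$ the paper likewise uses only Minkowski together with \Cref{prop:Q_ell_diff_bound} and $\|\hat\funnoisew\|_\infty\lesssim\taumix\supconsteps$, so the extra Rosenthal step you propose there is unnecessary.
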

\begin{proof} 
Using \Cref{lem:bound_sigma_n} we get $\PE_{\xi}^{1/p}\left[ \bigl| (u^{\top} D)/\sigma_n \bigr|^{p}\right] \leq C_{\Sigma}\PE^{1/p}_{\xi}\bigl[|u^\top D|^{p}\bigr]$
In order to bound $\PE^{1/p}_{\xi}\bigl[|u^\top D|^{p}\bigr]$, we bound the terms from $D_1$ to $D_2$ separately. 
Denote
\begin{align}
D_{11} = \frac{1}{\sqrt n} \sum_{k = 0}^{n-1} u^{\top}\ProdB_{1:k} (\theta_0 - \thetas), \, D_{12} = \frac{1}{\sqrt{n}} \sum_{k=1}^{n-1}  u^{\top}\Hnalpha{k}{0} \eqsp. 
\end{align}
Minkowski's inequality, \Cref{prop:products_of_matrices_UGE}, \Cref{lem:bounds_on_sum_step_sizes} and \Cref{lem:bound_sum_exponent} imply
\begin{align}
\begin{aligned}
\label{eq:bound_D11}
\PE_\xi^{1/p}[\|D_{11}\|^p] &\lesssim 
  C_\Gamma d^{1/{\log{n}}} \frac{1}{\sqrt n} \sum_{k = 0}^{n-1} \exp\biggl\{ -(a / 12) \sum_{j=1}^{k} \alpha_{j} \biggr\} \|\theta_0 - \thetas \| \\& \lesssim C_\Gamma d^{1/{\log{n}}} \frac{1}{\sqrt n} \sum_{k = 0}^{n-1} \exp\biggl\{ -\frac{a c_0}{24(1-\gamma)}((k+k_0)^{1-\gamma}-k_0^{\gamma})  \biggr\} \|\theta_0 - \thetas \|
  \\&\lesssim  C_\Gamma d^{1/{\log{n}}}\frac{1}{\sqrt n}\frac{k_0^\gamma}{ac_0(1-\gamma)}\|\theta_0 - \thetas \|\eqsp.
  \end{aligned}
\end{align}
It follows from the Minkowski inequality and \Cref{prop:H_n_0_bound_Markov}
that
\begin{align}
\begin{aligned}
\label{eq:bound_D22}
\PE_\xi^{1/p}[\|D_{12}\|^p] &\le \frac{1}{\sqrt n} \sum_{k=1}^{n-1}  \PE_\xi^{1/p}[\|u^\top \Hnalpha{k}{0}\|^p] \le  \frac{\ConstDM_{3} \taumix  p^{2}}{\sqrt n} \sum_{k = 1}^{n-1} \alpha_{k} \sqrt{\log{(1/\alpha_k)}} \\& \lesssim
\frac{\ConstDM_{3} \taumix  p^{2}c_0}{1-\gamma}\sqrt{\log\frac{(n+k_0-1)^\gamma}{c_0}}\frac{((n-1+k_0)^{1-\gamma}-k_0^{1-\gamma})}{\sqrt{n}}
\\&\lesssim  \frac{\ConstDM_{3} \taumix  p^{2}c_0}{1-\gamma}(\sqrt{\gamma\log n} + \sqrt{\frac{k_0^{\gamma}}{c_0}})n^{1/2-\gamma}\eqsp.
\end{aligned}
\end{align}
% We first use \Cref{repr:qtminusa} to show that  
% \begin{align}
% D_{13} = - \frac{1}{\sqrt n}\sum_{\ell=1}^{n-1} u^{\top}S_\ell \funnoisew(\State_\ell)  +  \frac{1}{\sqrt n}\sum_{\ell=1}^{n-1}  u^{\top}\bA^{-1} G_{\ell:n-1} \funnoisew(\State_\ell) = D_{131} + D_{132} \eqsp. 
% \end{align}
% It is easy to see that $D_{131}$ is a weighed linear statistics of MC. We apply \Cref{lem:auxiliary_rosenthal_weighted}, 
% \begin{align}
% \PE_\xi^{1/p}[\|D_{131}\|^p] &\lesssim \frac{\taumix p^{1/2}\supconsteps}{\sqrt{n}}(\sum_{k=2}^{n-1}\norm{S_k}^2)^{1/2}\\&\quad+\frac{\taumix \supconsteps}{\sqrt{n}}\bigl(\|S_1\| + \|S_{n-1}\| + \sum_{k=1}^{n-2}\|S_{k+1}-S_{k}\| \bigr)\eqsp.
% \end{align}
% Applying \Cref{prop:st_bound} and \Cref{prop:st_diff_bound}, we get 
% \begin{align}
%     \PE_\xi^{1/p}[\|D_{131}\|^p] &\lesssim \taumix p^{1/2}\supconsteps\sqrt{\qcond}C_{\gamma,a}^{(S)}n^{\gamma-1}\\&\quad+\frac{\taumix\supconsteps}{\sqrt{n}} \bigl(\sqrt{\qcond}C_{\gamma,a}^{(S)}(k_0^{\gamma-1} + (n+k_0)^{\gamma-1}) + \frac{\sqrt{\qcond}C_{\gamma,a}^{(S,2)}c_0}{1-\gamma}n^{1-\gamma} \bigr)\\& \lesssim
%     \taumix p^{1/2}\supconsteps(\sqrt{\qcond}C_{\gamma,a}^{(S)}k_0^{\gamma-1}n^{\gamma-1} +\frac{\sqrt{\qcond}C_{\gamma,a}^{(S,2)}c_0}{1-\gamma}n^{1/2-\gamma})
% \end{align}

% Note that by \Cref{prop:g_t:n_upperbound} 
% \begin{align}
% \PE_\xi^{1/p}[\|D_{132}\|^p] \le \frac{1}{\sqrt n}  \frac{\|\bA^{-1}\| \supconsteps \qcond^{1/2}}{1-(1-\beta n^{-\gamma})}  \eqsp.
% \end{align}
Denote 
\begin{align}
    D_{21} &= -\frac{1}{\sqrt n} u^{\top}Q_1 \hat \funnoisew (\State_1)\eqsp, \\
    D_{22} & = \frac{1}{\sqrt n} u^{\top} Q_{n-1} \MKQ \hat \funnoisew (\State_{n-1}) \eqsp,\\
    D_{32} & = \sum_{\ell=1}^{n-2}u^{\top}(Q_{\ell}- Q_{\ell+1})\MKQ\hat \funnoisew(\State_{\ell})\eqsp. \\
\end{align}
For $D_{21}$ and $D_{22}$ we use \cref{prop:Qell:bound} and $\normop{\hat \funnoisew (z)} \leq (8/3)\taumix \supconsteps$ and obtain 
\begin{equation}
  \PE^{1/p}_{\xi}[|D_{21}]^p] + \PE^{1/p}_{\xi}[|D_{22}]^p] \lesssim \frac{\taumix\supconsteps\mathcal{L}_{Q}}{\sqrt{n}}
\end{equation}
Finally, The bound of $\PE_\xi^{1/p}[|D_{23}|^p]$ follows from \Cref{prop:Q_ell_diff_bound}, $\normop{\hat \funnoisew (z)} \leq (8/3)\taumix \supconsteps$  and Minkovski's inequality 
\begin{align}
\PE_\xi^{1/p}[|D_{23}|^p] &\lesssim \frac{\mathcal{L}_{Q,2} \taumix \supconsteps}{\sqrt n} \sum_{\ell=2}^{n-1}\alpha_{\ell}\\ &\lesssim \frac{\mathcal{L}_{Q,2} \taumix \supconsteps c_0}{1-\gamma}n^{1/2-\gamma} \eqsp.
\end{align}

% Finally, The bound of $\PE_\xi^{1/p}[\|D_{2}\|^p]$ follows from \Cref{prop:Qell:bound}, 
% \begin{align}
% \PE_\xi^{1/p}[\|D_{2}\|^p] \lesssim \frac{\mathcal{L}_Q \taumix \supconsteps}{\sqrt n}  \eqsp.
% \end{align}
\end{proof}

\subsection{Proof of \Cref{cor:normal_approximation_markov}
}
\label{sec:cor:normal_approximation_markov}
\begin{proof}
 Let $\Phi(x)$ is the c.d.f. of the standard normal law $\mathcal{N}(0,1)$, $\Phi_{\sigma}(x)$ is the c.d.f. of the normal law $\mathcal{N}(0,\sigma(u))$ and $\Phi_{\sigma_n}(x)$ is the c.d.f. of the normal law $\mathcal{N}(0,\sigma_n(u))$. Then we have
    \begin{align}
    \kolmogorov{\sqrt{n} u^\top (\bar{\theta}_{n} - \thetas) / \sigma(u)} &= \sup_{x \in \rset}|\PP(\sqrt{n} u^\top (\bar{\theta}_{n} - \thetas)/\sigma(u) \leq x) - \Phi(x)| \\&=
    \sup_{x \in \rset}|\PP(\sqrt{n} u^\top (\bar{\theta}_{n} - \thetas) \leq x) - \Phi_\sigma(x)| \\ &\leq 
     \sup_{x \in \rset}|\PP(\sqrt{n} u^\top (\bar{\theta}_{n} - \thetas) \leq x) - \Phi_{\sigma_n}(x)| + \sup_{x \in \rset}|\Phi_{\sigma}(x)- \Phi_{\sigma_n}(x)|\\&= \kolmogorov{\sqrt{n} u^\top (\bar{\theta}_{n} - \thetas) / \sigma_n(u)} + \sup_{x \in \rset}|\Phi_{\sigma}(x)- \Phi_{\sigma_n}(x)|
    \end{align}
     Using \Cref{lem:bound_norm_diff_variance} and \Cref{lem:bound_sigma_n} we obtain 
     \begin{equation}
         |\sigma^2(u)/\sigma^2_n(u) -1| \leq \frac{C'_\infty C_{\Sigma}^2}{n^{1-\gamma}}\eqsp.
     \end{equation}
     Setting 
     \begin{equation}
        \label{def:C_infty_const}
            C_{\infty} = \frac{3}{2}C'_\infty \frac{2}{\lambda_{\min}(\Sigma_{\infty})}\eqsp, \qquad \text{where }C'_{\infty} \text{ is defined in \eqref{eq:def_C'_infty_const}}
    \end{equation}
     and applying \Cref{lem:Pinsker} we conclude the proof.
\end{proof}

\section{Last iterate bound}
\label{appendix:last_iterate}
\label{sec:last_iterate_bound_proof}
The main result of this section is the following bound on the $p$-th moment of the LSA iterate $\theta_{k} - \thetas$.
\begin{proposition}
\label{prop:last_iterate_bound}
Assume \Cref{assum:UGE}, \Cref{assum:noise-level}, and \Cref{assum:step-size}. Then for any $2 \leq p \leq \log{n}$, $ k \geq 1$, $u \in \sphere_{d-1}$, and any initial distribution $\xi$ on $(\Zset,\Zsigma)$, it holds that
\begin{equation}
\label{eq:p-th-moment-markov}
\PE_{\xi}^{1/p}\left[ \bigl| u^{\top}(\theta_k - \thetas) \bigr|^{p}\right] \lesssim \ConstDM_{1} \taumix \sqrt{p \alpha_{k}} + C_\Gamma d^{1/\log n}\exp\biggl\{ -(a / 12) \sum_{\ell=1}^{k} \alpha_{\ell} \biggr\} \norm{\theta_0 - \thetas}\eqsp,
\end{equation}
where the constant $\ConstDM_1$ is given by 
\begin{equation}
\label{eq:definition:ConstDM_1}
\ConstDM_1 = d^{1/2+1/\log n} \bigl( C_\Gamma \bConst{A} \ConstDM_{2} / a\bigr) \supconsteps  +  \|\varepsilon\|_{\infty} (\qcond/a)^{1/2} (3 + 4\bConst{A}/a)\eqsp.
\end{equation}
\end{proposition}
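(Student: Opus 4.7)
The plan is to build on the telescoping decomposition $\theta_k - \thetas = \ProdB_{1:k}(\theta_0 - \thetas) + J_k^{(0)} + H_k^{(0)}$ already introduced in the proof of \Cref{th:normal_approximation_markov_with_sigma_n}, and control the $p$-th moment of $u^{\top}$ projected onto each of the three summands separately, combining them via Minkowski's inequality.

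For the initial-condition term $u^{\top}\ProdB_{1:k}(\theta_0 - \thetas)$, I would invoke the random-matrix product bound \Cref{prop:products_of_matrices_UGE} (the same one used in the estimate of $D_{11}$ in \eqref{eq:bound_D11}) applied at step $k$ rather than averaged over $k$. This delivers, for $p\le\log n$, a factor $C_\Gamma d^{1/\log n}\exp\{-(a/12)\sum_{\ell=1}^k\alpha_\ell\}\|\theta_0-\thetas\|$, matching exactly the second summand of the claim.

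For the linear fluctuation $J_k^{(0)}=-\sum_{\ell=1}^k \alpha_\ell G_{\ell+1:k}\funnoisew(Z_\ell)$, I would perform a Poisson decomposition analogous to the one used in \Cref{th:normal_approximation_markov_with_sigma_n}: let $\hat\varepsilon$ be the solution to $\hat\varepsilon - \MKQ\hat\varepsilon = \varepsilon$, and write $\varepsilon(Z_\ell) = [\hat\varepsilon(Z_\ell)-\MKQ\hat\varepsilon(Z_{\ell-1})] - [\MKQ\hat\varepsilon(Z_\ell)-\MKQ\hat\varepsilon(Z_{\ell-1})]$. This splits $J_k^{(0)}$ into a martingale-difference sum (with weights $\alpha_\ell G_{\ell+1:k}$) plus a telescoping part that can be rearranged via Abel summation into boundary terms of size $\mathcal{O}(\alpha_k\taumix\supconsteps)$ and a sum of consecutive coefficient differences $\|(\alpha_\ell G_{\ell+1:k})-(\alpha_{\ell+1}G_{\ell+2:k})\|$; a computation analogous to \Cref{prop:Q_ell_diff_bound} shows each such difference is $\mathcal{O}(\alpha_\ell^2)$. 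The martingale part is handled by the martingale Rosenthal inequality from Table~\ref{tab:univ_constants}, yielding a bound of order $\sqrt{p\sum_{\ell=1}^k \alpha_\ell^2\|G_{\ell+1:k}\|^2}\cdot \taumix\supconsteps$. Using $\|G_{\ell+1:k}\|^2\le\qcond\prod_{m=\ell+1}^k(1-a\alpha_m)$ and the trick $\alpha_\ell^2=\alpha_\ell\cdot\alpha_\ell\le \alpha_\ell\cdot \alpha_k$ on the tail $\ell\ge k/2$ combined with exponential decay of $\prod(1-a\alpha_m)$ on the head $\ell<k/2$, one obtains $\sum_{\ell=1}^k\alpha_\ell^2\|G_{\ell+1:k}\|^2\lesssim (\qcond/a)\alpha_k$, producing the target $\sqrt{p\alpha_k}$ order with constant matching the second summand of $\ConstDM_1$.

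For the nonlinear remainder $H_k^{(0)}=-\sum_{\ell=1}^k\alpha_\ell\ProdB_{\ell+1:k}(\funcA{Z_\ell}-\bA)J_{\ell-1}^{(0)}$, I would argue by induction on $k$: the inductive hypothesis furnishes a $\sqrt{p\alpha_{\ell-1}}$ bound on $\PE_\xi^{1/p}[\|J_{\ell-1}^{(0)}\|^p]$, and combining this with $\|\funcA{Z_\ell}-\bA\|\le 2\bConst{A}$, Hölder's inequality (splitting $p$ into two factors of $2p$ and exploiting $p\le\log n$), Minkowski, and the random-matrix product bound for $\ProdB_{\ell+1:k}$ gives a Minkowski-type sum $\sum_{\ell=1}^k\alpha_\ell\cdot C_\Gamma d^{1/\log n}\exp\{-(a/12)\sum_{m=\ell+1}^k\alpha_m\}\cdot 2\bConst{A}\sqrt{p\alpha_\ell}$. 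Applying \Cref{lem:sum_as_Qell} with $b=a/12$ to $\sum_\ell \alpha_\ell \sqrt{\alpha_\ell}\prod_m(1-b\alpha_m)\lesssim \sqrt{\alpha_k}/a$ closes the induction and yields a contribution of the form $d^{1/2+1/\log n}C_\Gamma\bConst{A}\ConstDM_2\supconsteps\taumix\sqrt{p\alpha_k}/a$, matching the first summand of $\ConstDM_1$ in \eqref{eq:definition:ConstDM_1}.

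The main obstacle is the interplay between the random matrix factors $\ProdB_{\ell+1:k}$ in $H_k^{(0)}$ and the Markovian noise entering $J_{\ell-1}^{(0)}$: these are dependent, so one cannot simply factor the expectation. The fix is to use the deterministic upper bound on the random matrix product moments from \Cref{prop:products_of_matrices_UGE} together with Hölder's inequality in the probability $\PE_\xi$ to decouple the two factors, at the cost of a negligible $d^{1/\log n}$ factor absorbed into the constants.
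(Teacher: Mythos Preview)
Your approach is essentially the same as the paper's: the three-term split, the Poisson-decomposition martingale bound for $J_k^{(0)}$ (packaged in the paper as \Cref{prop:J_n_0_bound_Markov} via \Cref{lem:auxiliary_rosenthal_weighted}), and the Minkowski--H\"older--random-matrix-product argument for $H_k^{(0)}$ all match. Two small corrections: the $H_k^{(0)}$ step is not induction---the $J_k^{(0)}$ bound is established first, independently, and then fed into $H_k^{(0)}$---and the final summation $\sum_\ell \alpha_\ell^{3/2}\prod_m(1-b\alpha_m)\lesssim\alpha_k^{1/2}/b$ requires \Cref{lem:sum_alpha_k_squared} with $q=3/2$, not \Cref{lem:sum_as_Qell}, which only bounds $\sum_\ell\alpha_\ell\prod_m(1-b\alpha_m)$ by a constant.
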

 We provide the result above only for $1$-dimensional projections of the error $u^{\top}(\theta_n - \thetas)$, as previously considered in \cite{mou2020linear}. Note that the scaling of the right-hand side of \eqref{eq:p-th-moment-markov} with a $\taumix$ factor can be suboptimal due to our usage of McDiarmid's inequality \Cref{lem:bounded_differences_norms_markovian}. However, this scaling corresponds to the second-order (in $n$) terms. Tighter analysis of this term using the Rosenthal-type inequality should reveal a $\sqrt{\taumix}$ dependence of the leading term, however, we leave this improvement for future work.
 
Our proof of the last iterate bound is based on the perturbation-expansion framework \cite{aguech2000perturbation}, see also \cite{durmus2022finite}. Within this framework, we expand the error recurrence \eqref{eq:main_recurrence_1_step}, using the notation $\ProdBa_{m:k}$ for the product of random matrices:
\begin{equation} 
\label{eq:definition-Phi} 
\ProdBa_{m:k}  = \prod_{i=m}^k (\Id - \alpha_{i} \funcA{Z_i} ) \eqsp, \quad m,k \in\nset, \quad m \leq k \eqsp,
\end{equation}
with the convention that $\ProdBa_{m:k}=\Id$ for $m > k$. Using the recurrence \eqref{eq:main_recurrence_1_step}, we arrive at the following decomposition of the LSA error:
\begin{equation} 
\label{eq:decomp_main_error}
u^{\top}(\theta_{k} - \thetas) = u^{\top}\utheta_{k} + u^{\top}\vtheta_{k}\eqsp,
\end{equation}
where we have defined
\begin{equation}
\label{eq:LSA_recursion_expanded}
\utheta_{k} =  u^{\top} \ProdBa_{1:k} \{ \theta_0 - \thetas \} \eqsp, \quad \vtheta_{k} = - \sum_{j=1}^k u^{\top} \ProdBa_{j+1:k} \alpha_{j} \funcnoise{Z_j}\eqsp.
\end{equation}
Here $\utheta_{k}$ is a transient term (reflecting the forgetting of the initial condition) and $\vtheta_{k}$ is a fluctuation term (reflecting misadjustement noise). We treat the $\utheta_{k}$ and $\vtheta_{k}$ terms separately. In particular, we control $\utheta_{k}$ using \Cref{prop:products_of_matrices_UGE}. For estimating $\vtheta_{k}$ we use the decomposition
\begin{equation}
\label{eq:decomp_fluctuation}
u^{\top} \vtheta_{k} = u^{\top}\Jnalpha{k}{0}+ u^{\top}\Hnalpha{k}{0} \eqsp,
\end{equation}
where the latter terms are defined by the following pair of recursions
\begin{align}
\label{eq:jn0_main}
&\Jnalpha{k}{0} =\left(\Id - \alpha_{k} \bA\right) \Jnalpha{k-1}{0} - \alpha_{k} \funcnoise{\State_{k}}\eqsp, && \Jnalpha{0}{0}=0\eqsp, \\[.1cm]
\label{eq:hn0_main}
&\Hnalpha{k}{0} =\left( \Id - \alpha_{k} \funcA{\State_{k}} \right) \Hnalpha{k-1}{0} - \alpha_{k} \zmfuncA{\State_{k}} \Jnalpha{k-1}{0}\eqsp, && \Hnalpha{0}{0}=0\eqsp.
\end{align}
For notation convenience we introduce, for $m \leq k$, the deterministic counterpart of the product of random matrices \eqref{eq:definition-Phi}, that is,
\begin{equation}
\label{eq:prod_determ_matrix}
G_{m:k} = \prod_{i=m}^k (\Id - \alpha_{i} \bA) \eqsp,
\end{equation}
keeping the convention $G_{m:k} = \Id$ if $m > k$. Thus we obtain that 
\begin{equation}
\label{eq:jh_recur_main}
J_{k}^{(0)} = -\sum_{j=1}^{k} \alpha_j G_{j+1:k} \funcnoise{\State_j}, \quad H_{k}^{(0)} = - \sum_{j=1}^{k} \alpha_j \Gamma_{j+1:k} \zmfuncA{Z_j} J_{j-1}^{(0)} \eqsp,
\end{equation}
and we analyze these terms above separately. We first bound the term $\Jnalpha{n}{0}$:
\begin{proposition}
\label{prop:J_n_0_bound_Markov}
Assume \Cref{assum:UGE}, \Cref{assum:noise-level}, and \Cref{assum:step-size}. Let $k_0 \geq \{\frac{16\gamma}{ac_0}\}^{1/{1-\gamma}}$. Then, for any $p \geq 2$, initial probability measure $\xi$ on $(\Zset,\Zsigma)$, and $k \geq 1$, it holds that
\begin{equation}
\label{eq:J_n_0_bound_Markov}
\PE^{1/p}_{\xi}\bigl[\bigl|u^{\top}\Jnalpha{k}{0}\bigr|^{p}\bigr]  \leq \ConstDM_{2} \taumix \sqrt{p \alpha_{k}} \eqsp,
\end{equation}
where 
\begin{equation}
\label{eq:const_D_2_Markov}
\ConstDM_{2} =  (32/3) 
\|\varepsilon\|_{\infty} (\qcond/a)^{1/2} (3 + 4\bConst{A}/a) \eqsp.
\end{equation}
\end{proposition}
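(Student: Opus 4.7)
The plan is to apply a Poisson--equation martingale decomposition to reduce the weighted Markovian sum $J_k^{(0)} = -\sum_{j=1}^{k}\alpha_j G_{j+1:k}\funcnoise{Z_j}$ to a sum of a martingale and a lower--order boundary term, then invoke the martingale Rosenthal inequality \Cref{lem:auxiliary_rosenthal} on the scalar projection $u^{\top}M_k$. Under \Cref{assum:UGE} and the centering $\pi(\varepsilon)=0$ from \Cref{assum:noise-level}, the Poisson solution $\hat\varepsilon(z) = \sum_{\ell=0}^{\infty}\MKQ^{\ell}\varepsilon(z)$ is well defined with $\|\hat\varepsilon\|_{\infty} \le (8/3)\taumix\supconsteps$ (using the geometric decay $\dobrush(\MKQ^{\ell}) \le (1/4)^{\lceil\ell/\taumix\rceil}$ and $\pi(\varepsilon)=0$). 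Writing $\varepsilon(Z_j) = [\hat\varepsilon(Z_j) - \MKQ\hat\varepsilon(Z_{j-1})] - [\MKQ\hat\varepsilon(Z_j) - \MKQ\hat\varepsilon(Z_{j-1})]$ and grouping yields the decomposition $-u^{\top}J_k^{(0)} = u^{\top}M_k - u^{\top}T_k$ with $M_k = \sum_{j=1}^{k}\alpha_j G_{j+1:k}\Delta M_j$, $\Delta M_j = \hat\varepsilon(Z_j) - \MKQ\hat\varepsilon(Z_{j-1})$ a martingale difference w.r.t.\ $\mcf_j = \sigma(Z_1,\ldots,Z_j)$, and $T_k = \sum_j \alpha_j G_{j+1:k}\{\MKQ\hat\varepsilon(Z_j) - \MKQ\hat\varepsilon(Z_{j-1})\}$ the telescoping remainder.

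For the martingale part, \Cref{lem:auxiliary_rosenthal} with $\|\Delta M_j\|_{\infty} \le 2\|\hat\varepsilon\|_{\infty}$ gives
\[
\PE_{\xi}^{1/p}\bigl[|u^{\top}M_k|^{p}\bigr] \lesssim \sqrt{p}\,\taumix\supconsteps\Bigl(\sum_{j=1}^{k}\alpha_j^{2}\,|u^{\top}G_{j+1:k}|^{2}\Bigr)^{1/2} + p\,\taumix\supconsteps\max_{1\le j\le k}\alpha_j\|G_{j+1:k}\|_{Q}.
\]
Combining \Cref{prop:hurwitz_stability} ($\|G_{j+1:k}\|_{Q}^{2} \le \prod_{i=j+1}^{k}(1-a\alpha_i)$) with $|u^{\top}G_{j+1:k}|^{2} \le \qcond\|G_{j+1:k}\|_{Q}^{2}$, the key estimate is the squared weighted--sum bound
\[
\sum_{j=1}^{k}\alpha_j^{2}\prod_{i=j+1}^{k}(1-a\alpha_i) \;\lesssim\; \alpha_k/a.
\]
I would prove this by splitting the sum at $j = \lfloor k/2\rfloor$: for $j \le k/2$ the product is dominated by $\exp\{-(a/2)\sum_{i=\lfloor k/2\rfloor+1}^{k}\alpha_i\}$ which decays faster than any polynomial in $\alpha_k$ (the lower bound $k_0 \ge \{16\gamma/(ac_0)\}^{1/(1-\gamma)}$ is precisely what makes this tail dominated by $\alpha_k$), while for $j > k/2$ the monotonicity of $\alpha_j$ gives $\alpha_j \le 2^{\gamma}\alpha_k$, so factoring one $\alpha_k$ out reduces to \Cref{lem:sum_as_Qell} with $b=a$. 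A parallel splitting argument shows $\max_j \alpha_j\|G_{j+1:k}\|_{Q} \lesssim \alpha_k$, and under \Cref{assum:step-size} $p\alpha_k \le \sqrt{p\alpha_k}$, so the Rosenthal maximum term is absorbed by the quadratic-variation one.

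For the boundary term $T_k$, Abel summation gives
\[
T_k = \alpha_k\MKQ\hat\varepsilon(Z_k) - \alpha_1 G_{2:k}\MKQ\hat\varepsilon(Z_0) + \sum_{j=1}^{k-1}\bigl(\alpha_j G_{j+1:k} - \alpha_{j+1}G_{j+2:k}\bigr)\MKQ\hat\varepsilon(Z_j).
\]
Writing $\alpha_j G_{j+1:k} - \alpha_{j+1}G_{j+2:k} = (\alpha_j - \alpha_{j+1})G_{j+2:k} - \alpha_j\alpha_{j+1}\bA G_{j+2:k}$ and using $|\alpha_j - \alpha_{j+1}| \lesssim \gamma\alpha_j^{2}/c_0$, the same split--at--$k/2$ trick yields $\|T_k\|_{Q} \lesssim \|\hat\varepsilon\|_{\infty}(1+\bConst{A}/a)\alpha_k$, which is of smaller order than $\sqrt{p\alpha_k}$. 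Combining the three bounds and tracking the numerical constants from \Cref{tab:univ_constants} produces the stated form of $\ConstDM_2 = (32/3)\supconsteps(\qcond/a)^{1/2}(3 + 4\bConst{A}/a)$.

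The principal obstacle is the squared weighted--sum estimate $\sum_j \alpha_j^{2}\prod_i(1-a\alpha_i) \lesssim \alpha_k/a$: the naive bound that factors out $\max_j \alpha_j$ and applies \Cref{lem:sum_as_Qell} only delivers $\alpha_1/a$ and completely loses the $\alpha_k$ decay that drives the $\sqrt{\alpha_k}$ rate. The split at $k/2$ is essential, and the burn--in condition $k_0 \ge \{16\gamma/(ac_0)\}^{1/(1-\gamma)}$ is what makes the small--$j$ exponential tail genuinely dominated by $\alpha_k$ rather than contributing a competing term of order $\alpha_1^{2}\exp\{-\Omega(k^{1-\gamma})\}$.
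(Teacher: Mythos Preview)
Your approach is correct and structurally identical to the paper's: Poisson decomposition plus martingale control plus Abel-summed boundary is exactly what \Cref{lem:auxiliary_rosenthal_weighted} packages, and the paper simply invokes that lemma with $A_j=\alpha_jG_{j+1:k}$. Two technical deviations are worth flagging. First, inside \Cref{lem:auxiliary_rosenthal_weighted} the martingale part is bounded via Azuma--Hoeffding rather than Rosenthal, so only the $\sqrt{p}\,(\sum_j\alpha_j^{2}\|G_{j+1:k}\|^{2})^{1/2}$ term appears and there is no $p\cdot\max_j$ term to absorb; your step ``$p\alpha_k\le\sqrt{p\alpha_k}$'' needs $p\alpha_k\le1$, which is not guaranteed for all $p\ge2$ as stated (the gap is cosmetic since the bound becomes trivial once $\sqrt{p\alpha_k}$ exceeds the deterministic sup of $|u^{\top}J_k^{(0)}|$, but switching to Azuma removes it cleanly). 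Second, for the crucial estimate $\sum_{j}\alpha_j^{2}\prod_{\ell>j}(1-a\alpha_\ell)\lesssim\alpha_k/a$ the paper does not split at $k/2$: \Cref{lem:sum_alpha_k_squared} writes $\alpha_j^{2}=\alpha_k\cdot\alpha_j\prod_{\ell=j+1}^{k}(\alpha_{\ell-1}/\alpha_\ell)$ and uses the burn-in on $k_0$ (via \Cref{lem:bound_ratio_step_size}) to absorb each ratio $(\alpha_{\ell-1}/\alpha_\ell)(1-a\alpha_\ell)\le1-(a/2)\alpha_\ell$, reducing directly to \Cref{lem:summ_alpha_k} with the sharp constant $4/a$. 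The same ratio trick (\Cref{lem:prod_alpha_k}) gives $\alpha_1\|G_{2:k}\|\le\alpha_k$ in one line. Your split-at-$k/2$ works but is heavier and the constants are harder to track; the ratio device is what the $k_0\ge\{16\gamma/(ac_0)\}^{1/(1-\gamma)}$ hypothesis is really calibrated for.
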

\begin{proof}
Note that $J_{k}^{(0)}$ is an additive functional of $\{ \funcnoise{Z_j} \}_{j=1}^{k}$. Using the representation \eqref{eq:jh_recur_main}, we obtain that 
\[
u^{\top} J_{k}^{(0)} = -\sum_{j=1}^{k} \alpha_j u^{\top} G_{j+1:k} \funcnoise{\State_j}\eqsp.
\]
Applying \Cref{prop:hurwitz_stability}, we get that  
\[
\normop{G_{j+1:k}} \leq \qcond^{1/2} \prod_{\ell=j+1}^{k} \sqrt{1 - a \alpha_\ell}\eqsp.
\]
Using \Cref{lem:auxiliary_rosenthal_weighted} with  $A_j = \alpha_{j} G_{j+1:k}$, we obtain that
\begin{align}
    \PE_{\xi}^{1/p}[|u^TJ_{k}^{(0)}|^p] &\leq (16/3)p^{1/2}\taumix\|\varepsilon\|_{\infty}\bigl(\sum_{j=2}^{k}\alpha_j^2\norm{G_{j+1:k}}^2\bigr)^{1/2} \\&\qquad +(8/3)\taumix(\alpha_1\norm{G_{2:k}} + \alpha_k + \sum_{j=1}^{k-1}\norm{\alpha_jG_{j+1:k}-\alpha_{j-1}G_{j:k}})\|\varepsilon\|_{\infty}\eqsp.
\end{align}
Using \Cref{lem:sum_alpha_k_squared} with $b = a$, we get 
\[
\sum_{j=1}^{k} \alpha_{j}^2 \normop{G_{j+1:k}}^2 \leq \qcond \sum_{j=1}^{k} \alpha_{j}^2 \prod_{\ell=j+1}^{k} (1 - a \alpha_\ell) \leq  4 (\qcond /a) \alpha_{k} \eqsp.
\]
Applying \Cref{lem:prod_alpha_k} with $b = a/2$, we get $\alpha_1\norm{G_{2:k}} \leq \alpha_{k}$. Moreover, using \Cref{lem:bound_ratio_step_size} with $r = a/4$, we get that 
\[
\norm{\alpha_jG_{j+1:k}-\alpha_{j-1}G_{j:k}} = \norm{G_{j+1:k}(\alpha_j \Id - \alpha_{j-1}\Id + \alpha_{j-1}\alpha_j \bA)} \leq \alpha_{j}^2 \norm{G_{j+1:k}}  (2\bConst{A} + a/4)\eqsp.
\]
Then, applying \Cref{lem:sum_alpha_k_squared} with $b = a/2$, we get
\begin{align}
\alpha_1\norm{G_{2:k}} + \alpha_k + \sum_{j=2}^k\norm{\alpha_jG_{j+1:k}-\alpha_{j-1}G_{j:k}} &\leq 2\alpha_k + \qcond^{1/2}(a/4 + 2\bConst{A})\sum_{j=1}^k\alpha_j^2\prod_{l=j+1}^{k}(1 -a\alpha_\ell/2) \\
&\leq \alpha_k(2 + 8\qcond^{1/2}(a/4 + 2\bConst{A})/a)\eqsp.
\end{align}
Combining the above results and using that $p \geq 2$ yields \eqref{eq:const_D_2_Markov}.
\end{proof}

\subsection{Proof of \Cref{prop:last_iterate_bound}}
Proceeding as in \eqref{eq:decomp_main_error} and \eqref{eq:decomp_fluctuation}, we obtain that 
\begin{equation}
\label{eq:theta_n_bound_Markov_general}
\PE_\xi^{1/p}\left[\bigl| u^{\top}(\theta_{k} - \thetas) \bigr|^{p}\right]
\leq \PE^{1/p}_\xi\left[\bigl|u^{\top} \ProdBa_{1:k} \{ \theta_0 - \thetas \} \bigr|^{p}\right] + \PE^{1/p}_\xi\left[\bigl|u^{\top} \Jnalpha{k}{0} \bigr|^{p}\right]
+\PE^{1/p}_\xi\left[\bigl|u^{\top} \Hnalpha{k}{0} \bigr|^{p}\right]\eqsp.
\end{equation}
The first two terms are bounded using \Cref{prop:products_of_matrices_UGE} and \Cref{prop:J_n_0_bound_Markov}, respectively. In order to bound the term, corresponding to $\Hnalpha{k}{0}$, we apply Minkowski's inequality together with H\"older's inequlity  to \eqref{eq:jh_recur_main}:
\begin{equation}
\PE^{1/p}_\xi\left[\bigl|u^{\top} \Hnalpha{k}{0} \bigr|^{p}\right] 
\leq \sum_{j=1}^{k} \alpha_{j}\bConst{A} \bigr\{\PE_{\xi}\bigl[\normop{\ProdBa_{j+1:k}}^{2p}\bigr]\bigr\}^{1/2p} \bigl\{ \PE_{\xi}\bigl[\bigl|\Jnalpha{j-1}{0}\bigr|^{2p}\bigr]\bigr\}^{1/2p} \eqsp.
\end{equation}
Note that 
\begin{align}
\begin{aligned}
\label{eq:bound_norm_through_scalar_norm}
     \bigl\{ \PE_{\xi}\bigl[\bigl\|\Jnalpha{j-1}{0}\bigr\|^{2p}\bigr]\bigr\}^{1/p} &=  \bigl\{ \PE_{\xi}\bigl[(\sum_{r=1}^d |e_r^T\Jnalpha{j-1}{0}|^2)^{p}\bigr]\bigr\}^{1/p} \\&\leq \sum_{r=1}^d\bigl\{\PE_{\xi}\bigl[|e_r^T\Jnalpha{j-1}{0}|^{2p} \bigr]\bigr\}^{1/p}\eqsp.
\end{aligned}
\end{align}
Using \Cref{prop:J_n_0_bound_Markov} and \cref{prop:products_of_matrices_UGE} with a simple inequality $\rme^{-x} \leq 1 - x/2$, valid for $x \in [0,1]$,  we get
\begin{align}
\PE^{1/p}_\xi\left[\bigl|u^{\top} \Hnalpha{k}{0} \bigr|^{p}\right]  
&\lesssim C_\Gamma\bConst{A} \ConstDM_{2} \supconsteps 
 \taumix \sqrt{p} d^{1/2+1/\log n} \sum_{j=1}^{k} \alpha_{j}^{3/2} \prod_{\ell = j+1}^k(1-a\alpha_\ell/24) \\
&\lesssim   d^{1/2+1/\log n} \bigl( C_\Gamma \bConst{A} \ConstDM_{2} / a\bigr) \supconsteps \taumix \sqrt{p \alpha_{k}}\eqsp,
\end{align}
where in the last inequality we use \Cref{lem:sum_alpha_k_squared}.

\subsection{Proof of $J_n^{(1)}$ bound}
\label{sec:J_n_1_bound_proof}
In order to proceed further, we need to obtain the tighter bound on $\Hnalpha{k}{0}$, which requires a tighter moment bound on the quality $\Jnalpha{k}{1}$.

\begin{proposition}
\label{prop:H_n_0_bound_Markov}
Assume \Cref{assum:UGE}, \Cref{assum:noise-level}, and \Cref{assum:step-size}. Then, for any $p \geq 2$, initial probability measure $\xi$ on $(\Zset,\Zsigma)$, and $k \geq 1$, it holds that
\begin{equation}
\label{eq:H_n_0_bound_Markov}
\PE^{1/p}_{\xi}\bigl[\bigl|u^{\top}\Hnalpha{k}{0}\bigr|^{p}\bigr]  \lesssim \ConstDM_{3} \taumix p^{2} \alpha_{k} \sqrt{\log{(1/\alpha_k)}} \eqsp,
\end{equation}
where 
\begin{equation}
\label{eq:const_D_3_Markov}
\ConstDM_{3} = \ConstDM_{4} (1 + \frac{d^{1/2+1/\log n} C_\Gamma \bConst{A}}{a})\eqsp,
\end{equation}
and the constant $\ConstDM_{4}$ is defined in \eqref{eq:const_D_4_Markov}.
\end{proposition}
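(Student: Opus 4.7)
The plan is to exploit that $u^{\top} H_k^{(0)}$ is a \emph{second-order} fluctuation whose rate should improve upon that of $u^{\top} J_k^{(0)}$ by roughly a factor of $\sqrt{\alpha_k}$, with the $\sqrt{\log(1/\alpha_k)}$ coming from boundary/telescoping terms of a Poisson-type decomposition applied to the matrix-valued noise $\zmfuncA{Z_j}$. Starting from the explicit representation
\[
H_k^{(0)} \;=\; -\sum_{j=1}^{k} \alpha_j\, \ProdBa_{j+1:k}\, \zmfuncA{Z_j}\, J_{j-1}^{(0)},
\]
obtained by unrolling the recursion \eqref{eq:hn0_main}, I would introduce the perturbation decomposition $H_k^{(0)} = J_k^{(1)} + H_k^{(1)}$, where
\[
J_k^{(1)} \;=\; (\Id - \alpha_k \bA)\, J_{k-1}^{(1)} \;-\; \alpha_k \zmfuncA{Z_k}\, J_{k-1}^{(0)}, \qquad J_0^{(1)} = 0,
\]
uses the \emph{deterministic} transition $\Id - \alpha_k \bA$. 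A direct computation shows that $H_k^{(1)} = H_k^{(0)} - J_k^{(1)}$ then satisfies the same recursion as $H_k^{(0)}$ but driven by the strictly smaller term $\zmfuncA{Z_k}\, J_{k-1}^{(1)}$, reducing the task to a sharp bound on $u^{\top} J_k^{(1)}$.

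Unrolling both recursions produces the bilinear form
\[
u^{\top} J_k^{(1)} \;=\; \sum_{\ell=1}^{k-1} \sum_{j=\ell+1}^{k} \alpha_\ell \alpha_j\, u^{\top} G_{j+1:k}\, \zmfuncA{Z_j}\, G_{\ell+1:j-1}\, \funcnoise{Z_\ell}.
\]
Since $\zmfuncA{\cdot}$ is $\pi$-centered under \Cref{assum:noise-level}, I would apply the Poisson-equation trick used in the proof of \Cref{th:normal_approximation_markov_with_sigma_n}: letting $\hat A(z) = \sum_{t \ge 0} \MKQ^t \zmfuncA{z}$ be the matrix-valued Poisson solution, write $\zmfuncA{Z_j} = [\hat A(Z_j) - \MKQ \hat A(Z_{j-1})] - [\hat A(Z_j) - \hat A(Z_{j-1})]$, identifying a martingale-difference term plus a one-step telescoping term. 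Abel-summing the $j$-sum turns the telescoping contributions into step-size differences of the form $\alpha_j G_{j+1:k} - \alpha_{j-1} G_{j:k}$, controlled by \Cref{lem:bound_ratio_step_size}. Conditioning on $\mathcal{F}_\ell$ then turns the remaining expression into an outer martingale in $\ell$ driven by $\funcnoise{Z_\ell}$ with matrix weights built from the inner $j$-sums. The outer structure is handled by the weighted Markov Rosenthal inequality \Cref{lem:auxiliary_rosenthal_weighted}, exactly as in the proof of \Cref{prop:J_n_0_bound_Markov}. The quadratic-variation sum is bounded via \Cref{lem:sum_alpha_k_squared} by $\sum_\ell \alpha_\ell^2 \bigl(\sum_{j>\ell} \alpha_j \|G_{j+1:k}\|\,\|G_{\ell+1:j-1}\|\bigr)^2 \lesssim \alpha_k^2/a^3$, and the $\sqrt{\log(1/\alpha_k)}$ factor emerges from boundary sums of the form $\alpha_k \sum_{\ell \le k} \alpha_\ell \sim \alpha_k \log(1/\alpha_k)$ appearing in the Abel step.

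The residual $H_k^{(1)}$ is bounded by Minkowski and H\"older applied to its closed form $H_k^{(1)} = -\sum_{j=1}^{k} \alpha_j\, \ProdBa_{j+1:k}\, \zmfuncA{Z_j}\, J_{j-1}^{(1)}$, using \Cref{prop:products_of_matrices_UGE} together with the $J^{(1)}$-bound just derived; since $J^{(1)}$ already decays at rate $\alpha_k$, this contribution is of strictly higher order in $\alpha_k$ and is absorbed into the constant $\ConstDM_{4}$ that defines $\ConstDM_{3}$ in \eqref{eq:const_D_3_Markov}. The main obstacle I anticipate is the bookkeeping of the Poisson decomposition of the bilinear sum: one must effectively Rosenthal-bound twice (once for the inner $j$-martingale uniformly in the chain history and in $\ell$, once for the outer $\ell$-martingale) while keeping the $p$-dependence down to $p^2$ as stated in \eqref{eq:H_n_0_bound_Markov} and the $\taumix$-dependence linear, which in turn forces the telescoping remainders to be treated via the step-size regularity in \Cref{lem:bound_ratio_step_size} rather than by crude triangle inequalities.
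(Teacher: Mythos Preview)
Your decomposition $H_k^{(0)} = J_k^{(1)} + H_k^{(1)}$ and the treatment of $H_k^{(1)}$ via Minkowski/H\"older plus \Cref{prop:products_of_matrices_UGE} match the paper exactly. The divergence is in how $J_k^{(1)}$ is handled, and there your sketch has two concrete problems.

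First, the ``outer martingale in $\ell$'' does not exist as described. After your inner Poisson step, you want to apply \Cref{lem:auxiliary_rosenthal_weighted} to $\sum_\ell \alpha_\ell W_\ell\, \funcnoise{Z_\ell}$, where $W_\ell$ is built from the inner $j$-sum. But \Cref{lem:auxiliary_rosenthal_weighted} requires \emph{deterministic} matrix weights, whereas your $W_\ell$ depends on $(Z_j)_{j>\ell}$; moreover $\funcnoise{Z_\ell}$ is not centered given $\mathcal F_{\ell-1}$ in the Markov setting. The clean way to salvage a Poisson-type argument is to swap the order first, writing $J_k^{(1)} = -\sum_j \alpha_j G_{j+1:k}\,\zmfuncA{Z_j}\,J_{j-1}^{(0)}$, and then apply the Poisson/martingale decomposition to $\zmfuncA{Z_j}$: the increments $\alpha_j G_{j+1:k}\,[\hat A(Z_j)-\MKQ\hat A(Z_{j-1})]\,J_{j-1}^{(0)}$ are genuine $\mathcal F_j$-martingale differences because $J_{j-1}^{(0)}$ is $\mathcal F_{j-1}$-measurable. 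This route, however, picks up a factor $\norm{\hat A}_\infty\cdot\PE^{1/p}[\norm{J_{j-1}^{(0)}}^p]\lesssim\taumix\cdot\taumix$, so you end up with $\taumix^2$ rather than the linear $\taumix$ in \eqref{eq:H_n_0_bound_Markov}.

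Second, your claimed source of $\sqrt{\log(1/\alpha_k)}$ is wrong: $\alpha_k\sum_{\ell\le k}\alpha_\ell\sim c_0\alpha_k k^{1-\gamma}/(1-\gamma)$, not $\alpha_k\log(1/\alpha_k)$. In the paper, the logarithm arises from an entirely different mechanism: the bilinear sum is split by a block length $m$ into a short-range part $S_{j+1:j+m}$ (controlled by McDiarmid, \Cref{lem:S_j_k_def_matrices}) and a long-range part $S_{j+m+1:k}G_{j+1:j+m}$; for the latter, Berbee's coupling produces independent copies $Z^*_{jm+i}$, and the coupling error $(1/4)^{m/(2p\taumix)}$ is balanced against $\sqrt m$ by choosing $m\asymp p\,\taumix\log(1/\alpha_k)$. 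It is precisely this block size that yields both the $\sqrt{\log(1/\alpha_k)}$ and the overall $p^2$ dependence, while keeping $\taumix$ linear.
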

\begin{proof}
It is known (see e.g. \cite{aguech2000perturbation} and \cite{durmus2022finite}), that the term $\Hnalpha{k}{0}$ can be further decomposed as follows:
\begin{equation}
\label{eq:error_decomposition_LSA}
\Hnalpha{k}{0} = \sum_{\ell=1}^{L}\Jnalpha{k}{\ell} + \Hnalpha{k}{L}\eqsp.
\end{equation}
Here the parameter $L \geq 1$ control the depth of expansion, and the terms $\Jnalpha{k}{\ell}$ and $\Hnalpha{k}{\ell}$ are given by the following recurrences:
\begin{equation}
\label{eq:jn_allexpansion_main}
\begin{aligned}
&\Jnalpha{k}{\ell} =\left(\Id - \alpha_{k} \bA\right) \Jnalpha{k-1}{\ell} - \alpha_{k} \zmfuncA{\State_{k}} \Jnalpha{k-1}{\ell-1}\eqsp,
&& \Jnalpha{0}{\ell}=0  \eqsp, \\
& \Hnalpha{k}{\ell} =\left( \Id - \alpha_{k} \funcA{\State_{k}} \right) \Hnalpha{k-1}{\ell} - \alpha_{k} \zmfuncA{\State_{k}} \Jnalpha{k-1}{\ell} \eqsp, && \Hnalpha{0}{\ell}=0 \eqsp.
\end{aligned}
\end{equation}
The expansion depth $L$ here controls the desired approximation accuracy. For our further results it is enough to take $L = 1$ and estimate the respective terms $\Jnalpha{k}{1}$ and $\Hnalpha{k}{1}$. Now the rest of the proof follows from \Cref{prop:J_n_1_bound_Markov} (see the bounds \eqref{eq:J_n_1_bound_Markov} and \eqref{eq:H_n_1_bound_Markov} for $\PE^{1/p}_{\xi}\bigl[\bigl|u^{\top}\Jnalpha{k}{1}\bigr|^{p}\bigr]$ and $\PE^{1/p}_{\xi}\bigl[\bigl|u^{\top}\Hnalpha{k}{1}\bigr|^{p}\bigr]$, respectively), and Minkoski's inequality.
\end{proof}

\begin{proposition}
\label{prop:J_n_1_bound_Markov}
Assume \Cref{assum:UGE}, \Cref{assum:noise-level}, and \Cref{assum:step-size}. Then, for any $p \geq 2$, initial probability measure $\xi$ on $(\Zset,\Zsigma)$, and $k \geq 1$, it holds that 
\begin{equation}
\label{eq:J_n_1_bound_Markov}
\PE^{1/p}_{\xi}\bigl[\bigl|u^{\top}\Jnalpha{k}{1}\bigr|^{p}\bigr]  \lesssim \ConstDM_{4} \taumix p^{2} \alpha_{k} \sqrt{\log{(1/\alpha_k)}}\eqsp,
\end{equation}
where 
\begin{equation}
\label{eq:const_D_4_Markov}
\ConstDM_{4} = \frac{\qcond^{3/2} \bConst{A} \supconsteps}{a} + \frac{\qcond^{3/2} \bConst{A} \supconsteps}{a^{3/2}}3^{\gamma/2} +  \frac{\qcond^{3/2} \bConst{A} \supconsteps}{a} 3^{\gamma}\biggl(\frac{8\gamma}{ac_0}\biggr)^{\gamma/(1-\gamma)}\eqsp.
\end{equation}
Moreover, 
\begin{equation}
\label{eq:H_n_1_bound_Markov}
\PE^{1/p}_{\xi}\bigl[\bigl|u^{\top}\Hnalpha{k}{1}\bigr|^{p}\bigr]  \lesssim \ConstDM_{5} \taumix p^{2} \alpha_{k} \sqrt{\log{(1/\alpha_k)}} \eqsp,
\end{equation}
where
\begin{equation}
\label{eq:const_D_5_Markov}
\ConstDM_{5} = \frac{d^{1/2+1/\log n}\ConstDM_{4} C_\Gamma \bConst{A}}{a}\eqsp.
\end{equation}
\end{proposition}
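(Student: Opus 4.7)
The plan is to unroll the linear recurrence for $\Jnalpha{k}{1}$, use a Poisson-equation decomposition of $\zmfuncA{\State_j}$ to extract a martingale-difference sum, bound that piece via a Pinelis/Burkholder--Rosenthal inequality, control the telescoping remainder by Abel summation, and finally obtain the $\Hnalpha{k}{1}$ bound by the same perturbation strategy used in \Cref{prop:H_n_0_bound_Markov}. Concretely, unrolling \eqref{eq:jn_allexpansion_main} with $\ell=1$ yields
\[
u^{\top}\Jnalpha{k}{1} \;=\; -\sum_{j=1}^{k} \alpha_{j}\, u^{\top} G_{j+1:k}\, \zmfuncA{\State_j}\, \Jnalpha{j-1}{0}\eqsp.
\]
Since $\Jnalpha{j-1}{0}$ is $\mathcal{F}_{j-1}$-measurable, the obstruction to a direct martingale treatment is that $\zmfuncA{\cdot}$ is centered under the invariant measure $\pi$ and not conditionally on $\State_{j-1}$. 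I would introduce the Poisson solution $\hat{\funcAw}(z) = \sum_{m\ge 0}\MKQ^{m}\zmfuncA{z}$, which under \Cref{assum:UGE} satisfies $\sup_{z}\normop{\hat{\funcAw}(z)} \lesssim \taumix\bConst{A}$, and write $\zmfuncA{\State_j} = (\hat{\funcAw}(\State_j) - \MKQ\hat{\funcAw}(\State_{j-1})) + (\MKQ\hat{\funcAw}(\State_{j-1}) - \MKQ\hat{\funcAw}(\State_j))$. The first piece yields a martingale $M_{k}$ w.r.t.\ $\mathcal{F}_j=\sigma(\State_1,\dots,\State_j)$, and the second a telescoping remainder $T_{k}$.

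For $M_k$, I would apply the martingale Rosenthal inequality (Pinelis) to bound $\PE^{1/p}_{\xi}[|M_k|^{p}]$ by $\sqrt{p}$ times the $L^{p/2}$-norm of the predictable quadratic variation, which is controlled by $C\,\taumix^{2}\bConst{A}^{2}\sum_j \alpha_j^{2}\normop{G_{j+1:k}}^{2}\norm{\Jnalpha{j-1}{0}}^{2}$. Using Minkowski together with $\PE^{1/p}_{\xi}[\norm{\Jnalpha{j-1}{0}}^{p}] \lesssim d^{1/2}\taumix\sqrt{p\alpha_{j-1}}$ (from \Cref{prop:J_n_0_bound_Markov} combined with \eqref{eq:bound_norm_through_scalar_norm}) and \Cref{lem:sum_alpha_k_squared} to estimate $\sum_j \alpha_j^{3}\normop{G_{j+1:k}}^{2} \lesssim \alpha_k^{2}/a$ gives the scaling $p\,\taumix^{2}\bConst{A}\alpha_k$. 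For $T_k$, an Abel summation moves the difference from $\MKQ \hat{\funcAw}(\State_j)$ onto the product $\alpha_{j}G_{j+1:k}\Jnalpha{j-1}{0}$. The boundary contribution at index $k$ is $\alpha_k\, u^{\top} \MKQ\hat{\funcAw}(\State_k) \Jnalpha{k-1}{0}$, whose $L^p$-norm is controlled directly via \Cref{prop:J_n_0_bound_Markov}. The interior difference terms are handled by expanding $\Jnalpha{j}{0}-\Jnalpha{j-1}{0} = -\alpha_j \bA \Jnalpha{j-1}{0} - \alpha_j\funcnoise{\State_j}$ and using \Cref{lem:bound_ratio_step_size} to estimate $\normop{\alpha_{j}G_{j+1:k} - \alpha_{j+1}G_{j+2:k}} = O(\alpha_{j}^{2}\normop{G_{j+1:k}})$; the resulting sums fall into the scope of \Cref{lem:sum_alpha_k_squared}. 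The $\sqrt{\log(1/\alpha_k)}$ factor emerges from a truncation argument for $\sup_j \norm{\Jnalpha{j-1}{0}}$: applying Markov's inequality to the $L^p$-bound with $p\asymp \log(1/\alpha_k)$ and absorbing the resulting polylogarithmic factor into the near-boundary indices $j \gtrsim k/2$ (where $\alpha_j \asymp \alpha_k$).

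The bound on $\Hnalpha{k}{1}$ then follows exactly as in \Cref{prop:H_n_0_bound_Markov}: unroll to obtain $\Hnalpha{k}{1} = -\sum_j \alpha_j \ProdBa_{j+1:k}\zmfuncA{\State_j}\Jnalpha{j-1}{1}$, split via Hölder in $L^{2p}$ into $\PE^{1/(2p)}[\normop{\ProdBa_{j+1:k}}^{2p}]$ (bounded by \Cref{prop:products_of_matrices_UGE}) and $\PE^{1/(2p)}[\norm{\Jnalpha{j-1}{1}}^{2p}]$ (bounded by the just-proved $J^{(1)}$ estimate together with \eqref{eq:bound_norm_through_scalar_norm}), and sum using \Cref{lem:sum_alpha_k_squared} to obtain a final rate of $\alpha_k\sqrt{\log(1/\alpha_k)}/a$ up to the stated constants.

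The principal obstacle is sharpening the Abel step for $T_k$: a routine Minkowski bound gives only $\sqrt{\alpha_k}$, whereas the target scaling is $\alpha_k$. Recovering this factor requires carefully combining the cancellation $\normop{\alpha_j G_{j+1:k}-\alpha_{j+1}G_{j+2:k}} = O(\alpha_j^{2}\normop{G_{j+1:k}})$ with the $\sqrt{\alpha_{j-1}}$ scaling of $\|\Jnalpha{j-1}{0}\|_{p}$, and localizing the sum near $j\approx k$, where both the step size and the weighted products $\alpha_j\normop{G_{j+1:k}}$ concentrate.
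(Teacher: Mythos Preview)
Your Poisson-decomposition route is a reasonable alternative but differs substantially from the paper's argument, and as sketched it will not reproduce the stated $\taumix$-dependence. The paper first swaps the order of summation to obtain
\[
\Jnalpha{k}{1} \;=\; \sum_{j=1}^{k-1} \alpha_j\, S_{j+1:k}\, \funcnoise{\State_j}\eqsp, \qquad S_{j:k} \;=\; \sum_{\ell=j}^{k} \alpha_\ell\, G_{\ell+1:k}\,\zmfuncA{\State_\ell}\, G_{j:\ell-1}\eqsp,
\]
so that $\funcnoise{\State_j}$ is merely a bounded test vector and all Markov dependence sits in $S_{j+1:k}$. It then splits $S_{j+1:k}$ at lag $m$: the near block is controlled by a McDiarmid bounded-differences inequality for Markov chains (\Cref{lem:S_j_k_def_matrices}, cost $\sqrt{\taumix}$ per application), and the far block is decoupled from $\funcnoise{\State_j}$ via Berbee's lemma, paying a geometric coupling error $(1/4)^{m/(2p\taumix)}$. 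The $\sqrt{\log(1/\alpha_k)}$ factor arises \emph{solely} from optimizing the block size $m \asymp p\taumix\log(1/\alpha_k)$ to kill that coupling term --- it has nothing to do with truncating $\sup_j\norm{\Jnalpha{j}{0}}$.

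The concrete gap in your scheme is the $\taumix$ (and $d$) budget. Your martingale increment $\alpha_j\, u^\top G_{j+1:k}\bigl(\hat{\funcAw}(\State_j)-\MKQ\hat{\funcAw}(\State_{j-1})\bigr)\Jnalpha{j-1}{0}$ forces you to pay both $\sup_z\normop{\hat{\funcAw}(z)}\lesssim \taumix\bConst{A}$ and the moment of $\norm{\Jnalpha{j-1}{0}}$, which itself carries a factor $d^{1/2}\taumix$ from \Cref{prop:J_n_0_bound_Markov} via \eqref{eq:bound_norm_through_scalar_norm}. The Rosenthal step therefore delivers order $d^{1/2}\taumix^{2}\,p\,\alpha_k$, not $\taumix\,p^2\alpha_k\sqrt{\log(1/\alpha_k)}$; since $\lesssim$ in this paper hides only absolute numeric constants, the bound with the specific $\ConstDM_{4}$ is not recovered. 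The summation swap is precisely what avoids this: after it, the ``random coefficient'' is the uniformly bounded $\funcnoise{\State_j}$ rather than $\Jnalpha{j-1}{0}$, so $\norm{\Jnalpha{j-1}{0}}$ never enters. Your Abel analysis of the telescoping remainder is fine (and does give $\alpha_k$, not $\sqrt{\alpha_k}$, so the ``principal obstacle'' you flag dissolves once you carry it out), and your $\Hnalpha{k}{1}$ step matches the paper's.
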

\begin{proof}
Solving the recursion in \eqref{eq:jn_allexpansion_main} yields the double summation:
\[
\Jnalpha{k}{1}= -\sum_{\ell=1}^{k} \alpha_{\ell} G_{\ell+1:k} \zmfuncA{\State_{\ell}} \Jnalpha{\ell-1}{0} =  \sum_{\ell=1}^{k} \alpha_\ell \sum_{j=1}^{\ell-1}\alpha_j G_{\ell+1:k} \zmfuncA{\State_{\ell}}   G_{j+1:\ell-1} \funcnoise{\State_j}\eqsp.
\]
Changing the order of summation yields
\begin{equation}
\label{eq:J1alternative}
\Jnalpha{k}{1} = \sum_{j=1}^{k-1} \alpha_j \bigg\{\sum_{\ell=j+1}^{k} \alpha_{\ell} G_{\ell+1:k} \zmfuncA{\State_{\ell}}   G_{j+1:\ell-1} \bigg \}  \funcnoise{\State_j} = \sum_{j=1}^{k-1} \alpha_j S_{j+1:k} \funcnoise{\State_j},
\end{equation}
where for $j \leq k$ we have defined
\begin{equation}
\label{eq:S_j_k_def}
S_{j:k}:= \sum_{\ell=j}^{k} \alpha_{\ell} G_{\ell+1:k} \zmfuncA{\State_{\ell}} G_{j:\ell-1}\eqsp.
\end{equation}
Fix a constant $m \in \nset$, $m \leq k$ (to be determined later). Then we can rewrite $S_{j+1:k}$ as
\begin{align}
S_{j+1:k}&= \sum_{\ell=j+1}^{j+m} \alpha_\ell G_{\ell+1:k} \zmfuncA{\State_{\ell}}   G_{j+1:\ell-1} +  \sum_{\ell=j+m+1}^{k} \alpha_{\ell} G_{\ell+1:k} \zmfuncA{\State_{\ell}}   G_{j+1:\ell-1} \\
&=G_{j+m+1:k} S_{j+1:j+m} + S_{j+m+1:k} G_{j+1:j+m}\eqsp.
\end{align}
Let $N: = \round{k/m}$. In these notations, we can express $\Jnalpha{k}{1}$ as a sum of three terms:
\begin{align}
  \Jnalpha{k}{1} &= \underbrace{\sum_{j=1}^{m(N-1)} \alpha_j G_{j+m+1:k} S_{j+1:j+m} \funcnoise{\State_j}}_{T_1} + \underbrace{\sum_{j=1}^{m(N-1)} \alpha_j S_{j+m+1:k} G_{j+1:j+m} \funcnoise{\State_j}}_{T_2} \\
  & \quad + \underbrace{\sum_{j=m(N-1) + 1}^{k-1} \alpha_j S_{j+1:k} \funcnoise{\State_j}}_{T_3}\eqsp.
\end{align}
Consider the first term $T_1$. Applying Minkowski's inequality, \Cref{lem:S_j_k_def_matrices}, and \Cref{lem:sum_alpha_k_squared}, we get that 
\begin{equation}
\label{eq:T1}
\begin{split}
    \PE_{\xi}^{1/p}[\bigl|u^{\top} T_1 \bigr|^{p}] 
    & \overset{(a)}{\lesssim}   \qcond^{3/2} \bConst{A} p^{1/2} \taumix^{1/2} \supconsteps \sum_{j=1}^{(m-1)N} \alpha_j \bigl(\sum_{r=j+1}^{j+m}\alpha_{r}^2\bigr)^{1/2} \prod_{\ell = j+1}^{k}  \sqrt{1 - a \alpha_\ell} \\
    & \lesssim  \qcond^{3/2} \bConst{A} \sqrt{m p \taumix} \supconsteps \sum_{j=1}^{(m-1)N} \alpha_j^2  \prod_{\ell = j+1}^{k}\sqrt{1 - a \alpha_\ell} \\
    & \overset{(b)}{\lesssim} \Const{1} \sqrt{m p \taumix}  \alpha_{k}\eqsp,
    \end{split}
\end{equation}
where we set 
$$
\Const{1} = (\qcond^{3/2}/a) \bConst{A} \supconsteps\eqsp.
$$
In the line (a) above we used \Cref{lem:S_j_k_def_matrices}, and in (b) we used \Cref{lem:sum_alpha_k_squared}. Similarly, using the same lemmas, we bound the term $T_3$:
\begin{equation}
\label{eq:T3}
\PE_{\xi}^{1/p}\bigl[\bigl|u^{\top} T_3 \bigr|^{p}\bigr] \lesssim \Const{1} \sqrt{m p} \taumix \alpha_{k}\eqsp.
\end{equation}
Note that the second term $T_2$ is non-zero only for $N \geq 2$ and it can be rewritten as $T_2 = T_{21} + T_{22}$, where
\begin{align}
T_{21}&:= \sum_{j=0}^{N-2} \sum_{i=1}^{m} \alpha_{jm + i} S_{(j+1)m + i + 1:k} G_{jm+i+1:(j+1)m+i} \funcnoise{\State_{jm+i}^{*}}, \\
T_{22}&:=\sum_{j=0}^{N-2} \sum_{i=1}^{m} \alpha_{jm + i} S_{(j+1)m + i + 1:k} G_{jm+i+1:(j+1)m+i} (\funcnoise{\State_{jm+i}} - \funcnoise{\State_{jm+i}^{*}})\eqsp.
\end{align}
The set of random variables $\State_{jm+i}^{*}$ is constructed for each $i \in [1, m]$, with $\{\State_{jm+i}^{*}\}_{j=0}^{N-2}$ having the following properties:
\begin{equation} 
\label{eq:zstar}
    \begin{split}
    & \text{1.~~$\State_{jm+i}^{*}$ is independent of $\mathfrak F_{(j+1)m + i}^{k}: = \sigma\{Z_{(j+1)m+i}, \ldots, Z_{k}\}$}; \\
    & \text{2.~~}\P_{\xi}(\State_{jm+i}^{*} \neq \State_{jm+i}) \leq 2\, (1/4)^{\lceil m/\taumix \rceil}; \\
    & \text{3.~~$\State_{jm+i}^{*}$ and $\State_{jm+i}$ have the same distribution},
    \end{split}
\end{equation}
The existence of the random variables $Z_{jm+i}^*$ is guaranteed by Berbee's lemma, see e.g~\cite[Lemma 5.1]{riobook}, together with the fact that uniformly geometrically ergodic Markov chains are a special instance of $\beta$-mixing processes. We control $\beta$-mixing coefficient via total variation distance, see \cite[Theorem F.3.3]{douc:moulines:priouret:soulier:2018}. In order to analyze the term $T_{21}$ we use Minkovski's and Burkholder's inequality \cite[Theorem~8.6]{osekowski:2012}, and obtain that:
\begin{align}
\PE_{\xi}^{1/p}[\bigl| u^{\top} T_{21}\bigr|^p]  &\leq \sum_{i=1}^{m}  \PE_{\xi}^{1/p}\bigg[ \biggl| \sum_{j=0}^{N-2} \alpha_{jm + i} u^{\top} S_{(j+1)m + i + 1:k} G_{jm+i+1:(j+1)m+i} \funcnoise{\State_{jm+i}^{*}} \biggr| ^p  \bigg] \\
&\leq p \sum_{i=1}^{m} \bigg(\sum_{j=0}^{N-2} \alpha_{jm + i }^2 \PE^{2/p}[\bigl|u^{\top} S_{(j+1)m + i + 1:k} G_{jm+i+1:(j+1)m+i} \funcnoise{\State_{jm+i}^{*}}  \bigr|^{p}] \bigg)^{1/2} \\
&\leq p\sqrt{m} \bigg(\sum_{j=1}^{k} \alpha_{j}^2 \PE^{2/p}\bigl[\bigl|u^{\top} S_{j+m+1:k} G_{j+1:j+m} \funcnoise{\State_{j}^{*}}  \bigr|^{p}\bigr] \bigg)^{1/2}\eqsp.
\end{align}
Applying now \Cref{lem:S_j_k_def_matrices}, we arrive at the bound
\begin{equation}
\label{eq:T21}
\PE_{\xi}^{1/p}[\bigl| u^{\top} T_{21}\bigr|^p] \lesssim \qcond^{3/2} \bConst{A} \supconsteps \taumix^{1/2} p^{3/2} \sqrt{m} \bigg( \sum_{j=1}^{k} \alpha_{j}^2 \bigl(\sum_{\ell=j+m+1}^{k}\alpha_{\ell}^2\bigr) \prod_{\ell=j+1}^{k}(1-\alpha_{\ell} a)\biggr)^{1/2}\eqsp.
\end{equation}
In order to bound the term $T_{22}$ we first note that 
\begin{multline}
\PE_{\xi}^{1/p}[\bigl| u^{\top} S_{(j+1)m + i + 1:k} G_{jm+i+1:(j+1)m+i} (\funcnoise{\State_{jm+i}} - \funcnoise{\State_{jm+i}^{*}})\bigr|^p] \\
\leq \PE_{\xi}^{1/p}\bigl[\norm{G_{jm+i+1:(j+1)m+i}\{\funcnoise{\State_{jm+i}} - \funcnoise{\State_{jm+i}^{*}}\}}^{p} \sup_{u,v \in \sphere_{d-1},\xi'} \PE_{\xi'}[\bigl| u^{\top} S_{(j+1)m + i + 1:k} v \bigr|^p]\bigr]\eqsp.
\end{multline}
Hence, using Minkowski's inequality for $T_{22}$, we obtain that 
\begin{align}
\PE_{\xi}^{1/p}[\bigl| u^{\top} T_{22}\bigr|^p] &\leq \sqrt{\qcond}\sum_{j=0}^{N-2} \sum_{i=1}^{m} \alpha_{jm + i} \sup_{u,v \in \sphere_{d-1},\xi'} \PE_{\xi'}^{1/p}[\bigl| u^{\top} S_{(j+1)m + i + 1:k}  v \bigr|^{p}] \PE_{\xi}^{1/p}[\norm{\funcnoise{\State_{jm+i}} - \funcnoise{\State_{jm+i}^{*}}}^{p}] \\
&\qquad\qquad\qquad \times \prod_{\ell=jm+i+1}^{(j+1)m+ i}(\sqrt{1-\alpha_{\ell}a}).
\end{align}
Using the definition of $\State_{km+i}^*$ and the Cauchy-Schwartz inequality,
\begin{equation}
\label{eq: zminuszstar}
\begin{split}
\PE_{\xi}^{1/p}[\norm{\funcnoise{\State_{km+i}} - \funcnoise{\State_{km+i}^{*}}}^{p}] 
&=  \PE_{\xi}^{1/p}[\norm{\funcnoise{\State_{km+i}} - \funcnoise{\State_{km+i}^{*}}) \mathbbm{1} \{\State_{km+i}^{*} \neq \State_{km+i}  \}}^{p}] \\
& \overset{(a)}{\lesssim} \supconsteps  (1/4)^{m/(2p \taumix)}\eqsp.
\end{split}
\end{equation}
where in (a) we used \eqref{eq:zstar}. The last two inequalities together with \Cref{lem:S_j_k_def_matrices} imply 
\begin{equation}
\label{eq:T22}
\PE_{\xi}^{1/p}[\bigl| u^{\top} T_{22}\bigr|^p] \lesssim  \qcond^{3/2} \bConst{A} \supconsteps \taumix^{1/2} p^{1/2} (1/4)^{m/(2p \taumix)} \sum_{j=1}^{k} \alpha_{j} \bigl(\sum_{\ell=j+1}^{k} \alpha_{\ell}^2 \bigr)^{1/2} \prod_{\ell=j+1}^{k} \sqrt{1 -  a \alpha_\ell}\eqsp.
\end{equation}
Combining now \eqref{eq:T21} and \eqref{eq:T22}, we obtain 
\begin{align}
\PE^{1/p}_{\xi}\bigl[\bigl|u^{\top}\Jnalpha{k}{1}\bigr|^{p}\bigr] 
&\lesssim  \Const{1} \sqrt{m p} \taumix^{1/2} \alpha_{k} \\
&+ \qcond^{3/2} \bConst{A} \supconsteps \taumix^{1/2} p^{3/2} \sqrt{m} \bigg( \sum_{j=1}^{k} \alpha_{j}^2 \bigl(\sum_{\ell=j+1}^{k}\alpha_{\ell}^2\bigr) \prod_{\ell=j+1}^{k}\sqrt{1-\alpha_{\ell} a} \biggr)^{1/2} \\
&+ \qcond^{3/2} \bConst{A} \supconsteps \taumix^{1/2} p^{1/2} (1/4)^{m/(2p \taumix)} \sum_{j=1}^{k} \alpha_{j} \bigl(\sum_{\ell=j+1+m}^{k}\alpha_{\ell}^2\bigr)^{1/2} \prod_{\ell=j+1}^{k} \sqrt{1 -  a \alpha_\ell}\eqsp.
\end{align}
Applying now \Cref{lem:sum_alpha_k_squared_new}, we arrive at the bound:
\begin{align}
\PE^{1/p}_{\xi}\bigl[\bigl|u^{\top}\Jnalpha{k}{1}\bigr|^{p}\bigr] 
&\lesssim \Const{1} \sqrt{m p} \taumix^{1/2} \alpha_{k} \\
&+ (\qcond^{3/2} \bConst{A} \supconsteps / a) \taumix^{1/2} p^{3/2} \sqrt{m}  3^{\gamma}\biggl(\frac{8\gamma}{ac_0}\biggr)^{\gamma/(1-\gamma)} \alpha_{k} \\
& (\qcond^{3/2} \bConst{A} \supconsteps / a^{3/2}) \taumix^{1/2} p^{1/2} (1/4)^{m/(4p \taumix)} 3^{\gamma/2} \sqrt{\alpha_{k}}\eqsp.
\end{align}
Hence, it remains to set the block size $m$ as 
\begin{equation}
\label{eq:block_size_m_def}
(1/4)^{m/(4p \taumix)} \leq \sqrt{\alpha_{k}}, \text{ i.e. } m = \bigg \lceil \frac{2 p \taumix \log(1/\alpha_{k})}{\log{4}} \bigg \rceil\eqsp.
\end{equation}
With this choice of $m$ we obtain from the above inequality that 
\begin{align}
\PE^{1/p}_{\xi}\bigl[\bigl|u^{\top}\Jnalpha{k}{1}\bigr|^{p}\bigr] &\lesssim  \Const{1} p\taumix \sqrt{\log(1/\alpha_{k})} \alpha_{k} \\
&+ (\qcond^{3/2} \bConst{A} \supconsteps / a) \taumix p^{2} \sqrt{\log(1/\alpha_{k})}3^{\gamma}\biggl(\frac{8\gamma}{ac_0}\biggr)^{\gamma/(1-\gamma)}  \alpha_{k} \\
&+(\qcond^{3/2} \bConst{A} \supconsteps / a^{3/2}) \taumix^{1/2} p^{1/2} 3^{\gamma/2}\alpha_{k} \\ &\lesssim \Const{3}p^2\taumix\sqrt{\log(1/\alpha_k)}\alpha_k
\end{align}
where we have defined 
\begin{equation}
\Const{3} = \frac{\qcond^{3/2} \bConst{A} \supconsteps}{a} + \frac{\qcond^{3/2} \bConst{A} \supconsteps}{a^{3/2}}3^{\gamma/2} +  \frac{\qcond^{3/2} \bConst{A} \supconsteps}{a} 3^{\gamma}\biggl(\frac{8\gamma}{ac_0}\biggr)^{\gamma/(1-\gamma)}\eqsp,
\end{equation}
and the bound \eqref{eq:J_n_1_bound_Markov} holds. To estimate $H_k^{(1)}$ we rewrite it as follows
\[
\Hnalpha{k}{1} = -\sum_{j=1}^{k} \alpha_j \ProdBa_{j+1:k} \zmfuncA{\State_{j}} \Jnalpha{j-1}{1}\eqsp.
\]
Using  Minkowski's inequality together with H\"older's inequlity, we get 
\[
\PE_{\xi}^{1/p}[\bigl|u^{\top} \Hnalpha{k}{1}\bigr|^p] \leq \sum_{j = 1}^{k} \bConst{A} \alpha_{j} \bigl\{ \PE_{\xi}\bigl[\normop{\ProdBa_{j+1:k}}^{2p}\bigr]\bigr\}^{1/2p} \bigl\{ \PE_{\xi}\bigl[\norm{\Jnalpha{j-1}{1}}^{2p}\bigr]\bigr\}^{1/2p} \eqsp.
\]
Applying similar argument to \eqref{eq:bound_norm_through_scalar_norm} together with 
\Cref{prop:products_of_matrices_UGE} and an elementary inequality $\rme^{-x} \leq 1 - x/2$, valid for $x \in [0,1]$, we obtain the bound \eqref{eq:J_n_1_bound_Markov}, we get
\begin{align}
\PE_{\xi}^{1/p}[\bigl|u^{\top} \Hnalpha{k}{1}\bigr|^p] 
&\lesssim d^{1/2+1/\log n} C_\Gamma \bConst{A} \ConstDM_{4} \taumix p^{2} \sum_{j = 1}^{k} \alpha_{j}^{2} \sqrt{\log{(1/\alpha_j)}}   \prod_{\ell=j+1}^{k} (1 - a \alpha_{\ell} / 24) \\
&\lesssim d^{1/2+1/\log n} C_\Gamma \bConst{A} \ConstDM_{4} \taumix p^{2} \sqrt{\log{(1/\alpha_k)}} \sum_{j = 1}^{k} \alpha_{j}^{2} \prod_{\ell=j+1}^{k} (1 - a \alpha_{\ell} / 24) \\
& \overset{(a)}{\lesssim} \ConstDM_{5} \taumix p^{2} \alpha_{k} \sqrt{\log{(1/\alpha_k)}}\eqsp,
\end{align}
where in (a) we have additionally used \Cref{lem:sum_alpha_k_squared} and used the definition of $\ConstDM_{5}$ from \eqref{eq:const_D_5_Markov}.
\end{proof}

\subsection{Technical bounds related to $\Jnalpha{k}{1}$.}
Recall that $S_{\ell+1:\ell+m}$ is defined, for $\ell,m \in \nset$, as 
\begin{equation}
\label{eq:S_ell_n_def_tech_markov}
S_{\ell+1:\ell+m} = \sum_{k = \ell+1}^{\ell+m} \alpha_{k} \funcBw_k(\State_{k}) \eqsp, \text{ with } \funcBw_k(z) =  G_{k+1:\ell+m} \zmfuncA{z} G_{\ell+1:k-1}\eqsp. 
\end{equation}

\begin{lemma}
\label{lem:S_j_k_def_matrices}
Under the assumptions of \Cref{prop:J_n_1_bound_Markov}, it holds for any vector $u \in \sphere_{d-1}$, $\ell, m \in \nset$, and any initial distribution $\xi$ on $(\Zset,\Zsigma)$, that 
\begin{equation}
\label{eq:S_j_k_bound}
\PE_{\xi}^{1/p}\bigl[\bigl|u^{\top} S_{\ell+1:\ell+m}\funcnoise{Z_{\ell}}\bigr|^{p}\bigr] \leq C_{S} \bigl(\sum_{r=\ell+1}^{\ell+m}\alpha_{r}^2\bigr)^{1/2} \prod_{k = \ell+1}^{\ell+m}\sqrt{1- a \alpha_{k}} \eqsp,
\end{equation}
where 
\[
C_{S} = 7 \qcond \bConst{A} p^{1/2} \taumix^{1/2} \supconsteps\eqsp.
\]
\end{lemma}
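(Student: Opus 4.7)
My first step is to condition on $\mathcal{F}_\ell := \sigma(Z_1,\dots,Z_\ell)$, which freezes $v := \funcnoise{Z_\ell}$ into a deterministic vector of Euclidean norm at most $\supconsteps$ and leaves $(Z_{\ell+1},\dots,Z_{\ell+m})$ as a Markov chain under $\MKQ$ started at $Z_\ell$. Expanding the definition of $S_{\ell+1:\ell+m}$, the target quantity becomes a weighted sum of centered scalar functionals of the chain,
\[
u^\top S_{\ell+1:\ell+m}\,\funcnoise{Z_\ell} \;=\; \sum_{k=\ell+1}^{\ell+m} \alpha_k\, \phi_k(Z_k), \qquad \phi_k(z) := u^\top G_{k+1:\ell+m}\, \zmfuncA{z}\, G_{\ell+1:k-1}\, v,
\]
each satisfying $\pi(\phi_k)=0$ (since $\pi(\zmfuncAw)=0$ by \Cref{assum:noise-level}) and, by \Cref{prop:hurwitz_stability}, admitting the deterministic envelope
\[
\|\phi_k\|_\infty \;\leq\; \bConst{A}\,\qcond\,\supconsteps\, \prod_{\substack{j=\ell+1\\ j\ne k}}^{\ell+m} \sqrt{1-a\alpha_j}.
\]

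\textbf{Step 2 — Weighted Rosenthal for Markov chains.} I invoke a $k$-dependent extension of \Cref{lem:auxiliary_rosenthal_weighted} to control the weighted sum above. Its proof introduces the Poisson solution $\hat\phi_k := \sum_{n\geq 0} \MKQ^n \phi_k$, which exists with $\|\hat\phi_k\|_\infty \leq (8/3)\,\taumix\,\|\phi_k\|_\infty$ under \Cref{assum:UGE}, and decomposes $\sum_k \alpha_k \phi_k(Z_k) = M + R$, where $M$ is the martingale with increments $\alpha_k\bigl(\hat\phi_k(Z_k)-\MKQ\hat\phi_k(Z_{k-1})\bigr)$ and $R$ is a telescoping boundary remainder. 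Applying Pinelis' Burkholder inequality (constant $\bConst{{\sf Rm},1}$) to $M$, combined with the conditional-variance bound on the Poisson solution, gives
\[
\PE_\xi^{1/p}\!\bigl[|M|^{p}\,\bigm|\, \mathcal F_\ell\bigr] \;\lesssim\; p^{1/2}\,\taumix^{1/2}\, \bConst{A}\,\qcond\,\supconsteps\, \Bigl(\sum_{k=\ell+1}^{\ell+m}\alpha_k^{2}\Bigr)^{\!1/2} \prod_{j=\ell+1}^{\ell+m}\sqrt{1-a\alpha_j}.
\]
The remainder $R$ is strictly lower order: \Cref{lem:bound_ratio_step_size} implies $|\alpha_k-\alpha_{k-1}|$ is a higher power of $\alpha_k$, so its $L^p$ norm is absorbed into the same right-hand side.

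\textbf{Step 3 — Remove the conditioning.} The conditional bound just obtained is deterministic in $Z_\ell$, so the tower property gives the unconditional statement with the same constant. Collecting numerical prefactors yields $C_S = 7\,\qcond\,\bConst{A}\,p^{1/2}\,\taumix^{1/2}\,\supconsteps$ and concludes \eqref{eq:S_j_k_bound}.

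\textbf{Main obstacle.} The delicate point is obtaining the $\taumix^{1/2}$ (rather than $\taumix$) scaling: the naive envelope $|\hat\phi_k(Z_k)-\MKQ\hat\phi_k(Z_{k-1})|\leq 2\|\hat\phi_k\|_\infty\lesssim \taumix\|\phi_k\|_\infty$, when inserted into Burkholder via the quadratic variation, produces $\taumix$. Recovering $\taumix^{1/2}$ requires controlling $\sum_k \alpha_k^2 \bigl(\hat\phi_k(Z_k)-\MKQ\hat\phi_k(Z_{k-1})\bigr)^2$ in $L^{p/2}$ through the stationary variance estimate $\pi\bigl(\hat\phi_k^{2}\bigr) \lesssim \taumix\,\|\phi_k\|_\infty^{2}$, which is available under the uniform geometric ergodicity of \Cref{assum:UGE}. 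A secondary, bookkeeping difficulty is that $\phi_k$ depends on $k$; this is handled by the linearity of $\MKQ$, so that each Poisson solution $\hat\phi_k$ is constructed separately and the martingale decomposition proceeds term by term with a uniform envelope coming from \Cref{prop:hurwitz_stability}.
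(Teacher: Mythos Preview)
Your overall strategy---condition on $\mathcal F_\ell$ to freeze $v=\funcnoise{Z_\ell}$, then control the weighted centered sum $\sum_k\alpha_k\phi_k(Z_k)$ via a Poisson decomposition and a martingale inequality---is coherent, but it is \emph{not} the route the paper takes, and the point where the two diverge is exactly the obstacle you flag.

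The paper does not use the Poisson/Burkholder machinery here at all. Instead it observes that $u^\top S_{\ell+1:\ell+m}v$ satisfies a bounded-differences property in $(Z_{\ell+1},\dots,Z_{\ell+m})$ with per-coordinate constants $\lesssim \qcond\bConst{A}\alpha_r\prod_{k\neq r}\sqrt{1-a\alpha_k}$, and then invokes Paulin's Markov-chain McDiarmid inequality (\cite{paulin_concentration_spectral}, Corollary~2.11). That inequality yields sub-Gaussian concentration with variance proxy proportional to $\taumix\sum_r\alpha_r^2\prod_k(1-a\alpha_k)$, i.e.\ a \emph{single} factor of $\taumix$, so the moment bound picks up $\taumix^{1/2}$ directly. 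The expectation $\PE_\xi[u^\top S_{\ell+1:\ell+m}v]$ is then bounded separately by a covariance-sum argument, and the conditioning step at the end matches yours.

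The gap in your proposal is the justification of the $\taumix^{1/2}$ scaling. The Poisson solution satisfies only $\|\hat\phi_k\|_\infty\lesssim\taumix\|\phi_k\|_\infty$, so the martingale increments are bounded by $\alpha_k\taumix\|\phi_k\|_\infty$ and both Azuma--Hoeffding and the pointwise bound on the quadratic characteristic give $\taumix$, not $\taumix^{1/2}$ (this is precisely why \Cref{lem:auxiliary_rosenthal_weighted} carries $\taumix$ in its leading term). Your proposed fix, the estimate $\pi(\hat\phi_k^2)\lesssim\taumix\|\phi_k\|_\infty^2$, is not established under \Cref{assum:UGE} alone (TV contraction does not imply an $L^2(\pi)$ spectral gap of the right order without reversibility or extra structure), and even if it held, it controls only the \emph{mean} of the quadratic characteristic, whereas Burkholder requires its $L^{p/2}$ norm. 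Closing this would need an additional concentration argument for $\sum_k\alpha_k^2\sigma_k^2(Z_{k-1})$, which is essentially as hard as the original problem. The paper's bounded-differences route sidesteps all of this: the Marton-coupling mechanism underlying Paulin's inequality is what delivers $\taumix^{1/2}$ without ever touching the Poisson solution.
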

\begin{proof}
We first prove the auxiliary inequality for deterministic vectors $u,v \in \S_{d-1}$. Indeed, using \eqref{eq:S_ell_n_def_tech_markov}, we obtain that 
\[
u^{\top} S_{\ell+1:\ell+m} v = \sum_{k = \ell+1}^{\ell+m} h_{k}(\State_{k})\eqsp, \quad \text{where } h_{k}(z) := \alpha_{k} u^{\top} \funcBw_k(z) v\eqsp. 
\]
It is easy to check that $u^{\top} S_{\ell+1:\ell+m} v$ satisfies the bounded differences property, since for any $z,z' \in \Zset$, and $r \in \{\ell+1,\ldots,\ell+m\}$, it holds that 
\begin{equation}
\bigl| h_{r}(z) - h_{r}(z') \bigr| \leq 2 \qcond \bConst{A} \alpha_{r} \prod_{k = \ell+1, k \neq r}^{\ell+m}\sqrt{1- a \alpha_{k}} \leq 2^{3/2} \qcond \bConst{A} \alpha_{r} \prod_{k = \ell+1}^{\ell+m}\sqrt{1- a \alpha_{k}}\eqsp.
\end{equation}
In the last inequality we have additionally used the fact that $\alpha_{k} a \leq 1/2$ for any $k \in \nset$. Applying the bounded differences inequality from \cite{paulin_concentration_spectral}[Corollary 2.11], we get that for any $t \geq 0$, 

\begin{align}
\PP_{\xi}\bigl(\bigl|u^{\top} S_{j+1:j+m}v\bigr| \geq t\bigr) \leq 2\exp\biggl\{-\frac{2(t-|\PE_{\xi}[u^{\top} S_{j+1:j+m}v]|)^2}{72 \taumix \qcond^2 \bConst{A}^2 \bigl(\sum_{r=\ell+1}^{\ell+m}\alpha_{r}^2\bigr) \prod_{k = \ell+1}^{\ell+m}(1- a \alpha_{k})}\biggr\}\eqsp.
\end{align}

It remains to upper bound $\PE_{\xi}[u^{\top} S_{j+1:j+m}v]$. Note that 
\begin{align}
   (\PE_{\xi}[u^{\top} S_{j+1:j+m}v])^2 \leq  \PE_{\xi}[(\sum_{k=l+1}^{l+m} h_k(Z_k))^2] = \sum_{k=l+1}^{l+m}\PE_{\xi}[h_k(Z_k)^2] + 2\sum_{k=l+1}^{l+m}\sum_{j=1}^{l+m-k}\PE_{\xi}[h_k(Z_k)h_{k+j}(Z_{k+j})]\eqsp.
\end{align}
Using that  $\pi(h_k)=0$ and \Cref{assum:UGE}, we obtain
\begin{align}
    |\PE_{\xi}[h_k(Z_k)h_{k+j}(Z_{k+j})]|& = |\int_{\Zset}h_k(z)(\MKQ^jh_{k+j}(z) - \pi(h_{k+j}))\xi\MKQ^k(\rmd z)|\leq \|h_k\|_{\infty}\|h_{k+j}\|_{\infty}\dobrush(\MKQ^j)\\ &\leq 2\alpha_k\alpha_{k+j}\qcond^2(2\bConst{A})^2\prod_{t=l+1}^{l+m}(1-a\alpha_t)(1/4)^{\lceil j/\taumix\rceil}\eqsp,
\end{align}
and 
\begin{align}
    \PE_{\xi}[h_k(Z_k)^2] \leq 2\alpha_k^2\qcond^2(2\bConst{A})^2\prod_{t=l+1}^{l+m}(1-a\alpha_t)\eqsp.
\end{align}
Combining inequalities above, we obtain 
\begin{align}
    (\PE_{\xi}[u^{\top} S_{j+1:j+m}v])^2 \leq \frac{32}{3}\taumix\qcond^2\bConst{A}^2(\sum_{k=l+1}^{l+m}\alpha_k^2)\prod_{t=l+1}^{l+m}(1-a\alpha_t)\eqsp.
\end{align}
Note that for $t\geq |\PE_{\xi}[u^{\top} S_{j+1:j+m}v]|$, we have 
\begin{align}
\PP_{\xi}\bigl(\bigl|u^{\top} S_{j+1:j+m}v\bigr| \geq t\bigr) \leq 2\exp\biggl\{-\frac{t^2}{49 \taumix \qcond^2 \bConst{A}^2 \bigl(\sum_{r=\ell+1}^{\ell+m}\alpha_{r}^2\bigr) \prod_{k = \ell+1}^{\ell+m}(1- a \alpha_{k})}\biggr\}\eqsp.
\end{align}
And for $t\leq |\PE_{\xi}[u^{\top} S_{j+1:j+m+m}v]|$, the right side of the inequality is greater than 1, so this inequality is also true for $t\leq |\PE_{\xi}[u^{\top} S_{j+1:j+m+m}v]|$.
Hence, using \Cref{lem:bound_subgaussian}, we obtain that 
\begin{align}
\label{eq:aux_bound_moment_s_ell}
\PE_{\xi}^{1/p}\bigl[\bigl|u^{\top} S_{j+1:j+m}v\bigr|^p] \leq 7 p^{1/2}\qcond \bConst{A} \taumix^{1/2} \bigl(\sum_{r=\ell+1}^{\ell+m}\alpha_{r}^2\bigr)^{1/2} \prod_{k = \ell+1}^{\ell+m}\sqrt{1- a \alpha_{k}}\eqsp.
\end{align}
Now, with $\mcf_\ell = \sigma\{\State_{j}, j \leq \ell\}$, it holds that
\begin{align}
\PE^{1/p}_{\xi}[\bigl|u^{\top} S_{\ell+1:\ell+m} \funnoisew(\State_{\ell})\bigr|^{p}]
&= \PE_{\xi}^{1/p}\bigl[\norm{\funnoisew(\State_{\ell})}^{p} \CPE{ \bigl|u^{\top} S_{\ell+1:\ell+m} \funnoisew(\State_{\ell})\bigr|^{p}/\norm{\funnoisew(\State_{\ell})}^p}{\mcf_\ell}\bigr] \\
&\leq \supconsteps \sup_{u,v \in \sphere_{d-1}, \,  \xi^{\prime} \in \mathcal{P}(\msz)}\PE^{1/p}_{\xi^{\prime}}\bigl[\bigl| u^{\top} S_{\ell+1:\ell+m} v\bigr|^{p}\bigr]\eqsp.
\end{align}
Combining the above bounds with \eqref{eq:aux_bound_moment_s_ell} and \Cref{assum:noise-level} yields the statement.

\end{proof}

\section{Proofs for stability of matrix products}
\label{appendix:tehnical}

\subsection{Stability}
We first provide a  result on the product of  dependent  random matrices. Our proof technique is based on the approach of  \cite{huang2020matrix} and the results, previously obtained in \cite{durmus2022finite}. Let $(\Omega, \mathfrak F, \sequence{\mathfrak{F}}[\ell][\nset], \P)$ be a filtered probability space. For the matrix $\MatB \in \rset^{d \times d}$ we denote by $( \sigma_\ell(\MatB) )_{ \ell=1 }^d$ its singular values. For $\qexponent \geq 1$, the Shatten $\qexponent$-norm is denoted by $\norm{\MatB}[\qexponent] = \{\sum_{\ell=1}^d \sigma_\ell^\qexponent (\MatB)\}^{1/\qexponent}$. For $\qexponent, \ppexponent \geq 1$ and a random matrix $\X$ we write $\norm{\X}[\qexponent,\ppexponent] = \{ \PE[\norm{\X}[\qexponent]^\ppexponent] \}^{1/\ppexponent}$. The main result of this section is stated below:
\begin{proposition}
\label{prop:products_of_matrices_UGE}
Assume \Cref{assum:UGE}, \Cref{assum:noise-level}, and \Cref{assum:step-size}. Then, for any $2 \leq p \leq \log{n}$, $n \in \nset$, and probability distribution $\xi$ on $(\Zset,\Zsigma)$, it holds that 
\begin{equation}
\label{eq:concentration UGE}
\PE_{\xi}^{1/p}\left[ \normop{\ProdBa_{j:n}}^{p} \right]
\leq  C_\Gamma d^{1/{\log{n}}} \exp\biggl\{ - (a / 12) \sum_{k=j}^{n} \alpha_{k} \biggr\}\eqsp,
\end{equation}
where 
\[
C_\Gamma = \sqrt{\qcond} \rme^2\eqsp.
\]
\end{proposition}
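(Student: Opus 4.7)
My approach combines three ingredients: work in the $Q$--norm where the deterministic map contracts, block the index set at the scale of the mixing time $\taumix$ so that within-block fluctuations average out, and pay for the passage to operator norm via the Schatten $p$--norm trick with $p=\log n$.

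First, by \Cref{prop:hurwitz_stability} we have $\normop{\Id-\alpha_i\bar A}[Q]^2\le 1-a\alpha_i$ for all $i$ (since $\alpha_i\le\alpha_\infty$ under \Cref{assum:step-size}). Since for any matrix $M$ one has $\normop{M}\le\sqrt{\qcond}\normop{M}[Q]$, it suffices to bound $\PE_\xi^{1/p}[\normop{\ProdBa_{j:n}}[Q]^p]$ and absorb a factor $\sqrt{\qcond}$ into $C_\Gamma$. Next, I would partition $\{j,\dots,n\}$ into consecutive blocks $B_1,\dots,B_N$ of size $h=\lceil 16\taumix\qcond^{1/2}\bConst{A}/a\rceil$ from \eqref{eq:h_block_size_def}, and write $\ProdBa_{j:n}=M_N\cdots M_1$ with $M_k=\prod_{i\in B_k}(\Id-\alpha_i\funcA{Z_i})$, taken in the appropriate order.

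Second, the heart of the argument is a per-block moment estimate: for each $k$ and each $p\ge 2$,
\begin{equation}
\label{eq:per-block-plan}
\PE_\xi^{1/p}\bigl[\normop{M_k}[Q]^p\mid \mathcal{F}_{\min B_k-1}\bigr]\le 1-\tfrac{a}{6}\sum_{i\in B_k}\alpha_i.
\end{equation}
To prove \eqref{eq:per-block-plan}, I would expand $M_k=G_{B_k}-R_k$ where $G_{B_k}=\prod_{i\in B_k}(\Id-\alpha_i\bar A)$ contracts by $1-a\sum_{i\in B_k}\alpha_i/2$, and $R_k$ collects the noise terms built from $\alpha_i\,\zmfuncA{Z_i}$. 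The first-order part of $R_k$ is a sum $\sum_{i\in B_k}\alpha_i G^{(i)}\zmfuncA{Z_i}G^{(i)'}$ of products of deterministic contractions with mean-zero Markov increments. Berbee's coupling lemma (e.g.\ \cite[Lemma 5.1]{riobook}) combined with \Cref{assum:UGE} allows replacing $\{Z_i\}_{i\in B_k}$ by independent copies from $\pi$ at a total-variation cost $\lesssim h(1/4)^{h/\taumix}$, which is negligible by our choice of $h$. For the coupled, independent version I would apply a matrix Rosenthal/Bernstein inequality (or Pinelis' martingale inequality as in \Cref{tab:univ_constants}), yielding $\PE^{1/p}[\|R_k\|^p]\lesssim \sqrt{p\,h}\,\bConst{A}\sum_{i\in B_k}\alpha_i\lesssim \sqrt{p}\,\bConst{A}\,\alpha_{\min B_k}\,h^{3/2}$. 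By the choice $h\asymp \taumix\qcond^{1/2}\bConst{A}/a$ and the constraint $p\le\log n$, this stochastic correction is dominated by $\tfrac{a}{3}\sum_{i\in B_k}\alpha_i$, giving \eqref{eq:per-block-plan}.

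Third, to pass from $Q$--norm moment bounds to the operator norm moment bound with the clean constant $d^{1/\log n}$, I would work with the Schatten--$p$ norm $\normop{M}_{S_p(Q)}=\bigl(\trace((M^\top QMQ^{-1})^{p/2})\bigr)^{1/p}$, which sandwiches the operator norm as $\normop{M}\le d^{1/p}\normop{M}_{S_p(Q)}\cdot\sqrt{\qcond}$. Choosing $p=\log n$ produces the factor $d^{1/\log n}$. Iterating \eqref{eq:per-block-plan} across the $N$ blocks via the tower property yields $\PE_\xi^{1/p}[\normop{\ProdBa_{j:n}}[Q]^p]\le\prod_{k=1}^N(1-\tfrac{a}{6}\sum_{i\in B_k}\alpha_i)\le \exp\bigl\{-\tfrac{a}{6}\sum_{i=j}^n\alpha_i\bigr\}$, and rounding the exponent from $a/6$ down to $a/12$ absorbs the residual Berbee and boundary terms (the last incomplete block and the $\rme^2$ in $C_\Gamma$).

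The main obstacle is the per-block estimate \eqref{eq:per-block-plan}: one must simultaneously (i) retain the sharp linear contraction $-a\sum\alpha_i$ from the deterministic skeleton, (ii) show that the cross terms between $G_{B_k}$ and the noise, as well as the higher-order monomials in the expansion of $M_k$, only contribute at order $\sum\alpha_i^2$, and (iii) pay only a $\sqrt{\taumix}$ price for the Markovian dependence rather than $\taumix$. Once this is carried out carefully (using matrix martingale tools as in \cite{huang2020matrix} and block coupling), the remainder of the proof is a routine geometric aggregation over blocks.
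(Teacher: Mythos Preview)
Your blocking scheme, the use of the $Q$-norm, and the Schatten-$q$ trick with $q=\log n$ to produce the $d^{1/\log n}$ factor are all on the right track and match the paper's strategy. However, the heart of your argument, the per-block moment bound \eqref{eq:per-block-plan}, cannot hold as stated, and this is where the proof breaks down.

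The problem is that $\bigl(\PE\,\normop{M_k}[Q]^{p}\bigr)^{1/p}$ increases to $\mathrm{ess\,sup}\,\normop{M_k}[Q]$ as $p\to\infty$, and the latter is at least $\prod_{i\in B_k}(1-\alpha_i\qcond^{1/2}\bConst{A})^{-1}\cdot\normop{G_{B_k}}[Q]$ in the worst realization of the noise; in particular it exceeds $1$. So no inequality of the form $\PE^{1/p}[\normop{M_k}[Q]^p]\le 1-\tfrac{a}{6}\sum\alpha_i$ can be valid uniformly in $p\le\log n$ with a block size $h$ that does not grow with $n$. Concretely, even your own Rosenthal estimate for the first-order noise gives $\PE^{1/p}[\normop{R_k}^p]\asymp \sqrt{p}\,\bConst{A}\,\alpha\sqrt{h}$, whose ratio to $\sum_i\alpha_i\asymp h\alpha$ is $\sqrt{p/h}\,\bConst{A}$; with $h$ fixed and $p=\log n$ this diverges rather than being bounded by $a/3$. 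The Berbee coupling does not help here: the obstruction is the size of the fluctuation, not the dependence.

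The paper avoids this by \emph{not} bounding per-block $p$-th moments at all. Instead it invokes the matrix-product concentration result of \cite{huang2020matrix,durmus2022finite} (\Cref{th:general_expectation UGE}), which requires only two inputs per block: (i) the \emph{conditional expectation} $\PE^{\mathcal{F}_{\ell-1}}[\Y_\ell]$ is contractive in $Q$-norm, $\normop{\PE^{\mathcal{F}_{\ell-1}}[\Y_\ell]}[Q]^2\le 1-\mtt_\ell$, and (ii) an \emph{almost-sure} bound $\normop{\Y_\ell-\PE^{\mathcal{F}_{\ell-1}}[\Y_\ell]}[Q]\le\sigma_\ell$. The Schatten-$q$ uniform smoothness then yields $\normop{\prod\Y_\ell}[q,p]^2\le\qcond\prod_\ell(1-\mtt_\ell+(q-1)\sigma_\ell^2)$. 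The $(q-1)\sigma_\ell^2$ term, with $q=\log n$ and $\sigma_\ell\asymp\sum_{i\in B_\ell}\alpha_i$, is controlled precisely by the step-size condition in \Cref{assum:step-size}: the requirement $k_0\gtrsim(\log n)^{1/\gamma}$ forces $(\log n)\sum_{i\in B_\ell}\alpha_i\le a/(12\bConst{\sigma}^2)$, so $(q-1)\sigma_\ell^2\le\tfrac{a}{12}\sum_i\alpha_i$. This is the missing mechanism in your plan; the $\log n$ penalty is absorbed into the step-size constraint rather than into the block length.
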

The proof is given in \Cref{sec:matrix_product_matrix}.
It is based on a simplification of the arguments in \cite{durmus2021stability} together with a new result about the matrix concentration for the product of random matrices, using a proof method introduced in \cite{huang2020matrix}. We first state a result from \cite{durmus2022finite}:
\begin{proposition}[Proposition 15 in \cite{durmus2022finite}]
\label{th:general_expectation UGE}
Let $\sequence{\Y}[\ell][\nset]$ be a sequence of random matrices adapted to the filtration \(\sequence{\mathfrak{F}}[\ell][\nset]\)   and $\MatP$ be a positive definite matrix. Assume that for each $\ell \in \nsets$ there exist $\mtt_{\ell} \in \ocintLine{0,1}$  and $\sigma_{\ell} > 0$ such that
\begin{equation}
  \label{eq:form_bound_ass_prod_mat}
  \norm{\PE^{\mathfrak{F}_{\ell-1}}[\Y_\ell]}[\MatP]^2  \leq 1 - \mtt_{\ell} \text{  and } \norm{\Y_\ell - \PE^{\mathfrak{F}_{\ell-1}}[\Y_\ell]}[\MatP]  \leq \sigma_\ell \quad \text{ $\PP$-a.s.}  \eqsp.
\end{equation}
Define $\Zbf_n = \prod_{\ell = 0}^n \Y_\ell= \Y_n \Zbf_{n-1}$, for $n \geq 1$. Then, for any $2 \le \ppexponent \le \qexponent$ and $n \geq 1$,
\begin{equation}
\label{eq:gen_expectation}
\textstyle \norm{\Zbf_n}[\qexponent,\ppexponent]^2 \leq \kappa_P \prod_{\ell=1}^n (1- \mtt_{\ell} + (\qexponent-1)\sigma_{\ell}^2) \norm{\MatP^{1/2}\Zbf_0 \MatP^{-1/2}}[\qexponent, \ppexponent]^2 \eqsp,
% \quad \qcond  = \lambda_{\sf min}^{-1}( Q )\lambda_{\sf max}( Q )\eqsp.
\end{equation}
where $\kappa_P = \lambda_{\sf max}( \MatP )/\lambda_{\sf min}( \MatP )$ and $\lambda_{\sf max}( \MatP ),\lambda_{\sf min}( \MatP )$ correspond to the largest and smallest eigenvalues of $\MatP$.
\end{proposition}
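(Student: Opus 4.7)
The plan is to reduce to the case $\MatP = \Id$ via a similarity transformation and then induct on $n$ using a 2-uniform smoothness inequality for the noncommutative $L_p(S_q)$ spaces in the range $2 \le \ppexponent \le \qexponent$. First, I would set $\tilde{\Y}_\ell = \MatP^{1/2} \Y_\ell \MatP^{-1/2}$ and $\tilde{\Zbf}_n = \MatP^{1/2} \Zbf_n \MatP^{-1/2}$, so that the telescoping identity $\tilde{\Zbf}_n = \tilde{\Y}_n \tilde{\Zbf}_{n-1}$ is preserved. Since $\norm{B}[\MatP] = \normop{\MatP^{1/2} B \MatP^{-1/2}}$, the hypotheses \eqref{eq:form_bound_ass_prod_mat} translate to $\normop{\PE^{\mathfrak{F}_{\ell-1}}[\tilde{\Y}_\ell]}^2 \le 1 - \mtt_\ell$ and $\normop{\tilde{\Y}_\ell - \PE^{\mathfrak{F}_{\ell-1}}[\tilde{\Y}_\ell]} \le \sigma_\ell$. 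Moreover, by submultiplicativity of Schatten norms under multiplication by bounded operators, $\norm{\Zbf_n}[\qexponent] \le \normop{\MatP^{-1/2}} \, \norm{\tilde{\Zbf}_n}[\qexponent] \, \normop{\MatP^{1/2}}$, which yields $\norm{\Zbf_n}[\qexponent,\ppexponent]^2 \le \kappa_P \norm{\tilde{\Zbf}_n}[\qexponent,\ppexponent]^2$ and so accounts for the $\kappa_P$ prefactor in \eqref{eq:gen_expectation}.

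For the inductive step, working with the conjugated variables, I would decompose $\tilde{\Y}_n = \tilde{E}_n + \tilde{U}_n$ with $\tilde{E}_n = \PE^{\mathfrak{F}_{n-1}}[\tilde{\Y}_n]$ and $\tilde{U}_n = \tilde{\Y}_n - \tilde{E}_n$. Then $\tilde{E}_n \tilde{\Zbf}_{n-1}$ is $\mathfrak{F}_{n-1}$-measurable, while $\tilde{U}_n \tilde{\Zbf}_{n-1}$ is conditionally mean zero given $\mathfrak{F}_{n-1}$ because $\tilde{\Zbf}_{n-1}$ is $\mathfrak{F}_{n-1}$-measurable. The key analytic input is the 2-uniform smoothness of $L_p(\Omega, S_q)$ for $2 \le p \le q$: for any $\mathfrak{F}_{n-1}$-measurable $X$ and any $Y$ with $\PE^{\mathfrak{F}_{n-1}}[Y] = 0$, one has $\norm{X + Y}[\qexponent,\ppexponent]^2 \le \norm{X}[\qexponent,\ppexponent]^2 + (\qexponent - 1)\, \norm{Y}[\qexponent,\ppexponent]^2$. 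Applying this with $X = \tilde{E}_n \tilde{\Zbf}_{n-1}$ and $Y = \tilde{U}_n \tilde{\Zbf}_{n-1}$, combined with the pointwise submultiplicativity $\norm{AB}[\qexponent] \le \normop{A}\,\norm{B}[\qexponent]$ and the operator-norm estimates on $\tilde{E}_n$ and $\tilde{U}_n$, delivers the one-step contraction $\norm{\tilde{\Zbf}_n}[\qexponent,\ppexponent]^2 \le (1 - \mtt_n + (\qexponent - 1)\sigma_n^2)\, \norm{\tilde{\Zbf}_{n-1}}[\qexponent,\ppexponent]^2$.

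Iterating this recursion from step $n$ down to $0$, substituting $\norm{\tilde{\Zbf}_0}[\qexponent,\ppexponent] = \norm{\MatP^{1/2} \Zbf_0 \MatP^{-1/2}}[\qexponent,\ppexponent]$, and absorbing the $\kappa_P$ factor collected in the conjugation step produces exactly \eqref{eq:gen_expectation}. The main obstacle lies in justifying the noncommutative 2-smoothness inequality used above; its sharp form relies on Ball--Carlen--Lieb / Tomczak-Jaegermann type estimates showing that the Schatten $q$-norm is 2-uniformly smooth with constant $\sqrt{\qexponent - 1}$, together with the fact that this 2-smoothness property lifts from $S_q$ to $L_p(\Omega, S_q)$ whenever $p \le q$. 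Once that inequality is in hand, the remainder is a routine submultiplicativity-plus-induction argument.
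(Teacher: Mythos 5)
Your proposal is correct and follows essentially the intended argument: the paper states this result without proof, citing Proposition 15 of Durmus et al.\ (2022), whose proof is exactly the conjugation-by-$\MatP^{1/2}$ reduction followed by the martingale decomposition and the $(\qexponent-1)$-constant 2-uniform smoothness inequality for $L_{\ppexponent}(S_{\qexponent})$ with $2\le \ppexponent\le \qexponent$ from Huang, Niles-Weed, Tropp and Ward. The one detail worth making explicit is that the smoothness inequality is usually stated under $\PE[Y\mid X]=0$, which follows here from $\PE^{\mathfrak{F}_{n-1}}[\tilde{U}_n\tilde{\Zbf}_{n-1}]=0$ and the tower property since $\tilde{E}_n\tilde{\Zbf}_{n-1}$ is $\mathfrak{F}_{n-1}$-measurable; with that noted, the induction closes as you describe.
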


Now we fix some $N \in \nset$, block size $h \in \nset$ defined in \eqref{eq:h_block_size_def}, and a sequence $0 = j_{0} < j_{1} < \ldots < j_{N} = 2n$,  where $j_{\ell} = h \ell$, $\ell \leq N-1$. Note that it is possible that $j_{N} - j_{n-1} < h$.
Now we set
\[
\Y_{\ell} = \prod_{i=j_{\ell-1}}^{j_{\ell}-1} (\Id - \alpha_{i}
\funcA{\State_{i}})\eqsp.
\]
Then the following lemma holds:
\begin{lemma}
\label{lem:proof_uge_1}
Assume \Cref{assum:UGE}, \Cref{assum:noise-level}, and \Cref{assum:step-size}. Then for any $\ell \in \{1,\ldots,N-1\}$, and any probability measure $\xi$ on $(\Zset,\Zsigma)$, it holds that 
\begin{equation}
\label{eq:h_markov_def}
\norm{\PE_{\xi}[\Y_{\ell}]}[Q] \leq 1 -  \bigl(\sum_{k=j_{\ell-1}}^{j_{\ell}-1} \alpha_{k}\bigr) a / 6 \eqsp.
\end{equation}
\end{lemma}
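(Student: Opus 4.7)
The plan is to compare $\PE_\xi[\Y_\ell]$ with its fully deterministic counterpart $G_{j_{\ell-1}:j_\ell-1}$ from \eqref{eq:prod_determ_matrix}, using the decomposition
\begin{equation*}
\normop{\PE_\xi[\Y_\ell]}[Q] \leq \normop{G_{j_{\ell-1}:j_\ell-1}}[Q] + \normop{\PE_\xi[\Y_\ell - G_{j_{\ell-1}:j_\ell-1}]}[Q]\eqsp.
\end{equation*}
The first term will supply the main contraction via \Cref{prop:hurwitz_stability}, while the second is a fluctuation controlled by the mixing of the chain. The block size $h$ from \eqref{eq:h_block_size_def} is engineered precisely so that the fluctuation is dominated by the contraction, leaving the claimed factor $1/6$.

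For the deterministic part, \Cref{prop:hurwitz_stability} yields $\normop{\Id-\alpha_k\bA}[Q]\leq\sqrt{1-a\alpha_k}$, so
\begin{equation*}
\normop{G_{j_{\ell-1}:j_\ell-1}}[Q]\leq\exp\Bigl(-\tfrac{a}{2}\sum_{k=j_{\ell-1}}^{j_\ell-1}\alpha_k\Bigr)\leq 1-\tfrac{a}{4}\sum_{k=j_{\ell-1}}^{j_\ell-1}\alpha_k,
\end{equation*}
where the second inequality uses $1-e^{-x}\geq x/2$ for $x\in[0,1]$ together with the lower bound on $k_0$ in \Cref{assum:step-size_extended}, which ensures $(a/2)\sum_k\alpha_k\leq hac_0/(2k_0^\gamma)\leq 1$.

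For the fluctuation, applying the standard telescoping identity to $\Id-\alpha_k\funcA{Z_k}=(\Id-\alpha_k\bA)-\alpha_k\zmfuncA{Z_k}$ gives
\begin{equation*}
\PE_\xi[\Y_\ell-G_{j_{\ell-1}:j_\ell-1}]=-\sum_{k=j_{\ell-1}}^{j_\ell-1}\alpha_k\, G_{k+1:j_\ell-1}\,\PE_\xi\bigl[\zmfuncA{Z_k}\,\ProdBa_{j_{\ell-1}:k-1}\bigr]\eqsp.
\end{equation*}
The inner expectation is then handled using \Cref{assum:UGE}: since $\pi(\zmfuncA{\cdot})=0$, the tower property and the uniform mixing bound yield $\normop{\PE_\xi[\zmfuncA{Z_k}\mid Z_{j_{\ell-1}-1}]}\leq 2\bConst{A}(1/4)^{\lceil(k-j_{\ell-1}+1)/\taumix\rceil}$ almost surely. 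Bounding $\ProdBa_{j_{\ell-1}:k-1}$ in operator norm almost surely by $\prod_i(1+\alpha_i\bConst{A})=O(1)$ (via the step-size hypothesis), then splitting $\ProdBa_{j_{\ell-1}:k-1}$ itself as $G+(\ProdBa-G)$ and iterating the telescoping so that higher-order pieces involve products of two or more centered matrices, one obtains $\normop{\PE_\xi[\Y_\ell-G_{j_{\ell-1}:j_\ell-1}]}[Q]\lesssim \qcond^{1/2}\bConst{A}\taumix\,\alpha_{j_{\ell-1}}\cdot\sum_k\alpha_k$. The choice $h=\lceil 16\taumix\qcond^{1/2}\bConst{A}/a\rceil$ is precisely what makes this at most $(a/12)\sum_k\alpha_k$, and combining with the first step yields $\normop{\PE_\xi[\Y_\ell]}[Q]\leq 1-(a/4-a/12)\sum_k\alpha_k=1-(a/6)\sum_k\alpha_k$.

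The main obstacle is decoupling $\zmfuncA{Z_k}$ from $\ProdBa_{j_{\ell-1}:k-1}$ inside the expectation: both depend on the same segment of the chain, and a term-by-term use of UGE only yields per-index mixing factors $(1/4)^{\lceil(k-j_{\ell-1})/\taumix\rceil}$ whose geometric sum introduces a $\taumix$. This $\taumix$, together with a $\qcond^{1/2}\bConst{A}$ from the passage between the operator and $Q$-norms, must ultimately be absorbed by $h$ in the block-mean step size $h\alpha_{j_{\ell-1}}$; the prefactor $16$ in the definition of $h$ is calibrated exactly to close the gap between the $a/4$ coming from the deterministic contraction and the $a/12$ margin left for the fluctuation, producing the constant $1/6$.
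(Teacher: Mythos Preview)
Your route is different from the paper's and, as written, does not close. Two concrete problems:

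\textbf{(i) The fluctuation bound is miscomputed.} After your telescoping
\[
\PE_\xi[\Y_\ell-G_{j_{\ell-1}:j_\ell-1}]=-\sum_{k}\alpha_k\,G_{k+1:j_\ell-1}\,\PE_\xi\bigl[\zmfuncA{Z_k}\,\ProdBa_{j_{\ell-1}:k-1}\bigr],
\]
splitting $\ProdBa=G+(\ProdBa-G)$ once produces a \emph{first-order} piece $-\sum_k\alpha_k G\cdots G\,\PE_\xi[\zmfuncA{Z_k}]$, whose $Q$-norm is at most $\qcond^{1/2}\bConst{A}\sum_k\alpha_k\dobrush(\MKQ^{k-j_{\ell-1}+1})\le (4/3)\qcond^{1/2}\bConst{A}\,\taumix\,\alpha_{j_{\ell-1}}$, \emph{without} any extra factor $\sum_k\alpha_k$. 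The remaining piece, after one more telescoping, is bounded crudely by $\qcond\bConst{A}^2\mathrm{e}\,(\sum_k\alpha_k)^2$. These are exactly the paper's $T_1$ and $T_2$, not your claimed $\qcond^{1/2}\bConst{A}\,\taumix\,\alpha_{j_{\ell-1}}\cdot\sum_k\alpha_k$; the latter is strictly smaller than $T_1$ (since $\sum_k\alpha_k\le 1$) and cannot be what the iterated telescoping yields. Your sentence ``the choice $h=\lceil 16\taumix\qcond^{1/2}\bConst{A}/a\rceil$ is precisely what makes this at most $(a/12)\sum_k\alpha_k$'' also does not follow from your displayed bound: after cancelling $\sum_k\alpha_k$, $h$ no longer appears.

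\textbf{(ii) The deterministic contraction via $G$ is too weak for the target constant.} From $\normop{G}[Q]\le\exp\{-(a/2)\sum_k\alpha_k\}$ and $\rme^{-x}\le 1-x/2$ you only get $1-(a/4)\sum_k\alpha_k$. With the correct fluctuation $T_1+T_2$ and the paper's calibrations $T_1\le(a/6)\sum_k\alpha_k$, $T_2\le(a/6)\sum_k\alpha_k$ (these are what the given $h$ and $k_0$ in \Cref{assum:step-size} are tuned for), you obtain $1-(a/4)+a/3=1+(a/12)\sum_k\alpha_k>1$, so the bound fails. The paper avoids this by expanding $\Y_\ell$ multilinearly around $\mathrm{I}$, writing $\Y_\ell=\mathrm{I}-(\sum_k\alpha_k)\bA-\mathbf{S}_\ell+\mathbf{R}_\ell$ with $\mathbf{S}_\ell=\sum_k\alpha_k\zmfuncA{Z_k}$ the linear part and $\mathbf{R}_\ell$ the degree-$\ge 2$ remainder. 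This buys two things at once: the ``mean'' piece $\mathrm{I}-(\sum_k\alpha_k)\bA$ delivers the full contraction $1-(a/2)\sum_k\alpha_k$ via \Cref{prop:hurwitz_stability}, and $\mathbf{S}_\ell$ is a plain additive functional with no nested random products, so $\PE_\xi[\mathbf{S}_\ell]$ is controlled directly by $\dobrush(\MKQ^{\cdot})$ without any iteration. The budget then closes exactly as $1-(a/2)+a/6+a/6=1-(a/6)$. If you insist on comparing with $G$, you would have to keep the sharper expansion $\normop{G}[Q]\le 1-(a/2)\sum_k\alpha_k+O((\sum_k\alpha_k)^2)$ and absorb the quadratic correction into $T_2$; using $\rme^{-x}\le 1-x/2$ throws away precisely the half that you need.
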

\begin{proof}
We decompose the matrix product $\Y_{\ell}$ as follows:
\begin{equation} \label{eq:split_main}
\textstyle
\Y_{\ell}  = \Id - \bigl(\sum_{k=j_{\ell-1}}^{j_{\ell}-1} \alpha_{k} \bigr) \bA   - \Mat{S}_{\ell}  + \Mat{R}_{\ell} \eqsp.
\end{equation}
Here $\Mat{S}_\ell = \sum_{k = j_{\ell-1}}^{j_{\ell}-1} \alpha_{k} \bigl\{\funcA{\State_{k}} - \bA\bigr\}$ is a linear statistics in $\{\funcA{\State_{k}}\}_{k=j_{\ell-1}}^{j_{\ell}-1}$, and the remainder term $\Mat{R}_{\ell}$ is defined as
\begin{equation}
\label{eq:RlRlbar_def}
\textstyle
    \Mat{R}_{\ell} = \sum_{r=2}^{h}(-1)^{r}  \sum_{(i_1,\dots,i_r)\in\msi_r^\ell}\prod_{u=1}^{r} \{\alpha_{i_u}\funcA{\State_{i_u}}\}\eqsp,
\end{equation}
where $\msi_r^{\ell} = \{(i_1,\ldots,i_r) \in \{1,\ldots,h\}^r\, : \, i_1 < \cdots < i_r \}$. Since $\norm{M}[Q] = \norm{Q^{1/2} M Q^{-1/2}}$, it is straightforward to check that $\PP$-a.s. it holds
\begin{equation}
\label{eq:10}
\textstyle
\norm{\Mat{R}_{\ell}}[Q] \leq \frac{(\sum_{k=j_{\ell-1}}^{j_{\ell}-1} \alpha_{k})^2 \qcond \bConst{A}^2}{2} \exp\bigl\{ \qcond^{1/2} \bConst{A} \sum_{k=j_{\ell-1}}^{j_{\ell}-1} \alpha_{k}\bigr\} =: T_2 \eqsp.
\end{equation}
On the other hand, using \Cref{assum:UGE} and \Cref{assum:noise-level}, we have for any $k \in \nset$, that
\begin{align}
\normop{\PE_{\xi}[\funcA{\State_{k}} - \bA]} = \sup_{u,v \in \sphere^{d-1}}[\PE_{\xi}[u^{\top} \funcA{\State_{k}} v] - u^{\top} \bA v] \leq \bConst{A} \dobru{\MKQ^k}\eqsp.
\end{align}
Hence, with the triangle inequality we obtain that 
\begin{align}
\norm{\PE_{\xi}[\Mat{S}_{\ell}]}[Q] 
&\leq \qcond^{1/2} \sum_{k = j_{\ell-1}}^{j_{\ell}-1} \alpha_{k} \normop{\PE_{\xi}[\funcA{\State_{k}} - \bA]} \leq \qcond^{1/2} \bConst{A} \sum_{k = j_{\ell-1}}^{j_{\ell}-1} \alpha_{k} \dobru{\MKQ^k} \\
&\leq (4/3) \alpha_{j_{\ell-1}}  \taumix \qcond^{1/2} \bConst{A} =: T_1\eqsp.
\end{align}
This result combined with \eqref{eq:10} in \eqref{eq:split_main} implies that
\begin{equation}
\label{eq: T_2 def markov product}
\norm{\PE_{\xi}[ \Y_{\ell}]}[Q] \leq \norm{\Id - \sum_{k=j_{\ell-1}}^{j_{\ell}-1} \alpha_k \bA}[Q] + T_1 + T_2 \eqsp.
\end{equation}
First, by definition  \eqref{eq:h_block_size_def} of $h$ (see details in \Cref{lem:technical_sum_check}), we have
\begin{equation}
\label{eq:bound_T_1_product_uge}
T_1 \leq \bigl(\sum_{k=j_{\ell-1}}^{j_{\ell}-1} \alpha_{k}\bigr) a/6\eqsp.
\end{equation}
Second, with the definition of $c_{0}$ in \eqref{eq:c_0_bound_optimized}, we obtain that 
\begin{equation}
\label{eq:bound_T_2_product_uge}
T_2 \leq (\qcond^{1/2} \bConst{A} \sum_{k=j_{\ell-1}}^{j_{\ell}-1} \alpha_{k})^2 \rme \leq \bigl(\sum_{k=j_{\ell-1}}^{j_{\ell}-1} \alpha_{k}\bigr) a/6 \eqsp.
\end{equation}
Finally, \Cref{prop:hurwitz_stability} implies that, for $\sum_{k=j_{\ell-1}}^{j_{\ell}-1} \alpha_{k} \leq \alpha_{\infty}$, it holds that 
\begin{equation}
\label{eq:bound_main_uge}
\textstyle \norm{\Id -  \bigl(\sum_{k=j_{\ell-1}}^{j_{\ell}-1} \alpha_{k} \bigr) \bA}[Q] \leq 1 - (a/2) \sum_{k=j_{\ell-1}}^{j_{\ell}-1} \alpha_{k}\eqsp.
\end{equation}
Combining \eqref{eq:bound_T_1_product_uge}, \eqref{eq:bound_T_2_product_uge}, and \eqref{eq:bound_main_uge} yield that
\[
\norm{\PE_{\xi}[ \Y_{1}]}[Q] \leq 1 - \bigl(\sum_{k=j_{\ell-1}}^{j_{\ell}-1} \alpha_{k} \bigr) a /6\eqsp,
\]
and the statement follows.
\end{proof}

\begin{lemma}
\label{lem:technical_sum_check}
Assume \Cref{assum:step-size}. Then, for the block size $h$ defined in \eqref{eq:h_block_size_def}, and any $\ell \in \{1,\ldots,2n-h\}$, it holds that 
\begin{align}
\label{eq:technical-bounds-first-line}
&\sum_{k=\ell}^{\ell+h} \alpha_{k} \leq \alpha_{\infty}\eqsp, \quad \qcond^{1/2} \bConst{A} \sum_{k=\ell}^{\ell+h} \alpha_{k}  \leq 1\eqsp, \quad \qcond \bConst{A}^2 \rme \sum_{k=\ell}^{\ell+h} \alpha_{k} \leq a/6\eqsp, \\
\label{eq:technical-bounds-second-line}
&(4/3) \alpha_{\ell} \taumix \qcond^{1/2} \bConst{A} \leq \bigl(\sum_{k=\ell}^{\ell+h} \alpha_{k}\bigr) a / 6\eqsp, \\
& (\log{n}) \bConst{\sigma}^2 \sum_{k=\ell}^{\ell+h} \alpha_{k} \leq \frac{a}{12}\eqsp.
\label{eq:technical-bounds-third-line}
\end{align}
\end{lemma}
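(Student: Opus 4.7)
The lemma is a routine verification driven entirely by the explicit choice of $h$ in \eqref{eq:h_block_size_def} and the lower bounds on $k_0$ imposed in \Cref{assum:step-size_extended}. The main observation is that $\alpha_k = c_0/(k+k_0)^\gamma$ is decreasing, which for any $\ell \geq 1$ gives the elementary sandwich
\begin{equation*}
(h+1)\,\alpha_{\ell+h}\;\leq\;\sum_{k=\ell}^{\ell+h}\alpha_k\;\leq\;(h+1)\,\alpha_\ell\;\leq\;(h+1)\,c_0/k_0^\gamma.
\end{equation*}
The upper bound drives the four inequalities in \eqref{eq:technical-bounds-first-line} and \eqref{eq:technical-bounds-third-line}, while the lower bound drives \eqref{eq:technical-bounds-second-line}.

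For the three inequalities of \eqref{eq:technical-bounds-first-line} and the one in \eqref{eq:technical-bounds-third-line}, I would substitute the upper bound and reduce each to a hypothesis of the form $k_0^\gamma\geq c_0(h+1)\cdot C$, with $C$ equal to $1/\alpha_\infty$, $\qcond^{1/2}\bConst{A}$, $6\rme\qcond\bConst{A}^2/a$, and $12(\log n)\bConst{\sigma}^2/a$ respectively. Each of these is present as a branch of the $\max$ defining the lower bound on $k_0$ in \Cref{assum:step-size_extended} (possibly modulo the interpretation of the $\alpha_\infty$-branch, which is what the first three targets require).

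For the lower-bound-type inequality \eqref{eq:technical-bounds-second-line}, the plan is to combine the lower bound on $\sum\alpha_k$ with the definition $h\geq 16\taumix\qcond^{1/2}\bConst{A}/a$, absorbing $\taumix\qcond^{1/2}\bConst{A}$ into $ha/16$. This reduces the target to the step-size ratio bound $\alpha_\ell/\alpha_{\ell+h}\leq 2$. The ratio equals $(1+h/(\ell+k_0))^\gamma\leq (1+h/k_0)^\gamma$, and is bounded by $2$ (in fact by $2^{1/\gamma}\geq 2$) as soon as $k_0\gtrsim h$, which is exactly what the branch $k_0\geq 8\taumix\qcond^{1/2}\bConst{A}/a$ of \Cref{assum:step-size_extended} supplies, together with the ceiling in the definition of $h$.

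The only real obstacle is bookkeeping: correctly matching each of the five target inequalities with the corresponding branch of the $\max$ defining $k_0$ (or with the definition of $h$). No tools beyond monotonicity of $\alpha_k$ and the explicit numerical inequalities built into the extended step-size assumption are required.
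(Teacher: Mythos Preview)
Your treatment of \eqref{eq:technical-bounds-first-line} and \eqref{eq:technical-bounds-third-line} is correct and essentially identical to the paper's: both routes land on the same sufficient condition $k_0^\gamma \geq c_0(h+1)\cdot C$, which is exactly what the corresponding branches of \Cref{assum:step-size_extended} supply.

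The gap is in your handling of \eqref{eq:technical-bounds-second-line}. Your crude lower bound $(h+1)\alpha_{\ell+h}$ reduces the target to $\alpha_\ell/\alpha_{\ell+h}\leq 2$, i.e.\ $(1+h/(\ell+k_0))^\gamma\leq 2$. But the branch you invoke, $k_0\geq 8\taumix\qcond^{1/2}\bConst{A}/a$, only gives $k_0\approx h/2$ (since $h=\lceil 16\taumix\qcond^{1/2}\bConst{A}/a\rceil$), so at $\ell=1$ the ratio $h/(\ell+k_0)$ can be arbitrarily close to $2$ as the problem constants grow. Then $(1+h/(\ell+k_0))^\gamma$ approaches $3^\gamma$, which exceeds $2$ whenever $\gamma>\log 2/\log 3\approx 0.63$. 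So for $\gamma$ in the upper part of the admissible range $[1/2,1)$ your reduction fails with the stated constants, and no other branch of \Cref{assum:step-size_extended} guarantees $k_0\geq h$ in general (the branches involving $c_0(h+1)$ can be made weak by taking $c_0$ small).

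The paper avoids this by using the sharper integral lower bound
\[
\sum_{k=\ell}^{\ell+h}\alpha_k \;\geq\; c_0\int_{\ell+k_0}^{\ell+k_0+h+1} x^{-\gamma}\,\rmd x \;=\; \frac{c_0}{1-\gamma}\bigl[(\ell+k_0+h+1)^{1-\gamma}-(\ell+k_0)^{1-\gamma}\bigr],
\]
which in effect compares $\alpha_\ell$ to the \emph{average} step size over the block rather than the smallest one. With this bound and the elementary inequality $(1+x)^{1/(1-\gamma)}\leq 1+2x/(1-\gamma)$ for $x\leq 1-\gamma$, the requirement collapses to $h\geq 16\taumix\qcond^{1/2}\bConst{A}/a-1$, matching the definition of $h$ exactly, and the side condition for that inequality is precisely $k_0\geq 8\taumix\qcond^{1/2}\bConst{A}/a$. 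So the branch you identified is the right one, but you need the integral (or an equivalent mean-value) lower bound rather than the endpoint bound to close the constants.
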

\begin{proof}
First three inequalities above are easy to check, since 
\begin{align}
\sum_{k=\ell}^{\ell+h} \alpha_{k} \leq  \sum_{k=1}^{1+h}\frac{c_{0}}{{(k+k_0)}^{\gamma}} &\leq c_{0} \int_{k_0}^{1+k_0+h}\frac{dx}{x^{\gamma}} = \frac{c_{0}\{(1+h+k_0)^{1-\gamma} - k_0^{1-\gamma}\}}{1-\gamma} \\&= \frac{c_{0}k_0^{1-\gamma}((1+\frac{h+1}{k_0})^{1-\gamma}-1)}{1-\gamma}\leq c_0k_0^{-\gamma}(h+1)\eqsp,
\end{align}
hence, in order to satisfy these inequalities, it is enough to choose $h$ in such a manner that 
\begin{align}
k_0^\gamma\geq c_0(h+1) \max(\alpha_{\infty}, \qcond^{1/2}\bConst{A}, 6\rme\qcond\bConst{A}^2/a)\eqsp,
\end{align}
which is guaranteed by our choice of $c_0$ in \Cref{assum:step-size} and \eqref{eq:h_block_size_def}. Now note that 
\begin{align}
\sum_{k=\ell}^{\ell+h} \alpha_{k} \geq c_{0} \int_{\ell+k_0}^{\ell+k_0+h+1}\frac{\rmd x}{x^{\gamma}} = \frac{c_{0}\{(\ell+k_0+h+1)^{1-\gamma} - (\ell+k_0)^{1-\gamma}\}}{1-\gamma}\eqsp,
\end{align}
hence, in order to check \eqref{eq:technical-bounds-second-line}, we need to ensure that
\begin{align}
\frac{4 c_0 \taumix \qcond^{1/2} \bConst{A}}{3 (\ell+k_0)^{\gamma}} \leq \frac{c_{0} a \{(\ell+k_0+h+1)^{1-\gamma} - (\ell+k_0)^{1-\gamma}\}}{6(1-\gamma)}\eqsp. 
\end{align}
Equivalently, it is enough to set $h$ in such a way, that 
\begin{align}
(\ell+k_0+h+1)^{1-\gamma} - (\ell+k_0)^{1-\gamma} \geq \frac{C_{1}}{(\ell+k_0)^{\gamma}}\eqsp,
\end{align}
where we set $C_{1} = 8(1-\gamma)\taumix \qcond^{1/2} \bConst{A} / a$. Hence, \eqref{eq:technical-bounds-second-line} will be satisfied if 
\begin{align}
h \geq (\ell+k_0) \biggl( 1 + \frac{C_{1}}{\ell+k_0}\biggr)^{1/(1-\gamma)} - (\ell+k_0+1)\eqsp.
\end{align}
Note that, with $\alpha > 1$, it holds that $(1+x)^{\alpha} \leq 1 + 2\alpha x$ for $0 < x \leq 1/\alpha$. Hence, provided that $k_0>C_1/(1-\gamma)$, or, equivalently,
\[
k_0 \geq \frac{8 \taumix \qcond^{1/2} \bConst{A}}{a} \eqsp,
\]
it is enough to set  
\[
h \geq (\ell+k_0) \bigl(1 + \frac{2C_1}{(\ell+k_0)(1-\gamma)}\bigr) - (\ell+k_0+1) = \frac{16 \taumix \qcond^{1/2} \bConst{A}}{a}-1\eqsp.
\]
% If $1 \leq \ell \leq C_{1}/(1-\gamma)$, we get using \Cref{lem:technical_bound}, that 
% \[
% h \geq \bigl(1 + C_{1}\bigr)^{1/(1-\gamma)} - 1 \vee \frac{16 \taumix \qcond^{1/2} \bConst{A}}{a}
% \]
% is sufficient to satisfy \eqref{eq:technical-bounds-second-line}.
Now it remains to check \eqref{eq:technical-bounds-third-line}, that is,
\[
(\log{n}) \bConst{\sigma}^2 \sum_{k=\ell}^{\ell+h} \frac{c_0}{k^{\gamma}}\leq \frac{a}{12}\eqsp.
\]
Since 
\[
\sum_{k=\ell}^{\ell+h} \frac{1}{(k+k_0)^{\gamma}} \leq \sum_{k=1}^{h+1} \frac{1}{(k+k_0)^{\gamma}} \leq \int_{k_0}^{h+1+k_0}\frac{\rmd x}{x^{\gamma}} = \frac{(h+k_0+1)^{1-\gamma} - (k_0)^{1-\gamma}}{1-\gamma}\eqsp,
\]
it is enough to choose $k_0$ in such a way that 
\[
(\log{n}) \bConst{\sigma}^2 c_{0} \frac{(h+k_0+1)^{1-\gamma} - (k_0)^{1-\gamma}}{1-\gamma} \leq \frac{a}{12} \eqsp,
\]
or, equivalently, 
\begin{equation}
\label{eq:technical-bounds-second-line-1}
(k_0)^{1-\gamma}((1+\frac{h+1}{k_0})^{1-\gamma} - 1) \leq \frac{a(1-\gamma)}{12(\log n )\bConst{\sigma}^2 c_{0}}\eqsp.
\end{equation}
Since $(1+x)^{1-\gamma} \leq 1 + (1-\gamma)x$ for $ x \geq -1$, \eqref{eq:technical-bounds-second-line-1} holds if
\[
k_0^{-\gamma}(1-\gamma)(h+1) \leq \frac{a(1-\gamma)}{12(\log n )\bConst{\sigma}^2 c_{0}}\eqsp,
\]
or, equivalently, 
\begin{align}
k_0 \geq \biggl(\frac{12(h+1)(\log n) c_0\bConst{\sigma}^2}{a}\biggr)^{1/\gamma}
\end{align}
which is guaranteed by the condition \eqref{eq:a3-sample-size}.
\end{proof}
% \begin{lemma}
% \label{lem:technical_bound}
% Let $f(x) = x(1 + c/x)^{1/(1-\gamma)}$ for $0 < \gamma < 1$. Then for any $y \geq 1$ it holds that 
% \[
% \max_{1 \leq x \leq y} f(x) = \max\{f(1),f(y)\}\eqsp.
% \]
% \end{lemma}
% \begin{proof}
% It is enough to compute
% \begin{align}
% f^{\prime}(x) = \biggl(1 + \frac{c}{x}\biggr)^{1/(1-\gamma) - 1} \biggl( - \frac{c}{x(1-\gamma)}\biggr)\eqsp,
% \end{align}
% hence, maximal value of $f(x)$ over $[1;y]$ is attained at one of the boundary points.
% \end{proof}
\begin{lemma}
\label{lem:proof_uge_2}
Assume \Cref{assum:UGE}, \Cref{assum:noise-level}, and \Cref{assum:step-size}. Then, for any probability $\xi$ on $(\Zset,\Zsigma)$, and any $\ell \in \{1,\ldots,N-1\}$, we have
\begin{equation}
\label{eq:norm_bound_h_lemma}
\norm{\Y_{\ell} - \PE_{\xi}[ \Y_{\ell}]}[Q] \leq \bConst{\sigma} \bigl(\sum_{k=j_{\ell-1}}^{j_{\ell}-1} \alpha_{k}\bigr) \eqsp, \text{ where } \bConst{\sigma} = 2(\qcond^{1/2}\bConst{A} + a/6)\eqsp,
\end{equation}
and $h$ is given in \eqref{eq:h_block_size_def}.
\end{lemma}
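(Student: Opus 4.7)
The plan is to reuse the splitting already employed in the proof of \Cref{lem:proof_uge_1}. Recall the identity
\begin{equation}
\Y_{\ell} = \Id - \Bigl(\sum_{k=j_{\ell-1}}^{j_{\ell}-1} \alpha_{k}\Bigr)\bA - \Mat{S}_{\ell} + \Mat{R}_{\ell},
\end{equation}
where $\Mat{S}_{\ell} = \sum_{k=j_{\ell-1}}^{j_{\ell}-1} \alpha_{k}\{\funcA{\State_{k}} - \bA\}$ and $\Mat{R}_{\ell}$ is the higher-order remainder \eqref{eq:RlRlbar_def}. The deterministic part $\Id - (\sum \alpha_k)\bA$ is removed by centering, so
\begin{equation}
\Y_{\ell} - \PE_{\xi}[\Y_{\ell}] = -\bigl(\Mat{S}_{\ell} - \PE_{\xi}[\Mat{S}_{\ell}]\bigr) + \bigl(\Mat{R}_{\ell} - \PE_{\xi}[\Mat{R}_{\ell}]\bigr),
\end{equation}
and the triangle inequality reduces the problem to bounding the two centered matrices in the $Q$-norm.

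For the linear term I would use that $\normop{\funcA{z} - \bA} \leq \bConst{A}$ uniformly in $z$ by \Cref{assum:noise-level}, together with $\norm{M}[Q] \leq \qcond^{1/2}\normop{M}$, to conclude $\norm{\Mat{S}_{\ell}}[Q] \leq \qcond^{1/2}\bConst{A}\sum_{k=j_{\ell-1}}^{j_{\ell}-1}\alpha_{k}$ pointwise, which yields $\norm{\Mat{S}_{\ell} - \PE_{\xi}[\Mat{S}_{\ell}]}[Q] \leq 2\qcond^{1/2}\bConst{A}\sum_{k=j_{\ell-1}}^{j_{\ell}-1}\alpha_{k}$ after centering. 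For the remainder term, the deterministic bound \eqref{eq:10} already gives $\norm{\Mat{R}_{\ell}}[Q] \leq T_{2}$, and by \eqref{eq:bound_T_2_product_uge} in \Cref{lem:technical_sum_check} we have $T_{2} \leq (a/6)\sum_{k=j_{\ell-1}}^{j_{\ell}-1}\alpha_{k}$; centering costs another factor of two.

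Combining the two contributions gives
\begin{equation}
\norm{\Y_{\ell} - \PE_{\xi}[\Y_{\ell}]}[Q] \leq 2\bigl(\qcond^{1/2}\bConst{A} + a/6\bigr)\sum_{k=j_{\ell-1}}^{j_{\ell}-1}\alpha_{k} = \bConst{\sigma}\sum_{k=j_{\ell-1}}^{j_{\ell}-1}\alpha_{k},
\end{equation}
which matches \eqref{eq:norm_bound_h_lemma}. I do not foresee any real obstacle here: both the deterministic sup-norm bound on $\Mat{R}_{\ell}$ and the sup-norm bound on each summand of $\Mat{S}_{\ell}$ have already been established inside the proof of \Cref{lem:proof_uge_1}, and the only new ingredient is the trivial centering inequality $\norm{X - \PE X}[Q] \leq 2\sup\norm{X}[Q]$. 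The only care to be taken is verifying that the block size $h$ of \eqref{eq:h_block_size_def} together with \Cref{assum:step-size} indeed keeps $\sum_{k=j_{\ell-1}}^{j_{\ell}-1}\alpha_{k}$ small enough to invoke \eqref{eq:bound_T_2_product_uge}, but this is exactly what \Cref{lem:technical_sum_check} provides.
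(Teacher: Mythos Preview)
Your proposal is correct and follows essentially the same approach as the paper: both use the decomposition $\Y_{\ell} = \Id - (\sum\alpha_k)\bA - \Mat{S}_{\ell} + \Mat{R}_{\ell}$, bound the centered linear part by $2\qcond^{1/2}\bConst{A}\sum\alpha_k$ via the uniform bound $\normop{\zmfuncA{z}}\leq\bConst{A}$, and bound the centered remainder by $2T_{2}\leq(a/3)\sum\alpha_k$ via \eqref{eq:bound_T_2_product_uge}. The only cosmetic difference is that the paper writes the first bound termwise as $\sum\alpha_k\norm{\funcA{\State_k}-\PE_{\xi}[\funcA{\State_k}]}[Q]$ rather than bounding $\norm{\Mat{S}_{\ell}}[Q]$ as a whole before centering, but the resulting constant is identical.
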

\begin{proof}
Using the decomposition \eqref{eq:split_main}, we obtain
\begin{equation}
\textstyle
\norm{\Y_{\ell} - \PE_{\xi}[ \Y_{\ell}]}[Q] \leq \sum_{k = j_{\ell-1}}^{j_{\ell}-1} \alpha_{k} \norm{\funcA{\State_k} - \PE_{\xi}[\funcA{\State_k}]}[Q] + \norm{\Mat{R}_{\ell} - \PE_{\xi}[\Mat{R}_{\ell}]}[Q]\eqsp.
\end{equation}
Applying the definition of $\Mat{R}_{\ell}$ in \eqref{eq:10}, the definition of $h$,$\alpha_\infty^{(\Markov)}$, and $T_2$ in \eqref{eq:bound_T_2_product_uge}, we get from the above inequalities that 
\begin{equation}
\textstyle \norm{\Y_{\ell} - \PE_{\xi}[ \Y_{\ell}]}[Q] \leq 2 \qcond^{1/2}\bConst{A} \bigl(\sum_{k=j_{\ell-1}}^{j_{\ell}-1} \alpha_{k}\bigr) +  (a/3)\bigl(\sum_{k=j_{\ell-1}}^{j_{\ell}-1} \alpha_{k}\bigr)\eqsp,
\end{equation}
and the statement follows.
\end{proof}

We have now all ingredients required to prove \Cref{prop:products_of_matrices_UGE}.
\subsection{Proof of \Cref{prop:products_of_matrices_UGE}}
\label{sec:matrix_product_matrix}
Denote by $h \in \nset$ a block length, the value of which is determined later. Define the sequence $j_0 = j, \, j_{\ell+1} = \min(j_\ell + h, n)$. By construction $j_{\ell+1} - j_{\ell} \leq h$. Let $N = \lceil(n-j)/h\rceil$. Now we introduce the decomposition
\begin{equation}
\label{eq:decomp_Gamma_proof_main}
\ProdBa_{j:n} = \prod_{\ell=1}^N \Y_\ell\eqsp, \quad \text{where} \quad \Y_\ell = 
\begin{cases}
   \prod_{i=j_{\ell-1}}^{j_\ell-1} (\Id - \alpha
\funcA{\State_{i}}) \eqsp, ~~\ell \in \{1,\ldots,N-1\} \eqsp, \\
\prod_{i=j_{N-1}}^{n} (\Id - \alpha
\funcA{\State_{i}}) \eqsp, ~~\ell=N\eqsp. \\
\end{cases}
\end{equation}
The last block, $Y_{N}$, can be of smaller size than $h$. Now we apply the bound 
\[
\norm{\Y_{N}} \leq \prod_{k=j_{N-1}}^{n}(1 + \alpha_{k} \bConst{A}) \leq \exp\bigl\{ \bConst{A} \sum_{k=j_{N-1}}^{N} \alpha_{k}  \bigr\} \leq \rme\eqsp,
\]
where the last bound follows from the relation \eqref{eq:c_0_bound_optimized}.  Hence, substituting into \eqref{eq:decomp_Gamma_proof_main}, we get the following bound:
\begin{equation}
\textstyle \PE^{1/p}_{\xi}[\normop{\ProdBa_{j:n}}^{p}] \leq \rme \PE_{\xi}^{1/p}[ \normopLigne{\prod_{\ell=1}^{N-1} \Y_{\ell} }^{p}]\eqsp.
\end{equation}
Now we bound $\PE_{\xi}^{1/p}[\normopLigne{\prod_{\ell=1}^{N-1} \Y_{\ell} }^{p}]$ using the results from \Cref{th:general_expectation UGE}. To do so, we define, for $\ell \in \{1,\ldots,N-1\}$, the filtration $\mathcal{H}_{\ell} = \sigma(Z_k \,: \, k \leq j_{\ell})$ and establish  almost sure bounds on $\norm{\CPE[\xi]{\Y_\ell}{\mathcal{H}_{\ell-1}}}[Q]$ and $\norm{\Y_{\ell}-\CPE[\xi]{\Y_\ell}{\mathcal{H}_{\ell-1}}}[Q]$ for $\ell \in \{1,\ldots,N-1\}$. More precisely, by the Markov property, it is sufficient to show that there exist $\mtt \in \ocintLine{0,1}$ and $\sigma > 0$ such that for any probabilities $\xi, \xi'$ on $(\Zset,\Zsigma)$,
\begin{equation}
\label{eq:form_bound_ass_prod_mat_v2_Z}
\norm{\PE_{\xi'}[\Y_{\ell}]}[Q]^2  \leq 1 - \mtt_{\ell} \text{  and } \norm{\Y_{\ell} - \PE_{\xi'}[\Y_{\ell}]}[Q] \leq \sigma_{\ell} \eqsp, \quad \text{ $\PP_{\xi}$-a.s.}\eqsp.
\end{equation}
Such bounds require the blocking procedure, since \eqref{eq:form_bound_ass_prod_mat_v2_Z} not necessarily holds with $h = 1$. Setting $h$ as in equation \eqref{eq:h_block_size_def}, and applying \Cref{lem:proof_uge_1} and \Cref{lem:proof_uge_2}, we show that \eqref{eq:form_bound_ass_prod_mat_v2_Z} hold with 
\[
\mtt_{\ell} = \bigl(\sum_{k=j_{\ell-1}}^{j_{\ell}-1} \alpha_{k}\bigr) a / 6 \eqsp, \quad \sigma_{\ell} = \bConst{\sigma} \bigl(\sum_{k=j_{\ell-1}}^{j_{\ell}-1} \alpha_{k}\bigr)\eqsp,
\]
where $\bConst{\sigma}$ is given in \eqref{eq:norm_bound_h_lemma}. Then, applying \Cref{th:general_expectation UGE} with $q = \log{n}$, we get
\begin{align}
\PE_{\xi}^{1/p}\left[ \normop{\ProdBa_{1:n}}^{p} \right]
&\leq \PE_{\xi}^{1/q}\left[ \normop{\ProdBa_{1:n}}^{q}\right] \\
&\leq \sqrt{\qcond} d^{1/q} \rme \prod_{\ell=1}^{N-1}\biggl( 1 - \bigl(\sum_{k=j_{\ell-1}}^{j_{\ell}-1} \alpha_{k}\bigr) a / 6 + (\log{n}) \bConst{\sigma}^2 \bigl(\sum_{k=j_{\ell-1}}^{j_{\ell}-1} \alpha_{k}\bigr)^2 \biggr)  \nonumber \\
& \leq \sqrt{\qcond} d^{1/q} \rme \prod_{\ell=1}^{N-1} \exp\biggl\{ - \bigl(\sum_{k=j_{\ell-1}}^{j_{\ell}-1} \alpha_{k}\bigr) a / 6 + (\log{n}) \bConst{\sigma}^2 \bigl(\sum_{k=j_{\ell-1}}^{j_{\ell}-1} \alpha_{k}\bigr)^2 \biggr\} \nonumber \\
& \overset{(a)}{\leq} \sqrt{\qcond} \rme^2 d^{1/q} \exp\biggl\{ - \bigl(\sum_{k=j}^{n} \alpha_{k}\bigr) a / 12 \biggr\} \eqsp.\nonumber
\end{align}
Here in (a) we used the fact that 
\[
(\log{n}) \bConst{\sigma}^2 \bigl(\sum_{k=j_{\ell-1}}^{j_{\ell}-1} \alpha_{k}\bigr) \leq \frac{a}{12}\eqsp,
\]
which holds due to \Cref{assum:step-size}.
\endproof

\subsection{Proof of \Cref{prop:hurwitz_stability}}
First part of the statement (existence of $Q$) follows from \cite[Proposition 1]{samsonov2024gaussian}. For the second part, we note that for any non-zero vector $x \in \rset^{d}$, we have
\begin{align}
\frac{x^{\top}(\Id - \alpha \bA)^{\top}Q(\Id - \alpha \bA)x}{x^{\top} Q x} 
&= 1 - \alpha \frac{x^{\top}(\bA^{\top}Q + Q\bA)x}{x^{\top} Q x} + \alpha^2 \frac{x^{\top} \bA^{\top} Q \bA x}{x^{\top} Q x} \\
&= 1 - \alpha \frac{x^{\top} P x}{x^{\top} Q x} + \alpha^2\, \frac{x^{\top} \bA^{\top} Q \bA x}{x^{\top} Q x} \\
&\leq 1 - \alpha \frac{\lambda_{\min}(P)}{\normop{Q}} + \alpha^2\, \norm{\bA}[Q]^2 \\
&\leq 1 - \alpha a\eqsp,
\end{align}
where we set 
\[
a = \frac{1}{2} \frac{\lambda_{\min}(P)}{\normop{Q}}\eqsp,
\]
and used the fact that $\alpha \leq \alpha_{\infty}$, where $\alpha_{\infty}$ is defined in \eqref{eq:alpha_infty_def}.

\section{Proofs of \Cref{sec:bootstrap}}
\label{appendix:bootstrap}
\subsection{Proof of \Cref{prop:block_bootstrap_batch_mean}}
\label{sec:block_bootstrap_batch_mean_proof} 
Note that for any $0 \le t \le n - b_n$
\begin{align}
\theta_{k+t} - \thetas = \ProdB_{t+1:t+k}(\theta_t - \thetas) - \sum_{\ell = t+1}^{k+t}\alpha_{\ell}\ProdB_{\ell+1:t+k} \funcnoise{\State_{\ell}}
\end{align}
We first note that
\begin{align}
\bar{\theta}_{n, b_n}(u) &= \frac{\sqrt{b_n}}{\sqrt{n-b_n+1}} \sum_{t = 0}^{n-b_n} w_t (\bar{\theta}_{b_n,t} - \bar{\theta}_{n})^\top u \\
& = \frac{\sqrt{b_n}}{\sqrt{n-b_n+1}} \sum_{t = 0}^{n-b_n} w_t (\bar{W}_{b_n,t} - \bar{W}_{n})^\top u + \frac{\sqrt{b_n}}{\sqrt{n-b_n+1}} \sum_{t = 0}^{n-b_n} w_t (\bar{D}_{b_n,t} - \bar{D}_{n})^\top u\eqsp, 
\end{align}
where 
\begin{align}
  \bar{W}_{b_n,t} &= -\frac{1}{b_n}\sum_{k=1}^{b_n-1} \bA^{-1} \funcnoise{Z_{k+t}}, \\
  \bar{W}_{n} & = -\frac{1}{n} \sum_{k=1}^{n-1} \bA^{-1} \funcnoise{Z_{k}}, \\
  \bar{D}_{b_n,t} &= \frac{1}{b_n}\sum_{k=1}^{b_n-1} \ProdB_{t+1:t+k} (\theta_t - \thetas) + \frac{1}{b_n} \sum_{k=1}^{b_n-1} H_{k,t,b_n}^{(0)} - \frac{1}{b_n} \sum_{k=1}^{b_n-1} (Q_{k,t,b_n} - \bA^{-1}) \funnoisew(\State_{k+t})    \\
  \bar{D}_n &= \frac{1}{n} \sum_{k = 0}^{n-1} \ProdB_{1:k} (\theta_0 - \thetas) +\frac{1}{n} \sum_{k=1}^{n-1} H_k^{(0)} -  \frac{1}{n} 
\sum_{\ell=1}^{n-1} (Q_\ell-\bA^{-1}) \funnoisew(\State_\ell)\eqsp,
\end{align}
where
\begin{align}
    Q_{k,t,b_n} &= \alpha_{k+t} \sum_{\ell=k}^{b_n-1} G_{k+t+1:\ell+t}\eqsp, \\
    H_{k,t,b_n}^{(0)} & = -\sum_{\ell=t+1}^{k+t}\alpha_\ell \ProdB_{\ell+1:t+k}\zmfuncA{Z_\ell} J_{\ell-1, t, b_n}^{(0)}\\
    J^{(0)}_{k, t, b_n}& = -\sum_{\ell=t+1}^{k+t} \alpha_\ell G_{\ell+1:k+t} \funcnoise{\State_\ell}\eqsp.
\end{align}
Then
\begin{align}
\begin{aligned}
    \hat \sigma^2_\theta(u) &= \hat \sigma^2_\varepsilon(u) + \RemCov_{var}(u)
\end{aligned}
\end{align}
where 
\begin{equation}
\label{def:remcov_car}
    \begin{split}
        \RemCov_{var}(u) &= D_1^b + D_2^b\\
        D_1^b &= \frac{b_n}{n-b_n+1} \sum_{t = 0}^{n-b_n} ( (\bar{D}_{b_n,t} - \bar{D}_{n})^\top u)^2 \\
        D_2^b &= \frac{2b_n}{n-b_n+1} \sum_{t = 0}^{n-b_n} u^\top (\bar{W}_{b_n,t} - \bar{W}_{n}) (\bar{D}_{b_n,t} - \bar{D}_{n})^\top  u
    \end{split}
\end{equation}

\begin{lemma}
\label{lem:bound_barD_b_t}
Assume \Cref{assum:UGE}, \Cref{assum:noise-level} and \Cref{assum:step-size}. Then it holds 
\begin{align}
\begin{aligned}
\PE_{\xi}^{1/p}[| u^{\top}\bar{D}_{b_n,t} |^p]  &\lesssim \frac{\Constupdboot{1,1}p^{1/2}}{b_n\sqrt{{\alpha_t}}}+ \frac{\Constupdboot{1,2}}{b_n} + \Constupdboot{1,3}p^2\sqrt{\log (1/\alpha_{b_n+t-1})}\frac{\sum_{k=1+t}^{b_n+t-1}\alpha_k}{b_n}\\ &+\Constupdboot{1,4}\frac{ p^{1/2}}{b_n}(\sum_{k=2}^{b_n-1}(k+k_0+t)^{2\gamma-2})^{1/2}+ \frac{\Constupdboot{1,5}p^{1/2}b_n^{-1}}{\sqrt{\alpha_{t+b_n-2}}} + \Constupdboot{1,6}\frac{(k_0+t-1)^{\gamma-1}}{b_n}\eqsp,
\end{aligned}
\end{align}
where 
\begin{equation}
    \begin{split}
        \Constupdboot{1,1}& = \frac{C_{\Gamma}d^{1/\log n+1/2}\ConstDM_{1} \taumix }{a(1-\gamma)}\\
         \Constupdboot{1,2}& = \frac{k_0^{\gamma}}{ ac_0(1-\gamma)} C_{\Gamma}^2d^{2/\log n+1/2} \norm{\theta_0 - \thetas} + \taumix\supconsteps (\sqrt{\qcond}C_{\gamma,a}^{(S)}k_0^{\gamma-1} + \sqrt{\qcond}(\norm{\bA^{-1}} + (8/a))\\
         \Constupdboot{1,3}& = \ConstDM_{3}\taumix  + \taumix\supconsteps\sqrt{\qcond}C_{\gamma,a}^{(S,2)}\\
        \Constupdboot{1,4}&=\taumix \supconsteps\sqrt{\qcond}C_{\gamma,a}^{(S)}\\
        \Constupdboot{1,5}& = \frac{\normop{A^{-1}}\taumix\supconsteps\sqrt{\qcond}}{\sqrt{a}}\\
        \Constupdboot{1,6} & = \taumix\supconsteps\sqrt{\qcond}C_{\gamma,a}^{(S)}
 \end{split}
\end{equation}
\end{lemma}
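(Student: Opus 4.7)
The plan is to decompose $u^\top \bar D_{b_n,t} = T_{\sf tr} + T_H + T_S$ along the three summands defining $\bar D_{b_n,t}$, and to bound each piece separately in $L^p$ via Minkowski's inequality. For $T_{\sf tr} = (1/b_n)\sum_{k=1}^{b_n-1} u^\top \Gamma_{t+1:t+k}(\theta_t - \thetas)$, I would condition on $\mcf_t = \sigma(Z_j : j \le t)$, so that $\theta_t - \thetas$ is $\mcf_t$-measurable, and use \Cref{prop:products_of_matrices_UGE} applied to the conditional law of the Markov chain starting at $Z_t$ to extract the decay factor $C_\Gamma d^{1/\log n}\exp\{-(a/12)\sum_{\ell=t+1}^{t+k}\alpha_\ell\}$. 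The last iterate bound \Cref{prop:last_iterate_bound}, upgraded from the projection onto a single direction to the full norm by the coordinate-wise argument of \eqref{eq:bound_norm_through_scalar_norm}, controls $\PE_\xi^{1/p}[\|\theta_t-\thetas\|^p]$ by $d^{1/2+1/\log n}\ConstDM_1 \taumix\sqrt{p\alpha_t}$ plus a $\|\theta_0-\thetas\|$ transient. Summing the product over $1\le k \le b_n-1$ using \Cref{lem:bound_sum_exponent}, and dividing by $b_n$, collapses to a factor of order $\alpha_t^{-1}/(a(1-\gamma)b_n)$, which produces the $\Constupdboot{1,1}p^{1/2}/(b_n\sqrt{\alpha_t})$ term and the $\|\theta_0-\thetas\|$ contribution inside $\Constupdboot{1,2}/b_n$.

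For $T_H = (1/b_n)\sum_{k=1}^{b_n-1} u^\top H_{k,t,b_n}^{(0)}$ I would apply Minkowski and then the shifted analogue of \Cref{prop:H_n_0_bound_Markov} for blocks of Markov iterates starting at time $t+1$; the proof is essentially identical, relying on the Poisson expansion \eqref{eq:error_decomposition_LSA} and the $J$-moment bound \Cref{prop:J_n_1_bound_Markov}. Since $\sqrt{\log(1/\alpha_k)}$ is monotone in $k$, it can be bounded uniformly by $\sqrt{\log(1/\alpha_{b_n+t-1})}$ and pulled out of the sum, yielding half of the $\Constupdboot{1,3}$ contribution in the stated bound.

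The main obstacle is the noise piece $T_S = -(1/b_n)\sum_{k=1}^{b_n-1} u^\top (Q_{k,t,b_n}-\bA^{-1})\funnoisew(Z_{k+t})$. Here the key identity is the analogue of \Cref{repr:qtminusa} adapted to the block starting at time $t$,
\begin{equation*}
Q_{k,t,b_n} - \bA^{-1} = S_{k,t,b_n} - \bA^{-1} G_{k+t:t+b_n-1}\eqsp,
\end{equation*}
which splits $T_S = T_S^{(1)} + T_S^{(2)}$. In $T_S^{(2)}$ the summands form a Markov-weighted linear statistic with matrix weights $A_k = \bA^{-1} G_{k+t:t+b_n-1}$, so I would apply \Cref{lem:auxiliary_rosenthal_weighted}; bounding $\sum_k \normop{A_k}^2$ via \Cref{lem:sum_alpha_k_squared} delivers $\Constupdboot{1,5}p^{1/2}/(b_n\sqrt{\alpha_{t+b_n-2}})$, while the boundary pieces generated by the summation-by-parts inside \Cref{lem:auxiliary_rosenthal_weighted} (the initial and terminal terms and the telescoping increments $\alpha_j G_{j+1:k}-\alpha_{j-1}G_{j:k}$) feed into the $\normop{\bA^{-1}}+8/a$ share of $\Constupdboot{1,2}/b_n$. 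For $T_S^{(1)}$ I would invoke \Cref{lem:auxiliary_rosenthal_weighted} a second time, now using the pointwise matrix estimate $\normop{S_{k,t,b_n}} \le \sqrt{\qcond}C_{\gamma,a}^{(S)}(k+t+k_0)^{\gamma-1}$ from \Cref{prop:st_bound} to obtain $\Constupdboot{1,4}p^{1/2}(\sum_{k=2}^{b_n-1}(k+k_0+t)^{2\gamma-2})^{1/2}/b_n$, the telescoping increment bound $\normop{S_{k+1,t,b_n}-S_{k,t,b_n}} \le \sqrt{\qcond}C_{\gamma,a}^{(S,2)}\alpha_{k+t+1}$ from \Cref{prop:st_diff_bound} for the remaining share of $\Constupdboot{1,3}$, and the boundary estimate $\normop{S_{1,t,b_n}} \lesssim \sqrt{\qcond}C_{\gamma,a}^{(S)}(k_0+t-1)^{\gamma-1}$ for $\Constupdboot{1,6}(k_0+t-1)^{\gamma-1}/b_n$. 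Combining the three pieces by Minkowski yields the announced bound; the tracking of constants through the chain of Rosenthal, Poisson-equation and last-iterate estimates is routine but tedious.
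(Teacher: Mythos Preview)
Your proposal is correct and follows essentially the same route as the paper's own proof: the same three-way decomposition $T_1+T_2+T_3$, the conditioning-plus-H\"older argument combining \Cref{prop:products_of_matrices_UGE} with \Cref{prop:last_iterate_bound} for the transient piece, the direct use of \Cref{prop:H_n_0_bound_Markov} for the $H$-piece, and the splitting $Q_{k,t,b_n}-\bA^{-1}=S_{k,t,b_n}-\bA^{-1}G_{k+t:t+b_n-1}$ followed by two applications of \Cref{lem:auxiliary_rosenthal_weighted} with \Cref{prop:st_bound}/\Cref{prop:st_diff_bound} for the noise piece. The only minor discrepancy is that the paper handles $\sum_k\normop{G_{k+t:b_n+t-1}}^2$ via \Cref{prop:g_t:n_upperbound} rather than \Cref{lem:sum_alpha_k_squared}, but either route gives the same order.
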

\begin{proof}
    We split $\bar{D}_{b_n,t}$ into three parts:
    \begin{equation}
        \begin{split}
            T_1 &= \frac{1}{b_n}\sum_{k=1}^{b_n-1} u^{\top}\ProdB_{t+1:t+k} (\theta_t - \thetas)\eqsp, \\T_2 &=  \frac{1}{b_n} \sum_{k=1}^{b_n-1}u^{\top} H_{k,t,b_n}^{(0)} \eqsp,\\ 
            T_3 &=- \frac{1}{b_n} \sum_{k=1}^{b_n-1} u^{\top}(Q_{k,t,b_n} - \bA^{-1}) \funnoisew(\State_{k+t}) \eqsp.
        \end{split}
    \end{equation}
       We start from $T_1$. Using Minkovski's and H\"older's inequality, we obtain 
    \begin{equation}
        \PE_{\xi}^{1/p}[| u^{\top}T_1 |^p] \leq \frac{1}{b_n}\sum_{k=1}^{b_n-1}(\PE^{1/p}_{\xi}[\normop{\ProdB_{t+1:t+k}}^{2p}])^{1/(2p)}(\PE^{1/p}_{\xi}[\normop{(\theta_t - \thetas)}^{2p}])^{1/(2p)}\eqsp.
    \end{equation}
    Applying similar arguments as in \eqref{eq:bound_norm_through_scalar_norm} together with \Cref{prop:products_of_matrices_UGE} and \Cref{prop:last_iterate_bound} we get 
    \begin{align}
        \begin{aligned}
            &\PE_{\xi}^{1/p}[| u^{\top}T_1 |^p] \\& \lesssim \frac{1}{b_n}\sum_{k=1}^{b_n-1} C_{\Gamma}d^{1/\log n+1/2}\exp\biggl\{ -(a/12)\sum_{\ell=t+1}^{t+k}\alpha_\ell\biggr\}\ConstDM_{1} \taumix \sqrt{p \alpha_{t}} \\& + \frac{1}{b_n}\sum_{k=1}^{b_n-1} C_{\Gamma}^2d^{2/\log n+1/2}\exp\biggl\{ -(a/12)\sum_{\ell=1}^{t+k}\alpha_\ell\biggr\} \norm{\theta_0 - \thetas}\eqsp.
        \end{aligned}
    \end{align}
    Using \Cref{lem:bounds_on_sum_step_sizes}, \Cref{lem:bound_sum_exponent} and that $k_0^{1-\gamma} > \frac{24}{ac_0}$ we get 
    \begin{align}
        \begin{aligned}
            &\PE_{\xi}^{1/p}[| u^{\top}T_1 |^p] \\& \lesssim \frac{1}{b_n\sqrt{{\alpha_t}}a(1-\gamma)} C_{\Gamma}d^{1/\log n+1/2}\ConstDM_{1} \taumix \sqrt{p} \\& + \frac{k_0^{\gamma}\exp\{-\frac{ac_0}{24(1-\gamma)}t^{1-\gamma}\}}{b_n ac_0(1-\gamma)} C_{\Gamma}^2d^{2/\log n+1/2} \norm{\theta_0 - \thetas}\eqsp.
        \end{aligned}
    \end{align}
    For $T_2$ we use \Cref{prop:H_n_0_bound_Markov} and Minkovski's inequality and get 
    \begin{align}
\PE_\xi^{1/p}[|T_2|^p] &\le \frac{1}{b_n} \sum_{k=1}^{b_n-1}  \PE_\xi^{1/p}[\|u^\top \Hnalpha{k,t,b_n}{0}\|^p] \le  \frac{\ConstDM_{3} \taumix  p^{2}}{ b_n} \sum_{k = 1+t}^{b_n+t-1} \alpha_{k} \sqrt{\log{(1/\alpha_k)}} \\& \lesssim
\ConstDM_{3} \taumix  p^{2}\sqrt{\log (1/\alpha_{b_n+t-1})}\frac{\sum_{k=1+t}^{b_n+t-1}\alpha_k}{b_n}\eqsp.
\end{align}

We first use \Cref{repr:qtminusa} to show that  
\begin{align}
T_3 = - \frac{1}{b_n}\sum_{\ell=1}^{b_n-1} u^{\top}S_{\ell,t,b_n} \funnoisew(\State_{\ell+t})  +  \frac{1}{b_n}\sum_{\ell=1}^{b_n-1}  u^{\top}\bA^{-1} G_{\ell+t:b_n+t-1} \funnoisew(\State_{\ell+t}) = T_{31} + T_{32} \eqsp,
\end{align}
where $S_{\ell,t,b_n}= \sum_{j = \ell+1}^{b_n-1}(\alpha_{k+t}-\alpha_{j+t})G_{l+t+1:j+t-1}$.
It is easy to see that $T_{31}$ is a weighed linear statistics of MC. We apply Markov property together with \Cref{lem:auxiliary_rosenthal_weighted} and obtain 
\begin{align}
\begin{aligned}
\PE_\xi^{1/p}[|T_{31}|^p] &\lesssim \frac{\taumix p^{1/2}\supconsteps}{b_n}(\sum_{k=2}^{b_n-1}\norm{S_{k,t,b_n}}^2)^{1/2}\\&\quad+\frac{\taumix \supconsteps}{b_n}\bigl(\|S_{1,t,b_n}\| + \|S_{b_n-1,t,b_n}\| + \sum_{k=1}^{b_n-2}\|S_{k+1,t,b_n}-S_{k,t,b_n}\| \bigr)\eqsp.
\end{aligned}
\end{align}
Applying \Cref{prop:st_bound} and \Cref{prop:st_diff_bound}, we get 
\begin{align}
\begin{aligned}
    \PE_\xi^{1/p}[|T_{31}|^p] &\lesssim \frac{\taumix p^{1/2}\supconsteps\sqrt{\qcond}C_{\gamma,a}^{(S)}}{b_n}(\sum_{k=2}^{b_n-1}(k+k_0+t)^{2\gamma-2})^{1/2}\\&\qquad+\frac{\taumix\supconsteps}{b_n} \bigl(\sqrt{\qcond}C_{\gamma,a}^{(S)}((k_0+t+1)^{\gamma-1} + (b_n+k_0+t-1)^{\gamma-1})  \\& \qquad \qquad +\sqrt{\qcond}C_{\gamma,a}^{(S,2)}\sum_{k=1}^{b_n-2}\alpha_{k+1+t} \bigr)\\&
\end{aligned}
\end{align}
For term $T_{32}$ we also apply \Cref{lem:auxiliary_rosenthal_weighted} and get
\begin{align}
\begin{aligned}
    &\PE_\xi^{1/p}[|T_{32}|^p] \lesssim \frac{\norm{\bA^{-1}}\taumix p^{1/2}\supconsteps}{b_n}(\sum_{k=2}^{b_n-1}\norm{G_{k+t:b_n+t-1}}^2)^{1/2}\\&+\frac{\taumix \supconsteps}{b_n}\bigl(\norm{\bA^{-1}}(\| G_{1+t:b_n+t-1}\| + \| G_{b_n+t-1:b_n+t-1}\|) + \sum_{k=1}^{b_n-2}\|\bA^{-1}( G_{k+1+t:b_n+t-1}- G_{k+t:b_n+t-1})\| \bigr)\eqsp.
\end{aligned}
\end{align}
Using \Cref{prop:g_t:n_upperbound}, \Cref{lem:prod_alpha_k} and \Cref{lem:sum_alpha_k_squared} with $b = a/2$ we get 
\begin{align}
\begin{aligned}
    &\PE_\xi^{1/p}[|T_{32}|^p] \lesssim \frac{\norm{\bA^{-1}}\taumix p^{1/2}\supconsteps\sqrt{\qcond}}{\sqrt{ac_0}b_n}(b_n+t+k_0-2)^{\gamma/2}\\&\quad+\frac{\taumix \supconsteps}{b_n}\bigl(\norm{\bA^{-1}}(\alpha_{b_n+t-1}/\alpha_t + \sqrt{\qcond}(1-(a/2)\alpha_{b_n+t-1})) + \sqrt{\qcond}\sum_{k=1}^{b_n-1}\alpha_{k+t}\prod_{i=k+t+1}^{b_n+t-1}(1-(a/2)\alpha_i) \bigr)\\&\qquad \qquad \qquad \lesssim 
    \frac{\norm{\bA^{-1}}\taumix p^{1/2}\supconsteps\sqrt{\qcond}}{\sqrt{a}b_n\sqrt{\alpha_{b_n+t-2}}}\\&\quad+\frac{\taumix \supconsteps \sqrt{\qcond}}{b_n}\bigl(\norm{\bA^{-1}} + (8/a)\bigr) \eqsp.
\end{aligned}
\end{align}
\end{proof}

\begin{lemma}
    \label{lem:bound_barD_n}
    Assume \Cref{assum:UGE}, \Cref{assum:noise-level} and \Cref{assum:step-size}. Then it holds 
    \begin{align}
    \begin{aligned}
        \PE_{\xi}^{1/p}[|u^{\top}\bar D_n|^p]   &\lesssim \frac{\Constupdboot{2,1}}{n} + \Constupdboot{2,2}p^2n^{-\gamma} + \Constupdboot{2,3}p^{1/2}n^{\gamma-3/2} \\& + \Constupdboot{2,4}p^{1/2}n^{\gamma/2-1}+\Constupdboot{2,5}p^2\sqrt{\log n}n^{-\gamma}\eqsp,
    \end{aligned}
    \end{align}
    where 
    \begin{equation}
        \begin{split}
            \Constupdboot{2,1} &=  C_\Gamma d^{1/{\log{n}}}\frac{k_0^\gamma}{ac_0(1-\gamma)}\|\theta_0 - \thetas \| + \taumix \supconsteps (\sqrt{\qcond}(\norm{\bA^{-1}} + 1/a)+\sqrt{\qcond}C_{\gamma,a}^{(S)}k_0^{\gamma-1})\\
            \Constupdboot{2,2}& = \frac{\ConstDM_{3} \taumix \sqrt{\log \frac{k_0^{\gamma}}{c_0}}}{c_0(1-\gamma)} + \frac{\sqrt{\qcond}\taumix \supconsteps C_{\gamma,a}^{(S,2)}c_0}{1-\gamma} \\
            \Constupdboot{2,3}& = \frac{\taumix \supconsteps\sqrt{\qcond}C_{\gamma,a}^{(S)}}{\sqrt{2\gamma-1}}\\      \Constupdboot{2,4}&=\frac{\norm{\bA^{-1}}\taumix\supconsteps\sqrt{\qcond}k_0^{\gamma/2}}{\sqrt{ac_0}}\\
            \Constupdboot{2,5}&=\frac{\ConstDM_{3} \taumix c_0\sqrt{\gamma}}{1-\gamma}
        \end{split}
    \end{equation}
\end{lemma}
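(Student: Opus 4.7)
The plan is to follow the same three-part decomposition used in the proof of \Cref{lem:bound_barD_b_t}, but now applied to the global average from $0$ to $n-1$ started at $\theta_0$ rather than a sub-block. Write
\begin{equation}
u^\top \bar D_n = T_1 + T_2 + T_3\eqsp,
\end{equation}
where $T_1 = \frac{1}{n}\sum_{k=0}^{n-1} u^\top \ProdB_{1:k}(\theta_0-\thetas)$, $T_2 = \frac{1}{n}\sum_{k=1}^{n-1} u^\top H_k^{(0)}$, and $T_3 = -\frac{1}{n}\sum_{\ell=1}^{n-1} u^\top(Q_\ell - \bA^{-1})\funnoisew(\State_\ell)$, and I will estimate each $\PE_\xi^{1/p}[|T_i|^p]$ separately.

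For $T_1$, I will apply Minkowski's inequality together with \Cref{prop:products_of_matrices_UGE} to obtain
\begin{equation}
\PE_\xi^{1/p}[|T_1|^p] \lesssim \frac{C_\Gamma d^{1/\log n}}{n}\|\theta_0 - \thetas\|\sum_{k=0}^{n-1} \exp\bigl\{-(a/12)\sum_{j=1}^k \alpha_j\bigr\}\eqsp.
\end{equation}
With $\alpha_j = c_0/(j+k_0)^\gamma$ and the standing requirement $k_0^{1-\gamma}\geq 24/(ac_0)$, \Cref{lem:bounds_on_sum_step_sizes} and \Cref{lem:bound_sum_exponent} reduce the sum to $\lesssim k_0^\gamma/(ac_0(1-\gamma))$, giving the first $\Constupdboot{2,1}$ contribution of order $1/n$. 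For $T_2$, Minkowski's inequality and \Cref{prop:H_n_0_bound_Markov} yield
\begin{equation}
\PE_\xi^{1/p}[|T_2|^p] \lesssim \frac{\ConstDM_3 \taumix p^2}{n}\sum_{k=1}^{n-1}\alpha_k\sqrt{\log(1/\alpha_k)}\eqsp,
\end{equation}
which after integration and factoring out $\sqrt{\log n}$ produces the $p^2\sqrt{\log n}\,n^{-\gamma}$ and a subleading $p^2 n^{-\gamma}$ piece (the latter absorbing the $\sqrt{\log(k_0^\gamma/c_0)}$ constant into $\Constupdboot{2,2}$).

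The main work is on $T_3$, because $Q_\ell - \bA^{-1}$ does not decay without further manipulation. I will use the identity from \Cref{repr:qtminusa}, $Q_\ell - \bA^{-1} = S_\ell - \bA^{-1} G_{\ell:n-1}$, to split $T_3 = T_{31} + T_{32}$ with
\begin{equation}
T_{31} = -\frac{1}{n}\sum_{\ell=1}^{n-1} u^\top S_\ell \funnoisew(\State_\ell)\eqsp,\qquad T_{32} = \frac{1}{n}\sum_{\ell=1}^{n-1} u^\top \bA^{-1} G_{\ell:n-1}\funnoisew(\State_\ell)\eqsp.
\end{equation}
Both are weighted additive functionals of a uniformly ergodic Markov chain, so I will apply \Cref{lem:auxiliary_rosenthal_weighted} with weights $S_\ell$ and $\bA^{-1} G_{\ell:n-1}$ respectively. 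For $T_{31}$, the bound on $\|S_\ell\|$ from \Cref{prop:st_bound} gives $\sum_{\ell} \|S_\ell\|^2 \lesssim \qcond (C^{(S)}_{\gamma,a})^2 n^{2\gamma-1}/(2\gamma-1)$, producing the $\Constupdboot{2,3}p^{1/2}n^{\gamma-3/2}$ term; the difference bound from \Cref{prop:st_diff_bound} handles the telescoping boundary/increment terms, absorbed into $\Constupdboot{2,2}$ and $\Constupdboot{2,1}$. For $T_{32}$, \Cref{prop:g_t:n_upperbound} yields $\sum_\ell \|G_{\ell:n-1}\|^2 \lesssim \qcond/(ac_0)\cdot(n+k_0)^\gamma$, whence the quadratic Rosenthal term contributes $\Constupdboot{2,4}p^{1/2}n^{\gamma/2-1}$, while \Cref{lem:prod_alpha_k} and \Cref{lem:sum_alpha_k_squared} control the remaining boundary/increment pieces, contributing the final $1/n$ summand of $\Constupdboot{2,1}$.

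The main obstacle is bookkeeping: assembling the decay rates in a form that matches the stated constants in \eqref{def:remcov_car}, in particular making sure that the $T_{31}$ and $T_{32}$ boundary terms (the analogues of $\|S_1\|$, $\|S_{n-1}\|$, and the $G_{\ell}-G_{\ell+1}$ increments) collapse into the $1/n$ level and do not contaminate the leading $p^{1/2} n^{\gamma-3/2}$ and $p^{1/2} n^{\gamma/2-1}$ rates. Everything else is routine summation of power-law weights, identical in spirit to the block version \Cref{lem:bound_barD_b_t}, specialized to $t=0$ and $b_n=n$.
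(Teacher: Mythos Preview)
Your proposal is correct and follows essentially the same approach as the paper: the paper likewise splits $u^\top\bar D_n$ into the transient term, the $H_k^{(0)}$ term, and the $(Q_\ell-\bA^{-1})$ term, handles the first two via \eqref{eq:bound_D11}--\eqref{eq:bound_D22}, and then uses \Cref{repr:qtminusa} to split the third into an $S_\ell$-part and a $\bA^{-1}G_{\ell:n-1}$-part, each controlled by \Cref{lem:auxiliary_rosenthal_weighted} together with \Cref{prop:st_bound}, \Cref{prop:st_diff_bound}, \Cref{prop:g_t:n_upperbound}, \Cref{lem:prod_alpha_k}, and \Cref{lem:sum_alpha_k_squared}.
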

\begin{proof}
    Using \eqref{eq:bound_D11} and \eqref{eq:bound_D22} we get 
    \begin{align}
        \begin{aligned}
            \PE_{\xi}^{1/p}[u^{\top}( \frac{1}{n} \sum_{k = 0}^{n-1} \ProdB_{1:k} (\theta_0 - \thetas) +\frac{1}{n} \sum_{k=1}^{n-1} H_k^{(0)})|^p] &\lesssim C_\Gamma d^{1/{\log{n}}}\frac{1}{ n}\frac{k_0^\gamma}{ac_0(1-\gamma)}\|\theta_0 - \thetas \| \\& +
             \frac{\ConstDM_{3} \taumix  p^{2}c_0}{1-\gamma}(\sqrt{\gamma\log n} + \sqrt{\log \frac{k_0^{\gamma}}{c_0}})n^{-\gamma}\eqsp.
        \end{aligned}
    \end{align}
For simplicity we define 
\begin{equation}
T_4 =  -  u^{\top}\frac{1}{n} 
\sum_{\ell=1}^{n-1} (Q_\ell-\bA^{-1}) \funnoisew(\State_\ell)
\end{equation}

We first use \Cref{repr:qtminusa} to show that  
\begin{align}
T_{4} = - \frac{1}{ n}\sum_{\ell=1}^{n-1} u^{\top}S_\ell \funnoisew(\State_\ell)  +  \frac{1}{ n}\sum_{\ell=1}^{n-1}  u^{\top}\bA^{-1} G_{\ell:n-1} \funnoisew(\State_\ell) = T_{41} + T_{42} \eqsp. 
\end{align}
It is easy to see that $T_{41}$ is a weighed linear statistics of MC. We apply \Cref{lem:auxiliary_rosenthal_weighted}, 
\begin{align}
\PE_\xi^{1/p}[|T_{41}|^p] &\lesssim \frac{\taumix p^{1/2}\supconsteps}{n}(\sum_{k=2}^{n-1}\norm{S_k}^2)^{1/2}\\&\quad+\frac{\taumix \supconsteps}{n}\bigl(\|S_1\| + \|S_{n-1}\| + \sum_{k=1}^{n-2}\|S_{k+1}-S_{k}\| \bigr)\eqsp.
\end{align}
Applying \Cref{prop:st_bound} and \Cref{prop:st_diff_bound}, we get 
\begin{align}
\begin{aligned}
    \PE_\xi^{1/p}[|T_{41}|^p] &\lesssim \frac{\taumix p^{1/2}\supconsteps\sqrt{\qcond}C_{\gamma,a}^{(S)}}{\sqrt{2\gamma-1}}n^{\gamma-3/2}\\&\quad+\frac{\taumix\supconsteps}{n} \bigl(\sqrt{\qcond}C_{\gamma,a}^{(S)}((k_0+1)^{\gamma-1} + (n+k_0-1)^{\gamma-1}) + \frac{\sqrt{\qcond}C_{\gamma,a}^{(S,2)}c_0}{1-\gamma}n^{1-\gamma} \bigr)\\& \lesssim
    \frac{\taumix p^{1/2}\supconsteps\sqrt{\qcond}C_{\gamma,a}^{(S)}}{\sqrt{2\gamma-1}}n^{\gamma-3/2} + \taumix \supconsteps\sqrt{\qcond}C_{\gamma,a}^{(S)}k_0^{\gamma-1}n^{-1} \\& \quad +\frac{\sqrt{\qcond}\taumix \supconsteps C_{\gamma,a}^{(S,2)}c_0}{1-\gamma}n^{-\gamma} 
\end{aligned}
\end{align}
For term $T_{42}$ we also apply \Cref{lem:auxiliary_rosenthal_weighted} and get
\begin{align}
\begin{aligned}
    &\PE_\xi^{1/p}[|T_{42}|^p] \lesssim \frac{\norm{\bA^{-1}}\taumix p^{1/2}\supconsteps}{n}(\sum_{k=2}^{n-1}\norm{G_{k:n-1}}^2)^{1/2}\\&+\frac{\taumix \supconsteps}{n}\bigl(\norm{\bA^{-1}}(\| G_{1:n-1}\| + \| G_{n-1:n-1}\|) + \sum_{k=1}^{n-2}\|\bA^{-1}( G_{k+1:n-1}- G_{k:n-1})\| \bigr)\eqsp.
\end{aligned}
\end{align}
Using \Cref{prop:g_t:n_upperbound}, \Cref{lem:prod_alpha_k} and \Cref{lem:sum_alpha_k_squared} with $b = a/2$ we get 
\begin{align}
\begin{aligned}
    \PE_\xi^{1/p}[|T_{32}|^p] &\lesssim \frac{\norm{\bA^{-1}}\taumix p^{1/2}\supconsteps\sqrt{\qcond}}{\sqrt{ac_0}n}(n+k_0-2)^{\gamma/2}\\&\quad+\frac{\taumix \supconsteps}{n}\bigl(\norm{\bA^{-1}}(\alpha_{n-1}/\alpha_0 + \sqrt{\qcond}(1-(a/2)\alpha_{n-1})) + \sqrt{\qcond}(8/a) \bigr) \\& \lesssim\frac{\norm{\bA^{-1}}\taumix p^{1/2}\supconsteps\sqrt{\qcond}k_0^{\gamma/2}}{\sqrt{ac_0}}n^{\gamma/2-1}\\&\quad+\frac{\taumix \supconsteps (\norm{\bA^{-1}} + \sqrt{\qcond})}{n}
    \eqsp.
\end{aligned}
\end{align}
\end{proof}

\begin{lemma}
    \label{lem:bound_barW_and_barW_b}
    Assume \Cref{assum:UGE}, \Cref{assum:noise-level} and \Cref{assum:step-size}. Then in holds 
    \begin{equation}
         \PE_{\xi}^{1/p}[|u^{\top}\bar{W}_{n}|^p] \lesssim \Constupdboot{3,1}p^{1/2}n^{-1/2} + \frac{\Constupdboot{3,2}}{n}\eqsp,
    \end{equation}
    and
    \begin{equation}
          \PE_{\xi}^{1/p}[|u^{\top}\bar{W}_{b_n,t}|^p] \lesssim \Constupdboot{3,1}p^{1/2}b_n^{-1/2} + \frac{\Constupdboot{3,2}}{b_n}\eqsp,
    \end{equation}
    where
    \begin{equation}
        \begin{split}
            \Constupdboot{3,1}& =\taumix \normop{\bA^{-1}}\supconsteps\\
            \Constupdboot{3,2}& =\taumix \normop{\bA^{-1}}\eqsp.
        \end{split}
    \end{equation}
\end{lemma}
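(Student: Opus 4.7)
The quantities $u^{\top}\bar{W}_n$ and $u^{\top}\bar{W}_{b_n,t}$ are \emph{constant-coefficient} weighted linear statistics of the Markov chain $\{Z_k\}$: writing $A_k \equiv -\bA^{-1}/n$ (respectively $A_k \equiv -\bA^{-1}/b_n$), we have
\begin{equation*}
u^{\top}\bar{W}_n = \sum_{k=1}^{n-1} u^{\top} A_k \funcnoise{Z_k}, \qquad u^{\top}\bar{W}_{b_n,t} = \sum_{k=1}^{b_n-1} u^{\top} A_k \funcnoise{Z_{k+t}}.
\end{equation*}
The plan is therefore to invoke the weighted Rosenthal-type inequality \Cref{lem:auxiliary_rosenthal_weighted}, which has already been applied repeatedly in the proofs of \Cref{lem:bound_barD_b_t} and \Cref{lem:bound_barD_n} (see in particular the treatment of $T_{41}, T_{42}$ and their block analogues), and simply specialize it to constant weights.

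Concretely, \Cref{lem:auxiliary_rosenthal_weighted} will deliver a bound of the form
\begin{equation*}
\PE_{\xi}^{1/p}\bigl[|u^{\top}\bar{W}_n|^p\bigr] \lesssim \taumix p^{1/2} \supconsteps \Bigl(\sum_{k=1}^{n-1} \|A_k\|^2\Bigr)^{1/2} + \taumix \supconsteps \Bigl(\|A_1\| + \|A_{n-1}\| + \sum_{k=1}^{n-2} \|A_{k+1}-A_k\|\Bigr).
\end{equation*}
With $A_k \equiv -\bA^{-1}/n$, the square-sum collapses to $\sqrt{n-1}\,\|\bA^{-1}\|/n \le \|\bA^{-1}\|/\sqrt{n}$, all telescoping differences $A_{k+1}-A_k$ vanish identically, and the two boundary terms contribute $2\|\bA^{-1}\|/n$. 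This directly produces the claimed first bound, with $\Constupdboot{3,1} = \taumix \|\bA^{-1}\|\supconsteps$ for the stochastic term and $\Constupdboot{3,2} = \taumix \|\bA^{-1}\|$ for the deterministic remainder (after absorbing the $\supconsteps$ factor on the boundary term into the constant via $\supconsteps \le 1 \vee \supconsteps$, or keeping it explicit as done elsewhere in the appendix).

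For the block statistic $u^{\top}\bar{W}_{b_n,t}$ the argument is identical after a time shift: by the Markov property the shifted chain $\{Z_{k+t}\}_{k\ge 0}$ still admits $\MKQ$ as its transition kernel, and \Cref{assum:UGE} ensures the uniform geometric ergodicity bound \eqref{eq:tau_mix_contraction} holds uniformly over any initial distribution, so \Cref{lem:auxiliary_rosenthal_weighted} applies uniformly in $t$. Substituting $A_k \equiv -\bA^{-1}/b_n$ and repeating the computation gives the second bound with $n \mapsto b_n$. I do not foresee any genuine obstacle: this lemma is essentially a one-line corollary of the weighted Rosenthal inequality in the degenerate case of constant weights, and the only care needed is to confirm that the constants produced by \Cref{lem:auxiliary_rosenthal_weighted} in the appendix match the form $\Constupdboot{3,1}, \Constupdboot{3,2}$ displayed in the statement.
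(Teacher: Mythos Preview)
Your proposal is correct and matches the paper's approach exactly: the paper's proof is a two-line application of \Cref{lem:auxiliary_rosenthal_weighted} with the constant weights $A_k=-\bA^{-1}/n$ (respectively $-\bA^{-1}/b_n$), observing that the telescoping differences vanish. Your remark about the $\supconsteps$ factor in the boundary term is apt---the paper itself silently drops it from $\Constupdboot{3,2}$ under the $\lesssim$ convention.
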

\begin{proof}
Using \Cref{lem:auxiliary_rosenthal_weighted}, we get 
\begin{equation}
    \PE_{\xi}^{1/p}[|u^{\top}\bar{W}_{n}|^p] \lesssim \frac{\taumix p^{1/2}\normop{\bA^{-1}}\supconsteps}{\sqrt{n}}  + \frac{\taumix \normop{\bA^{-1}}}{n}
\end{equation}
The boundary for $\bar{W}_{b_n,t}$ is obtained similarly.
\end{proof}
\begin{lemma}
\label{lem:bound_D_1_b}
Assume \Cref{assum:UGE}, \Cref{assum:noise-level} and \Cref{assum:step-size}. Then it holds 
    \begin{align}
    \begin{aligned}
        &\PE_{\xi}^{1/p}[| D_1^b|^p] \lesssim M_{1,1}pn^{\gamma}b_n^{-1}
       +M_{1,2}b_n^{-1} + M_{1,3} p^4(\log n)n^{-1} + 
       M_{1,4}pn^{2\gamma-2}\\&
       + M_{2,1}b_nn^{-2} + M_{2,2}p^4b_n(\log n)n^{-2\gamma} + M_{2,3}pb_nn^{2\gamma-3} + M_{2,4}pb_nn^{\gamma-2}\eqsp,
    \end{aligned}
    \end{align}
    where 
    \begin{equation}
    \begin{split}
        M_{1,1} &=  \frac{(2k_0)^{\gamma+1}((\Constupdboot{1,1})^2 + (\Constupdboot{1,5})^2)}{c_0(\gamma+1)} \\
        M_{1,2} &= (\Constupdboot{1,2})^2+ (\Constupdboot{1,6})^2 \\
        M_{1,3} &= (\Constupdboot{1,3})^2(\gamma + \log \frac{k_0^\gamma}{c_0})\frac{c_0^2}{2\gamma-1}\\
        M_{1,4} &= (\Constupdboot{1,4})^2\frac{2^{2\gamma-1}-1}{2\gamma-1}\\
        M_{2,1} & = (\Constupdboot{2,1})^2\\
        M_{2,2} & = (\Constupdboot{2,2})^2 + (\Constupdboot{2,5})^2\\
        M_{2,3} & = (\Constupdboot{2,3})^2\\
        M_{2,4} & = (\Constupdboot{2,4})^2\eqsp.
    \end{split}
\end{equation}
\end{lemma}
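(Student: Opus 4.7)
The plan is as follows. First, I apply the elementary bound $(a-b)^2 \leq 2(a^2 + b^2)$ to decouple the batch terms from the global average:
\[
D_1^b \leq \frac{2 b_n}{n - b_n + 1} \sum_{t=0}^{n-b_n} (u^\top \bar D_{b_n,t})^2 + 2 b_n (u^\top \bar D_n)^2.
\]
Next, I use the identity $\PE_\xi^{1/p}[X^{2p}] = \bigl(\PE_\xi^{1/(2p)}[X^{2p}]\bigr)^2$ together with Minkowski's inequality in $L^p$ to obtain
\[
\PE_\xi^{1/p}[|D_1^b|^p] \lesssim \frac{b_n}{n - b_n + 1} \sum_{t=0}^{n-b_n} \bigl(\PE_\xi^{1/(2p)}[|u^\top \bar D_{b_n,t}|^{2p}]\bigr)^2 + b_n \bigl(\PE_\xi^{1/(2p)}[|u^\top \bar D_n|^{2p}]\bigr)^2.
\]
This reduces the problem to invoking \Cref{lem:bound_barD_b_t} and \Cref{lem:bound_barD_n} with $p$ replaced by $2p$, and squaring the resulting five-term bounds.

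After squaring each source bound via $(a_1+\cdots+a_5)^2 \leq 5\sum_i a_i^2$, the task reduces to summing five individual contributions over $t$. The main sums that appear are $\sum_{t=0}^{n-b_n}\alpha_t^{-1} = c_0^{-1}\sum_t (t+k_0)^\gamma \lesssim (2k_0)^{\gamma+1} n^\gamma/(c_0(\gamma+1))$ after bounding $(n-b_n+k_0)^{\gamma+1}$ against its integration boundary, which feeds $(\Constupdboot{1,1})^2$ and $(\Constupdboot{1,5})^2$ into $M_{1,1}$ once the overall prefactor $b_n/(n-b_n+1)$ is applied; $\sum_t 1 = n-b_n+1$ which produces the $M_{1,2}/b_n$ term; the squared block sum $\bigl(b_n^{-1}\sum_{k=t+1}^{t+b_n-1}\alpha_k\bigr)^2 \lesssim \alpha_t^2$ summed in $t$ which yields $\sum_t\alpha_t^2\lesssim n^{1-2\gamma}/(2\gamma-1)$ and hence the $M_{1,3}$ contribution; and the double sum $\sum_t\sum_{k=2}^{b_n-1}(k+k_0+t)^{2\gamma-2}$ which by the integral test produces the $M_{1,4}$ scaling. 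For the $(u^\top \bar D_n)^2$ term, the bound of \Cref{lem:bound_barD_n} (with $2p$) squares term-by-term; once multiplied by $b_n$ it yields exactly the five $M_{2,i}$ contributions with the claimed exponents in $n$ and $b_n$.

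The main obstacle is not analytic but combinatorial bookkeeping: after squaring, I must verify that each of the $5+5$ cross-diagonal contributions is dominated by one of the named $M_{i,j}$ and that the polynomial-in-$p$ factors propagate correctly (the $p^{1/2}$ terms square to $p$, the $p^2$ terms square to $p^4$, in agreement with the statement). The subtlest points are to collapse $\sum_t (t+k_0)^\gamma$ into the $(2k_0)^{\gamma+1}$ prefactor without losing the correct $n^\gamma$ scaling, and to check that the fifth term of \Cref{lem:bound_barD_b_t} involving $\alpha_{t+b_n-2}^{-1}$ merges consistently with the first one after re-indexing $t \mapsto t + b_n - 2$ inside the sum. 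Beyond this, every remaining step is a routine application of Minkowski's and AM--GM inequalities together with the moment lemmas already proven.
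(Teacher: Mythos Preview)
Your approach is essentially identical to the paper's: both decouple $(\bar D_{b_n,t}-\bar D_n)^2$ into the two squares, pass to $L^p$ via Minkowski, invoke \Cref{lem:bound_barD_b_t} and \Cref{lem:bound_barD_n} at level $2p$, square, and sum over $t$. A couple of your intermediate displays are miswritten—$\sum_{t=0}^{n-b_n}\alpha_t^{-1}$ scales like $n^{\gamma+1}$, not $n^{\gamma}$ (the extra power of $n$ cancels against the prefactor $1/(n-b_n+1)$), and $\sum_t\alpha_t^2$ is bounded by a constant $c_0^2/(2\gamma-1)$ rather than $n^{1-2\gamma}/(2\gamma-1)$—but these are bookkeeping slips that do not affect the strategy or the final rates.
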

\begin{proof}
% First, we rewrite the term $D_1^b$ in the following form:
% \begin{equation}
% \begin{split}
%     D_1^b &= D_{11}^b + D_{12}^b + D_{13}^b\eqsp,\\
%     D_{11}^b &= \frac{b_n}{n-b_n+1}\sum_{t=0}^{n-b_n} (u^{\top} \bar D_{b_n,t})^2\eqsp,\\
%     D_{12}^b &= \frac{b_n}{n-b_n+1}\sum_{t=0}^{n-b_n} (u^{\top} \bar D_{n})^2\eqsp,\\
%     D_{13}^b &= -\frac{2b_n}{n-b_n+1}\sum_{t=0}^{n-b_n} ( u^{\top} \bar D_{b_n,t}) (u^{\top} \bar  D_{n})\eqsp,\\
% \end{split}
% \end{equation}
Note that using Minkowski's inequality and H\"older's inequality, we have 
\begin{align}
    \PE_{\xi}^{1/p}[|D_1^b|^p] &\leq \frac{b_n}{n-b_n+1}\sum_{t=0}^{n-b_n}\biggl(\PE^{1/p}_{\xi}[|u^{\top}\bar D_{b_n,t}|^{2p} + \PE^{1/p}_{\xi}[|u^{\top}\bar D_{b}|^{2p} \\& \qquad \qquad + 2\PE^{1/(2p)}_{\xi}[|u^{\top}\bar D_{b_n,t}|^{2p} \PE^{1/(2p)}_{\xi}[|u^{\top}\bar D_{b}|^{2p}] \biggr) \\&\leq \frac{2b_n}{n-b_n+1}\sum_{t=0}^{n-b_n}\biggl\{ \PE^{1/p}_{\xi}[|u^{\top}\bar D_{b_n,t}|^{2p} + \PE^{1/p}_{\xi}[|u^{\top}\bar D_{b}|^{2p}\biggr\}
\end{align}
Using \Cref{lem:bound_barD_b_t} and \Cref{lem:bounds_on_sum_step_sizes}
we obtain 
\begin{align}
    \begin{aligned}
       &\frac{2b_n}{n-b_n+1}\sum_{t=0}^{n-b_n} \PE^{1/p}_{\xi}[|u^{\top}\bar D_{b_n,t}|^{2p} \lesssim  \frac{b_n}{n-b_n+1}\biggl\{\sum_{t=0}^{n-b_n} \frac{(\Constupdboot{1,1})^2p}{b_n^2\alpha_t}+ \frac{(\Constupdboot{1,2})^2}{b_n^2} \\&+ (\Constupdboot{1,3})^2p^4\log (1/\alpha_{b_n+t-1})\frac{(\sum_{k=1+t}^{b_n+t-1}\alpha_k)^2}{b_n^2}\\ &+(\Constupdboot{1,4})^2\frac{ p}{b_n^2}\sum_{k=2}^{b_n-1}(k+k_0+t)^{2\gamma-2}+ \frac{(\Constupdboot{1,5})^2pb_n^{-2}}{\alpha_{t+b_n-2}} + (\Constupdboot{1,6})^2\frac{(k_0+t-1)^{2\gamma-2}}{b_n^2}\biggr\} \\& \lesssim
       M_{1,1}pn^{\gamma}b_n^{-1} + M_{1,2}b_n^{-1} + M_{1,3}p^4\log nn^{-1} + M_{1,4}pn^{2\gamma-2}
    \end{aligned}
\end{align}

Using \Cref{lem:bound_barD_n}, we get  
\begin{align}
    \begin{aligned}
       \frac{2b_n}{n-b_n+1}\sum_{t=0}^{n-b_n}\PE^{1/p}_{\xi}[|u^{\top}\bar D_{b}|^{2p} &\lesssim M_{2,1}b_nn^{-2} + M_{2,2}p^4b_n(\log n)n^{-2\gamma} + M_{2,3}pb_nn^{2\gamma-3} + M_{2,4}pb_nn^{\gamma-2}\eqsp.
    \end{aligned}
\end{align}
\end{proof}
\begin{lemma}
    \label{lem:bound_D_2_b}
    Assume \Cref{assum:UGE}, \Cref{assum:noise-level} and \Cref{assum:step-size}. Then it holds 
    \begin{align}
        \begin{aligned}
            \PE_{\xi}^{1/p}[|D_2^b|^p]&\lesssim M_{3,1}p^{1/2}b_n^{1/2}n^{-1} + M_{3,2}pb_n^{1/2}n^{\gamma-3/2} + M_{3,3}pb_n^{1/2}n^{\gamma/2-1} +M_{3,4}p^{5/2}\sqrt{\log n}n^{-\gamma}b_n^{1/2} \\&+ M_{3,5}p^{1/2}b_n^{-1/2}  + M_{3,6}p^{5/2}\sqrt{\log n}b_n^{1/2}n^{-\gamma}+M_{3,7}pb_n^{-1/2}n^{\gamma-1/2} + M_{3,8}pb_n^{-1/2}n^{\gamma/2}\eqsp,
        \end{aligned}
    \end{align}
    where
    \begin{equation}
        \begin{split}
           M_{3,1} &= \Constupdboot{2,1}(\Constupdboot{3,1}+\Constupdboot{3,2})\\
           M_{3,2} &= \Constupdboot{2,3}(\Constupdboot{3,1}+\Constupdboot{3,2}) \\ 
           M_{3,3} &= \Constupdboot{2,4}(\Constupdboot{3,1}+\Constupdboot{3,2}) \\
           M_{3,4} &=(\Constupdboot{2,5}+ \Constupdboot{2,2})(\Constupdboot{3,1}+\Constupdboot{3,2}) 
        \end{split}
    \end{equation}
    \begin{equation}
        \begin{split}
            M_{3,5} &= (\Constupdboot{1,2} + \Constupdboot{1,6})(\Constupdboot{3,1}+\Constupdboot{3,2}) \\
           M_{3,6} &= \frac{\Constupdboot{1,3}}{1-\gamma} (\sqrt{\gamma} + \sqrt{\log\frac{k_0^\gamma}{c_0}})(\Constupdboot{3,1}+\Constupdboot{3,2}) \\
           M_{3,7} &=\frac{\Constupdboot{1,4}(2k_0)^{\gamma+1/2}}{\sqrt{2\gamma-1}(\gamma+1/2)}(\Constupdboot{3,1}+\Constupdboot{3,2})  \\
           M_{3,8} &= \frac{(2k_0)^{\gamma/2+1}(\Constupdboot{1,5}+\Constupdboot{1,1})}{\sqrt{c_0}(\gamma/2 + 1)} (\Constupdboot{3,1}+\Constupdboot{3,2})
        \end{split}
    \end{equation}
\end{lemma}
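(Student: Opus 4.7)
\textbf{Proof proposal for \Cref{lem:bound_D_2_b}.}

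The plan is to reduce the bound on $D_2^b$ to products of already‐controlled moments of $\bar W_{b_n,t}-\bar W_n$ and $\bar D_{b_n,t}-\bar D_n$. First I would apply Minkowski's inequality to move the outer sum over $t$ past the $L^p$ norm, obtaining
\[
\PE_\xi^{1/p}[|D_2^b|^p]\;\le\;\frac{2b_n}{n-b_n+1}\sum_{t=0}^{n-b_n}\PE_\xi^{1/p}\bigl[\bigl|u^\top(\bar W_{b_n,t}-\bar W_n)(\bar D_{b_n,t}-\bar D_n)^\top u\bigr|^p\bigr].
\]
Then I would apply Cauchy–Schwarz inside the expectation (which inflates $p$ to $2p$ in the individual moment bounds, a harmless change since our moment lemmas hold for any $p\ge 2$), followed by the triangle inequality to split
\[
\PE_\xi^{1/(2p)}[|u^\top(\bar W_{b_n,t}-\bar W_n)|^{2p}]\;\le\;\PE_\xi^{1/(2p)}[|u^\top\bar W_{b_n,t}|^{2p}]+\PE_\xi^{1/(2p)}[|u^\top\bar W_n|^{2p}],
\]
and analogously for $\bar D_{b_n,t}-\bar D_n$.

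Next I would plug in the moment bounds from \Cref{lem:bound_barW_and_barW_b}, \Cref{lem:bound_barD_b_t}, and \Cref{lem:bound_barD_n}. The $\bar W$-factor is bounded by $(\Constupdboot{3,1}+\Constupdboot{3,2})\,p^{1/2}b_n^{-1/2}$ (using $b_n\le n$ to merge the $\bar W_{b_n,t}$ and $\bar W_n$ contributions into the dominant $b_n^{-1/2}$ term); this factor is independent of $t$ and can be pulled out of the sum. The $\bar D$-factor splits into two families: a $t$-independent piece coming from $\bar D_n$ (which produces the constants $M_{3,1},\dots,M_{3,4}$ after multiplying by the $\bar W$ factor), and a $t$-dependent piece coming from $\bar D_{b_n,t}$ (producing $M_{3,5},\dots,M_{3,8}$).

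Then I would carry out the sum over $t\in\{0,\dots,n-b_n\}$ term by term. The $\bar D_n$ bound is $t$-free, so the sum just contributes a factor $(n-b_n+1)/b_n\cdot b_n=n$ up to the prefactor $b_n/(n-b_n+1)\cdot (n-b_n+1)=b_n$, yielding the $M_{3,1}$–$M_{3,4}$ rates after combining with the $b_n^{-1/2}$ from the $\bar W$ factor. For the $\bar D_{b_n,t}$ contributions, the $t$-dependent terms are of the form $\alpha_t^{-1/2}$, $\alpha_{t+b_n-2}^{-1/2}$, $(k_0+t)^{\gamma-1}$, or $\sum_{k=1+t}^{b_n+t-1}\alpha_k$; each sum over $t$ reduces to an integral estimate already appearing in \Cref{lem:bounds_on_sum_step_sizes}, and after multiplication by the $b_n/(n-b_n+1)$ prefactor and by the $\bar W$-factor $p^{1/2}b_n^{-1/2}$, these produce exactly the rates $b_n^{1/2}n^{-\gamma}$, $b_n^{-1/2}$, $b_n^{-1/2}n^{\gamma-1/2}$, and $b_n^{-1/2}n^{\gamma/2}$ advertised in the statement.

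The main obstacle, as in the proof of \Cref{lem:bound_D_1_b}, will be the bookkeeping: \Cref{lem:bound_barD_b_t} has six distinct contributions, each with its own $t$-dependence, and their pairwise interaction with the two components of \Cref{lem:bound_barW_and_barW_b} would in principle generate many cross terms. I would handle this by grouping terms according to the dominant rate (absorbing subleading pieces into logarithmic or constant factors) and by using the convention $b_n\le n$ together with $\alpha_t=c_0(t+k_0)^{-\gamma}$ to uniformly upper bound $\alpha_t^{-1/2}\le c_0^{-1/2}(2k_0)^{\gamma/2}\cdot (t/k_0)^{\gamma/2}$ when $t\ge k_0$, and similarly at the other endpoint.
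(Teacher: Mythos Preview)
Your proposal is correct and follows essentially the same route as the paper: Minkowski to pass the sum over $t$ outside the $L^p$ norm, then H\"older/Cauchy--Schwarz to decouple the $\bar W$ and $\bar D$ factors (inflating $p$ to $2p$), then the triangle inequality to split each difference, followed by plugging in \Cref{lem:bound_barW_and_barW_b}, \Cref{lem:bound_barD_b_t}, \Cref{lem:bound_barD_n} and summing over $t$ with \Cref{lem:bounds_on_sum_step_sizes}. The paper also observes that the $\bar W_{b_n,t}$, $\bar W_n$, and $\bar D_n$ moment bounds are $t$-independent and pulls them out of the sum before summing the $\bar D_{b_n,t}$ contributions, exactly as you describe.
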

\begin{proof}
Note that using Minkowski's inequality together with H\"older's inequality we get 
\begin{align}
    \begin{aligned}
         \PE_{\xi}^{1/p}[|D_2^b|^p]&\lesssim  \frac{b_n}{n-b_n+1} \sum_{t = 0}^{n-b_n} \PE_{\xi}^{1/(2p)}[|u^\top \bar{W}_{b_n,t}|^{2p}] (\PE_{\xi}^{1/(2p)}[|u^\top \bar{D}_{b_n,t}|^{2p}] + \PE_{\xi}^{1/(2p)}[|u^\top \bar{D}_{n}|^{2p}]) \\&+ 
    \frac{b_n}{n-b_n+1} \sum_{t = 0}^{n-b_n} \PE_{\xi}^{1/(2p)}[|u^\top \bar{W}_{n}|^{2p}] (\PE_{\xi}^{1/(2p)}[|u^\top \bar{D}_{b_n,t}|^{2p}] + \PE_{\xi}^{1/(2p)}[|u^\top \bar{D}_{n}|^{2p}]) 
    \end{aligned}
\end{align}
 Note that bound on $\PE_{\xi}^{1/(2p)}[|u^\top \bar{W}_{b_n,t}|^{2p}]$, $\PE_{\xi}^{1/(2p)}[|u^\top \bar{W}_{n}|^{2p}]$, $\PE_{\xi}^{1/(2p)}[|u^\top \bar{D}_{n}|^{2p}]$ does not depend upon $t$, hence, we can rewrite formula above as follows 
 \begin{align}
    \begin{aligned}
         \PE_{\xi}^{1/p}[|D_2^b|^p]&\lesssim  \frac{b_n}{n-b_n+1} \biggl\{\sum_{t = 0}^{n-b_n} \PE_{\xi}^{1/(2p)}[|u^\top \bar{D}_{b_n,t}|^{2p}]\biggr\} (\PE_{\xi}^{1/(2p)}[|u^\top \bar{W}_{b_n,t}|^{2p}] + \PE_{\xi}^{1/(2p)}[|u^\top \bar{W}_{n}|^{2p}]) \\&+ 
    b_n \PE_{\xi}^{1/(2p)}[|u^\top \bar{D}_{n}|^{2p}] (\PE_{\xi}^{1/(2p)}[|u^\top \bar{W}_{b_n,t}|^{2p}] + \PE_{\xi}^{1/(2p)}[|u^\top \bar{W}_{n}|^{2p}]) 
    \end{aligned}
\end{align}
Applying \Cref{lem:bound_barD_b_t} and \Cref{lem:bounds_on_sum_step_sizes} we get 
\begin{align}
    \begin{aligned}
         \frac{b_n}{n-b_n+1} &\sum_{t=0}^{n-b_n}\PE_{\xi}^{1/(2p)} [|( u^{\top}\bar D_{b_n,t})|^{2p}\lesssim \frac{b_n}{n-b_n+1} \sum_{t=0}^{n-b_n} \biggl\{\frac{\Constupdboot{1,1}p^{1/2}}{b_n\sqrt{{\alpha_t}}}+ \frac{\Constupdboot{1,2}}{b_n} \\&+ \Constupdboot{1,3}p^2\sqrt{\log (1/\alpha_{b_n+t-1})}\frac{\sum_{k=1+t}^{b_n+t-1}\alpha_k}{b_n}\\ &+\Constupdboot{1,4}\frac{ p^{1/2}}{b_n}(\sum_{k=2}^{b_n-1}(k+k_0+t)^{2\gamma-2})^{1/2}+ \frac{\Constupdboot{1,5}p^{1/2}b_n^{-1}}{\sqrt{\alpha_{t+b_n-2}}} + \Constupdboot{1,6}\frac{(k_0+t-1)^{\gamma-1}}{b_n}\biggr\}
    \end{aligned}
\end{align}
Using \Cref{lem:bound_barD_n} and \Cref{lem:bound_barW_and_barW_b} we get 
\begin{align}
     \PE_{\xi}^{1/p}[|D_2^b|^p]&\lesssim \biggl\{\Constupdboot{2,1}b_nn^{-1} + \Constupdboot{2,3}p^{1/2}b_nn^{\gamma-3/2} + \Constupdboot{2,4}p^{1/2}b_nn^{\gamma/2-1} \\& +(\Constupdboot{2,5}+ \Constupdboot{2,2})p^2\sqrt{\log n}n^{-\gamma}b_n + \Constupdboot{1,2}  + \frac{\Constupdboot{1,3}}{1-\gamma}p^2 (\sqrt{\gamma} + \sqrt{\log\frac{k_0^\gamma}{c_0}})\sqrt{\log n}b_nn^{-\gamma} \\&+\frac{\Constupdboot{1,4}(2k_0)^{\gamma+1/2}}{\sqrt{2\gamma-1}(\gamma+1/2)}p^{1/2}n^{\gamma-1/2}  + \frac{(2k_0)^{\gamma/2+1}(\Constupdboot{1,5}+\Constupdboot{1,1})p^{1/2}(n-b_n+1)^{\gamma/2}}{\sqrt{c_0}(\gamma/2 + 1)}\biggr\}\cdot\\&\cdot(\Constupdboot{3,1}+\Constupdboot{3,2})p^{1/2}(n^{-1/2}+b_n^{-1/2})
\end{align}
\end{proof}
 Finally, from \Cref{lem:bound_D_1_b} and \Cref{lem:bound_D_2_b} it holds that 
 \begin{align}
     \PE_{\xi}^{1/p}\bigl[ \bigl| \RemCov_{var}(u) \bigr|^{p} \bigr] &\lesssim M_1pb_n^{1/2}n^{\gamma/2-1} + M_2p^{4}(\log n)b_n^{1/2}n^{-\gamma} \\&+ M_3pb_{n}^{-1/2}n^{\gamma/2} + M_4p^4(\log n)n^{-1} + M_5pn^{2\gamma-2}\eqsp,
 \end{align}
where 
\begin{equation}
\label{def:const_M_i}
    \begin{split}
        M_1 &= M_{3,1} + M_{3,2} + M_{3,3}+M_{2,3} + M_{2,4}\\
        M_2 &= M_{3,4} + M_{3,6} + M_{2,2}\\
        M_3 &= M_{3,5}+M_{3,7}+M_{3,8}+ M_{1,1} + M_{1,2}\\
        M_4 &= M_{1,3} + M_{2,1}\\
        M_5 &= M_{1,4}\eqsp.
    \end{split}
\end{equation}

\subsection{Proof of \Cref{coro:first-concentration_OBM}}
\label{sec:proof_concentration_OBM}
From \Cref{prop:block_bootstrap_batch_mean} and \Cref{prop:concentration_OBM} we get 
\begin{align}
    \begin{aligned}
        \PE^{1/p}_{\xi}[\bigl|\hat{\sigma}^2_{\theta}(u) - \sigma^2(u)\bigr|^p] &\lesssim \frac{p \taumix^3 \supconsteps^2}{\sqrt{n}} + \frac{p^2 \taumix^2 \sqrt{b_n} \supconsteps^2}{\sqrt{n}} + \frac{p \taumix^{2}\supconsteps^2}{\sqrt{b_n}}\\& \qquad+M_1pb_n^{1/2}n^{\gamma/2-1} + M_2p^{4}(\log n)b_n^{1/2}n^{-\gamma} \\&\qquad + M_3pb_{n}^{-1/2}n^{\gamma/2} + M_4p^4(\log n)n^{-1} + M_5pn^{2\gamma-2}\\&\lesssim
        p^4(\taumix^2\supconsteps^2 + M_2+M_1)(\log n)\frac{b_n^{1/2}}{\sqrt{n}} + p^4(\taumix^2\supconsteps^2 + M_3) \frac{n^{\gamma/2}}{\sqrt{b_n}} \\&\qquad + p^4(\taumix^3\supconsteps^2 + M_4)\frac{(\log n)}{\sqrt{n}} + p^4M_5n^{2\gamma-2}
    \end{aligned}
\end{align}
Hence, applying \Cref{lem:markov_inequality} with $p = \log n$ we obtain that with probability $1-\frac{1}{n}$ it holds
\begin{align}
    \begin{aligned}    \label{eq:high_prob_bound_sigma_theta}
        \bigl|\hat{\sigma}^2_{\theta}(u) - \sigma^2(u)\bigr| &\lesssim \rme (\taumix^2\supconsteps^2 + M_2+M_1) (\log n)^5\frac{b_n^{1/2}}{\sqrt{n}} \\&+ \rme (\log n)^4(\taumix^2\supconsteps^2 + M_3) \frac{n^{\gamma/2}}{\sqrt{b_n}}\\& + \rme(\taumix^3\supconsteps^2 + M_4)\frac{(\log n)^5}{\sqrt{n}}\\& + \rme (\log n)^4M_5n^{2\gamma-2}
    \end{aligned}    
\end{align}
By setting $b_n= b^\beta$, we can optimize the inequality above by $\gamma$ and $\beta$. And by putting $\beta = 3/4$ and $\gamma = 1/2 + \eps$ where $\eps < 1/\log n$ we finally get   that with probability $1-\frac{1}{n}$ it holds
\begin{align}
    \begin{aligned}
        &\bigl|\hat{\sigma}^2_{\theta}(u) - \sigma^2(u)\bigr| \\ & \qquad \lesssim \rme(\taumix^2\supconsteps^2 + M_2+M_1 + \taumix^2\supconsteps^2 + M_3 + \taumix^3\supconsteps^2 + M_4 + \rme^2M_5)(\log n)^5n^{-1/8 + \eps/2}
    \end{aligned}
\end{align}

\subsection{Full version of proof of \Cref{th:bootstrap_validity_main}}
\label{sec:proof_bootstrap_validity_main}
We begin this section with a lemma on Gaussian comparison.
\begin{lemma}[Theorem~1.3 in \cite{devroye2018total}]
\label{lem:Pinsker}
Let $\xi_i \sim \mathcal{N}(0, \sigma_i^2)$, $i = 1,2$. Assume
$$
|\sigma_1^2/\sigma_2^2 - 1| \le \delta,
$$
for some $\delta \geq 0$. Then
$$
\sup_{x \in \rset} |\PP(\xi_1 \le x) - \PP(\xi_2 \le x)| \le \frac{3}{2}\delta.
$$
\end{lemma}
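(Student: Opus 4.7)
The plan is a direct pointwise estimate on the cumulative distribution functions.

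First, I dispense with the case $\delta \geq 2/3$, where the claim is trivial since any Kolmogorov distance is bounded by $1 \leq 3\delta/2$. Assume henceforth $\delta < 2/3$. By symmetry of centered Gaussians about $0$, it suffices to bound $\sup_{x \geq 0}|\Phi(x/\sigma_1) - \Phi(x/\sigma_2)|$, where $\Phi$ is the standard normal c.d.f. By exchanging the roles of $\sigma_1,\sigma_2$ if needed, I take WLOG $\sigma_1 \leq \sigma_2$; the hypothesis $|\sigma_1^2/\sigma_2^2 - 1| \leq \delta$ then yields $\sigma_2/\sigma_1 \leq (1-\delta)^{-1/2}$.

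For $x \geq 0$, since $x/\sigma_2 \leq x/\sigma_1$ and the standard normal density $\phi$ is decreasing on $[0,\infty)$,
\begin{equation*}
|\Phi(x/\sigma_1) - \Phi(x/\sigma_2)| \;=\; \int_{x/\sigma_2}^{x/\sigma_1}\phi(t)\,\mathrm{d}t \;\leq\; \phi(x/\sigma_2)\cdot\left(\frac{x}{\sigma_1}-\frac{x}{\sigma_2}\right).
\end{equation*}
Multiplying and dividing by $x/\sigma_2$ and applying the elementary single-variable inequality $t\,\phi(t) \leq (2\pi e)^{-1/2}$ (the maximum of $t\mapsto t\,\phi(t)$ on $[0,\infty)$, attained at $t = 1$), I obtain
\begin{equation*}
|\Phi(x/\sigma_1)-\Phi(x/\sigma_2)| \;\leq\; \frac{1}{\sqrt{2\pi e}}\left(\frac{\sigma_2}{\sigma_1}-1\right) \;\leq\; \frac{(1-\delta)^{-1/2}-1}{\sqrt{2\pi e}}.
\end{equation*}

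The last step is the scalar verification $(1-\delta)^{-1/2} - 1 \leq (3\delta/2)\sqrt{2\pi e}$ on $[0,2/3]$, which follows immediately: rewriting $(1-\delta)^{-1/2} - 1 = \delta/\bigl(\sqrt{1-\delta}\,(1+\sqrt{1-\delta})\bigr)$ and using $\sqrt{1-\delta} \geq 1/\sqrt{3}$ on this range shows $(1-\delta)^{-1/2} - 1 \lesssim \delta$ with a constant well below $\sqrt{2\pi e} \approx 4.13$. The symmetric case $\sigma_1 > \sigma_2$ is completely analogous: the hypothesis now reads $\sigma_1/\sigma_2 \leq \sqrt{1+\delta}$, and the same argument with $\phi(x/\sigma_1)$ replacing $\phi(x/\sigma_2)$ yields the even cleaner bound $(\sqrt{1+\delta}-1)/\sqrt{2\pi e} \leq \delta/(2\sqrt{2\pi e})$. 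The only real obstacle is bookkeeping around the asymmetric form of the hypothesis together with the trivial case $\delta \geq 2/3$; the constant $3/2$ in the lemma is far from sharp (the argument above gives roughly $0.4\,\delta$), but it is perfectly adequate for the applications in the paper.
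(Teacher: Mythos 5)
Your proof is correct, but it takes a genuinely different route from the paper's. The paper computes the Kullback--Leibler divergence between the two centered Gaussians exactly, bounds it by $\tfrac12\delta^2$ on the regime $\sigma_1^2/\sigma_2^2-1>-2/3$ (handling the complementary regime trivially, as you do with $\delta\ge 2/3$), and then invokes Pinsker's inequality to control the total variation distance, which dominates the Kolmogorov distance. You instead make a direct pointwise c.d.f.\ comparison: write the difference as $\int_{x/\sigma_2}^{x/\sigma_1}\phi(t)\,\rmd t$, bound the integrand by its value at the left endpoint, and use $t\phi(t)\le (2\pi\rme)^{-1/2}$; your case split on which of $\sigma_1,\sigma_2$ is larger correctly tracks the asymmetry of the hypothesis $|\sigma_1^2/\sigma_2^2-1|\le\delta$. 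Your argument is more elementary (no information-theoretic input) and yields a sharper constant (roughly $0.27\delta$ rather than $\tfrac32\delta$), whereas the paper's KL--Pinsker route bounds the stronger total variation metric, which is what the cited reference actually controls; since the application only needs the Kolmogorov distance, either suffices.
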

\begin{proof}
    Let $\PP_i = \mathcal N(0, \sigma_i^2)$. The Kullback-Leibler divergence between $\PP_1$ and $\PP_2$ is equal to
    $$
    \operatorname{KL(\PP_1, \PP_2)} = -0.5 \log (\sigma_1^2/\sigma_2^2) + 0.5 (\sigma_1^2/\sigma_2^2 - 1) = 0.5 (\sigma - \log (1 + \sigma)),
    $$
    where $\sigma = \sigma_1^2/\sigma_2^2-1$. If $\sigma > -2/3$ hence,
    $$
    \operatorname{KL(\PP_1, \PP_2)} \le 0.5 \sigma^2 \le 0.5 \delta^2
    $$
    If $\sigma \leq -2/3$ then 
    $$
    \operatorname{TV(\PP_1, \PP_2)}\leq 1 \leq (3/2) |\sigma| \leq (3/2)\delta
    $$
    It remains to apply the Pinsker inequality
    $$
\sup_{x \in \rset} |\PP(\xi_1 \le x) - \PP(\xi_2 \le x)| \le \operatorname{TV(\PP_1, \PP_2)} \le \sqrt{\operatorname{KL(\PP_1, \PP_2)}/2} \le 3/2 \delta. 
$$  
\end{proof}
Using \eqref{eq:high_prob_bound_sigma_theta} together with the inequality $\sigma^{-2}(u) \leq 1/\lambda_{\min}(\Sigma_{\infty})$ we get that with probability $1-1/n$ it holds 
\begin{align}
    \begin{aligned}   
        \bigl|\hat{\sigma}^2_{\theta}(u) - \sigma^2(u)\bigr|/\sigma^2(u) &\lesssim \frac{\rme}{\lambda_{\min}(\Sigma_{\infty})} (\taumix^2\supconsteps^2 + M_2+M_1) (\log n)^5\frac{b_n^{1/2}}{\sqrt{n}} \\&+ \frac{\rme}{\lambda_{\min}(\Sigma_{\infty})}  (\log n)^4(\taumix^2\supconsteps^2 + M_3) \frac{n^{\gamma/2}}{\sqrt{b_n}}\\& + \frac{\rme}{\lambda_{\min}(\Sigma_{\infty})} (\taumix^3\supconsteps^2 + M_4)\frac{(\log n)^5}{\sqrt{n}}\\& + \frac{\rme}{\lambda_{\min}(\Sigma_{\infty})}  (\log n)^4M_5n^{2\gamma-2}
    \end{aligned}    
\end{align}
Now we apply \Cref{lem:Pinsker} and obtain that with probability $1-1/n$ it holds 
\begin{align}
    \begin{aligned}
        \kolmogorov{\mathcal{N}(0, \hat\sigma_{\theta}^2(u)), \mathcal{N}(0, \sigma^2(u))} &\lesssim \frac{1}{\lambda_{\min}(\Sigma_{\infty})} (\taumix^2\supconsteps^2 + M_2+M_1) (\log n)^5\frac{b_n^{1/2}}{\sqrt{n}} \\&+ \frac{1}{\lambda_{\min}(\Sigma_{\infty})}  (\log n)^4(\taumix^2\supconsteps^2 + M_3) \frac{n^{\gamma/2}}{\sqrt{b_n}}\\& + \frac{1}{\lambda_{\min}(\Sigma_{\infty})}(\taumix^3\supconsteps^2 + M_4)\frac{(\log n)^5}{\sqrt{n}}\\& + \frac{1}{\lambda_{\min}(\Sigma_{\infty})}  (\log n)^4M_5n^{2\gamma-2}\eqsp.
    \end{aligned}
\end{align}
Combining this inequality with \Cref{cor:normal_approximation_markov} we obtain with probability $1-1/n$
\begin{align}
    \sup_{x \in \rset} |\PP(\sqrt n (\bar{\theta}_{n} - \thetas)^\top u \le x ) &- \PPb(\bar{\theta}_{n, b_n}(u) \le x)| \lesssim (\ConstK[1]+\ConstK[2]+ \Constupd{1}\|\theta_0 - \thetas \| + \Constupd{2})\frac{\log {n}}{n^{1/4}} \\&+ (\Constupd{4}+\Constupd{3})\frac{(\log n)^{5/2}}{n^{\gamma-1/2}} \\&+ \frac{1}{\lambda_{\min}(\Sigma_{\infty})} (\taumix^2\supconsteps^2 + M_2+M_1) (\log n)^5\frac{b_n^{1/2}}{\sqrt{n}} \\&+ \frac{1}{\lambda_{\min}(\Sigma_{\infty})}  (\log n)^4(\taumix^2\supconsteps^2 + M_3) \frac{n^{\gamma/2}}{\sqrt{b_n}}\\& + \frac{1}{\lambda_{\min}(\Sigma_{\infty})}(\taumix^3\supconsteps^2 + M_4)\frac{(\log n)^5}{\sqrt{n}}\\& + \frac{1}{\lambda_{\min}(\Sigma_{\infty})}  (\log n)^4M_5n^{2\gamma-2}
\end{align}
To complete the proof, it remains to optimize the bound above. Setting $b_n = \lceil n^{4/5} \rceil$ and $\gamma = 3/5$, we obtain with probability $1-1/n$ 
\begin{align}
    \begin{aligned}
          \sup_{x \in \rset} |\PP(\sqrt n (\bar{\theta}_{n} - \thetas)^\top u \le x ) &- \PPb(\bar{\theta}_{n, b_n}(u) \le x)| \lesssim (\ConstK[1]+\ConstK[2]+ \Constupd{1}\|\theta_0 - \thetas \| + \Constupd{2})\frac{\log {n}}{n^{1/4}} \\&+ \frac{1}{\lambda_{\min}(\Sigma_{\infty})}(\taumix^3\supconsteps^2 + M_4)\frac{(\log n)^5}{\sqrt{n}} \\&+ \biggl(\Constupd{3}+\Constupd{4}+\frac{\taumix^3\supconsteps^2 + M_1 + M_2 + M_3 + M_5}{\lambda_{\min}(\Sigma_{\infty})}\biggr)(\log n)^5n^{-1/10} \eqsp.
    \end{aligned}
\end{align}

\section{Applications to the TD learning algorithm}
\label{appendix:td_learning}

Recall that the TD learning algorithm within the framework of linear stochastic approximation (LSA) can be written as
\begin{equation}
\label{eq:LSA_procedure_TD_appendix}
\theta_{k} = \theta_{k-1} - \alpha_{k} (\funcAw_{k} \theta_{k-1} - \funcbw_{k})\eqsp,
\end{equation}
where the matrices $\funcAw_{k}$ and vectors $\funcbw_{k}$ are defined by
\begin{equation}
\label{eq:matr_A_def_appendix}
\begin{split}
\funcAw_{k} &= \varphi(s_k)\{\varphi(s_k) - \lambda \varphi(s_{k+1})\}^{\top}\eqsp, \\
\funcbw_{k} &= \varphi(s_k) r(s_k,a_k)\eqsp.
\end{split}
\end{equation}

Our primary objective is to estimate the agent's \emph{value function}, defined as
\begin{equation}
\label{eq:value_function_def_append}
\textstyle
V^{\pi}(s) = \PE\left[\sum_{k=0}^{\infty}\lambda^{k}r(s_k,a_k)\middle| s_0 = s\right]\eqsp,
\end{equation}
where $a_{k} \sim \pi(\cdot \mid s_k)$ and $s_{k+1} \sim \PMDP(\cdot \mid s_{k}, a_{k})$ for all $k \in \nset$. In this context, the TD learning updates \eqref{eq:LSA_procedure_TD_appendix} correspond to approximating solutions to the deterministic linear system $\bA \thetas = \barb$ (see \cite{tsitsiklis:td:1997}), where the matrix $\bA$ and right-hand side $\barb$ are given by
\begin{align}
\label{eq:system_matrix}
\bA &= \PE_{s \sim \mu, s' \sim \PMDP_{\pi}(\cdot\mid s)} \left[\varphi(s)\{\varphi(s)-\lambda \varphi(s')\}^{\top}\right] \\
\barb &= \PE_{s \sim \mu, a \sim \pi(\cdot\mid s)}\left[\varphi(s) r(s,a)\right]\eqsp.
\end{align}
The rest of the proof of \Cref{prop:assumption_check_TD} reduces to checking the properties of $\bA$ and direct verification of conditions of \Cref{prop:hurwitz_stability}, which is done by the lines of \cite[Proposition 2]{samsonov2024gaussian}

\paragraph{Numerical experiments.} We consider the simple instance of the Garnet problem \cite{archibald1995generation, geist2014off}. This problem is characterized by the number of states $N_s$, number of actions $a$, and branching factor $b$. Here $b$ corresponds to the number of states $s'$, that can be reached when starting from a given state-action pair $(s,a)$. We set the hyperparameter values $N_s = 6$, $a=2$, $b=3$, feature dimension $d = 2$, and discount factor $\lambda = 0.8$. We aim to evaluate the value function of policy $\pi(\cdot|s)$, which is given, for any $a \in \seta = \{1,2\}$, by the expression 
\[
\pi(a|s) = \frac{U_{a}^{(s)}}{\sum_{i=1}^{|\seta|}U_{i}^{(s)}}\eqsp,
\]
where the $U_{i}^{(s)}$ are \iid\ observations with $\mathcal{U}[0,1]$. We consider the problem of policy evaluation in this MDP using the TD learning algorithm with randomly generated feature mapping, that is, we generate the matrix 
\[
\Phi \in \rset^{N_s \times d}
\]
with \iid\ $\mathcal{N}(0,1)$ entries, and then take $\phi(s)$, $s \in \{1,\ldots,|S|\}$, to its $s$-th row, normalized by its euclidean norm: $\phi(s) = \Phi_s / \norm{\Phi_s}$. We run the procedure \eqref{eq:lsa} with the learning rates $\alpha_{k} = c_{0}/(k_0 + k)^{\gamma}$ with $\gamma = 2/3$ with appropriately chosen $c_0$ and $k_0$. We generate random vector $u$ form unitary sphere, and compute coverage probabilities for $u^{\top} \thetas$ for confidence levels $\{0.8, 0.9, 0.95\}$. The detailed setting of the experiments follows \cite{Fang2018}. Results are given in \Cref{tab:obm_coverage_cleaned} and illustrates the consistency of multiplier subsample bootstrap procedure applied on the Garnet problem.

\begin{table}[h!]
\centering
\caption{Coverage probabilities of OBM estimation for the empirical distribution.}
\resizebox{\textwidth}{!}{%
\begin{tabular}{c|ccc|ccc|ccc|c}
\toprule
\multirow{2}{*}{$n$ \, | \, $b_n$} 
& \multicolumn{3}{c|}{\textbf{0.95}} 
& \multicolumn{3}{c|}{\textbf{0.9}} 
& \multicolumn{3}{c|}{\textbf{0.8}} 
& \multirow{2}{*}{\shortstack{stddev \\ $\times 10^3$}} \\
\cmidrule(lr){2-4} \cmidrule(lr){5-7} \cmidrule(lr){8-10}
& \scriptsize $\hat{\sigma}^2_\theta(u)$ 
& \scriptsize $\sigma^2(u)$ 
&  
& \scriptsize $\hat{\sigma}^2_\theta(u)$ 
& \scriptsize $\sigma^2(u)$ 
&  
& \scriptsize $\hat{\sigma}^2_\theta(u)$ 
& \scriptsize $\sigma^2(u)$ 
&  \\
\midrule
20480 \ | \ 250     & 0.873 & 0.881 &  & 0.773 & 0.805 &  & 0.641 & 0.662 &  & 10.89 \\
204800 \ | \ 1200   & 0.935 & 0.945 &  & 0.880 & 0.892 &  & 0.768 & 0.784 &  & 3.49 \\
1024000\ | \ 3600  & 0.942 & 0.948 &  & 0.887 & 0.897 &  & 0.769 & 0.788 &  & 1.56 \\
\bottomrule
\end{tabular}%
}
\label{tab:obm_coverage_cleaned}
\end{table}

Code to reproduce experiments is given in \url{https://anonymous.4open.science/r/markov_lsa_normal_approximation-B85A}. The experiments were conducted on Google Colab using a virtual machine with the following specifications:
\begin{itemize}
    \item CPU: Intel(R) Xeon(R) CPU @ 2.20GHz, 8 virtual cores (1 socket, 4 physical cores, 2 threads per core)
    \item Memory: ~53.5 GB RAM
    \item No GPU was used
\end{itemize}

%\section{Quadratic forms}
%\label{appendix:quadratic}
%\input{quadratic_forms}

\section{Probability inequalities }
\label{appendix:Markov_technical}
Denote by $\Phi$ the c.d.f. of a standard Gaussian random variable and set
$$
\kolmogorov{X} = \sup_{x \in \mathbb{R}}|\PP(X \le x) - \Phi(x)|. 
$$
\begin{proposition}
\label{prop:nonlinearapprox}
For any random variables $X, Y$, and any $p \geq 1$,
\begin{equation}
\label{eq:shao_zhang_bound}
\kolmogorov{X+Y} \leq \kolmogorov{X} + 2 \PE^{1/(p+1)}[|Y|^p]\eqsp. 
\end{equation}  
\end{proposition}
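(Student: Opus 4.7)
The statement is a classical smoothing inequality, controlling the Kolmogorov distance of a perturbed statistic $X+Y$ in terms of $\kolmogorov{X}$ plus a moment penalty in $Y$. My plan is to proceed by the standard truncation argument combined with the Lipschitz property of the standard normal CDF $\Phi$.

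The first step is to fix $x \in \rset$ and $\varepsilon > 0$ (to be optimized later), and observe the two-sided decomposition
\begin{equation}
\PP(X+Y \leq x) \leq \PP(X \leq x + \varepsilon) + \PP(|Y| > \varepsilon), \quad \PP(X+Y \leq x) \geq \PP(X \leq x - \varepsilon) - \PP(|Y| > \varepsilon).
\end{equation}
The tail term is controlled by Markov's inequality: $\PP(|Y| > \varepsilon) \leq \PE[|Y|^p]/\varepsilon^p$. For the location shift, the definition of the Kolmogorov distance gives $\PP(X \leq x \pm \varepsilon) \leq \Phi(x \pm \varepsilon) + \kolmogorov{X}$ (and the analogous lower bound), so it remains to control $|\Phi(x + \varepsilon) - \Phi(x)|$. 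Since $\Phi$ has density $\varphi(y) = (2\pi)^{-1/2}\exp(-y^2/2)$ bounded by $(2\pi)^{-1/2}$, one has $|\Phi(x + \varepsilon) - \Phi(x)| \leq \varepsilon/\sqrt{2\pi}$.

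Combining the three bounds yields, uniformly in $x$,
\begin{equation}
|\PP(X+Y \leq x) - \Phi(x)| \leq \kolmogorov{X} + \frac{\varepsilon}{\sqrt{2\pi}} + \frac{\PE[|Y|^p]}{\varepsilon^p}.
\end{equation}
The final step is to optimize $\varepsilon$. Rather than computing the exact minimum, it is enough to take $\varepsilon = \PE^{1/(p+1)}[|Y|^p]$, in which case the two remainder terms combine to $(1 + 1/\sqrt{2\pi})\, \PE^{1/(p+1)}[|Y|^p] \leq 2\, \PE^{1/(p+1)}[|Y|^p]$, completing the proof.

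There is no real obstacle here; the only choice to make is not to minimize the error exactly (which gives a slightly smaller but messier constant involving factors of $p$), but instead to use the clean choice $\varepsilon = \PE^{1/(p+1)}[|Y|^p]$ which produces the stated absolute constant $2$ independent of $p$. The argument is standard and appears in the same form in e.g.~\cite{shao2022berry}, from which this bound is borrowed and applied in \eqref{eq:D_remainder_term_main} of the main text.
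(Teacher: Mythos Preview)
Your proof is correct and follows essentially the same route as the paper: truncate at level $\varepsilon$, apply Markov's inequality for the tail $\PP(|Y|>\varepsilon)$, use the Lipschitz bound $|\Phi(x\pm\varepsilon)-\Phi(x)|\le \varepsilon/\sqrt{2\pi}$, and then set $\varepsilon=\PE^{1/(p+1)}[|Y|^p]$ so that the two remainder terms sum to $(1+1/\sqrt{2\pi})\,\PE^{1/(p+1)}[|Y|^p]\le 2\,\PE^{1/(p+1)}[|Y|^p]$. The paper's proof is identical up to notation (it uses $t$ in place of your $\varepsilon$) and the same choice of truncation level.
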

\begin{proof}
Let $t \geq 0$. By Markov's inequality
\begin{multline}
\PP(X + Y \le x) \le \PP(X + Y \le x, |Y| \le t) + \frac{1}{ t^{p}} \PE [|Y|^p] \\\leq  \PP(X \leq x + t) - \Phi(x+t) + \Phi(x+t) + \frac{1}{ t^{p}} \PE [|Y|^p] \\ 
    \le \Phi(x) + \sup_{x \in \rset}|\PP(X \le x) - \Phi(x)| + \frac{t}{\sqrt 2\pi} + \frac{1}{ t^{p}} \PE [|Y|^p].  
\end{multline}   
Choosing $t = \PE^{1/(p+1)}[|Y|^p]$ we obtain
\begin{equation}
\sup_{x \in \rset}(\PP(X + Y \le x) - \Phi(x)) \le \sup_{x \in \rset}|\PP(X \le x) - \Phi(x)| + 2 \PE^{1/(p+1)}[|Y|^p]
\end{equation}
Similarly, we may estimate $\sup_{x \in \rset}(\Phi(x) - \PP(X + Y \le x) )$. Hence, \eqref{eq:shao_zhang_bound} holds.
\end{proof}
\begin{remark}
The result similar to \Cref{prop:nonlinearapprox} was previously obtained in \cite[Lemma 1]{ElMachkouriOuchti2007}. It states that for random variables $X$ and $Y$ and any $p \geq 1$, it holds that 
$$
\kolmogorov{X+Y} \leq 2 \kolmogorov{X} + 3 \|\PE[|Y|^{2p}|X] \|_1^{1/(2p+1)}.
$$
\end{remark}

Let $X = (X_1, \ldots, X_n$) be a sequence of real valued random variables which are square integrable and satisfy $\PE[X_i|\mathcal{F}_{i-1}] = 0$, a.s. for $1 \le i \le n$, where $\mathcal F_i = \sigma(X_1, \ldots, X_i)$. 
We denote the class of such sequences of length $n$ by $\mart_n$. Denote 
\begin{align}
\sigma_j^2 &= \PE[X_j^2|\mathcal F_{j-1}], \quad \hat{\sigma}_j^2 = \PE[X_j^2], \\
s_n^2 &= \sum_{j=1}^n \hat{\sigma}_j^2, \\
\|X\|_p &= \max_{1\le j\le n} \|X_j\|_p, \quad 1 \le p \leq \infty\eqsp,
\end{align}
and, with $1 \leq k \leq n$, we have
$$
V_k^2 = \sum_{j=1}^k \sigma_j^2/s_n^2, \quad S_k = \sum_{j=1}^k X_j\eqsp.
$$
\begin{theorem}[Theorem 2 in \cite{bolthausen1982martingale}] Let $0 < \boundconstmart < \infty$. There exists a constant $0 < L(\boundconstmart) < \infty$ depending only on $\boundconstmart$, such that for all $X \in \mart_n, n \geq 2$, satisfying
$$
\|X\|_\infty \leq \boundconstmart, \quad V_n^2 = 1 \text{ a.s.}\eqsp,
$$
the following bound holds
$$
\kolmogorov{S_n/s_n} \le L(\boundconstmart) n \log n/s_n^3.
$$
\end{theorem}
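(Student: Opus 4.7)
The plan is to invoke Stein's method for martingales, following the blueprint of Bolthausen (which is the source being cited). Specifically, for each $x \in \rset$, let $f_x$ denote the bounded solution of the Stein equation
\begin{equation}
f_x'(w) - w f_x(w) = \indin{w \le x} - \Phi(x)\eqsp.
\end{equation}
It is classical that $\|f_x\|_\infty \le \sqrt{2\pi}/4$, $\|f_x'\|_\infty \le 1$, and that $f_x'$ is Lipschitz away from $x$ with $(w-x)^+$–type singularity at $x$. Thus bounding $\kolmogorov{S_n/s_n}$ reduces to bounding $\bigl|\PE[f_x'(W) - W f_x(W)]\bigr|$ uniformly in $x$, where $W = S_n/s_n$.

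The main decomposition exploits the martingale property. Writing $W_j = \sum_{i \le j} X_i/s_n$, one has $\PE[(X_j/s_n) f_x(W_{j-1})] = 0$ by the martingale condition, so
\begin{equation}
\PE[W f_x(W)] = \sum_{j=1}^n \PE\bigl[(X_j/s_n)\,\{f_x(W) - f_x(W_{j-1})\}\bigr]\eqsp.
\end{equation}
A first-order Taylor expansion of $f_x$ around $W_{j-1}$ gives a leading term of the form $\sum_j \PE[(X_j^2/s_n^2) f_x'(W_{j-1})]$, which after using the conditional variance collapses to $\PE[V_n^2 f_x'(W_{j-1})]$; the hypothesis $V_n^2 = 1$ a.s. makes this match $\PE[f_x'(W_{j-1})]$, so it cancels the $\PE[f_x'(W)]$ side up to a telescoping difference. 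The remaining error terms are: (i) a higher-order Taylor remainder involving $X_j^3$, controlled via $\|X\|_\infty \le \boundconstmart$; (ii) differences $f_x'(W_{j-1}) - f_x'(W)$, controlled using the Lipschitz behavior of $f_x'$ away from $x$; and (iii) the "bad set" where $W_{j-1}$ lies near $x$, where only boundedness of $f_x'$ may be used.

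The hard part, and the source of the $\log n$ factor, is item (iii): because $f_x'$ is not globally Lipschitz (it jumps at $x$), one has to dyadically decompose the $j$–sum according to how close $W_{j-1}$ is to the point $x$, apply a concentration/anti-concentration estimate for $W_{j-1}$, and sum over scales. This standard Stein-method argument contributes a $\log n$ overhead relative to the smooth-test-function rate. Once the combinatorics is handled, the three error contributions combined yield a total of order $\boundconstmart^3 \sum_{j=1}^n (s_j/s_n)^3 / s_n^3 \lesssim \boundconstmart^3 n/s_n^3$, times the $\log n$ from the indicator smoothing, which is exactly the claimed $L(\boundconstmart)\, n \log n / s_n^3$. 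The constant $L(\boundconstmart)$ accumulates only polynomially in $\boundconstmart$ through the moment bounds $\|X\|_\infty \le \boundconstmart$ used in steps (i) and (iii).
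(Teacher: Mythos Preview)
The paper does not supply its own proof of this statement: it is quoted verbatim as Theorem~2 of Bolthausen (1982) and used as a black box inside the proof of \Cref{lem:martingale_berry_esseen}. The only additional remark the paper makes is that the constant $L(\boundconstmart)$ can be made explicit via R\"ollin's later work. So there is nothing in the paper to compare your argument against.

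That said, your sketch is in the right spirit but misidentifies where the $\log n$ actually comes from in Bolthausen's proof. Bolthausen does start from the Stein equation and the martingale decomposition you wrote, but the crucial step is \emph{recursive}: to control the ``bad set'' contribution (your item~(iii)), he needs an anti-concentration bound of the type $\PP(|W_{j-1}-x|\le \delta)\lesssim \delta + \kolmogorov{S_{j-1}/s_{j-1}}$, i.e.\ the very quantity being estimated reappears on the right-hand side. This yields a recursive inequality for $\delta_n=\sup\kolmogorov{S_n/s_n}$ over all admissible martingales with given $s_n$, and solving that recursion is what produces the $\log n$ factor --- not a dyadic-scale summation over a fixed smoothing of $f_x'$. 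Your description of a dyadic decomposition plus a separate concentration estimate would work for i.i.d.\ sums (where anti-concentration is available a priori), but for general bounded martingales with $V_n^2=1$ a.s.\ one does not have such an estimate independently of the target bound, which is why Bolthausen's self-referential recursion is the key idea.
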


The constant $L(\boundconstmart)$ can be quantified following the work of Adrian R\"ollin, see \cite{ROLLIN2018171}.

\begin{lemma} 
\label{lem:martingale_berry_esseen}
Let $0 < \boundconstmart < \infty$. There exists a constant $0 < L(\boundconstmart) < \infty$ depending only on $\boundconstmart$, such that for all $X \in \mart_n, n \geq 2$, satisfying
$\|X\|_\infty \le \boundconstmart$,
the following bound holds for any $p \geq 1$
\begin{multline}
\kolmogorov{S_n/s_n} \leq \frac{L(\boundconstmart) (2n+1) \log(2n+1)}{s_n^3} \\+ C_1 \sqrt{p} s_n^{-\frac{2p}{2p+1}} \big(\PE|\sum\nolimits_{i=1}^n \sigma_i^2 - s_n^2|^p\big)^{1/(2p+1)} + C_2 s_n^{-\frac{2p}{2p+1}} 
 p  \boundconstmart^{2p/(2p+1)}.
\end{multline}
where we have defined the constants $C_1 = 2\sqrt{2} \bConst{\sf{Rm}, 1}$, $C_2 = 4( \bConst{\sf{Rm}, 1} +  \bConst{\sf{Rm}, 2})$. 
\end{lemma}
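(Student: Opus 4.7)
The plan is to reduce to the Bolthausen-type theorem quoted immediately above the lemma statement via a martingale-extension argument, and then transfer the bound back to $S_n/s_n$ using the smoothing inequality of \Cref{prop:nonlinearapprox} together with Pinelis' Rosenthal inequality for martingales.

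First I would construct, on a suitably enlarged probability space, an extended martingale-difference sequence $\tilde X_1,\dots,\tilde X_m \in \mart_m$ of length $m \le 2n+1$ with the following properties: (i) $\tilde X_j = X_j$ for $j \le n$ and the enlarged filtration $\tilde{\mcf}_j$ restricts to $\mcf_j$ for $j\le n$; (ii) $\|\tilde X\|_\infty \le \boundconstmart$; (iii) the conditional variances $\tilde\sigma_j^2 = \PE[\tilde X_j^2 \mid \tilde{\mcf}_{j-1}]$ satisfy $\sum_{j=1}^m \tilde\sigma_j^2 = s_n^2$ almost surely, so that $\tilde V_m^2 = 1$. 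When $\sum_{j=1}^n \sigma_j^2 \le s_n^2$, I pad with fresh bounded independent increments (for example rescaled Rademacher random variables) carrying the remaining variance; when $\sum_{j=1}^n \sigma_j^2 > s_n^2$, I introduce a stopping time $\tau\le n$ at the level $s_n^2$ and modify the $\tau$-th increment. The bound $m\le 2n+1$ comes from $\sigma_j^2\le \boundconstmart^2$, which controls the number of padding steps required to absorb the total excess/deficit of variance.

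Second, since the extended sequence satisfies the hypotheses of Bolthausen's theorem (with $\|\tilde X\|_\infty \le \boundconstmart$ and $\tilde V_m^2 = 1$ a.s.), that result applied with the same normalization $s_n$ gives
\begin{equation*}
\kolmogorov{\tilde S_m/s_n} \;\le\; \frac{L(\boundconstmart)\,(2n+1)\log(2n+1)}{s_n^3}\eqsp.
\end{equation*}
Third, I apply \Cref{prop:nonlinearapprox} with $X = \tilde S_m/s_n$, $Y = (S_n - \tilde S_m)/s_n$, and exponent $2p$ in place of $p$, yielding
\begin{equation*}
\kolmogorov{S_n/s_n} \;\le\; \kolmogorov{\tilde S_m/s_n} + 2\, s_n^{-2p/(2p+1)}\,\bigl(\PE|S_n - \tilde S_m|^{2p}\bigr)^{1/(2p+1)}\eqsp.
\end{equation*}

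Fourth, the increments $S_n - \tilde S_m$ form a martingale-difference sum of bounded increments (bounded by $\boundconstmart$) with total conditional variance equal to $\bigl|s_n^2 - \sum_{j=1}^n \sigma_j^2\bigr|$ by construction. Pinelis' martingale Rosenthal inequality \cite[Th.~4.1]{pinelis_1994} (with constants $\bConst{\sf Rm,1}, \bConst{\sf Rm,2}$ from \Cref{tab:univ_constants}) then gives
\begin{equation*}
\PE^{1/(2p)}|S_n - \tilde S_m|^{2p} \;\le\; \bConst{\sf Rm,1}\sqrt{2p}\,\PE^{1/(2p)}\Bigl|\txts\sum_{i=1}^n \sigma_i^2 - s_n^2\Bigr|^{p} + 2\bConst{\sf Rm,2}\,p\,\boundconstmart\eqsp.
\end{equation*}
Raising both sides to the $2p/(2p+1)$-th power and applying $(a+b)^{\alpha}\le a^\alpha + b^\alpha$ for $\alpha\in(0,1]$, I obtain the two remainder terms with the precise constants $C_1 = 2\sqrt{2}\,\bConst{\sf Rm,1}$ and $C_2 = 4(\bConst{\sf Rm,1}+\bConst{\sf Rm,2})$ after consolidating the $\sqrt{p}$ and $p$ factors.

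The main obstacle will be Step~1: constructing the extension so that $\tilde V_m^2 = 1$ exactly while keeping $m \le 2n+1$, preserving the martingale property with respect to a filtration compatible with the original one, and maintaining $\|\tilde X\|_\infty \le \boundconstmart$. The padding construction is routine when the partial variance lies below $s_n^2$, but the stopping-time modification when $\sum_{j=1}^n\sigma_j^2>s_n^2$ requires care to keep the modified increment bounded and to correctly account for the contribution to $S_n-\tilde S_m$. Once this bookkeeping is in place, the remaining steps are a direct combination of Bolthausen's theorem, \Cref{prop:nonlinearapprox}, and Pinelis' inequality.
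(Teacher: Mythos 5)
Your plan follows essentially the same route as the paper's proof: extend the martingale (via a stopping time at variance level $s_n^2$ plus bounded Rademacher padding) to a sequence of length at most $2n+1$ with quadratic characteristic exactly $s_n^2$, apply Bolthausen's theorem to the extension, and transfer back with \Cref{prop:nonlinearapprox} and Pinelis' Rosenthal inequality. The only small imprecision is that the conditional variance of $S_n-\tilde S_m$ is not exactly $\bigl|s_n^2-\sum_j\sigma_j^2\bigr|$ but only bounded by $\sum_j\sigma_j^2-s_n^2+2\boundconstmart^2$ (the overshoot at $\tau$ and the rounding in the padding each cost up to $\boundconstmart^2$), which is precisely why the $\boundconstmart^{2p/(2p+1)}$ term appears in the final bound.
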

\begin{proof} We adapt the arguments from \cite{fan2019exact} based on  \cite{bolthausen1982martingale}. We specify the constants and dependence on $p$ since we need to have possibility to take $p$ of logarithmic order with respect to $n$. Consider the following stopping time $\tau = \sup \{ 0\le k \le n: V_k^2 \le 1\}$. Denote 
$$
r = \lfloor (s_{\tau}^2 - \sum_{j=1}^{\tau} \sigma_j^2)/\boundconstmart^2 \rfloor\eqsp.
$$
Note that $r \le n$. Let $N = 2n+1$. Construct the following sequence $X^\prime = (X_1^\prime, \ldots , X_{N}^\prime)$:
$$
X_i^\prime = X_i, i \le \tau; \quad X_i^\prime = \boundconstmart \eta_i, \tau + 1 \le i \le \tau + r; \quad X_{i}^\prime = (s_{n}^2 - \sum_{i=1}^\tau \sigma_i^2 - r \boundconstmart^2)^{1/2} \eta_i, i = \tau+r+1;
$$
and $X_i^\prime = 0, i \geq \tau + r + 2$. Here $\eta_i$ are Rademacher random variables, which are independent of all other r.v.'s. Then $X^\prime$ is a vector of martingale increments w.r.t. an extended filtration $\mathcal{F}^{\prime}_{i} = \sigma(X_j,\eta_j, j \leq i)$, and, by construction 
$$
\sum_{i=1}^N \PE[(X_i^\prime)^2|\mathcal{F}^{\prime}_{i-1}] = \sum_{i=1}^\tau \sigma_i^2 + r \boundconstmart^2 + s_n^2 -  \sum_{i=1}^\tau \sigma_i^2 - r \boundconstmart^2 = s_n^2. 
$$
Applying \Cref{prop:nonlinearapprox} with $X = S_n/s_n$ and $Y = S_N^\prime/s_n$, we get 
\begin{equation}
\label{eq:S_n_S_N_prime_kolmogorov}
\kolmogorov{S_n/s_n} \leq \kolmogorov{S_N^\prime/s_n} + 2 s_n^{-\frac{2p}{2p+1}} (\PE[|S_n - S_N^\prime|^{2p}])^{1/(2p+1)}. 
\end{equation}
We control the term $\kolmogorov{S_N^\prime/s_n}$ with \cite[Theorem 2]{bolthausen1982martingale}:
\[
\kolmogorov{S_N^\prime/s_n} \leq \frac{L(\boundconstmart) N \log N}{s_n^3}\eqsp.
\]
In order to control the second term in the right-hand side of \eqref{eq:S_n_S_N_prime_kolmogorov}, we notice that
$$
S_n - S_N^\prime = \sum_{i \geq \tau+1}(X_i - X_i^\prime) = \sum_{i=1}^{N} \indi{\tau \leq i-1} (X_i - X_i^\prime)\eqsp.
$$
Since $\tau$ is a stopping time, we can condition on it and get by the Rosenthal's inequality 
$$
\PE[|S_n - S_N^\prime|^{2p}] \le \bConst{\sf{Rm}, 1}^{2p} p^p \PE[|\sum_{i = \tau+1}^N \mathbb E[(X_i - X_i^\prime)^2|\mathcal{F}^{\prime}_{i-1}]|^p] + \bConst{\sf{Rm}, 2}^{2p} p^{2p} \mathbb E[\max_{\tau+1 \le i \le N} |X_i - X_i^\prime|^{2p}]
$$
It is easy to see that 
$$
\sum_{i = \tau+1}^N \mathbb E[(X_i - X_i^\prime)^2|\mathcal{F}^{\prime}_{i-1}] = \sum_{i = \tau+1}^n \mathbb E[X_i^2|\mathcal{F}^{\prime}_{i-1}] + \sum_{i = \tau+1}^N \mathbb E[(X_i^\prime)^2|\mathcal{F}^{\prime}_{i-1}] = \sum_{i=1}^n \sigma_i^2 + s_n^2 - 2 \sum_{i=1}^\tau \sigma_i^2.
$$
Note that 
$$
s_n^2 - \boundconstmart^2 \leq \sum_{i=1}^\tau \sigma_i^2 \le s_n^2\eqsp.
$$
Hence, it holds that
$$
\sum_{i = \tau+1}^N \mathbb E[(X_i - X_i^\prime)^2|\mathcal{F}^{\prime}_{i-1}] \leq \sum_{i=1}^n \sigma_i^2  - s_n^2 + 2 \boundconstmart^2.
$$
Finally,
$$
\PE[|S_n - S_N^\prime|^{2p}] \le 2^{p-1} \bConst{\sf{Rm}, 1}^{2p} p^p \bigl(\PE|\sum_{i=1}^n \sigma_i^2  - s_n^2|^p + 2^p \boundconstmart^{2p}\bigr) + \bConst{\sf{Rm}, 2}^{2p} (2p)^{2p}  \boundconstmart^{2p})^{1/(2p+1)}.
$$
Substituting the above bound into \eqref{eq:S_n_S_N_prime_kolmogorov}, we obtain 
\begin{multline}
\kolmogorov{S_n/s_n} \le  \kolmogorov{S_N^\prime/s_n} \\+ 2 s_n^{-\frac{2p}{2p+1}} (2^{p-1} \bConst{\sf{Rm}, 1}^{2p} p^p (\PE|\sum_{i=1}^n \sigma_i^2  - s_n^2|^p + 2^p\boundconstmart^{2p}) + \bConst{\sf{Rm}, 2}^{2p} (2p)^{2p}  \boundconstmart^{2p})^{1/(2p+1)}\eqsp,
\end{multline}
and the statement follows.
\end{proof}

We now give a lemma that allows us to specify the result of \Cref{lem:martingale_berry_esseen} to the martingale $M$ and its quadratic characteristic $\langle M \rangle_{n}$, given in \eqref{eq:M_quadratic_characteristic}. 
Define 
\begin{equation}
\label{eq:definition-sigma-infty}
\sigma_\infty(u) = \sup_{z \in \Zset} |u^{\top} \bA^{-1} \funnoisew(z)|  \, .
\end{equation}
\begin{lemma}
\label{lem: concentration of quadratic characteristic}
Assume \Cref{assum:UGE}. Then for any $u \in \sphere_{d-1}$, probability measure $\xi$ on $(\Zset,\Zsigma)$, and $p \geq 2$:
\begin{equation}
\PE_{\xi}^{1/p}[|u^\top \langle  M \rangle_n u - n \sigma^2(u) |^p] \leq  13^2\,  \sigma_\infty^2(u) p^{1/2} \taumix^{5/2} n^{1/2}\eqsp.
\end{equation}
\end{lemma}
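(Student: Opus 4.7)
Write $u^{\top}\langle M\rangle_{n} u = \sum_{\ell=1}^{n-2} g_{\ell}(Z_{\ell})$, where
\[
g_{\ell}(z) \;=\; u^{\top} Q_{\ell+1}\, \tilde\funnoisew(z)\, Q_{\ell+1}^{\top} u
\]
is a scalar bounded function on $\Zset$. The natural split is
\[
u^{\top}\langle M\rangle_{n} u - n\sigma^{2}(u) \;=\; \underbrace{\sum_{\ell=1}^{n-2}\bigl(g_{\ell}(Z_{\ell}) - \pi(g_{\ell})\bigr)}_{\text{stochastic part }T_{\mathrm{st}}} \;+\; \underbrace{\sum_{\ell=1}^{n-2}\bigl(\pi(g_{\ell}) - \sigma^{2}(u)\bigr)}_{\text{bias }T_{\mathrm{bi}}}\eqsp,
\]
where $\pi(g_{\ell}) = u^{\top} Q_{\ell+1}\noisecov Q_{\ell+1}^{\top}u$ follows from $\pi(\tilde\funnoisew)=\noisecov$. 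The core of the argument is a sharp sup-norm bound on $g_{\ell}$: the key observation is that $\pi(u^{\top}A^{-1}\funnoisew) = 0$ (since $\pi(\funnoisew)=0$), so the Poisson expansion $\hat\funnoisew = \sum_{k\geq 0}\MKQ^{k}\funnoisew$ combined with \Cref{assum:UGE} gives $|u^{\top}A^{-1}\hat\funnoisew(z)| \leq 2\sigma_{\infty}(u)\sum_{k\geq 0}(1/4)^{\lceil k/\taumix\rceil} \lesssim \taumix\sigma_{\infty}(u)$. Plugging this into the definition $\tilde\funnoisew = \MKQ(\hat\funnoisew\hat\funnoisew^{\top}) - \MKQ\hat\funnoisew\,\MKQ\hat\funnoisew^{\top}$ yields $|u^{\top}A^{-1}\tilde\funnoisew(z)A^{-\top}u| \lesssim \taumix^{2}\sigma_{\infty}^{2}(u)$, and then using $Q_{\ell+1} = A^{-1} + (S_{\ell+1} - A^{-1}G_{\ell+1:n-1})$ together with \Cref{prop:Qell:bound}, \Cref{prop:st_bound} and the uniform bound on $\|Q_{\ell+1}\|$ propagates the estimate to $\|g_{\ell}\|_{\infty} \lesssim \taumix^{2}\sigma_{\infty}^{2}(u)$.

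For the stochastic part, I would apply a Rosenthal-type inequality for weighted additive functionals of uniformly geometrically ergodic Markov chains, in the spirit of \Cref{lem:auxiliary_rosenthal_weighted} used in \Cref{appendix:last_iterate}. Since each $g_{\ell} - \pi(g_{\ell})$ is a centered bounded function of $Z_{\ell}$ (with uniform bound from the previous paragraph), and the weight is essentially constant in $\ell$ up to lower-order corrections, such an inequality yields
\[
\PE_{\xi}^{1/p}\bigl[|T_{\mathrm{st}}|^{p}\bigr] \;\lesssim\; \sqrt{p\,\taumix\,n}\cdot\|g\|_{\infty} \;\lesssim\; p^{1/2}\,\taumix^{5/2}\,n^{1/2}\,\sigma_{\infty}^{2}(u)\eqsp,
\]
which is exactly the target order. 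Equivalently, and cleaner in practice, one can use the Poisson-equation-for-quadratic-forms martingale decomposition of \cite{atchade2014martingale} (which is explicitly invoked for the analysis of $\hat\sigma_{\varepsilon}^{2}(u)$ via \cite{moulines2025note} in \Cref{prop:concentration_OBM}): expand $g_{\ell} - \pi(g_{\ell}) = \hat g_{\ell}(Z_{\ell}) - \MKQ\hat g_{\ell}(Z_{\ell-1}) + \text{boundary/telescoping terms}$, then apply Burkholder's inequality to the martingale part and control the telescoping remainder using the smoothness of $\ell\mapsto Q_{\ell+1}$.

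For the deterministic bias $T_{\mathrm{bi}}$, I expand $\pi(g_{\ell}) - \sigma^{2}(u)$ as $u^{\top}(Q_{\ell+1}-A^{-1})\noisecov(Q_{\ell+1}-A^{-1})^{\top}u + 2u^{\top}A^{-1}\noisecov(Q_{\ell+1}-A^{-1})^{\top}u$ and bound each using \Cref{repr:qtminusa}, \Cref{prop:st_bound} and \Cref{prop:g_t:n_upperbound}, exactly as in the proof of \Cref{lem:bound_norm_diff_variance}. This gives $|T_{\mathrm{bi}}| \lesssim \sigma_{\infty}^{2}(u) \cdot (\text{lower-order polynomial in }n)$, and for $n$ sufficiently large (as guaranteed by \Cref{assum:step-size_extended}) it is dominated by the stochastic-part bound.

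\textbf{Main obstacle.} The delicate point is obtaining the correct $\taumix^{5/2}$ power, rather than a loose $\taumix^{3}$ or worse. This hinges on (i) extracting one $\sqrt{\taumix}$ from the Markov-chain Rosenthal/Burkholder step, and two more $\taumix$ factors from the sup-norm of $\hat\funnoisew$ via the Poisson equation, without double-counting; and (ii) ensuring the bound $|u^{\top}A^{-1}\hat\funnoisew(z)|\lesssim\taumix\sigma_{\infty}(u)$ is robust under replacing $A^{-1}$ by $Q_{\ell+1}$, which requires the $Q_{\ell+1}-A^{-1}$ correction to be absorbed without picking up an extra $\taumix$. The Atchadé–Cattaneo machinery is specifically designed to preserve such scalings and is likely the cleanest route.
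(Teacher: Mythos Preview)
Your plan diverges from the paper's proof in a way that introduces a genuine gap. The paper does not work with the $Q_{\ell+1}$-weighted functions $g_{\ell}$ at all: it writes
\[
u^{\top}\langle M\rangle_{n}u - n\sigma^{2}(u)=\sum_{\ell}\{h(Z_{\ell})-\pi(h)\},\qquad h(z)=u^{\top}\bA^{-1}\tilde\funnoisew(z)\bA^{-\top}u,
\]
i.e.\ a single \emph{fixed} bounded function, so $\pi(h)=\sigma^{2}(u)$ exactly and no bias term arises. It then applies a McDiarmid-type sub-Gaussian tail bound for centered additive functionals under \Cref{assum:UGE} (the paper cites \cite[Lemma~9]{durmus2022finite}), bounds $\sup_{z}|h(z)|\le (8/3)^{2}\taumix^{2}\sigma_{\infty}^{2}(u)$ via $|u^{\top}\bA^{-1}\hat\funnoisew(z)|\le (8/3)\taumix\,\sigma_{\infty}(u)$, and integrates the tail to obtain the $p$-th moment. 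No Rosenthal inequality, no bias split, no $Q_{\ell}-\bA^{-1}$ analysis.

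The fatal step in your outline is the claim that $T_{\mathrm{bi}}=\sum_{\ell}(\pi(g_{\ell})-\sigma^{2}(u))$ is ``lower-order'' and dominated by the $n^{1/2}$ stochastic bound. Up to a couple of boundary indices, $T_{\mathrm{bi}}=n(\sigma_{n}^{2}(u)-\sigma^{2}(u))$, and by \Cref{lem:bound_norm_diff_variance} this is of order $n\cdot n^{\gamma-1}=n^{\gamma}$. Since $\gamma\in[1/2,1)$ under \Cref{assum:step-size}, the bias \emph{dominates} $n^{1/2}$ and cannot be absorbed into the stochastic part; your route therefore cannot deliver the stated bound. A further tell is that the lemma assumes only \Cref{assum:UGE}, whereas your argument repeatedly invokes \Cref{prop:Qell:bound} and \Cref{prop:st_bound}, which require \Cref{assum:step-size}; this mismatch confirms that the intended quadratic characteristic here is the $\bA^{-1}$-weighted one. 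The version with $Q_{\ell+1}$ weights is handled separately in \Cref{lem:weighted_quadratic_characteristic}, which compares with $n\sigma_{n}^{2}(u)$ (so again no bias), and it is that lemma which is actually used in \Cref{th:normal_approximation_martingales}.
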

\begin{proof}
Using the definition \eqref{eq:M_quadratic_characteristic}, we get:
\[
u^\top \langle  M \rangle_n u - n \sigma^2(u) = \sum_{\ell=1}^{n-1}  \{h(Z_\ell) - \pi(h)\} \quad \text{where} \quad  \quad h(z)= u^{\top} \bA^{-1} \tilde{\funnoisew}(z) \bA^{-1} u\eqsp.
\]
We then apply \cite[Lemma~9]{durmus2022finite}, showing that for all $t \geq 0$,
\[
\PP_{\xi}(|\sum_{\ell=1}^{n-1}  \{h(Z_\ell) - \pi(h)\} | \geq t) \leq 2 \exp\left( - \frac{t^2}{2 u_n^2} \right) \, , \text{where}
 \quad  u_n = 16 \sqrt{n} \sqrt{\taumix} \sup_{z \in \Zset}|h(z)|\eqsp.
\]
With the definition of $\tilde{\funnoisew}$ in \eqref{eq:M_quadratic_characteristic}, we get that
\begin{align}
\bigl| u^{\top} \bA^{-1} \widehat{\funnoisew}(z) \bigr| 
&\leq \sum_{k=0}^{\infty} \bigl| \MKQ^{k}\{v^{\top} \funnoisew \}(z)\bigr| \leq 2 \sigma_\infty(u) \sum_{k=0}^{\infty} \dobrush(\MKQ^{k}) = 2 \sigma_\infty(u) \sum_{k=0}^{\taumix-1} \sum_{\ell=0}^{\infty} \dobrush(\MKQ^{k + \taumix \ell}) \\
&\leq (8/3) \sigma_\infty(u) \taumix\eqsp.
\end{align}
Hence, we obtain that
\begin{equation}
\sup_{z \in \Zset}|h(z)| \leq (8/3)^2 \sigma^2_\infty(u) \taumix^2 \eqsp.
\end{equation}
We conclude by using \cite[Lemma~7]{durmus2022finite}.
\end{proof}

\begin{lemma}
\label{lem:weighted_quadratic_characteristic}
Assume \Cref{assum:UGE}. Then for any $u \in \sphere_{d-1}$, probability measure $\xi$ on $(\Zset,\Zsigma)$, and $p \geq 2$:
\begin{equation}
\PE_{\xi}^{1/p}[|u^\top \langle  M \rangle_{n} u - n\sigma_n^2(u) |^p] \leq 32 n^{1/2}p^{1/2} \mathcal{L}_{Q}^2\supconsteps^2\taumix^{5/2}\
\end{equation}
\end{lemma}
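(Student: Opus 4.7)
The plan is to adapt the argument used for \Cref{lem: concentration of quadratic characteristic}, generalizing it to accommodate the $\ell$-dependent matrix weights $Q_{\ell+1}$ instead of the fixed weight $\bA^{-1}$. Starting from the definitions $\langle M \rangle_n = \sum_{\ell=1}^{n-2} Q_{\ell+1} \tilde \funnoisew(\State_\ell) Q_{\ell+1}^{\top}$ and $n\sigma_n^2(u) = \sum_{\ell=2}^{n-1} u^\top Q_{\ell} \noisecov Q_{\ell}^{\top} u$, together with the identity $\pi(\tilde \funnoisew) = \noisecov$, I would write
\[
u^\top \langle M \rangle_n u - n\sigma_n^2(u) = \sum_{\ell=1}^{n-2} \bigl\{ h_\ell(\State_\ell) - \pi(h_\ell) \bigr\}, \quad h_\ell(z) = u^\top Q_{\ell+1}\, \tilde\funnoisew(z)\, Q_{\ell+1}^\top u.
\]

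Next I would obtain a uniform sup-norm bound on the individual summands. Using \Cref{prop:Qell:bound} we have $\normop{Q_{\ell+1}} \leq \mathcal{L}_Q$, so $|h_\ell(z)| \leq \mathcal{L}_Q^2 \normop{\tilde\funnoisew(z)}$. Recalling the definition of $\tilde\funnoisew$ in \eqref{eq:M_quadratic_characteristic} and the standard bound $\sup_z \normop{\hat\funnoisew(z)} \leq (8/3)\taumix \supconsteps$ (derived earlier from \Cref{assum:UGE} via summing $\dobrush(\MKQ^k)$ over blocks of length $\taumix$), I obtain $\sup_z \normop{\tilde\funnoisew(z)} \leq 2(8/3)^2 \taumix^2 \supconsteps^2$, and hence $\sup_z |h_\ell(z) - \pi(h_\ell)| \leq 4(8/3)^2 \mathcal{L}_Q^2 \taumix^2 \supconsteps^2$, a bound uniform in $\ell$.

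Because the weights $Q_{\ell+1}$ are deterministic, the sum $\sum_{\ell=1}^{n-2}\{h_\ell(\State_\ell) - \pi(h_\ell)\}$ is an additive functional of the Markov chain $(\State_\ell)$ with bounded (but $\ell$-dependent) summands, so I would invoke the bounded-differences / Hoeffding-type inequality for uniformly geometrically ergodic Markov chains already used in the proof of \Cref{lem: concentration of quadratic characteristic} (e.g.\ \cite[Lemma~9]{durmus2022finite}). This gives the sub-Gaussian tail
\[
\PP_\xi\Bigl(\Bigl|\sum_{\ell=1}^{n-2}\{h_\ell(\State_\ell) - \pi(h_\ell)\}\Bigr| \geq t\Bigr) \leq 2\exp\Bigl(-\frac{t^2}{2 u_n^2}\Bigr), \qquad u_n^2 \lesssim n\, \taumix \cdot \bigl(\mathcal{L}_Q^2 \taumix^2 \supconsteps^2\bigr)^2.
\]
Converting this sub-Gaussian tail into an $L^p$ bound via \cite[Lemma~7]{durmus2022finite} yields $\PE_\xi^{1/p}[|\cdot|^p] \lesssim \sqrt{p}\, u_n \lesssim \sqrt{p n}\, \mathcal{L}_Q^2 \supconsteps^2 \taumix^{5/2}$, matching the stated constant $32$ after tracking numerical factors through $2 \cdot 16 \cdot (8/3)^2 /\ldots$ and $\sqrt{2}$ losses.

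The step that needs the most care is not conceptual but bookkeeping: verifying that the bounded-differences constant arising in the Paulin/Durmus-type concentration inequality is indeed controlled by $\sup_\ell \|h_\ell - \pi(h_\ell)\|_\infty$ (so that the $\ell$-dependence of weights does not deteriorate the bound), and then tracking the explicit constants to land on the precise coefficient $32$ with $\taumix^{5/2}$. Both reduce to plugging the uniform estimate $\mathcal{L}_Q^2 \taumix^2 \supconsteps^2$ into the same inequality applied in \Cref{lem: concentration of quadratic characteristic}.
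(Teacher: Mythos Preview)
Your proposal is correct and follows essentially the same route as the paper: express the difference as $\sum_{\ell=1}^{n-2}\{h_\ell(Z_\ell)-\pi(h_\ell)\}$ with $h_\ell(z)=u^\top Q_{\ell+1}\tilde\funnoisew(z)Q_{\ell+1}^\top u$, bound $\sup_z|h_\ell(z)-h_\ell(z')|$ via $\normop{Q_{\ell+1}}\le\mathcal{L}_Q$ and $\normop{\hat\funnoisew}\le(8/3)\taumix\supconsteps$, apply a sub-Gaussian concentration inequality for Markov chains, and convert to moments. The one place where the paper differs from your writeup is the concentration tool: because $h_\ell$ depends on $\ell$, \cite[Lemma~9]{durmus2022finite} (stated for a single function $h$) does not apply verbatim, and the paper instead invokes the bounded-differences inequality \cite[Corollary~2.11]{paulin_concentration_spectral}, which directly accommodates summands $h_\ell(Z_\ell)$ with $\ell$-dependent bounded oscillation. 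You already flagged exactly this issue in your final paragraph, so this is a matter of citing the right lemma rather than a gap in the argument.
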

\begin{proof}
Using the definition \eqref{eq:M_quadratic_characteristic}, we get:
\[
u^\top \langle  M \rangle_n u - \sigma_n^2(u) = \sum_{\ell=1}^{n-2}  \{h_\ell(Z_\ell) - \pi(h_\ell)\} \quad \text{where} \quad  \quad h_\ell(z)= u^{\top} Q_{\ell+1}\tilde{\funnoisew}(z) Q_{\ell+1}^{\top} u\eqsp.
\]
Note that using \Cref{prop:Qell:bound}, for any $z, z' \in \Zset$ and any $\ell\in [1, \ldots, n-2]$ we have
\begin{align}
    |h_{\ell}(z) - h_{\ell}(z')|&\leq 2\mathcal{L}_{Q}^2\normop{\tilde{\funnoisew}(z)}\leq 4\mathcal{L}_{Q}^2\normop{\hat{\funnoisew}(z)}^2 \leq 8 \mathcal{L}_{Q}^2\supconsteps^2(\sum_{k=0}^{\infty} \dobrush(\MKQ^{k}))^2\\&\leq 8 \mathcal{L}_{Q}^2\supconsteps^2(\sum_{k=0}^{\infty} \sum_{r=0}^{\taumix-1} (1/4)^{\lceil (k\taumix + r)/\taumix\rceil})^{2} \leq 2(8/3)^2\mathcal{L}_{Q}^2\supconsteps^2\taumix^2
\end{align}
We then apply \cite{paulin_concentration_spectral}[Corollary 2.11], showing that for all $t \geq 0$,
\[
\PP_{\xi}(|\sum_{\ell=1}^{n-1}  \{h_{\ell}(Z_\ell) - \pi(h_{\ell})\} | \geq t) \leq 2 \exp\left( - \frac{t^2}{2u_n^2} \right) \, , \text{where}
 \quad  u_n = (64/3)\mathcal{L}_{Q}^2\supconsteps^2\taumix^{5/2}n^{1/2}
\]
We conclude by using \cite[Lemma~7]{durmus2022finite}.
\end{proof}

By combining \Cref{lem:martingale_berry_esseen}  and \Cref{lem:weighted_quadratic_characteristic}, we finally get the following result:

\begin{proposition}
\label{th:normal_approximation_martingales}
Assume \Cref{assum:UGE}. Then for any $u \in \sphere_{d-1}$ and $p \geq 1$:
\begin{multline}
\kolmogorov{\frac{u^\top M}{\sqrt{n} \sigma_n(u)}} \le
\frac{L(\boundconstmart) (2n+1) \log(2n+1)C_{\Sigma}^3}{n^{3/2}} \\+ 4\sqrt{2} C_1 \biggl(\mathcal{L}_{Q}\supconsteps C_{\Sigma}\biggr)^{2p/(2p+1)} \taumix^{5/4} p^{3/4} n^{-p/(2(2p+1))}  +  C_2  n^{-\frac{p}{2p+1}} \biggl(\frac{16\mathcal{L}_{Q}\supconsteps C_{\Sigma}}{3}\biggr)^{2p/(2p+1)}
 p  \taumix\eqsp,
\end{multline}
where $\boundconstmart= (16/3)\supconsteps\mathcal{L}_{Q} \taumix$, and  $C_1$, and $C_2$ are defined in  
\Cref{lem:martingale_berry_esseen}. Moreover, setting $p = \log{n}$, we obtain that 
\begin{equation}
\label{eq:normal_approximation_martingales_optimized}
\kolmogorov{\frac{u^\top M}{\sqrt{n} \sigma_n(u)}} \leq \frac{\ConstK[1] \log^{3/4}{n}}{n^{1/4}} + \frac{\ConstK[2] \log{n}}{n^{1/2}}\eqsp,
\end{equation}
where the constants $\ConstK[1]$ and $\ConstK[2]$ are given by 
\begin{equation}
\label{eq:kolmogorov_const_1_2}
\begin{split}
\ConstK[1] &= 4\sqrt{2} \rme^{1/8} C_{1} \taumix^{5/4} \mathcal{L}_{Q}\supconsteps C_{\Sigma} \eqsp, \\
\ConstK[2] &= \frac{16e^{1/4} \taumix C_{2}\mathcal{L}_{Q}\supconsteps C_{\Sigma}}{3} + 3 L(\boundconstmart)C_{\Sigma}^3 + 3 L(\boundconstmart) \log{3}C_{\Sigma}^3  \eqsp.
\end{split}
\end{equation}
\end{proposition}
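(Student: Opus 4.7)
The plan is to apply the martingale Berry--Esseen bound (Lemma \ref{lem:martingale_berry_esseen}) to the scalar martingale with increments $X_\ell := u^\top \Delta M_\ell$, combined with the concentration of the quadratic characteristic around $n\sigma_n^2(u)$ provided by Lemma \ref{lem:weighted_quadratic_characteristic}. First I would verify the hypotheses of Lemma \ref{lem:martingale_berry_esseen}. Since $(\Delta M_\ell)_{\ell=2}^{n-1}$ is a martingale-difference sequence with respect to $\mathcal F_\ell = \sigma(Z_k : k \leq \ell)$, so is $(X_\ell)$. The uniform bound follows from $\normop{Q_\ell} \leq \mathcal{L}_Q$ (Lemma \ref{prop:Qell:bound}) and the Poisson-equation bound $\normop{\hat\varepsilon(z)} \leq (8/3)\taumix\supconsteps$ (an immediate consequence of \Cref{assum:UGE} applied to $\hat\varepsilon(z) = \sum_{k \geq 0}\MKQ^k \varepsilon(z)$), which together yield $\|X_\ell\|_\infty \leq (16/3)\supconsteps\mathcal{L}_Q\taumix = \boundconstmart$, as announced in the theorem.

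Next I would invoke Lemma \ref{lem:martingale_berry_esseen}, but with the target normalization $s_n^2 = n\sigma_n^2(u)$ in place of the canonical $\sum_j \PE[X_j^2]$. This yields
\begin{equation}
\kolmogorov{u^\top M/(\sqrt n\, \sigma_n(u))}
\leq \frac{L(\boundconstmart)(2n+1)\log(2n+1)}{(n\sigma_n^2(u))^{3/2}}
+ C_1 \sqrt{p}\, (n\sigma_n^2(u))^{-\frac{p}{2p+1}} R_n^{\frac{1}{2p+1}}
+ C_2\, (n\sigma_n^2(u))^{-\frac{p}{2p+1}} p\, \boundconstmart^{\frac{2p}{2p+1}},
\end{equation}
where $R_n = \PE|u^\top \langle M\rangle_n u - n\sigma_n^2(u)|^p$. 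I would then substitute the bound $R_n^{1/p} \leq 32\, n^{1/2} p^{1/2} \mathcal{L}_Q^2 \supconsteps^2 \taumix^{5/2}$ from Lemma \ref{lem:weighted_quadratic_characteristic}, and use $\sigma_n^{-1}(u) \leq C_\Sigma$ from Lemma \ref{lem:bound_sigma_n} to replace each factor of $\sigma_n^{-1}$. Collecting powers of $n$ in the second term gives the exponent $-p/(2(2p+1))$ (from combining $n^{-p/(2p+1)}$ and $(n^{1/2})^{p/(2p+1)}$), which reproduces the first displayed inequality of Proposition \ref{th:normal_approximation_martingales}.

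The final step is to set $p = \log n$. Using the elementary identities $p/(2(2p+1)) = 1/4 - 1/(4(2p+1))$ and $p/(2p+1) = 1/2 - 1/(2(2p+1))$, one obtains $n^{-p/(2(2p+1))} \leq e^{1/8} n^{-1/4}$ and $n^{-p/(2p+1)} \leq e^{1/4} n^{-1/2}$ when $p = \log n$. The exponents $2p/(2p+1)$ on the problem-specific factors $(\mathcal{L}_Q\supconsteps C_\Sigma)^{\cdot}$ and $\boundconstmart^{\cdot}$ converge to $1$ from below, so these are bounded by the bare constants. The factor $\sqrt p$ multiplying $R_n^{1/(2p+1)}$ combines with the residual $p^{p/(2(2p+1))}$ (coming from $p^{1/2}$ inside $R_n^{1/p}$ raised to $p/(2p+1)$) to give $(\log n)^{3/4}$; the third term simply picks up the factor $p = \log n$. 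The first Bolthausen term of size $L(\boundconstmart) C_\Sigma^3 \log n / n^{1/2}$ is absorbed into $\ConstK[2] \log n/n^{1/2}$. Keeping track of the numerical constants $4\sqrt{2} e^{1/8}$ and $16 e^{1/4}/3$ reproduces $\ConstK[1]$ and $\ConstK[2]$ exactly as in \eqref{eq:kolmogorov_const_1_2}.

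The main obstacle is Step~2: rigorously justifying the use of Lemma \ref{lem:martingale_berry_esseen} with $n\sigma_n^2(u)$ in place of the unconditional variance $\sum_j \PE[X_j^2]$. There are two routes. The first is to observe that the Bolthausen extension argument underlying the proof of Lemma \ref{lem:martingale_berry_esseen} treats $s_n^2$ only as a deterministic normalizer against which the random conditional variance $\sum_j \sigma_j^2$ is compared; the stopping-time and Rademacher-padding construction extends verbatim provided that $\sum_j \sigma_j^2$ concentrates around $s_n^2$ in $L^p$, which is exactly what Lemma \ref{lem:weighted_quadratic_characteristic} provides. The alternative is to apply the lemma with its canonical $s_n$ and then interpose a Gaussian-comparison step between $\mathcal N(0, s_n^2/n)$ and $\mathcal N(0, \sigma_n^2(u))$ via Lemma \ref{lem:Pinsker}; the resulting discrepancy $|s_n^2 - n\sigma_n^2(u)|$ is again controlled by Lemma \ref{lem:weighted_quadratic_characteristic}. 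Either route produces the claimed bound; the remaining work is bookkeeping of the numerical constants.
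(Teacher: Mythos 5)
Your proposal is correct and follows essentially the same route as the paper: apply \Cref{lem:martingale_berry_esseen} to the increments $u^\top \Delta M_\ell$ with the deterministic normalizer $s_n^2 = n\sigma_n^2(u)$, control the quadratic characteristic via \Cref{lem:weighted_quadratic_characteristic}, bound $\sigma_n^{-1}(u)$ by $C_\Sigma$ via \Cref{lem:bound_sigma_n}, and then set $p=\log n$. Your observation that the Bolthausen padding construction only uses $s_n^2$ as a deterministic reference (so the substitution of $n\sigma_n^2(u)$ for $\sum_j\PE[X_j^2]$ is legitimate) is exactly the justification the paper relies on implicitly, and you correctly invoke the weighted lemma where the paper's one-line proof mistakenly cites the unweighted one.
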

\begin{proof}[Proof of \Cref{th:normal_approximation_martingales}]
To complete the proof we apply \Cref{lem:martingale_berry_esseen} with $s_n^2 = n \sigma_n^2(u)$ and we conclude using \Cref{lem: concentration of quadratic characteristic} and \Cref{lem:bound_sigma_n}.
\end{proof}

\subsection{Rosenthal and Burkholder inequalities}
We begin this section with a version of Rosenthal inequality (see the original paper \cite{Rosenthal1970} and the Pinelis version of the Rosenthal inequality \cite{pinelis_1994}). Let $f: \Zset \to \rset$ be a bounded function with $\|f\|_{\infty} < \infty$ and define
\begin{equation}
\label{eq:bar_S_n_defi}
\bar{S}_n = \sum_{k=0}^{n-1}\{f(Z_k) - \pi(f)\}\eqsp.
\end{equation}
We provide below a simplified version of the Rosenthal inequality, where the leading term with respect to number of summands $n$ in $\bar{S}_n$ is governed by the appropriate variance proxy. The result below is proven in \cite[Lemma~2]{durmus2023rosenthal}.
\begin{lemma}
\label{lem:auxiliary_rosenthal}
Assume \Cref{assum:UGE}. Then for any $p \geq 2$ and $f: \Zset \rightarrow \rset$ with $\|f\|_{\infty} < \infty$, it holds that
\begin{equation}
\PE_{\pi}^{1/p}\bigl[\bigr| \bar{S}_n \bigr|^p\bigr] \leq \ConstD_{\ref{lem:auxiliary_rosenthal},1} n^{1/2}\taumix^{1/2}p^{1/2} \|f\|_{\infty} + \ConstD_{\ref{lem:auxiliary_rosenthal},2} \taumix p \|f\|_{\infty}\eqsp,
\end{equation}
where $\ConstD_{\ref{lem:auxiliary_rosenthal},1} = (16/3) \bConst{\sf{Rm}, 1}$, $\ConstD_{\ref{lem:auxiliary_rosenthal},2} = 8 \bConst{\sf{Rm}, 2}$, and $\bConst{\sf{Rm}, 1}$, $\bConst{\sf{Rm}, 2}$ are the constants from martingale version of Rosenthal inequality \cite{pinelis_1994} defined in \Cref{tab:univ_constants}.
\end{lemma}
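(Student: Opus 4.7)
\textbf{Proof plan for \Cref{lem:auxiliary_rosenthal}.}

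The plan is to reduce the additive functional $\bar S_n$ to a martingale via the Poisson-equation decomposition, apply a martingale Rosenthal inequality, and then handle the quadratic characteristic using stationarity. Under \Cref{assum:UGE}, I would first define the solution to the Poisson equation
\[
\hat f(z) = \sum_{k=0}^{\infty} \MKQ^{k}(f-\pi(f))(z), \qquad \hat f - \MKQ\hat f = f-\pi(f),
\]
which is well-defined because the uniform contraction $\dobrush(\MKQ^{k})\leq (1/4)^{\lceil k/\taumix\rceil}$ gives $\|\hat f\|_{\infty}\leq 2\|f\|_{\infty}\sum_{k=0}^{\infty}\dobrush(\MKQ^{k})\leq (8/3)\taumix\|f\|_{\infty}$. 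Setting $M_{k}=\hat f(Z_{k})-\MKQ\hat f(Z_{k-1})$, so that $(M_{k})$ is a martingale-difference sequence with respect to $\mathcal F_{k}=\sigma(Z_{j}\,:\,j\leq k)$ with $|M_{k}|\leq 2\|\hat f\|_{\infty}$, a telescoping computation yields
\[
\bar S_{n} = \sum_{k=1}^{n} M_{k} + \hat f(Z_{0})-\hat f(Z_{n}),
\]
so by Minkowski's inequality $\PE_{\pi}^{1/p}[|\bar S_{n}|^{p}]\leq \PE_{\pi}^{1/p}\big[|\sum_{k=1}^{n} M_{k}|^{p}\big] + (16/3)\taumix\|f\|_{\infty}$, absorbed into the $\taumix p\|f\|_\infty$ term.

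Second, I would apply the Pinelis martingale Rosenthal inequality (Theorem~4.1 in \cite{pinelis_1994}), which bounds
\[
\PE_{\pi}^{1/p}\Big[\Big|\sum_{k=1}^{n}M_{k}\Big|^{p}\Big] \leq \bConst{\sf{Rm},1}\sqrt{p}\,\PE_{\pi}^{1/p}\Big[\Big(\sum_{k=1}^{n}\PE[M_{k}^{2}|\mathcal F_{k-1}]\Big)^{p/2}\Big]  + \bConst{\sf{Rm},2}\, p\,\PE_{\pi}^{1/p}\Big[\max_{1\leq k\leq n}|M_{k}|^{p}\Big].
\]
The maximum is trivially bounded by $2\|\hat f\|_{\infty}\leq (16/3)\taumix\|f\|_{\infty}$, which produces the second summand in the lemma (with $\ConstD_{\ref{lem:auxiliary_rosenthal},2}=8\bConst{\sf{Rm},2}$).

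Third—and this is the main obstacle—I need to bound the predictable quadratic variation in the right way so that only $\taumix^{1/2}$ (not $\taumix$) appears. Under stationarity, $\PE_{\pi}[M_{k}^{2}\mid\mathcal F_{k-1}]=g(Z_{k-1})$, where $g=\MKQ(\hat f^{2})-(\MKQ\hat f)^{2}$, and a direct computation using $\hat f-\MKQ\hat f=f-\pi(f)$ and Cauchy--Schwarz gives
\[
\pi(g)=\PE_{\pi}[M_{1}^{2}] = 2\pi\big(\hat f\,(f-\pi(f))\big) - \pi((f-\pi(f))^{2}) \;\leq\; 4\|f\|_{\infty}\|\hat f\|_{\infty} \;\leq\; (32/3)\taumix\|f\|_{\infty}^{2}.
\]
Thus $\PE_{\pi}[\sum_{k=1}^{n}\PE[M_{k}^{2}|\mathcal F_{k-1}]]\leq (32/3)n\taumix\|f\|^{2}_{\infty}$, which yields the desired $n^{1/2}\taumix^{1/2}p^{1/2}\|f\|_{\infty}$ scaling provided we can pass the $(\cdot)^{p/2}$ inside. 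To do so, I would write $\sum_{k=1}^{n}g(Z_{k-1})=n\pi(g)+\sum_{k=1}^{n}(g(Z_{k-1})-\pi(g))$ and control the oscillation using a Hoeffding/McDiarmid bound for uniformly ergodic Markov chains (e.g., \cite[Corollary~2.11]{paulin_concentration_spectral} or \cite[Lemma~9]{durmus2022finite}). Since $\|g\|_{\infty}\leq 4\|\hat f\|_{\infty}^{2}\lesssim \taumix^{2}\|f\|_{\infty}^{2}$, the deviation is of order $\sqrt{n\taumix}\cdot\taumix^{2}\|f\|_{\infty}^{2}$, which is a strictly lower-order term after taking square root and can be absorbed into the linear-in-$p$ boundary term. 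Combining these estimates yields the claimed bound with $\ConstD_{\ref{lem:auxiliary_rosenthal},1}=(16/3)\bConst{\sf{Rm},1}$ and $\ConstD_{\ref{lem:auxiliary_rosenthal},2}=8\bConst{\sf{Rm},2}$. The delicate point throughout is bookkeeping of absolute constants so that the $\taumix^{1/2}$ scaling in the leading term is preserved; everything else is a routine Poisson-decomposition argument.
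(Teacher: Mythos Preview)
The paper does not actually prove this lemma---it states ``The result below is proven in \cite[Lemma~2]{durmus2023rosenthal}'' and moves on. So there is no in-paper proof to compare against; your Poisson-decomposition-plus-Pinelis strategy is the standard route and almost certainly what that reference does.

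That said, your third step contains a real gap, not just constant bookkeeping. After splitting $\sum_{k}g(Z_{k-1})=n\pi(g)+Y$ and invoking a Hoeffding/McDiarmid bound for $Y$, you get $\|Y\|_{p/2}\lesssim p^{1/2}\sqrt{n\taumix}\,\|g\|_{\infty}$ with $\|g\|_{\infty}\lesssim\taumix^{2}\|f\|_{\infty}^{2}$. Taking the square root and multiplying by $\bConst{\sf{Rm},1}\sqrt{p}$ produces a term of order
\[
p^{3/4}\,n^{1/4}\,\taumix^{5/4}\,\|f\|_{\infty}\,.
\]
This cannot be absorbed into either $n^{1/2}\taumix^{1/2}p^{1/2}\|f\|_{\infty}$ or $\taumix p\|f\|_{\infty}$ with absolute constants: Young's inequality gives at best
\[
p^{3/4}n^{1/4}\taumix^{5/4}\;\leq\;\tfrac12\,\taumix^{1/2}\bigl(n^{1/2}\taumix^{1/2}p^{1/2}+p\taumix\bigr),
\]
i.e.\ an extra factor $\taumix^{1/2}$ contaminates both targets. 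Put differently, your ``lower-order'' claim is lower-order in $n$ but \emph{higher}-order in $\taumix$, so it cannot be hidden in the $\taumix p\|f\|_{\infty}$ boundary term (which does not grow with $n$) nor in the $\sqrt{n\taumix p}$ term (which has only $\taumix^{1/2}$). To recover the stated constants $\ConstD_{\ref{lem:auxiliary_rosenthal},1}=(16/3)\bConst{\sf{Rm},1}$ and $\ConstD_{\ref{lem:auxiliary_rosenthal},2}=8\bConst{\sf{Rm},2}$ with clean $\taumix^{1/2}$ and $\taumix$ powers, you need a sharper treatment of the predictable quadratic variation---for instance, decomposing $g$ itself via the Poisson equation and telescoping, or using a Bernstein/Freedman-type control rather than the crude route through $\|g\|_{\infty}$---rather than the McDiarmid bound you sketch.
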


Based on the inequality above, we can also prove the weighted version of Burkholder's inequality:
\begin{lemma}
\label{lem:auxiliary_rosenthal_weighted} 
Assume \Cref{assum:UGE}. Then for any $p \geq 2$ and $f: \Zset \rightarrow \rset^d$ with $\|f\|_{\infty} < \infty$, any initial distribution $\xi$ on $(\Zset,\Zsigma)$, and any $u\in \sphere_{d-1}, A_i\in\rset^{d\times d}$, it holds that
\begin{equation}
\label{eq:rosenthal_weighted}
\begin{split}
\PE_{\xi}^{1/p}\bigl[\bigr| \sum_{k=1}^{n} u^{\top}A_k (f(Z_k) - \pi(f)) \bigr|^p\bigr] 
&\leq (16/3)\taumix p^{1/2}\|f\|_{\infty}(\sum_{k=2}^{n}\norm{A_k}^2)^{1/2}\\
&\qquad +(8/3) \taumix \bigl(\|A_1\| + \|A_{n}\| + \sum_{k=1}^{n-1}\|A_{k+1}-A_{k}\| \bigr) \|f\|_{\infty}\eqsp.
\end{split}
\end{equation}
\end{lemma}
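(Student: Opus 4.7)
The plan is to combine the Poisson equation for $f$ with a martingale/telescoping decomposition and then apply Burkholder's inequality to the martingale part while bounding the telescoping part by Abel summation.

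First I would introduce $\hat{f}(z)=\sum_{j=0}^{\infty}\MKQ^{j}(f-\pi(f))(z)$, which is well defined under \Cref{assum:UGE}: summing the contraction $\dobrush(\MKQ^{j})\leq(1/4)^{\lceil j/\taumix\rceil}$ (exactly as in the derivations of \Cref{lem: concentration of quadratic characteristic} and \Cref{lem:weighted_quadratic_characteristic}) gives $\|\hat{f}\|_{\infty}\leq (8/3)\,\taumix\,\|f\|_{\infty}$, and $\hat{f}$ solves the Poisson equation $\hat{f}-\MKQ\hat{f}=f-\pi(f)$. Setting $\Delta M_{k}=\hat{f}(Z_{k})-\MKQ\hat{f}(Z_{k-1})$, the sequence $(\Delta M_{k})_{k\geq 1}$ is a martingale difference with respect to $\mathcal{F}_{k}=\sigma(Z_{0},\dots,Z_{k})$, and one has the pointwise identity $f(Z_{k})-\pi(f)=\Delta M_{k}+[\MKQ\hat{f}(Z_{k-1})-\MKQ\hat{f}(Z_{k})]$.

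Next I would split $\sum_{k=1}^{n}u^{\top}A_{k}(f(Z_{k})-\pi(f))$ accordingly. For the telescoping part, Abel summation with $c_{k}=\MKQ\hat{f}(Z_{k})$ yields
\begin{equation*}
\sum_{k=1}^{n} u^{\top}A_{k}(c_{k-1}-c_{k}) \;=\; u^{\top}A_{1}c_{0} \,-\, u^{\top}A_{n}c_{n} \,+\, \sum_{k=1}^{n-1} u^{\top}(A_{k+1}-A_{k})c_{k},
\end{equation*}
which, by the sup-norm bound $\|c_{k}\|\leq\|\hat{f}\|_{\infty}\leq(8/3)\taumix\|f\|_{\infty}$, is deterministically dominated by $(8/3)\taumix\|f\|_{\infty}\bigl(\|A_{1}\|+\|A_{n}\|+\sum_{k=1}^{n-1}\|A_{k+1}-A_{k}\|\bigr)$. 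This recovers the second term of the stated bound exactly.

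For the martingale part $\sum_{k}u^{\top}A_{k}\Delta M_{k}$, I would use the pointwise bound $|u^{\top}A_{k}\Delta M_{k}|\leq 2\|A_{k}\|\,\|\hat{f}\|_{\infty}\leq (16/3)\,\taumix\,\|A_{k}\|\,\|f\|_{\infty}$ and apply the Pinelis/Burkholder inequality for martingales with bounded differences in the form $\PE^{1/p}[|M_{n}|^{p}]\leq \sqrt{p}\,(\sum_{k}\|\Delta M_{k}\|_{\infty}^{2})^{1/2}$ (the same martingale Rosenthal/Pinelis input already used implicitly in \Cref{lem:martingale_berry_esseen}), which yields $(16/3)\,\taumix\,p^{1/2}\,\|f\|_{\infty}\,(\sum_{k}\|A_{k}\|^{2})^{1/2}$, matching the first term (the index shift to $k=2$ being a harmless relaxation, or obtained by pulling the $k=1$ summand into the boundary piece). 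Combining the two pieces via Minkowski's inequality finishes the proof.

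The main obstacle, as in related Rosenthal bounds of this paper, is constant tracking: one must verify that the Poisson series contributes exactly the factor $8/3$ (from $\sum_{j\geq 0}(1/4)^{\lceil j/\taumix\rceil}\leq (4/3)\taumix$, times the factor $2$ from $\|f-\pi(f)\|_{\infty}\leq 2\|f\|_{\infty}$), and then choose the martingale inequality (Pinelis' sub-Gaussian tail bound converted to $L^{p}$ moments) whose leading constant combines with the pointwise bound $\|\Delta M_{k}\|_{\infty}\leq 2\|\hat{f}\|_{\infty}$ to give precisely the prefactor $(16/3)$ in front of $p^{1/2}\taumix$; everything else is elementary algebra and summation by parts.
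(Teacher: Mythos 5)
Your proposal is correct and follows essentially the same route as the paper: solve the Poisson equation, split the weighted sum into a martingale part plus an Abel-summed telescoping part, bound the latter deterministically by $\|\hat f\|_\infty \leq (8/3)\taumix\|f\|_\infty$, and control the martingale part via a bounded-differences concentration/moment bound (the paper uses Azuma--Hoeffding followed by the sub-Gaussian moment conversion of \Cref{lem:bound_subgaussian}, which is interchangeable with the Pinelis-type $L^p$ bound you invoke). The only cosmetic difference is that the paper keeps the $k=1$ summand out of the martingale (writing $u^\top A_1 g(Z_1)$ directly in the boundary term, so the martingale sum genuinely starts at $k=2$), exactly the bookkeeping you flag as a harmless relaxation.
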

\begin{proof}
Under \Cref{assum:UGE} the Poisson equation 
\[
g(z) - \MKQ g(z) = f(z) - \pi(f)
\]
has a unique solution for any bounded $f$, which is given by the formula 
\[
g(z) = \sum_{k=0}^{\infty}\{\MKQ^{k}f(z) - \pi(f)\}\eqsp.
\]
Thus, using \Cref{assum:UGE}, we obtain that $g(z)$ is also bounded with
\begin{align}
\| g(z)\| \leq \sum_{k=0}^{+\infty} \|\MKQ^{k}f(z) - \pi(f)\| \leq 2\|f\|_{\infty} \sum_{k=0}^{+\infty} (1/4)^{\lfloor k/\taumix \rfloor} \leq (8/3) \taumix \| f\|_{\infty}\eqsp.
\end{align}
Hence, we can represent
\begin{equation}
\label{eq:repr_weighted_stat}
\begin{split}
\sum_{k=1}^{n} u^\top A_k (f(Z_k) - \pi(f)) 
&= \underbrace{\sum_{k=2}^{n} u^\top A_k (g(Z_{k}) - \MKQ g(Z_{k-1}))}_{T_1} \\
&\quad \underbrace{+\sum_{k=1}^{n-1}u^\top (A_{k+1} - A_{k}) \MKQ g(Z_{k}) + u^\top A_{1} g(Z_1) -u^\top A_{n} \MKQ g(Z_{n})}_{T_2}\eqsp.
\end{split}
\end{equation}
The term $T_2$ can be controlled using Minkowski's inequality:
\begin{equation}
\label{eq:T_2_bound_rosenthal_remainder}
\PE_{\xi}^{1/p}[\bigl| T_{2}\bigr|^{p}] \leq (8/3) \taumix \bigl(\|A_1\| + \|A_{n}\| + \sum_{k=1}^{n-1}\|A_{k+1}-A_{k}\| \bigr) \|f\|_{\infty}\eqsp.
\end{equation}
Now we proceed with $T_1$. Since $\CPE{g(Z_{k}) - \MKQ g(Z_{k-1})}{\F_{k-1}} = 0$ a.s. and $|u^\top A_k(g(Z_k) - \MKQ g(Z_{k-1}))| \leq (16/3) \norm{A_k} \taumix\|f\|_{\infty}$, we get, using the Azuma-Hoeffding inequality \cite[Corollary 3.9]{vanHandel2016}, that 
\begin{equation}
\PP_{\xi}[|T_1| \geq t] \leq 2\exp\biggl\{-\frac{2t^2}{(16/3)^2\taumix^2\|f\|_{\infty}^2\sum_{k=2}^{n}\norm{A_k}^2}\biggr\}\eqsp.
\end{equation}

% and, using Jensen's inequality, we get that 
% \begin{align}
% \PE_{\xi}[\bigl(\sum_{k=1}^{n-1}\beta_k^{2} (g(Z_{k}) - \MKQ g(Z_{k-1}))\bigr)^{p/2}] 
% &\leq (16/3)^{p} \taumix^{p} \bigl(\sum_{k=1}^{n-1}\beta_k^2\bigr)^{p/2} \| f \|_\infty\eqsp.
% \end{align}
% Hence, we get a final bound for $T_1$ of the form
% \[
% \PE_{\xi}\bigl[\bigl| T_1 \bigr|^{p}\bigr] \leq (16/3) p \taumix \bigl(\sum_{k=1}^{n-1}\beta_k^2\bigr)^{1/2} \| f \|_\infty \eqsp.
% \]
% To complete the proof, it remains to combine the bounds for $T_1$ and $T_2$ in \eqref{eq:repr_weighted_stat}.
Hence, applying \Cref{lem:bound_subgaussian} we get
\begin{equation}
\PE^{1/p}_{\xi}[|T_1|^p] \leq (16/3) p^{1/2}\taumix \|f\|_{\infty}(\sum_{k=2}^{n}\norm{A_k}^2)^{1/2}\eqsp.
\end{equation}
\end{proof}

Now we provide a standard moment bounds for sub-Gaussian random variables. Proof of this result can be found in \cite[Lemma~7]{durmus2022finite}.
\begin{lemma}
\label{lem:bound_subgaussian}
Let $X$ be a random variable satisfying $\PP(|X| \geq t) \leq 2 \exp(-t^2/(2\sigma^2))$ for any $t \geq 0$ and some $\sigma^2 >0$. Then, for any $p \geq 2$, it holds that $ \PE[|X|^p] \leq 2 p^{p/2}\sigma^p$.
\end{lemma}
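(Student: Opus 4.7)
The statement is a standard moment bound for a sub-Gaussian random variable, so the natural route is the tail-integration (layer-cake) identity followed by a change of variables. Concretely, I would first write
\begin{equation}
\PE[|X|^p] \;=\; \int_0^\infty p\, t^{p-1}\, \PP(|X| \geq t)\, \rmd t,
\end{equation}
and then plug in the hypothesis $\PP(|X|\geq t)\leq 2\exp\{-t^2/(2\sigma^2)\}$ to obtain
\begin{equation}
\PE[|X|^p] \;\leq\; 2p\int_0^\infty t^{p-1}\exp\{-t^2/(2\sigma^2)\}\, \rmd t.
\end{equation}

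Next I would perform the change of variables $u = t^2/(2\sigma^2)$, so that $t = \sigma\sqrt{2u}$ and $\rmd t = \sigma(2u)^{-1/2}\rmd u$. This turns the integral into a Gamma function:
\begin{equation}
\PE[|X|^p] \;\leq\; 2p\,(2\sigma^2)^{p/2}\cdot \tfrac{1}{2}\int_0^\infty u^{p/2-1} e^{-u}\, \rmd u \;=\; p\, 2^{p/2}\sigma^p\, \Gamma(p/2).
\end{equation}

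The final step — and the only place where care is required — is to compare $\Gamma(p/2)$ with the target quantity $p^{p/2}$ in order to produce the stated constant $2$. Rewriting the target inequality $p\,2^{p/2}\sigma^p\,\Gamma(p/2)\leq 2 p^{p/2}\sigma^p$ as $\Gamma(s)\leq s^{s-1}$ with $s=p/2\geq 1$, I would invoke Stirling's bound $\Gamma(s)\leq \sqrt{2\pi/s}\,(s/e)^s e^{1/(12s)}$, which for $s\geq 1$ gives $\Gamma(s)\leq s^{s-1}\cdot (\sqrt{2\pi s}/e^s)e^{1/(12s)}\leq s^{s-1}$. This is the tightest part of the argument (the ratio is very close to $1$ at $s=1$); if one wishes to avoid Stirling, the same inequality follows from the elementary recursion $\Gamma(s+1)=s\Gamma(s)$ together with the monotonicity of $\Gamma$ on $[2,\infty)$ and a direct check on $[1,2]$. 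Either way, assembling the three displayed inequalities yields the claim $\PE[|X|^p]\leq 2 p^{p/2}\sigma^p$ for all $p\geq 2$.
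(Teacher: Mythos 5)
Your proof is correct, and it is worth noting that the paper itself does not prove this lemma at all: it simply cites \cite[Lemma~7]{durmus2022finite}. Your tail-integration argument is the standard self-contained derivation, and the computation $\PE[|X|^p]\le p\,2^{p/2}\sigma^p\,\Gamma(p/2)$ is right; you also correctly identified that matching the stated constant $2$ requires the sharper inequality $\Gamma(s)\le s^{s-1}$ for $s=p/2\ge 1$ (the cruder $\Gamma(s)\le s^s$ would only give $p\,p^{p/2}\sigma^p$). One small caveat: the Stirling route as you state it does not quite close at the endpoint $s=1$, since $\sqrt{2\pi s}\,e^{-s}e^{1/(12s)}\approx 1.002>1$ there, so the chain $\Gamma(s)\le s^{s-1}\cdot(\sqrt{2\pi s}/e^s)e^{1/(12s)}\le s^{s-1}$ fails marginally at that single point; it is fine for $s$ slightly above $1$. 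This is harmless because at $s=1$ one has $\Gamma(1)=1=1^{0}$ exactly, and your elementary fallback also works ($\Gamma\le 1$ on $[1,2]$ while $s^{s-1}\ge 1$, plus $\Gamma(s+1)=s\Gamma(s)\le s\cdot s^{s-1}=s^s\le(s+1)^{s}$ for the inductive step); alternatively, $\frac{d}{ds}\bigl[(s-1)\log s-\log\Gamma(s)\bigr]=\log s+1-1/s-\psi(s)\ge 1-1/(2s)>0$ gives $\Gamma(s)\le s^{s-1}$ for all $s\ge1$ directly. With that endpoint handled, the argument is complete.
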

The lemma below is a simple technical statement used to switch between $p$-th moment bounds and high-probability bounds.
\begin{lemma}
\label{lem:markov_inequality}
Fix $\delta \in (0,1/\rme^2)$ and let $Y$ be a positive random variable, such that 
\[
\PE^{1/p}[Y^{p}] \leq p^\upsilon C_{1} 
\]
for any $2 \leq p \leq \log{(1/\delta)}$. Then it holds with probability at least $1-\delta$, that 
\begin{equation}
\label{eq:markov_ineqality_deviation}
Y \leq \rme C_{1}(\log{(1/\delta)})^\upsilon \eqsp.
\end{equation}
\end{lemma}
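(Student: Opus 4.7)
The plan is to apply Markov's inequality to $Y^p$ and then optimize over the allowed range of $p$. Specifically, for any $p \in [2, \log(1/\delta)]$ and any threshold $t > 0$, Markov's inequality gives
\begin{equation}
\PP(Y \geq t) = \PP(Y^p \geq t^p) \leq \frac{\PE[Y^p]}{t^p} \leq \left(\frac{p^{\upsilon} C_1}{t}\right)^p,
\end{equation}
where the second inequality uses the hypothesis $\PE^{1/p}[Y^p] \leq p^{\upsilon} C_1$.

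Next I would choose $p$ and $t$ so that the right-hand side is exactly $\delta$ and $t$ matches the target bound \eqref{eq:markov_ineqality_deviation}. The natural choice is $p = \log(1/\delta)$ and $t = \rme C_1 (\log(1/\delta))^{\upsilon}$; with these values the ratio $p^{\upsilon} C_1 / t$ equals $1/\rme$, so the bound becomes $\rme^{-p} = \delta$.

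The only thing to verify is that $p = \log(1/\delta)$ lies in the admissible range $[2, \log(1/\delta)]$ on which the moment hypothesis is assumed. The upper endpoint is satisfied trivially, and the lower bound $p \geq 2$ follows from the assumption $\delta \in (0, 1/\rme^2)$, which yields $\log(1/\delta) > 2$. There is no genuine obstacle here; the statement is a routine Markov/optimize-over-$p$ argument, and the only care required is confirming the $p = \log(1/\delta) \geq 2$ condition to ensure the moment hypothesis applies at the chosen $p$.
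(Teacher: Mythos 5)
Your argument is correct and is essentially identical to the paper's proof: both apply Markov's inequality to $Y^p$, set $p = \log(1/\delta)$ and $t = \rme C_1 (\log(1/\delta))^{\upsilon}$, and observe that the resulting bound equals $\delta$. Your explicit check that $p = \log(1/\delta) \geq 2$ under $\delta < 1/\rme^2$ is a welcome (if minor) addition the paper leaves implicit.
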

\begin{proof}
Applying Markov's inequality, for any $t \geq 0$ we get that 
\begin{align}
\PP(Y \geq t) \leq \frac{\PE[Y^{p}]}{t^{p}} \leq \frac{(C_{1} p^\upsilon)^{p}}{t^{p}}\eqsp.
\end{align}
Now we set $p = \log{(1/\delta)}$, $t =  \rme C_{1} (\log{(1/\delta)})^\upsilon$, and aim to check that 
\[
\frac{(C_{1} (\log{(1/\delta)})^\upsilon)^{\log{(1/\delta)}}}{( \rme C_{1} (\log{(1/\delta)})^\upsilon)^{ \log{(1/\delta)}}} \leq \delta\eqsp.
\]
Taking logarithms from both sides, the latter inequality is equivalent to 
\[
-\log{(1/\delta)} \leq \log{\delta}\eqsp,
\]
which turns into exact equality.
\end{proof}

\subsection{McDiarmid inequality}
Now we provide a result, which is a version of McDiarmid's inequality for uniformly geometrically ergodic Markov chains. Before we proceed with this result, we state the following definition from \cite[Chapter~23]{douc:moulines:priouret:soulier:2018}:
\begin{definition}
\label{def:bounded-diff-property}
A measurable function $f: \Zset^{n} \to \rset$ is said to have the bounded difference property if there exist non-negative constants $\bbeta = (\beta_0,\ldots,\beta_{n-1})$, such that for all $(z_0,\ldots,z_{n-1}), (z_0^{\prime},\ldots,z_{n-1}^{\prime}) \in \Zset^{n}$ it holds that
\begin{equation}
\label{eq:bounded-diff-property}
\bigl| f(z_0,\ldots,z_{n-1}) - f(z_0^{\prime},\ldots,z_{n-1}^{\prime}) \bigr| \leq \sum_{i=0}^{n-1}\beta_{i}\indi{z_i \neq z_i^{\prime}}\eqsp.
\end{equation}
\end{definition}
For functions $f$ satisfying the bounded differences property \eqref{eq:bounded-diff-property}, the following version of McDiarmid's inequality holds:
\begin{lemma}[Theorem~23.2.2 in \cite{douc:moulines:priouret:soulier:2018}]
\label{lem:bounded_differences_norms_markovian}
Let $f: \Zset^{n} \to \rset$ be a function, satisfying the bounded difference property with vector $\bbeta = (\beta_0,\ldots,\beta_{n-1})$, such that $\beta_0 \geq \beta_1 \geq \ldots \geq \beta_{n-1}$, and $(Z_{k})_{k \in \nset}$ is a Markov chain with the Markov kernel $\MKQ$ satisfying \Cref{assum:UGE}. Then, for any initial distribution $\xi$ on $(\Zset,\Zsigma)$, and any $t \geq 0$ it holds 
\begin{equation}
\label{eq:mc-diamid}
\PP_{\xi}\bigl(\bigl|f(Z_0,\ldots,Z_{n-1}) - \PE_{\xi}[f(Z_0,\ldots,Z_{n-1})] \bigr| \geq t\bigr) \leq 2\exp\biggl\{-\frac{t^2}{(32/9)\taumix^2 \sum_{i=0}^{n-1}\beta_i^2}\biggr\}\eqsp.
\end{equation}
\end{lemma}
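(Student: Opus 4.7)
The plan is to prove this via the Doob martingale decomposition together with a coupling argument that exploits the uniform geometric ergodicity assumption \Cref{assum:UGE}. Set $\mathcal{F}_k = \sigma(Z_0,\ldots,Z_k)$ with $\mathcal{F}_{-1}$ the trivial $\sigma$-field, and define the Doob martingale
\begin{equation}
M_k = \PE_\xi[f(Z_0,\ldots,Z_{n-1}) \mid \mathcal{F}_k] - \PE_\xi[f(Z_0,\ldots,Z_{n-1})]\eqsp, \qquad k = -1,0,\ldots,n-1\eqsp,
\end{equation}
with increments $\Delta M_k = M_k - M_{k-1}$. Then $f(Z_0,\ldots,Z_{n-1}) - \PE_\xi[f] = \sum_{k=0}^{n-1}\Delta M_k$, and it suffices to control the ranges of the $\Delta M_k$ and apply the Hoeffding-Azuma inequality.

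First I would introduce the conditional expectation functions
\begin{equation}
g_k(z_0,\ldots,z_k) = \PE\bigl[f(z_0,\ldots,z_k,Z_{k+1},\ldots,Z_{n-1}) \mid Z_k=z_k\bigr]\eqsp,
\end{equation}
so that by the Markov property $\PE_\xi[f\mid \mathcal{F}_k] = g_k(Z_0,\ldots,Z_k)$ and therefore $\Delta M_k = g_k(Z_0,\ldots,Z_k) - (\MKQ g_k(Z_0,\ldots,Z_{k-1},\cdot))(Z_{k-1})$. The range of $\Delta M_k$ conditionally on $\mathcal{F}_{k-1}$ is controlled by the oscillation $\osc_{z}\, g_k(Z_0,\ldots,Z_{k-1},z)$. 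To bound this oscillation for fixed past values and any $z,z'\in\Zset$, I would construct a maximal coupling of two Markov chains $(Z_j)_{j\geq k}$ and $(Z_j')_{j\geq k}$ with $Z_k=z$, $Z_k'=z'$ and apply the bounded difference property of \Cref{def:bounded-diff-property} to get
\begin{equation}
\bigl|g_k(z_0,\ldots,z_{k-1},z) - g_k(z_0,\ldots,z_{k-1},z')\bigr| \leq \beta_k + \sum_{j=k+1}^{n-1}\beta_j\,\PP(Z_j \neq Z_j')\eqsp.
\end{equation}
Under \Cref{assum:UGE}, the maximal coupling yields $\PP(Z_j\neq Z_j')\leq \dobrush(\MKQ^{j-k}) \leq (1/4)^{\lceil (j-k)/\taumix\rceil}$. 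Using the monotonicity $\beta_j\leq\beta_k$ for $j\geq k$, the geometric sum satisfies $\sum_{j>k}(1/4)^{\lceil (j-k)/\taumix\rceil}\leq \taumix\sum_{m=1}^{\infty}(1/4)^m = \taumix/3$, so the oscillation is at most $\beta_k(1+\taumix/3)\leq (4\taumix/3)\beta_k$ (using $\taumix\geq 1$).

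Since the conditional range of $\Delta M_k$ given $\mathcal{F}_{k-1}$ is bounded by this oscillation, I would invoke the sharp (range-type) Hoeffding-Azuma inequality for martingale differences to obtain
\begin{equation}
\PP_\xi\bigl(\bigl|\textstyle\sum_{k=0}^{n-1}\Delta M_k\bigr|\geq t\bigr)
\leq 2\exp\Bigl\{-\frac{2t^2}{\sum_{k=0}^{n-1}(c\,\taumix\beta_k)^2}\Bigr\}\eqsp,
\end{equation}
with $c$ an absolute constant coming from the previous oscillation bound; substituting and simplifying yields precisely the stated bound in \eqref{eq:mc-diamid}, up to the constant $32/9$ reported from the textbook reference.

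The main obstacle will be the careful coupling step and the accounting that turns the monotone-$\beta$ assumption into a clean multiplicative $\taumix$ factor without losing a further $\log n$; the rest is essentially bookkeeping and a standard application of Azuma–Hoeffding. The monotonicity hypothesis $\beta_0\geq\cdots\geq\beta_{n-1}$ is essential here, since it allows the tail weights $\beta_j$ for $j>k$ to be absorbed into $\beta_k$, turning the $\ell^2$-norm of per-coordinate deviations into the $\ell^2$-norm of the $(\beta_k)$ vector inflated by $\taumix^2$.
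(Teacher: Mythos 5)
Your proposal is a correct reconstruction of the argument behind the cited result: the paper's own ``proof'' is a one-line reference to Theorem~23.2.2 of Douc et al.\ plus the observation that \Cref{assum:UGE} bounds $\sum_{m\ge 0}\dobrush(\MKQ^m)$ by $(4/3)\taumix$, and your Doob-martingale/oscillation/Azuma scheme, together with the use of monotonicity of the $\beta_i$ to absorb the tail weights, is exactly the architecture of that textbook proof. The constant also checks out: with $|\Delta M_k|\le (4/3)\taumix\beta_k$ the one-sided Azuma bound $\exp\{-t^2/(2\sum_k c_k^2)\}$ gives precisely $(32/9)\taumix^2$ in the denominator (your sharper range form would in fact give a better constant).

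The one step you should not wave through is the coupling. You need a \emph{single} coupling of the two trajectories $(Z_j)_{j\ge k}$ and $(Z_j')_{j\ge k}$ started from $z$ and $z'$ under which $\PP(Z_j\neq Z_j')\le \dobrush(\MKQ^{j-k})$ holds \emph{simultaneously for all} $j$, because the bounded-difference bound $\beta_k+\sum_{j>k}\beta_j\indi{Z_j\neq Z_j'}$ must be integrated under one joint law. Neither of the two naive readings of ``maximal coupling'' delivers this: a step-by-step greedy maximal coupling only yields $\PP(Z_j\neq Z_j')\le \dobrush(\MKQ)^{j-k}$, which is weaker than $\dobrush(\MKQ^{j-k})$ since $\dobrush$ is submultiplicative; and the maximal coupling of the two path measures only controls $\PP(\exists j: Z_j\neq Z_j')$ by the one-step total variation, giving a bound on each marginal that does not decay in $j$ and hence costs a factor of $n$. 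The correct object is Goldstein's maximal coupling (a coalescing coupling whose coalescence time $T$ satisfies $\PP(T>j)=\tvdist(\delta_z\MKQ^{j-k},\delta_{z'}\MKQ^{j-k})$), which exists on Polish spaces -- this is where the Polish-space hypothesis in \Cref{assum:UGE} is actually used. Alternatively one can avoid trajectory couplings entirely, as the textbook does, by propagating the oscillation bound through the conditional expectations directly via the contraction $\mathrm{osc}(\MKQ^{m} h)\le \dobrush(\MKQ^{m})\,\mathrm{osc}(h)$. With that step made precise, your proof is complete.
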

\begin{proof}
The proof directly follows the one of Theorem~23.2.2 in \cite{douc:moulines:priouret:soulier:2018}, together with the appropriate bound on the Dobrushin coefficient implied by \Cref{assum:UGE}. We also note that in \Cref{assum:UGE} we have assumed that $(\Zset,\Zsigma)$ is Polish space.
\end{proof}

\section{Auxiliary results on the sequences of step sizes $\{\alpha_k\}_{k \in \nset}$ under \Cref{assum:step-size}}
\label{appendix:algebra_calculations}
We conclude this section with the auxiliary results on the sequences of step sizes $\{\alpha_k\}_{k \in \nset}$ under \Cref{assum:step-size}. 

\begin{lemma}
    \label{lem:bounds_on_sum_step_sizes}
    Assume \Cref{assum:step-size}. Then the following bounds holds:
    \begin{enumerate}
        \item \label{eq:sum_alpha_k}
        \begin{equation}
        \sum_{i=1}^{k}\alpha_i \leq\frac{c_0}{1-\gamma}((k+k_0)^{1-\gamma}-k_0^{1-\gamma})
        \end{equation}
        \item \label{eq:sum_alpha_k_p} for any $p\geq 2$
        \begin{equation}
        \sum_{i=1}^{k}\alpha_i^p \leq\frac{c_0^p}{p\gamma-1}\eqsp,
        \end{equation}
        \item \label{eq:simple_bound_sum_alpha_k}
        for any $m\in\{0, \ldots, k\}$
        \begin{equation}
        \sum_{i=m+1}^k\alpha_i \geq  \frac{c_0}{2(1-\gamma)}((k+k_0)^{1-\gamma}-(m+k_0)^{1-\gamma})\eqsp,
        \end{equation}
    \end{enumerate}
\end{lemma}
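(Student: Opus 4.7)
All three bounds follow from comparing the sum $\sum \alpha_i$ to a suitable integral, using the monotonicity $i \mapsto (i+k_0)^{-\gamma}$. First I would note that, since $x \mapsto c_0/(x+k_0)^\gamma$ is decreasing on $[0,\infty)$, for any $i \geq 1$ one has $\alpha_i \leq \int_{i-1}^{i} c_0 (x+k_0)^{-\gamma}\,\rmd x$ and $\alpha_i \geq \int_{i}^{i+1} c_0(x+k_0)^{-\gamma}\,\rmd x$. Summing the first inequality from $i=1$ to $k$ and evaluating $\int_0^k c_0(x+k_0)^{-\gamma}\,\rmd x = \frac{c_0}{1-\gamma}\bigl((k+k_0)^{1-\gamma}-k_0^{1-\gamma}\bigr)$ gives part (i) immediately.

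For part (ii) I would apply the same decreasing-function bound to $\alpha_i^p = c_0^p/(i+k_0)^{p\gamma}$ and use that under \Cref{assum:step-size} $p\gamma > 1$ for $p \geq 2$, so that
\begin{equation}
\sum_{i=1}^{k}\alpha_i^p \leq \int_0^{\infty}\frac{c_0^p\,\rmd x}{(x+k_0)^{p\gamma}} = \frac{c_0^p}{(p\gamma-1)\,k_0^{p\gamma-1}} \leq \frac{c_0^p}{p\gamma-1},
\end{equation}
the last inequality using $k_0 \geq 1$ (guaranteed by \Cref{assum:step-size_extended}) so that $k_0^{p\gamma-1} \geq 1$.

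For part (iii) I would begin from the lower estimate
\begin{equation}
\sum_{i=m+1}^{k}\alpha_i \geq \int_{m+1}^{k+1}\frac{c_0\,\rmd x}{(x+k_0)^{\gamma}} = \int_{m}^{k}\frac{c_0\,\rmd y}{(y+k_0+1)^{\gamma}},
\end{equation}
and then compare the shifted integrand to the unshifted one. The key observation is that for $k_0 \geq 1$ and $y \geq 0$ one has $y+k_0+1 \leq 2(y+k_0)$, so $(y+k_0+1)^{\gamma} \leq 2^{\gamma}(y+k_0)^{\gamma} \leq 2(y+k_0)^{\gamma}$. This yields
\begin{equation}
\int_{m}^{k}\frac{c_0\,\rmd y}{(y+k_0+1)^{\gamma}} \geq \frac{c_0}{2}\int_{m}^{k}\frac{\rmd y}{(y+k_0)^{\gamma}} = \frac{c_0}{2(1-\gamma)}\bigl((k+k_0)^{1-\gamma}-(m+k_0)^{1-\gamma}\bigr),
\end{equation}
as required.

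The main (mild) obstacle is part (iii): a naive integral bound gives $(k+1+k_0)^{1-\gamma}-(m+1+k_0)^{1-\gamma}$, and one must reconcile this with the stated expression $(k+k_0)^{1-\gamma}-(m+k_0)^{1-\gamma}$. The change-of-variables trick followed by the elementary inequality $(y+k_0+1)^{\gamma} \leq 2(y+k_0)^{\gamma}$ (valid as soon as $k_0 \geq 1$) absorbs the shift and produces exactly the constant $2(1-\gamma)$ appearing in the denominator.
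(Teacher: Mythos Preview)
The proposal is correct and follows essentially the same integral-comparison approach as the paper: the decreasing function $x\mapsto (x+k_0)^{-\gamma}$ is bounded above and below by the adjacent Riemann integrals to obtain parts (i)--(iii). For part (iii) the paper first shifts the summation index via $2(i+k_0)^{-\gamma}\geq (i+k_0-1)^{-\gamma}$ and then compares to the integral, whereas you first pass to the integral and then use $(y+k_0+1)\leq 2(y+k_0)$; these are the same inequality applied at different stages and yield the identical constant $c_0/\{2(1-\gamma)\}$.
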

\begin{proof}
    To proof \ref{eq:sum_alpha_k}, note that
    \begin{equation}
        \sum_{i=1}^{k}\alpha_i \leq  c_{0} \int_{k_0}^{k+k_0}\frac{\rmd x}{x^{\gamma}} \leq \frac{c_0}{1-\gamma}((k+k_0)^{1-\gamma}-k_0^{1-\gamma})\eqsp,
    \end{equation}
    To proof \ref{eq:sum_alpha_k_p}, note that 
    \begin{equation}
        \sum_{i=1}^{k}\alpha_i^p \leq  c_{0}^p \int_{1}^{+\infty}\frac{\rmd x}{x^{p\gamma}} \leq \frac{c_0^p}{p\gamma-1}\eqsp,
    \end{equation}
    To proof \ref{eq:simple_bound_sum_alpha_k}, note that for any $i\geq 1$ we have $2(i+k_0)^{-\gamma} \geq (i+k_0-1)^{-\gamma}$. Hence,
    \begin{equation}
        \sum_{i=m+1}^k\alpha_i \geq \frac{1}{2}\sum_{i=m}^{k-1}\alpha_i \geq \frac{c_0}{2}\int_{m+k_0}^{k+k_0}\frac{\rmd x}{x^{\gamma}} = 
        \frac{c_0}{2(1-\gamma)}((k+k_0)^{1-\gamma}-(m+k_0)^{1-\gamma})\eqsp.
        \end{equation}
\end{proof}

\begin{lemma}[Lemma~24 in \cite{durmus2021stability}]
\label{lem:summ_alpha_k}
Let $b > 0$ and $\{\alpha_k\}_{k \geq 0}$ be a non-increasing sequence such that $\alpha_1 \leq 1/b$. Then
\[
\sum_{j=1}^{k} \alpha_j \prod_{l=j+1}^{k} (1 - \alpha_l b) = \frac{1}{b} \left\{1  - \prod_{l=1}^{k} (1 - \alpha_l b) \right\}
\]
\end{lemma}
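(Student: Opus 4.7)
The statement is a purely algebraic identity in the step-size sequence $\{\alpha_k\}$; the hypothesis $\alpha_1 \leq 1/b$ is not needed for the identity itself (monotonicity plus $\alpha_1 \leq 1/b$ merely guarantees $1 - \alpha_l b \in [0,1]$, which matters for downstream applications such as \Cref{lem:sum_as_Qell} but not for the algebra). So the plan is to forget analysis entirely and reduce this to a one-line manipulation.

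The cleanest route is telescoping. First I would use the trivial rewriting $b\alpha_j = 1 - (1 - b\alpha_j)$ inside each summand. Multiplying both sides of the claim by $b$, the $j$-th term becomes
\[
b\alpha_j \prod_{l=j+1}^{k}(1 - \alpha_l b) = \prod_{l=j+1}^{k}(1 - \alpha_l b) - \prod_{l=j}^{k}(1 - \alpha_l b),
\]
with the convention $\prod_{l=k+1}^{k}(\cdot) = 1$. Summing from $j=1$ to $k$, the right-hand side telescopes to $1 - \prod_{l=1}^{k}(1-\alpha_l b)$, which is exactly $b$ times the desired expression. Dividing by $b$ yields the stated identity.

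As an alternative (useful if one prefers to avoid reindexing conventions), I would proceed by induction on $k$. The base case $k = 1$ is immediate since both sides equal $\alpha_1$. For the inductive step, observe the recursion
\[
S_{k+1} := \sum_{j=1}^{k+1}\alpha_j \prod_{l=j+1}^{k+1}(1 - \alpha_l b) = (1 - \alpha_{k+1} b)\, S_k + \alpha_{k+1},
\]
obtained by factoring out $(1-\alpha_{k+1}b)$ from the first $k$ summands and isolating the $j=k+1$ term. Plugging in the inductive hypothesis $bS_k = 1 - \prod_{l=1}^{k}(1-\alpha_l b)$ and simplifying gives $bS_{k+1} = 1 - \prod_{l=1}^{k+1}(1-\alpha_l b)$, closing the induction.

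There is no real obstacle here; the only thing to be careful about is the empty-product convention at $j = k$ in the telescoping argument (or equivalently, the boundary term in the induction). Either approach fits in a few lines, so I would present the telescoping version as it is the shortest and most transparent.
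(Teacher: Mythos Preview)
Your proposal is correct; the telescoping argument is the standard and cleanest way to see this identity, and your observation that the monotonicity and $\alpha_1 \leq 1/b$ hypotheses are irrelevant for the algebra is accurate. The paper itself does not give a proof but simply defers to \cite{durmus2021stability}, so there is nothing to compare; your self-contained argument is exactly what one would expect to find there.
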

\begin{proof}
The proof of this statement is given in \cite{durmus2021stability}.
\end{proof}

\begin{lemma}
\label{lem:prod_alpha_k}
Let $b > 0$ and $\alpha_k = \frac{c_0}{(k_0+k)^{\gamma}}$ be a non-increasing sequence such that $c_0 \leq 1/b$ and $k_0\geq \{\frac{\gamma}{2b c_0}\}^{1/(1-\gamma)}$. Then it holds 
\[
 \alpha_j \prod_{l=j+1}^{k} (1 - \alpha_l b) \leq \alpha_k
\]
\end{lemma}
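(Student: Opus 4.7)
The natural approach is to reduce the product inequality to a pointwise inequality that telescopes. Concretely, the plan is to show that for every $\ell$ in the relevant range,
\[
1 - \alpha_\ell b \;\leq\; \frac{\alpha_\ell}{\alpha_{\ell-1}}\eqsp,
\]
which, multiplied over $\ell = j+1, \ldots, k$, gives
\[
\prod_{\ell=j+1}^{k}(1-\alpha_\ell b)\;\leq\;\prod_{\ell=j+1}^{k}\frac{\alpha_\ell}{\alpha_{\ell-1}} \;=\; \frac{\alpha_k}{\alpha_j}\eqsp,
\]
and the desired bound follows after multiplying by $\alpha_j$.

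The pointwise step is the one requiring the assumption on $k_0$. Rearranging, $1 - \alpha_\ell b \leq \alpha_\ell/\alpha_{\ell-1}$ is equivalent to
\[
\left(\frac{k_0+\ell}{k_0+\ell-1}\right)^{\gamma}-1 \;\leq\; \frac{bc_0}{(k_0+\ell-1)^{\gamma}}\eqsp.
\]
Using the elementary concavity estimate $(1+x)^{\gamma}\leq 1+\gamma x$ valid for $x\geq 0$ and $\gamma\in(0,1)$, the left-hand side is bounded by $\gamma/(k_0+\ell-1)$, so it suffices to verify
\[
\frac{\gamma}{k_0+\ell-1} \;\leq\; \frac{bc_0}{(k_0+\ell-1)^{\gamma}}\eqsp,\qquad \text{i.e.,}\qquad (k_0+\ell-1)^{1-\gamma}\geq \frac{\gamma}{bc_0}\eqsp.
\]
Since $\ell\geq 1$, the worst case is $\ell=1$, and the assumption $k_0^{1-\gamma}\geq \gamma/(bc_0)$ (which follows, up to an inessential constant, from the standing hypothesis $k_0\geq\{\gamma/(2bc_0)\}^{1/(1-\gamma)}$ together with a refinement of the elementary bound on $(1+x)^{\gamma}$ near $x=1/k_0$) closes the argument.

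The only subtle point is the constant in the hypothesis: the bare inequality $(1+x)^{\gamma}\leq 1+\gamma x$ gives the condition $k_0^{1-\gamma}\geq \gamma/(bc_0)$, whereas the statement of the lemma imposes only $k_0^{1-\gamma}\geq \gamma/(2bc_0)$. I would close this factor-of-two gap by sharpening the estimate using the second-order Taylor expansion of $(1+x)^{\gamma}$, or, equivalently, by working with the identity $\alpha_{\ell-1}-\alpha_\ell = c_0\gamma\,\xi^{-\gamma-1}$ for some $\xi\in(k_0+\ell-1,k_0+\ell)$ and the ratio $(k_0+\ell-1)/(k_0+\ell)\geq 1/2$ to absorb the missing factor. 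This is the only computational step that demands care; everything else is routine rearrangement.
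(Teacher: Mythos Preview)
Your approach is essentially identical to the paper's: both telescope via the pointwise inequality $\frac{\alpha_{\ell-1}}{\alpha_\ell}(1-\alpha_\ell b)\le 1$, obtained from $(1+x)^\gamma\le 1+\gamma x$. The paper in fact proves the slightly stronger bound $\frac{\alpha_{\ell-1}}{\alpha_\ell}(1-\alpha_\ell b)\le 1-(b/2)\alpha_{\ell-1}$, but this is not needed for the stated conclusion.

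Regarding the constant: you are right that the bare estimate yields the requirement $k_0^{1-\gamma}\ge \gamma/(bc_0)$, not $\gamma/(2bc_0)$ as written in the hypothesis. The paper's own proof has the same mismatch---its final line invokes $k_0\ge\{2\gamma/(bc_0)\}^{1/(1-\gamma)}$, a \emph{stronger} condition than the one in the lemma statement. So the factor-of-two discrepancy you flagged is a typo in the statement rather than a gap in your argument; your proposed second-order fixes would not actually recover a factor of two (the correction term in $(1+x)^\gamma$ is $O(x^2)$, not $O(x)$), but none is needed once the hypothesis is corrected.
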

\begin{proof}
    Note that 
    \begin{equation}
    \alpha_j \prod_{l=j+1}^{k} (1 - \alpha_l b) = \alpha_k \prod_{l=j+1}^{k}\frac{\alpha_{l-1}}{\alpha_l} (1 - \alpha_l b)\eqsp.
     \end{equation}
     It remains to note that,
     \begin{equation}
     \frac{\alpha_{l-1}}{\alpha_l} (1 - \alpha_l b) = \biggl\{\frac{k_0+l}{k_0+l-1}\biggr\}^{\gamma}- \frac{bc_0}{(k_0+l-1)^\gamma} \leq 1 + \frac{\gamma}{k_0 +l-1} -\frac{bc_0}{(k_0+l-1)^\gamma} \leq 1-(b/2)\alpha_{l-1}\eqsp,
     \end{equation}
     where the last inequality holds since $k_0\geq \{\frac{2\gamma}{b c_0}\}^{1/(1-\gamma)}$.
\end{proof}

%\som{Checked}
\begin{lemma}
\label{lem:bound_ratio_step_size}
Let $b > 0$, and let $\alpha_{k} = c_{0}/(k+k_0)^{\gamma}$, $\gamma \in (0;1)$, such that $c_{0} \leq 1/b$ and $k_0 \geq \{\frac{2\gamma}{r c_0}\}^{1/{1-\gamma}}$ with some constant $r > 0$. Then it holds that 
\begin{equation}
\frac{\alpha_k}{\alpha_{k+1}} \leq 1+ r \alpha_{k+1}
\end{equation}
\end{lemma}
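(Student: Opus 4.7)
The plan is to prove this via a direct chain of elementary inequalities: first expand the ratio $\alpha_k/\alpha_{k+1}$ exactly in closed form, then use a first-order Bernoulli-type bound on $(1+x)^\gamma$, and finally compare the resulting expression to $r\alpha_{k+1}$ using the lower bound on $k_0$.

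First, by definition of $\alpha_k = c_0/(k+k_0)^\gamma$, I would write
\begin{equation*}
\frac{\alpha_k}{\alpha_{k+1}} = \left(\frac{k+1+k_0}{k+k_0}\right)^{\gamma} = \left(1 + \frac{1}{k+k_0}\right)^{\gamma}.
\end{equation*}
Since $\gamma \in (0,1)$ and $1/(k+k_0) \geq 0$, the standard Bernoulli inequality $(1+x)^\gamma \leq 1 + \gamma x$ for $x \geq 0$ yields
\begin{equation*}
\frac{\alpha_k}{\alpha_{k+1}} \leq 1 + \frac{\gamma}{k+k_0}.
\end{equation*}

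Next I reduce the claim to showing $\gamma/(k+k_0) \leq r\,\alpha_{k+1} = rc_0/(k+1+k_0)^{\gamma}$. Using the trivial inequality $k+1+k_0 \leq 2(k+k_0)$ valid whenever $k+k_0 \geq 1$, I get $(k+1+k_0)^\gamma \leq 2^\gamma (k+k_0)^\gamma \leq 2(k+k_0)^\gamma$, so it suffices to establish
\begin{equation*}
\frac{2\gamma}{r c_0} \leq (k+k_0)^{1-\gamma}.
\end{equation*}
Since $k \geq 0$, the right-hand side is at least $k_0^{1-\gamma}$, and the assumption $k_0 \geq (2\gamma/(rc_0))^{1/(1-\gamma)}$ closes the argument.

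There is no real obstacle here; this is a routine calibration lemma. The only mild care is choosing how much slack to put into the constant $2$ when bounding $(k+1+k_0)^\gamma$ against $(k+k_0)^\gamma$, which is what dictates the factor $2\gamma$ (rather than $\gamma$) appearing in the lower bound on $k_0$ in the hypothesis.
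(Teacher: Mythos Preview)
Your proof is correct and follows essentially the same approach as the paper: expand the ratio as $(1+1/(k+k_0))^{\gamma}$, apply Bernoulli's inequality, and use the lower bound on $k_0$ to absorb the resulting term into $r\alpha_{k+1}$. The only cosmetic difference is that the paper first bounds $\gamma/(k+k_0)\le 2\gamma/(k+k_0+1)$ and then compares with $(k+k_0+1)^{-\gamma}$, whereas you instead bound $(k+1+k_0)^{\gamma}\le 2(k+k_0)^{\gamma}$; both are the same $k+k_0+1\le 2(k+k_0)$ trick applied at a different spot.
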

\begin{proof}
    Note that 
    \begin{equation}
        \frac{\alpha_k}{\alpha_{k+1}} \leq \biggl(1 + \frac{1}{k+k_0}\biggr)^{\gamma} \leq 1+\frac{2\gamma}{k_0+k+1} \leq 1+\frac{r c_0}{(k_0+k+1)^\gamma}\eqsp,
    \end{equation}
    where the last inequality holds since $k_0 \geq \{\frac{2\gamma}{r c_0}\}^{1/{1-\gamma}}$.
\end{proof}

%\som{checked}
\begin{lemma}
\label{lem:sum_alpha_k_squared}
Let $b > 0$, and let $\alpha_{k} = c_{0}/(k+k_0)^{\gamma}$, $\gamma \in (0;1)$, such that $c_{0}b \leq 1/2$ and $k_0 \geq \{\frac{8\gamma}{bc_0}\}^{1/{1-\gamma}}$. Then for any $q \in (1;3]$, it holds that 
\begin{equation}
\label{eq:sum_squares_bound}
\sum_{j=1}^{k}\alpha_{j}^{q}\prod_{\ell=j+1}^{k}(1-\alpha_{\ell} b) \leq \frac{4}{b}\alpha_{k}^{q-1}\eqsp.
\end{equation}
\end{lemma}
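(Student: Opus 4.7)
The plan is to reduce the inequality to Lemma~\ref{lem:summ_alpha_k} by a multiplicative reparametrization. Since $q>1$, for each $j\in\{1,\dots,k\}$ we have the identity
\[
\alpha_j^{q}\prod_{\ell=j+1}^{k}(1-\alpha_\ell b)
=\alpha_j\,\alpha_k^{q-1}\prod_{\ell=j+1}^{k}\left(\frac{\alpha_{\ell-1}}{\alpha_\ell}\right)^{q-1}(1-\alpha_\ell b).
\]
If we can establish the per-factor inequality
\[
\left(\frac{\alpha_{\ell-1}}{\alpha_\ell}\right)^{q-1}(1-\alpha_\ell b)\le 1-(b/2)\alpha_\ell
\qquad\text{for all }\ell\ge 2,
\]
then summing over $j$ and applying Lemma~\ref{lem:summ_alpha_k} with $b$ replaced by $b/2$ (whose hypothesis $\alpha_1\le 2/b$ is implied by $c_0 b\le 1/2$) yields
\[
\sum_{j=1}^{k}\alpha_j^{q}\prod_{\ell=j+1}^{k}(1-\alpha_\ell b)
\le \alpha_k^{q-1}\sum_{j=1}^{k}\alpha_j\prod_{\ell=j+1}^{k}(1-(b/2)\alpha_\ell)
\le \frac{2}{b}\alpha_k^{q-1},
\]
which is even stronger than the stated $(4/b)\alpha_k^{q-1}$ bound.

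The technical heart is verifying the per-factor inequality. From $\alpha_\ell = c_0/(\ell+k_0)^\gamma$ we get $\alpha_{\ell-1}/\alpha_\ell=(1+1/(\ell+k_0-1))^{\gamma}$. Combining $(1+x)^{\alpha}\le e^{\alpha x}$ with $e^{y}\le 1+2y$ for $y\in[0,1]$ (applicable since $\gamma(q-1)\le 2$ and $\ell+k_0-1\ge 2$), we obtain
\[
\left(\frac{\alpha_{\ell-1}}{\alpha_\ell}\right)^{q-1}\le 1+\frac{2\gamma(q-1)}{\ell+k_0-1}.
\]
Multiplying by $1-\alpha_\ell b$, dropping the negative cross term, and comparing to $1-(b/2)\alpha_\ell$, the inequality reduces to
\[
\frac{2\gamma(q-1)}{\ell+k_0-1}\le (b/2)\alpha_\ell=\frac{bc_0}{2(\ell+k_0)^\gamma},
\quad\text{i.e.}\quad (\ell+k_0)^{1-\gamma}\ge \frac{c\,\gamma(q-1)}{bc_0}
\]
for a numerical constant $c$ coming from the relation $\ell+k_0-1\ge (\ell+k_0)/2$. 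Using $\ell\ge 1$, $q-1\le 2$ and the assumption $k_0^{1-\gamma}\ge 8\gamma/(bc_0)$, this inequality holds throughout the required range.

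The main obstacle is the constant-matching: for $q$ close to $3$ (so $q-1$ close to $2$), the bound $(1+1/n)^{\gamma(q-1)}\le 1+2\gamma(q-1)/n$ is slightly lossy and the per-factor target $1-(b/2)\alpha_\ell$ may be just out of reach from the hypothesis alone. The fix is to target instead $1-(b/4)\alpha_\ell$ (relaxing the per-factor bound by a factor two), which requires only $(\ell+k_0)^{1-\gamma}\ge 16\gamma(q-1)/(3bc_0)$ and is met by the stated hypothesis for $\ell\ge 1$. Lemma~\ref{lem:summ_alpha_k} with $b/4$ then gives $\sum_j \alpha_j\prod_{\ell=j+1}^k(1-(b/4)\alpha_\ell)\le 4/b$, producing the stated constant $4/b$. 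Once this per-factor verification is complete, the conclusion follows at once from Lemma~\ref{lem:summ_alpha_k}.
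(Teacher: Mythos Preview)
Your strategy is exactly the paper's: pull out $\alpha_k^{q-1}$ via the telescoping identity and then show $(\alpha_{\ell-1}/\alpha_\ell)^{q-1}(1-b\alpha_\ell)\le 1-c\,\alpha_\ell$ for some $c>0$, reducing to Lemma~\ref{lem:summ_alpha_k}. The one issue is that your final constant check does not close: at $q=3$ your requirement $(\ell+k_0)^{1-\gamma}\ge 16\gamma(q-1)/(3bc_0)=32\gamma/(3bc_0)$ exceeds what the hypothesis $k_0^{1-\gamma}\ge 8\gamma/(bc_0)$ provides, since $32/3>8$. The loss comes from the crude step $\ell+k_0-1\ge(\ell+k_0)/2$ and from bounding the ratio in terms of $1/(\ell+k_0-1)$ before converting back to $\alpha_\ell$.

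The paper avoids this slack in two ways. First, it invokes Lemma~\ref{lem:bound_ratio_step_size} with $r=b/(2(q-1))$ to obtain directly $\alpha_{\ell-1}/\alpha_\ell\le 1+r\alpha_\ell$; the required condition $k_0\ge(2\gamma/(rc_0))^{1/(1-\gamma)}=(4\gamma(q-1)/(bc_0))^{1/(1-\gamma)}$ is exactly the stated hypothesis when $q\le3$, with no conversion loss. Second, it splits on the exponent: for $q\in(1,2)$ Bernoulli gives $(1+r\alpha_\ell)^{q-1}\le 1+(b/2)\alpha_\ell$, and then $(1+(b/2)\alpha_\ell)(1-b\alpha_\ell)\le 1-(b/2)\alpha_\ell$, so Lemma~\ref{lem:summ_alpha_k} yields $2/b$; for $q\in[2,3]$ it uses $1-b\alpha_\ell\le(1-(b/(q-1))\alpha_\ell)^{q-1}$ so that $(1+r\alpha_\ell)^{q-1}(1-b\alpha_\ell)\le(1-r\alpha_\ell)^{q-1}\le 1-r\alpha_\ell$, and Lemma~\ref{lem:summ_alpha_k} with $b/(2(q-1))$ gives $2(q-1)/b\le 4/b$. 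Either route of yours becomes rigorous once you replace your ad hoc ratio bound by Lemma~\ref{lem:bound_ratio_step_size}.
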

\begin{proof}
Using \Cref{lem:bound_ratio_step_size}, we obtain that  
\begin{align}
\sum_{j=1}^{k} \alpha_{j}^{q}\prod_{\ell=j+1}^{k}(1-\alpha_{\ell} b) 
&= \alpha_{k}^{q-1} \sum_{j=1}^{k} \alpha_{j}\prod_{\ell=j+1}^{k} \biggl(\frac{\alpha_{\ell-1}}{\alpha_{\ell}}\biggr)^{q-1} (1-\alpha_{\ell} b) \\
&\leq \alpha_{k}^{q-1} \sum_{j=1}^{k} \alpha_{j}\prod_{\ell=j+1}^{k}\left(1 + r\alpha_{\ell}\right)^{q-1}(1-\alpha_{\ell} b)\eqsp.
\end{align}
We set $r = \frac{b}{2(q-1)}$.
If $q\in(1,2)$ then we use Bernoulli's inequality and obtain 
\begin{align}
\sum_{j=1}^{k}\alpha_{j}^{q}\prod_{\ell=j+1}^{k}(1-\alpha_{\ell} b) 
&\leq \alpha_{k}^{q-1} \sum_{j=1}^{k} \alpha_{j}\prod_{\ell=j+1}^{k}\left(1 + b\alpha_{\ell}/2\right)(1-\alpha_{\ell} b) \\
&\leq \alpha_{k}^{q-1} \sum_{j=1}^{k} \alpha_{j} \prod_{\ell=j+1}^{k} (1 - b \alpha_{\ell} / 2) \overset{(a)}{\leq} \frac{2}{b}\alpha_{k}^{q-1}\eqsp,
\end{align}
where in (a) we used \Cref{lem:summ_alpha_k}. If $q \in [2,3]$, using that $1 - \alpha_{\ell} b \leq \bigl(1 - b/(q-1)\alpha_{\ell}\bigr)^{q-1}$, we obtain
\begin{align}
    \sum_{j=1}^{k}\alpha_{j}^{q}\prod_{\ell=j+1}^{k}(1-\alpha_{\ell} b) &\leq \alpha_{k}^{q-1} \sum_{j=1}^{k} \alpha_{j}\prod_{\ell=j+1}^{k}\left(1 + \frac{b}{2(q-1)}\alpha_\ell\right)^{q-1}\biggl(1-\frac{b}{q-1}\alpha_{\ell}\biggr)^{q-1} \\
    &\leq \alpha_{k}^{q-1} \sum_{j=1}^{k} \alpha_{j}\prod_{\ell=j+1}^{k}\left(1 - \frac{b}{2(q-1)}\alpha_\ell\right)^{q-1} \\
    &\leq \alpha_{k}^{q-1} \sum_{j=1}^{k} \alpha_{j}\prod_{\ell=j+1}^{k}\left(1 - \frac{b}{2(q-1)}\alpha_\ell\right) \leq \frac{2(q-1)}{b} \alpha_{k}^{q-1}\eqsp,
\end{align}
and the statement follows.
\end{proof}

We conclude with a technical statement on the coefficients $\{\alpha_{j}\}_{j \in \nset}$ under \Cref{assum:step-size}.

\begin{lemma}
\label{lem:sum_alpha_k_squared_new}
Let $b > 0$, and let $\alpha_{\ell} = c_{0}/\{\ell+k_0\}^{\gamma}$, $\gamma \in (1/2;1)$, such that $c_{0} \leq 1/b$. Then, for any $k_0$ satisfying  
\begin{equation}
\label{eq:condition-on-n-squared}
k_0^{1-\gamma} \geq
    \frac{2}{c_{0}b} \biggl(\log\{\frac{c_0}{b(2\gamma-1)2^\gamma} \} + \gamma \log \{k_0 \} \biggr)\eqsp,
\end{equation}
any $s \in (1;2]$, $q \in (0;1]$, and $k \in \nset$, it holds that 
\begin{equation}
\label{eq:sum_alpha_k_squared_new}
\sum_{j=1}^{k}\alpha_{j}^{s} \bigl(\sum_{\ell=j+1}^{k} \alpha_{\ell}^2 \bigr)^{q} \prod_{\ell=j+1}^{k}(1-\alpha_{\ell} b) \leq C(s, q, b) \alpha_{k}^{s+q-1} / b^{1+q} \eqsp,  
\end{equation}
where  $C(s, q, b) = 12\cdot 3^{\gamma(s+q-1)}\biggl(\frac{4(s-1)\gamma}{bc_0}\biggr)^{2\gamma(s-1)/(1-\gamma)}$.
\end{lemma}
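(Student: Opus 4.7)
The proof plan is to reduce the general case $q\in(0,1]$ to the two extremal cases $q=0$ and $q=1$ by Hölder's inequality, and to handle each extremal case by invoking Lemma~\ref{lem:sum_alpha_k_squared}. Set
\[
T_k = \sum_{j=1}^{k}\alpha_{j}^{s}\bigl(\textstyle\sum_{\ell=j+1}^{k} \alpha_{\ell}^2 \bigr)^{q} \prod_{\ell=j+1}^{k}(1-\alpha_{\ell} b),\qquad f_j := \alpha_j^s \prod_{\ell=j+1}^k (1-b\alpha_\ell),\qquad g_j := \sum_{\ell=j+1}^k \alpha_\ell^2,
\]
so that $T_k=\sum_j f_j g_j^{q}$ with $f_j,g_j\ge 0$. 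Hölder's inequality with conjugate exponents $1/(1-q)$ and $1/q$ gives (trivially at $q=1$)
\[
T_k\;\leq\;\Bigl(\sum_{j=1}^{k} f_j\Bigr)^{1-q}\Bigl(\sum_{j=1}^{k} f_j g_j\Bigr)^{q}.
\]

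The first factor is exactly the quantity controlled by Lemma~\ref{lem:sum_alpha_k_squared} with exponent $s\in(1,2]\subset(1,3]$, which yields $\sum_j f_j \le (4/b)\alpha_k^{s-1}$. For the second factor I will swap the order of summation:
\[
\sum_{j=1}^{k} f_j g_j \;=\; \sum_{\ell=2}^{k} \alpha_\ell^{2}\prod_{m=\ell+1}^{k}(1-b\alpha_m)\sum_{j=1}^{\ell-1}\alpha_j^{s}\prod_{m=j+1}^{\ell}(1-b\alpha_m).
\]
Applying Lemma~\ref{lem:sum_alpha_k_squared} to the inner sum (with $k$ replaced by $\ell$) bounds it by $(4/b)\alpha_\ell^{s-1}$, and a second application of the same lemma, now at exponent $s+1\in(2,3]$, yields $\sum_j f_jg_j\le (16/b^{2})\alpha_k^{s}$. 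Substituting the two estimates into the Hölder bound gives
\[
T_k\;\le\;\Bigl(\tfrac{4}{b}\alpha_k^{s-1}\Bigr)^{1-q}\Bigl(\tfrac{16}{b^{2}}\alpha_k^{s}\Bigr)^{q} \;=\; \frac{4^{1+q}}{b^{1+q}}\,\alpha_k^{s+q-1},
\]
which is precisely the form $C(s,q,b)\alpha_k^{s+q-1}/b^{1+q}$ claimed.

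The main obstacle is bookkeeping the explicit constant $C(s,q,b)=12\cdot 3^{\gamma(s+q-1)}\bigl(4(s-1)\gamma/(bc_0)\bigr)^{2\gamma(s-1)/(1-\gamma)}$. Lemma~\ref{lem:sum_alpha_k_squared} is stated under $c_0 b\le 1/2$ and $k_0\ge (8\gamma/(bc_0))^{1/(1-\gamma)}$, whereas here we only have $c_0 b\le 1$ together with the logarithmic condition~\eqref{eq:condition-on-n-squared}. The particular exponent $2\gamma(s-1)/(1-\gamma)$ and the factor $(4(s-1)\gamma/(bc_0))$ strongly suggest that the authors do not invoke Lemma~\ref{lem:sum_alpha_k_squared} as a black box but instead rerun its argument: apply Lemma~\ref{lem:bound_ratio_step_size} with $r=b/(2(s-1))$ to compare $\alpha_{j}/\alpha_{j+1}$ to $1+r\alpha_{j+1}$, so that the required bound on $k_0$ becomes $(2\gamma/(rc_0))^{1/(1-\gamma)}=(4(s-1)\gamma/(bc_0))^{1/(1-\gamma)}$; the logarithmic term in \eqref{eq:condition-on-n-squared} is then used to absorb the deficit $c_0^{q}/((2\gamma-1)2^{\gamma})^{q}$ arising from the elementary estimate $\sum_{\ell=j+1}^{k}\alpha_\ell^{2}\le c_0^{2}(j+k_0)^{1-2\gamma}/(2\gamma-1)$, and the factor $3^{\gamma(s+q-1)}$ tracks the worst-case ratio $\alpha_j/\alpha_k$ across the Hölder interpolation. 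The structural content of the lemma, however, is fully captured by the Hölder reduction and the two-fold application of Lemma~\ref{lem:sum_alpha_k_squared} described above.
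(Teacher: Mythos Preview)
Your H\"older interpolation is correct and gives a genuinely different, more elegant route than the paper's. Writing $T_k=\sum_j f_j g_j^{q}$ and applying H\"older with exponents $1/(1-q),1/q$, then swapping summation and invoking Lemma~\ref{lem:sum_alpha_k_squared} twice (at exponents $s$ and $s+1$), you obtain $T_k\le 4^{1+q}\alpha_k^{s+q-1}/b^{1+q}$, a sharper constant than the stated $C(s,q,b)$. The trade-off, as you note, is that Lemma~\ref{lem:sum_alpha_k_squared} requires $c_0 b\le 1/2$ and $k_0\ge (8\gamma/(bc_0))^{1/(1-\gamma)}$, which are not the hypotheses here; your argument proves a close variant of the lemma rather than the lemma as written.

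The paper does \emph{not} use H\"older. Instead it splits the outer sum at the midpoint $t=\lfloor (k+k_0)/2\rfloor$. For $j>t$ one has $\alpha_j\le 2^{\gamma}\alpha_k$ and $\sum_{\ell=j+1}^{k+k_0}\alpha_\ell^2\le 2^{2\gamma}\alpha_k^{2}(k+k_0-j)$, so the contribution is bounded by a one-dimensional integral $\int_0^\infty x^{q}e^{-b\alpha_k x}\,\rmd x$. For $j\le t$ the logarithmic condition \eqref{eq:condition-on-n-squared} is used for a single purpose: to show that the product $\prod_{\ell=t+1}^{k+k_0}(1-b c_0/\ell^\gamma)$ is small enough to absorb the uniform bound $\sum_\ell \alpha_\ell^{2}\le c_0^{2}/(2\gamma-1)$, leaving $(\alpha_t/b)^{q}\prod_{\ell=j+1}^{t}(1-b c_0/\ell^\gamma)$. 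The remaining sum over $j$ is then handled by Lemma~\ref{lem:sum_alpha_k_squared_without_k_0} (the $k_0$-free variant), which is exactly where the factor $\bigl(4(s-1)\gamma/(bc_0)\bigr)^{2\gamma(s-1)/(1-\gamma)}$ in $C(s,q,b)$ originates, and the $3^{\gamma(s+q-1)}$ comes from the final comparison $\alpha_t\le 3^{\gamma}\alpha_k$. So your reading of the constant's provenance is partly right (a rerun of the $\sum\alpha_j^{s}\prod(1-b\alpha_\ell)$ argument), but the mechanism is midpoint splitting plus a tail-absorption step, not H\"older plus Lemma~\ref{lem:bound_ratio_step_size}.
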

\begin{proof}
We denote $t = \lfloor (k+k_0)/2 \rfloor$ and split the sum in \eqref{eq:sum_alpha_k_squared_new} into two parts:
\begin{align}
&\sum_{j=1}^{k} \biggl(\frac{c_0}{(j+k_0)^\gamma}\biggr)^s \biggl(\sum_{\ell=j+1}^{k}  \biggl(\frac{c_0}{(l+k_0)^\gamma}\biggr)^2\biggr)^{q} \prod_{\ell=j+1}^{k}(1- \frac{c_0}{(l+k_0)^\gamma}b)\\& = 
\sum_{j=1+k_0}^{k+k_0} \biggl(\frac{c_0}{j^\gamma}\biggr)^s \biggl(\sum_{\ell=j+1}^{k+k_0}  \biggl(\frac{c_0}{l^\gamma}\biggr)^2\biggr)^{q} \prod_{\ell=j+1}^{k+k_0}(1- \frac{c_0}{l^\gamma}b)\\
&\leq \underbrace{\sum_{j=1}^{t} \biggl(\frac{c_0}{j^\gamma}\biggr)^s \biggl(\sum_{\ell=j+1}^{k+k_0}  \biggl(\frac{c_0}{l^\gamma}\biggr)^2\biggr)^{q} \prod_{\ell=j+1}^{k+k_0}(1- \frac{c_0}{l^\gamma}b)}_{T_1} \\ 
&\qquad + \underbrace{\sum_{j=t+1}^{k+k_0} \biggl(\frac{c_0}{j^\gamma}\biggr)^s \biggl(\sum_{\ell=j+1}^{k+k_0}  \biggl(\frac{c_0}{l^\gamma}\biggr)^2\biggr)^{q} \prod_{\ell=j+1}^{k+k_0}(1- \frac{c_0}{l^\gamma}b)}_{T_2}\eqsp.
\end{align}
For the term $T_2$ we notice that $\frac{c_0}{j^\gamma} \leq 2^{\gamma} \alpha_{k}$ for $j \in \{t+1,\ldots,k+k_0\}$, hence, we can upper bound $T_2$ as follows:
\begin{align}
T_2 
&\leq 2^{\gamma(s+2q)} \sum_{j=t+1}^{k+k_0}\alpha_{k}^{s+2q}(k+k_0-j)^{q} \exp\bigl\{ -b (k+k_0-j) \alpha_{k}\bigr\} \\
&\leq 2^{\gamma(s+2q)+1} \alpha_{k}^{s+2q} \int_{0}^{+\infty}x^{q} \exp\bigl\{ -b  \alpha_{k} x \bigr\}\rmd x \\
&\leq 2^{\gamma(s+2q)+1} \alpha_{k}^{s+q-1} / b^{1+q} \int_{0}^{+\infty}u^{q} \exp\bigl\{ -u \bigr\}\rmd u \\
& \leq 2^{\gamma(s+2q)+1} \alpha_{k}^{s+q-1} / b^{1+q}\eqsp,
\end{align}
where we have used that $\Gamma(q+1) \leq 1$ for $q\in (0,1)$. Now it remains to provide an upper bound for $T_1$. Since $k_0$ satisfies \eqref{eq:condition-on-n-squared}, we get that, for $j \leq t$,  
\begin{equation}
\label{eq:sum_alpha_ell_squared_aux_bound}
\biggl(\sum_{\ell=j+1}^{k+k_0}  \biggl(\frac{c_0}{\ell^\gamma}\biggr)^2\biggr)^{q} \prod_{\ell=j+1}^{k+k_0}(1- \frac{c_0}{\ell^\gamma}b) \leq (\frac{c_0}{bt^\gamma})^{q} \prod_{\ell = j+1}^{t} (1 - 
\frac{c_0}{\ell^{\gamma}}b)\eqsp.
\end{equation}
Then, applying \eqref{eq:sum_alpha_ell_squared_aux_bound}, we get that 
\begin{align}
T_{1} &\leq (\frac{c_0}{bt^\gamma})^{q} \sum_{j=1}^{t} \biggl(\frac{c_0}{j^\gamma}\biggr)^s \prod_{\ell=j+1}^{t}(1- \frac{c_0}{\ell^\gamma} b)\overset{(a)}{\leq} (4/b^{q+1})\biggl(\frac{4(s-1)\gamma}{bc_0}\biggr)^{2\gamma(s-1)/(1-\gamma)} \biggl(\frac{c_0}{t^\gamma}\biggr)^{q+s-1} \\& \leq (4/b^{1+q})\biggl(\frac{4(s-1)\gamma}{bc_0}\biggr)^{2\gamma(s-1)/(1-\gamma)}3^{\gamma(s+q-1)}\alpha_k^{q+s-1}\eqsp,
\end{align}
where in (a) we have additionally used \Cref{lem:sum_alpha_k_squared_without_k_0} and in (b) we used  $\frac{c_0}{t^{\gamma}}\leq 3^{\gamma}\alpha_{k}$. It remains to check the relation \eqref{eq:sum_alpha_ell_squared_aux_bound}, that is, it is enough to obtain an upper bound
\begin{equation}
\label{eq:sum_alpha_ell_squared_aux_bound_1}
\bigl(\sum_{\ell=2}^{k+k_0} \frac{c_0^2}{\ell^{2\gamma}}\bigr)^{q} \prod_{\ell=t+1}^{k+k_0}(1 - \frac{c_0}{\ell^\gamma}b) \leq (\frac{c_0}{bt^\gamma})^{q}\eqsp.
\end{equation}
Since 
\[
\sum_{\ell=2}^{k+k_0}\frac{c_0^2}{\ell^{2\gamma}} \leq c_{0}^2 \int_{2}^{k+k_0}\frac{\rmd x}{x^{2\gamma}} \leq \frac{c_{0}^2}{2\gamma-1}\eqsp.
\]
Hence, \eqref{eq:sum_alpha_ell_squared_aux_bound_1} will follow from 
\[
\frac{c_{0}^2}{2\gamma-1} \exp\biggl\{ - \frac{(k+k_0-t) \alpha_{k}b}{q}\biggr\} \leq \frac{c_0}{t^\gamma b}\eqsp,
\]
which is guaranteed by relations \eqref{eq:condition-on-n-squared}.
\end{proof}

\begin{lemma}
    \label{lem:sum_alpha_k_squared_without_k_0}
    Let $b > 0$, and let $\alpha_{k} = c_{0}/(k)^{\gamma}$, $\gamma \in (1/2;1)$, such that $c_{0}b \leq 1/2$. Then for any $q \in [1;2]$, it holds that 
\begin{equation}
\label{eq:sum_squares_bound_without_k_0}
\sum_{j=1}^{k}\alpha_{j}^{q}\prod_{\ell=j+1}^{k}(1-\alpha_{\ell} b) \leq C_q\alpha_{k}^{q-1}\eqsp,
\end{equation}
where $C_q = \frac{4}{b}\biggl(\frac{4(q-1)\gamma}{bc_0}\biggr)^{2\gamma(q-1)/(1-\gamma)}$.
\end{lemma}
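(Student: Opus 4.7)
This lemma is the analogue of \Cref{lem:sum_alpha_k_squared} for the un-shifted step sizes $\alpha_k = c_0/k^{\gamma}$, where the ratio bound $\alpha_{k-1}/\alpha_k \leq 1+r\alpha_k$ from \Cref{lem:bound_ratio_step_size} fails at small indices. The idea is to cut the sum at the threshold above which the ratio bound becomes valid, bound the ``bad'' part by hand, and apply the argument of \Cref{lem:sum_alpha_k_squared} to the ``good'' part. The case $q=1$ is immediate from \Cref{lem:summ_alpha_k} (since $C_1=4/b$), so I focus on $q \in (1,2]$.

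Set $r := b/(2(q-1))$ and let
\[
N := \lceil \{4(q-1)\gamma/(bc_0)\}^{1/(1-\gamma)} \rceil.
\]
A direct transcription of \Cref{lem:bound_ratio_step_size} (applied to the shifted sequence with ``$k_0$''$=N$) yields $\alpha_{\ell-1}/\alpha_\ell \leq 1+r\alpha_\ell$ for every $\ell \geq N+1$. Split the sum as $T_{\mathrm{head}} + T_{\mathrm{tail}}$ with $T_{\mathrm{head}} := \sum_{j=1}^{\min(N,k)}$ and $T_{\mathrm{tail}} := \sum_{j=N+1}^{k}$ (the latter empty when $k \leq N$).

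For $T_{\mathrm{tail}}$, the ratio bound allows me to replay the argument of \Cref{lem:sum_alpha_k_squared} on the shifted sequence $(\alpha_{N+j})_{j \geq 1}$: I factor $\alpha_j^q = \alpha_k^{q-1} \alpha_j (\alpha_j/\alpha_k)^{q-1}$, use $(1+r\alpha_\ell)^{q-1}(1-b\alpha_\ell)\leq 1-(b/2)\alpha_\ell$ for $q\in(1,2]$ (as in the original proof), and conclude via \Cref{lem:summ_alpha_k} that $T_{\mathrm{tail}} \leq (4/b)\alpha_k^{q-1}$. For $T_{\mathrm{head}}$, the crude estimate $\alpha_j^q \leq c_0^{q-1}\alpha_j$ (valid because $\alpha_j \leq c_0$) combined with \Cref{lem:summ_alpha_k} gives
\[
T_{\mathrm{head}} \leq c_0^{q-1}\sum_{j=1}^{\min(N,k)}\alpha_j\prod_{\ell=j+1}^{k}(1-\alpha_\ell b) \leq c_0^{q-1}/b.
\]
Writing $c_0^{q-1}=k^{\gamma(q-1)}\alpha_k^{q-1}$, the task reduces to bounding $k^{\gamma(q-1)}$. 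When $k\leq N$ one has $k^{\gamma(q-1)}\leq N^{\gamma(q-1)}\lesssim (4(q-1)\gamma/(bc_0))^{\gamma(q-1)/(1-\gamma)}$, which sits inside $C_q$ since its exponent $2\gamma(q-1)/(1-\gamma)$ is strictly larger (the associated power of the base is $\geq 1$ whenever that base is $\geq 1$). When $k>N$, the tail product contributes an additional factor $\exp(-bc_0(k^{1-\gamma}-N^{1-\gamma})/(2(1-\gamma)))$ inside $T_{\mathrm{head}}$, and maximizing $k^{\gamma(q-1)}\exp(-bc_0 k^{1-\gamma}/(2(1-\gamma)))$ over $k$ again pins the growth at the $(4(q-1)\gamma/(bc_0))^{\gamma(q-1)/(1-\gamma)}$-type scale allowed by $C_q$.

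\textbf{Main obstacle.} There is no conceptual difficulty beyond the split at $N$; the principal work is numerical bookkeeping, in particular verifying that the exponent $2\gamma(q-1)/(1-\gamma)$ advertised in $C_q$ is a loose upper bound for the sharper $\gamma(q-1)/(1-\gamma)$ that the computation naturally produces. The only delicate point is the edge regime where $4(q-1)\gamma/(bc_0)<1$ (possible near $q=1$ or when $bc_0$ approaches $1/2$), because then raising the base to a larger power makes it smaller; in that regime, however, $N=1$ and the head collapses to the single term $j=1$, which one bounds by hand.
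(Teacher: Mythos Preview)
Your plan is sound and would eventually yield the claimed bound, but it differs from the paper's argument. Both identify the same threshold $N=\ell_0:=\lceil(4(q-1)\gamma/(bc_0))^{1/(1-\gamma)}\rceil$; however, instead of splitting the outer sum $\sum_j$ there, the paper keeps the full sum intact and splits each \emph{product factor}. After the telescoping identity
\[
\sum_{j=1}^{k}\alpha_{j}^{q}\prod_{\ell=j+1}^{k}(1-\alpha_{\ell} b)=\alpha_k^{q-1}\sum_{j=1}^{k}\alpha_j\prod_{\ell=j+1}^{k}\Bigl(\tfrac{\alpha_{\ell-1}}{\alpha_\ell}\Bigr)^{q-1}(1-\alpha_{\ell} b),
\]
the paper bounds $\bigl(\tfrac{\alpha_{\ell-1}}{\alpha_\ell}\bigr)^{q-1}(1-\alpha_\ell b)\leq\exp\{\gamma(q-1)/(\ell-1)-b\alpha_\ell\}$ for every $\ell$. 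For $\ell>\ell_0$ the exponent is at most $-b\alpha_\ell/2$; for $\ell\leq\ell_0$ only the positive part $\gamma(q-1)/(\ell-1)$ is retained, and summing these gives a single multiplicative constant $\exp\{\gamma(q-1)\sum_{\ell=2}^{\ell_0}\tfrac{1}{\ell-1}\}\leq\exp\{\gamma(q-1)(1+\log(\ell_0-1))\}$ applied uniformly to every $j$. The remaining sum $\sum_j\alpha_j\prod_\ell\exp\{-(b/2)\alpha_\ell\}$ is then bounded by $4/b$ via \Cref{lem:summ_alpha_k} after using $\rme^{-x}\leq 1-x/2$ to pass back to product form. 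This route sidesteps your head/tail case analysis on $k$ versus $N$, the separate optimization of $k^{\gamma(q-1)}\exp(-bc_0k^{1-\gamma}/(2(1-\gamma)))$, and the edge-regime patch altogether: the constant $C_q$ emerges in one step as the harmonic-sum correction at small indices. Your decomposition works too, but the bookkeeping you flag as the ``principal work'' is precisely what the paper's product-level split avoids.
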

\begin{proof}
    Note that 
\begin{align}
    &\sum_{j=1}^{k}\alpha_{j}^{q}\prod_{\ell=j+1}^{k}(1-\alpha_{\ell} b) = \alpha_k^{q-1}\sum_{j=1}^{k}\alpha_j\prod_{\ell=j+1}^{k}\biggl(\frac{\alpha_{\ell-1}}{\alpha_\ell}\biggr)^{q-1}(1-\alpha_{\ell} b)\\&\quad \quad \leq \alpha_k^{q-1}\sum_{j=1}^{k}\alpha_j \prod_{\ell=j+1}^{k}\biggl(1 + \frac{1}{\ell-1}\biggr)^{\gamma(q-1)}(1-\alpha_{\ell} b) \\ & \quad \quad \quad \quad \leq \alpha_k^{q-1}\sum_{j=1}^{k}\alpha_j \exp\biggl\{\sum_{\ell=j+1}^k\biggl\{ \frac{\gamma(q-1)}{\ell-1}-\frac{bc_0}{\ell^{\gamma}}\biggr\}\biggr\}\eqsp.
\end{align}
Define $\ell_0 = \lceil\bigl(\frac{4(q-1)\gamma}{bc_0}\bigr)^{1/(1-\gamma)}\rceil \geq 2$, then for any $l > l_0$ we have $\frac{\gamma(q-1)}{\ell-1}-\frac{bc_0}{\ell^{\gamma}} \leq -\frac{bc_0}{2\ell^\gamma}$. Hence, we get 
\begin{align}
    \sum_{j=1}^{k}\alpha_{j}^{q}\prod_{\ell=j+1}^{k}(1-\alpha_{\ell} b) \leq \alpha_k^{q-1}\sum_{j=1}^{k}\alpha_j \exp\biggl\{\sum_{\ell=j+1}^k\biggl\{ -\frac{bc_0}{2\ell^{\gamma}}\biggr\}\biggr\}\exp\biggl\{\sum_{\ell=2}^{\ell_0}\biggl\{\frac{\gamma(q-1)}{\ell-1}\biggr\}\biggr\}\eqsp.
\end{align}
Therefore, \Cref{lem:summ_alpha_k} together with the elementary inequality $\rme^{-x} \leq 1 - x/2$ for $x \in (0; 1/2)$, implies that
\begin{align}
    &\sum_{j=1}^{k}\alpha_{j}^{q}\prod_{\ell=j+1}^{k}(1-\alpha_{\ell} b) \leq \alpha_k^{q-1}\sum_{j=1}^{k}\alpha_j \prod_{\ell=j+1}^k(1-\frac{b}{4}\alpha_\ell)\exp\biggl\{\gamma(q-1)(\log(\ell_0 -1) + 1)\biggr\}\biggr\}\\ &\leq \quad \quad \alpha_k^{q-1}\frac{4}{b}\biggl(\frac{4(q-1)\gamma}{bc_0}\biggr)^{2\gamma(q-1)/(1-\gamma)} \eqsp.
\end{align}
\end{proof}

\begin{lemma}
\label{lem:bound_sum_exponent}
    For any $A >0$, any $0 \leq i \leq n-1$  and any $\gamma\in(1/2, 1)$ it holds
   \begin{equation}
        \sum_{j=i}^{n-1}\exp\biggl\{-A(j^{1-\gamma} - i^{1-\gamma})\biggr\} \leq
        \begin{cases}
            1 + \exp\bigl\{\frac{1}{1-\gamma}\bigr\}\frac{1}{A^{1/(1-\gamma)}(1-\gamma)}\Gamma(\frac{1}{1-\gamma})\eqsp, &\text{ if } Ai^{1-\gamma} \leq \frac{1}{1-\gamma} \text{ and } i \geq 1\eqsp;\\
            1 + \frac{1}{A(1-\gamma)^2}i^\gamma\eqsp,  &\text{ if } Ai^{1-\gamma} >\frac{1}{1-\gamma} \text{ and } i \geq 1\eqsp;\\
            1 + \frac{1}{A^{1/(1-\gamma)}(1-\gamma)}\Gamma(\frac{1}{1-\gamma})\eqsp, &\text{ if } i=0 \eqsp. 
        \end{cases}
    \end{equation}
\end{lemma}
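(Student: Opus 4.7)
The plan is to replace the discrete sum by an integral via monotonicity and then evaluate the integral through the natural substitution $u = A(x^{1-\gamma}-i^{1-\gamma})$, handling the three regimes by cheap bounds on the Jacobian. Since $\gamma\in(1/2,1)$ and $A>0$, the map $x\mapsto \exp\{-A(x^{1-\gamma}-i^{1-\gamma})\}$ is strictly decreasing on $[\max(i,1),\infty)$ (and on $[0,\infty)$ when $i=0$), so pulling out the $j=i$ term (which equals $1$) and using the standard sum-to-integral comparison gives
\[
\sum_{j=i}^{n-1}\exp\{-A(j^{1-\gamma}-i^{1-\gamma})\} \leq 1 + \int_i^\infty \exp\{-A(x^{1-\gamma}-i^{1-\gamma})\}\,\rmd x\eqsp.
\]
Thus the whole proof reduces to controlling this single integral in the three regimes.

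Applying the substitution $u = A(x^{1-\gamma}-i^{1-\gamma})$, for which $\rmd x = (u/A + i^{1-\gamma})^{\gamma/(1-\gamma)}\,\rmd u/[A(1-\gamma)]$, the integral becomes
\[
\frac{1}{A(1-\gamma)}\int_0^\infty e^{-u}\bigl(u/A + i^{1-\gamma}\bigr)^{\gamma/(1-\gamma)}\,\rmd u\eqsp.
\]
The case $i=0$ is immediate: the factor simplifies to $u^{\gamma/(1-\gamma)}/A^{\gamma/(1-\gamma)}$, and the definition of the Gamma function produces the claimed $A^{-1/(1-\gamma)}(1-\gamma)^{-1}\Gamma(1/(1-\gamma))$ term.

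For Case 1 ($A i^{1-\gamma}\le 1/(1-\gamma)$), I replace $i^{1-\gamma}$ by its upper bound $1/[A(1-\gamma)]$ inside the power, which gives $(u/A+i^{1-\gamma})^{\gamma/(1-\gamma)}\le A^{-\gamma/(1-\gamma)}(u+1/(1-\gamma))^{\gamma/(1-\gamma)}$. After the shift $v=u+1/(1-\gamma)$, the remaining integral is dominated by $e^{1/(1-\gamma)}\int_0^\infty e^{-v} v^{\gamma/(1-\gamma)}\,\rmd v = e^{1/(1-\gamma)}\Gamma(1/(1-\gamma))$, which combined with the $A^{-1-\gamma/(1-\gamma)} = A^{-1/(1-\gamma)}$ prefactor delivers exactly the asserted bound. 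For Case 2 ($A i^{1-\gamma}>1/(1-\gamma)$) the strategy is dual: the hypothesis reads $1/A<(1-\gamma)i^{1-\gamma}$, so I bound instead $u/A+i^{1-\gamma}\le i^{1-\gamma}[1+(1-\gamma)u]$, pull out $i^\gamma$, and use the elementary inequality $(1+(1-\gamma)u)^{\gamma/(1-\gamma)}\le e^{\gamma u}$; the integral then collapses to $\int_0^\infty e^{-(1-\gamma)u}\,\rmd u = 1/(1-\gamma)$, producing the $i^\gamma/[A(1-\gamma)^2]$ term.

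No serious obstacle is anticipated; the proof is essentially one monotonicity argument and one substitution. The only point requiring care is the symmetric bookkeeping between the two complementary regimes — linearizing $i^{1-\gamma}$ in terms of $1/(A(1-\gamma))$ in the small-$Ai^{1-\gamma}$ regime, versus linearizing $1/A$ in terms of $(1-\gamma)i^{1-\gamma}$ in the large-$Ai^{1-\gamma}$ regime — which is what produces the qualitatively different dependence on $A$ and $i$ in the two cases.
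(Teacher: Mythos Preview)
Your proof is correct and shares the same backbone as the paper's: pull out the $j=i$ term, dominate the remaining sum by the integral $\int_i^\infty \exp\{-A(x^{1-\gamma}-i^{1-\gamma})\}\,\rmd x$, and analyze via a change of variables. The difference is in the substitution and the subsequent estimates. The paper substitutes $u=Ax^{1-\gamma}$ (unshifted), which leaves the upper incomplete Gamma integral $\int_{Ai^{1-\gamma}}^\infty e^{-u}u^{1/(1-\gamma)-1}\,\rmd u$ and then invokes a ready-made tail bound for it (Gabcke, \cite[Theorem~4.4.3]{gabcke2015neue}) to distinguish the two regimes $Ai^{1-\gamma}\lessgtr 1/(1-\gamma)$. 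You instead shift the substitution so the integral starts at $0$, carry the Jacobian factor $(u/A+i^{1-\gamma})^{\gamma/(1-\gamma)}$ explicitly, and bound it in each regime by an elementary linearization followed by either a Gamma integral or $\int_0^\infty e^{-(1-\gamma)u}\,\rmd u$. Your route is fully self-contained and avoids the external citation, at the cost of one extra line of algebra per case; the paper's route is terser but leans on the incomplete-Gamma estimate. Both yield identical constants.
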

\begin{proof}
Note that 
\begin{align}
\sum_{j=i}^{n-1}\exp\biggl\{-A(j^{1-\gamma} - i^{1-\gamma})\biggr\} 
&\leq 1 +\exp\biggl\{A i^{1-\gamma}\biggr\}\int_{i}^{+\infty}\exp\biggl\{-Ax^{1-\gamma} \biggr\}\rmd x \\
&= 1 + \exp\biggl\{A i^{1-\gamma}\biggr\}\frac{1}{A^{1/(1-\gamma)}(1-\gamma)}\int_{Ai^{1-\gamma}}^{+\infty}\rme^{-u} u^{\frac{1}{1-\gamma}-1}\rmd u
\end{align}
      Applying \cite[Theorem 4.4.3]{gabcke2015neue}, we get 
    \begin{equation}
    \int_{Ai^{1-\gamma}}^{+\infty}\rme^{-u} u^{\frac{1}{1-\gamma}-1} \rmd u\leq
        \begin{cases}
            \Gamma(\frac{1}{1-\gamma})\eqsp, &\text{ if } Ai^{1-\gamma} < \frac{1}{1-\gamma};\\
            \frac{1}{1-\gamma}\exp\{-Ai^{1-\gamma}\} A^{\gamma/(1-\gamma)}i^\gamma\eqsp, &\text{ otherwise.}
        \end{cases}
    \end{equation}
    Combining inequities above, for $i \geq 1$ we obtain 
    \begin{equation}
        \sum_{j=i}^{n-1}\exp\biggl\{-A(j^{1-\gamma} - i^{1-\gamma})\biggr\} \leq
        \begin{cases}
            1 + \exp\bigl\{\frac{1}{1-\gamma}\bigr\}\frac{1}{A^{1/(1-\gamma)}(1-\gamma)}\Gamma(\frac{1}{1-\gamma})\eqsp, &\text{ if } Ai^{1-\gamma} < \frac{1}{1-\gamma};\\
            1 + \frac{1}{A(1-\gamma)^2}i^\gamma\eqsp, &\text{ otherwise.}
        \end{cases}
        \eqsp,
    \end{equation}
    and for $i=0$, we have
    \begin{equation}
        \sum_{j=0}^{n-1}\exp\biggl\{-A j^{1-\gamma} \biggr\} \leq 1 + \frac{1}{A^{1/(1-\gamma)}(1-\gamma)}\Gamma\biggl(\frac{1}{1-\gamma}\biggr)\eqsp.
    \end{equation}
\end{proof}

\end{document}